\newcommand{\mathc}[1]{\ensuremath{\mathcal{#1}}}
\newcommand{\mb}[1]{\mathbb{#1}}
\newcommand{\htt}{\hat}
\newcommand{\ov}{\text{we}}
\newcommand{\bi}{\begin{itemize}}
\newcommand{\hko}{\hspace{.1in}}
\DeclareMathOperator{\snr}{SNR} 
\DeclareMathOperator{\rank}{rank} 
\DeclareMathOperator{\tin}{in} 
\DeclareMathOperator{\tout}{out} 
\DeclareMathOperator{\F}{F}
\DeclareMathOperator{\diag}{diag}
\DeclareMathOperator{\Diag}{Diag}
\DeclareMathOperator{\trace}{Tr}
\DeclareMathAlphabet{\mathcalligra}{T1}{calligra}{m}{n}
\DeclareMathOperator{\degree}{deg}
\newtheorem{theorem}{Theorem}
\newtheorem{lemma}{Lemma}
\newtheorem{proposition}{Proposition}
\newtheorem{definition}{Definition}
\newtheorem{remark}{Remark}
\newcommand{\K}{\bm K}
\newcommand{\rfr}{\bm R}
\newcommand{\X}{\bm X}
\newcommand{\Z}{\bm Z}
\newcommand{\Y}{\bm Y}
\newcommand{\y}{\bm y}
\newcommand{\z}{\bm z}
\newcommand{\R}{r\xspace}
\newcommand{\N}{N}
\newcommand{\D}{d}
\newcommand{\ocap}{\bm O}
\newcommand{\U}{\bm U}
\newcommand{\smalln}{n}
\newcommand{\bmu}{\bs\mu}
\newcommand{\bSigma}{\bs\Sigma}
\newcommand{\bk}{\color{black}}
\newcommand{\bm}{\mathbf}
\newcommand{\bs}{\boldsymbol}
\title{A Robust Spectral Clustering Algorithm for Sub-Gaussian \\ Mixture Models with Outliers}
\author{
Prateek R. Srivastava \thanks{Graduate Program in Operations Research and Industrial Engineering (ORIE), University of Texas at Austin, Austin, TX, 78712-1591, USA. Email: {\tt prateekrs@utexas.edu, grani.hanasusanto@utexas.edu}.}
\and 
Purnamrita Sarkar \thanks{Department of Statistics and Data Sciences (SDS), University of Texas at Austin, Austin, TX, 78712-1591, USA. Email: {\tt purna.sarkar@austin.utexas.edu}.}%
\and 
Grani A. Hanasusanto \footnotemark[1]
}
\begin{document}

\maketitle

\begin{onehalfspace}

\begin{abstract}
    \noindent We consider the problem of clustering datasets in the presence of arbitrary outliers. Traditional clustering algorithms such as $k$-means and  spectral clustering are known to perform poorly for datasets contaminated with even a small number of outliers. In this paper, we develop a provably robust spectral clustering algorithm that %takes the original dataset as input in the form of a Gaussian kernel matrix, 
    applies a simple rounding scheme to denoise a Gaussian kernel matrix built from the data points and uses vanilla spectral clustering to recover the cluster labels of data points. We  analyze the performance of our algorithm under the assumption that the ``good'' data points are generated from a  mixture of sub-gaussians (we term these ``inliers''), while the outlier points can come from any arbitrary probability distribution. For this general class of models, we show that the mis-classification error decays at an exponential rate in the signal-to-noise ratio, provided the number of outliers is a small fraction of the inlier points.
    %grow slowly compared to the number of inlier points. 
    Surprisingly, the derived error bound %is the best known result for spectral methods under the sub-gaussian mixture model setting (with or without outliers), and interestingly, 
    matches with the best-known bound \citep{fei2018hidden,giraud2018partial} for semidefinite programs (SDPs) under the same setting without outliers. We conduct extensive experiments on  a variety of simulated and real-world datasets to demonstrate that our algorithm is less sensitive to outliers compared to other state-of-the-art algorithms proposed in the literature.

\noindent \textbf{Keywords:} Spectral clustering, sub-gaussian mixture models, kernel methods, semidefinite programming, outlier detection, asymptotic analysis
\end{abstract}

%\begin{keyword}

%\end{keyword}
\graphicspath{ {Images/} }

\section{Introduction}
\label{sec-introduction}
Clustering is a fundamental problem in unsupervised learning with application domains ranging from evolutionary biology, market research, and medical imaging to recommender systems and social network analysis, etc. In this paper, we consider the problem of clustering $n$ independent and identically distributed inlier data points in $d$-dimensional space from a mixture of $\R$ sub-gaussian probability distributions with unknown means and covariance matrices in the presence of arbitrary outlier data points. Given a sample dataset consisting of these inlier and outlier points, the objective of our inference problem is to recover the latent cluster memberships for the set of inlier points, and additionally, to identify the outlier points in the dataset.

Sub-gaussian mixture models (SGMMs) are an important class of mixture models that provide a distribution-free approach for analyzing clustering algorithms and encompass a wide variety of fundamental clustering models, such as (i) spherical and general Gaussian mixture models (GMMs), (ii) stochastic ball models \citep{iguchi2015tightness,kushagra2017provably}, which are mixture models whose components are isotropic distributions supported on unit $\ell_2$-balls,  and (iii) mixture models with component distributions that have a bounded support, as its special cases. 

Taking the clustering objective and tractability of algorithms into consideration, several different solution schemes based on Lloyd's algorithm~\citep{lloyd1982least}, expectation maximization~\citep{dempster1977maximum}, method of moments~\citep{pearson1936method, bickel2011method}, spectral methods \citep{dasgupta1999learning,vempala2004spectral}, linear programming~\citep{awasthi2015relax} and semidefinite programming~\citep{peng2007approximating,mixon2016clustering,yan2016convex} have been proposed for clustering SGMMs. Amongst these different algorithms, Lloyd's algorithm, which is a popular heuristic to solve the $k$-means clustering problem, is arguably the most widely used. When the data lies on a low dimensional manifold, a popular alternative is Spectral Clustering, which applies $k$-means on the top eigenvectors of a suitably normalized kernel similarity matrix~\citep{shi2000normalized,ng2002spectral,von2007tutorial,von2008consistency,schiebinger2015geometry,amini2019concentration}.

\begin{figure}[t!]
    \centering
    \begin{subfigure}[t]{0.45\textwidth}
        \centering
        \includegraphics[width=\linewidth]{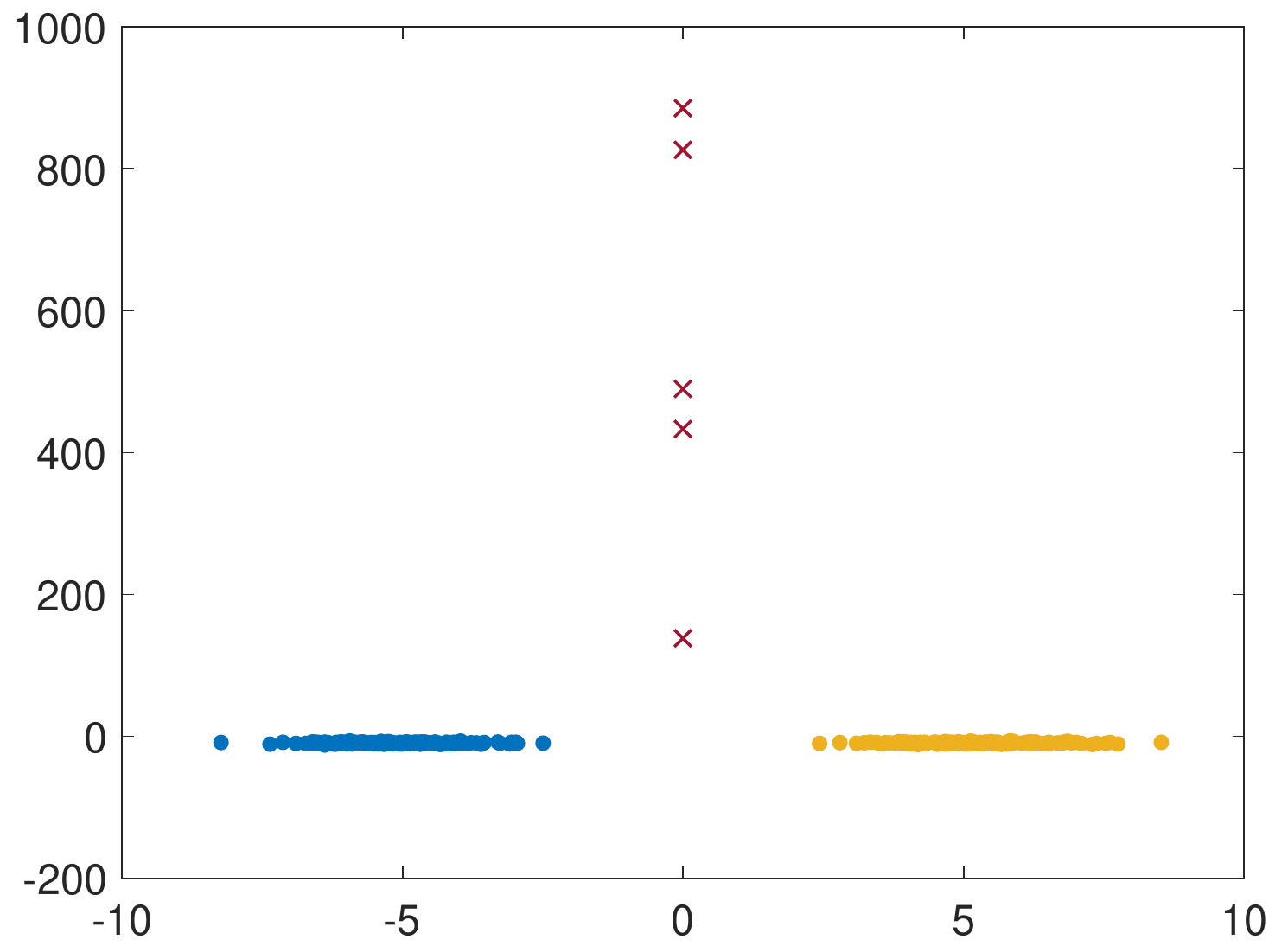}
        \caption{Original Dataset}
    \end{subfigure}
    \begin{subfigure}[t]{0.45\textwidth}
        \centering
        \includegraphics[width=\linewidth]{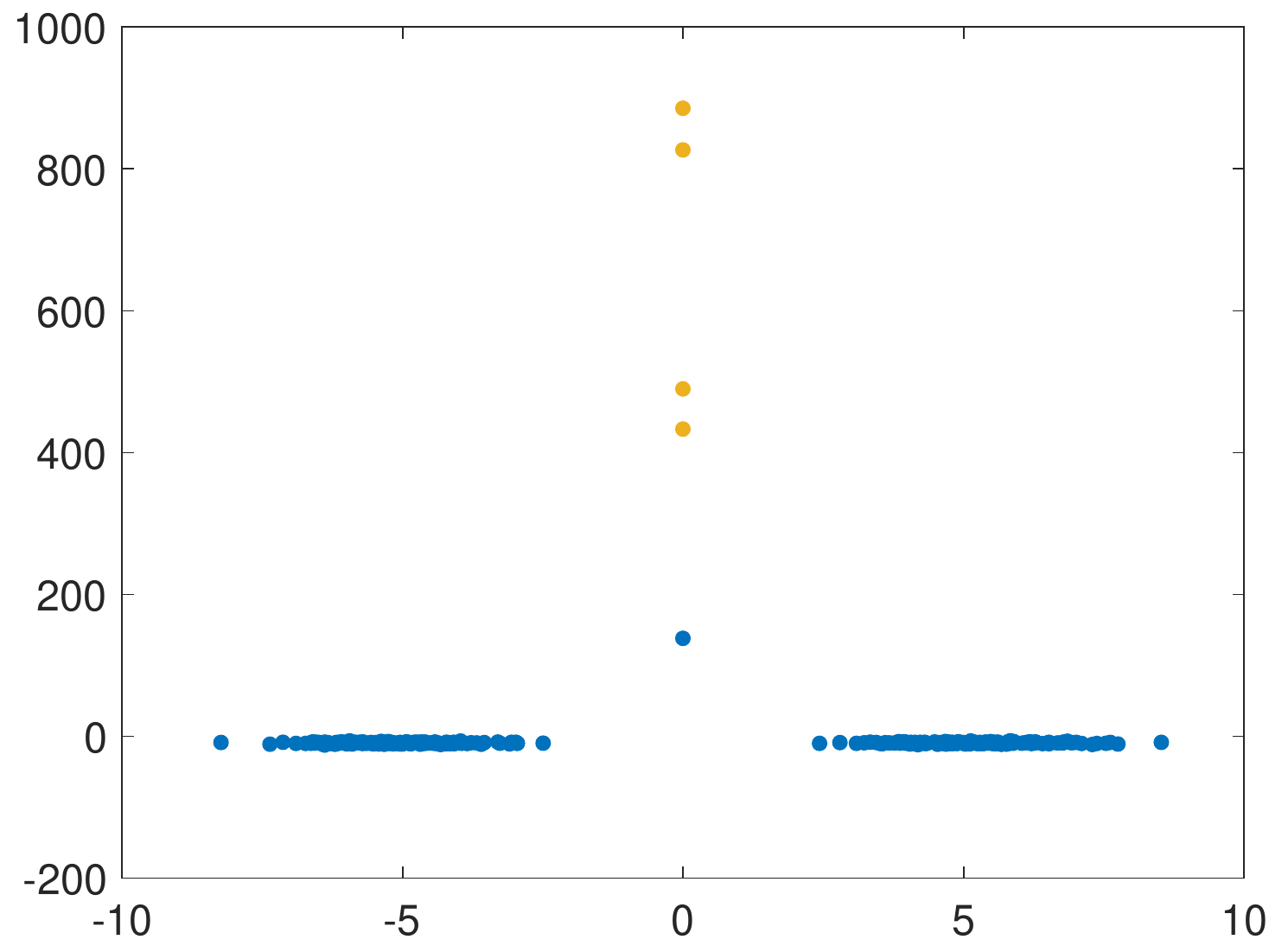} 
        \caption{ Clustering result obtained from $k$-means++ and spectral clustering   \citep{vempala2004spectral}}
    \end{subfigure}
\caption{ $k$-means++ and the spectral clustering algorithm proposed by \citet{vempala2004spectral} are not robust to the outliers. The original dataset consists of inlier data points (marked as solid circles) drawn from a mixture of two Gaussian distributions with means $\bs \mu_1=[-5,0]^\top$, $\bs \mu_2=[5,0]^\top$, covariance matrices $\bs \Sigma_1=\bs \Sigma_2=\bm I_2$, and number of points $n_1=n_2=150$. There are $m=5$ outlier points generated on the y-axis, which are marked as red crosses. In the clustering obtained from both the algorithms, the original clusters are merged into one, and the second cluster comprises entirely of the outlier data points.}
\label{fig:BadClusters}
\end{figure}

Despite their popularity, the performances of vanilla versions of both $k$-means clustering and spectral clustering are known to deteriorate in the presence of noise~\citep{li2007noise,bojchevski2017robust,zhang2018understanding}. \Cref{fig:BadClusters} illustrates a simple example where the two algorithms fail in the presence of outlier points.

\subsection{Our Contributions}

In this paper, we consider the joint kernel clustering and outlier detection problem under a SGMM setting assuming an arbitrary probability distribution for the set of outlier points. First, we formulate the exact kernel clustering problem with outliers and propose a robust SDP-based relaxation for the problem, which is applied after the data has been projected onto the top $\R-1$ principal components (when $d>\R$). This projection step not only helps tighten our theoretical bounds but also yields better empirical results when the dimensionality is large. 

Since SDP formulations do not usually scale well to large problems, we propose a linear programming relaxation that essentially rounds the kernel matrix, on which we apply spectral clustering. In some sense, this algorithm is reminiscent of building a nearest neighbor graph from the data and applying spectral clustering on it. In the literature, $k$-nearest neighbor graphs have found applications in several machine learning algorithms~\citep{cover1967nearest,altman1992introduction,hastie1996discriminant,ding2004k,franti2006fast}, and have been analyzed in the context of density-based clustering algorithms~\citep{Dudensity,verdinelli2018}, and subspace clustering~\citep{heckel2015robust}. 
%Despite the wide use of nearest neighbour graphs in machine learning \citep{cover1967nearest,altman1992introduction,hastie1996discriminant,ding2004k,franti2006fast}, to our knowledge, clustering algorithms based on nearest neighbour graphs have not yet been theoretically analyzed.

In general, kernel-based methods are harder to analyze compared to distance-based algorithms since they involve analyzing non-linear feature transformations through the kernel function. In this work, we show that with high probability, our algorithm recovers true cluster labels with small error rates for the set of inlier points, provided that there is a reasonable separation between the cluster centers and the number of outliers is not large. An interesting theoretical result that emerges from our analysis is that the error rate obtained for our spectral clustering algorithm decays exponentially in the square of the signal-to-noise ratio for the case when no outliers are present, which matches with the best-known theoretical error bound for SDP formulations \citep{fei2018hidden} under the SGMM setting.  

Empirically, we observe a similar trend in the performances of robust spectral clustering and our proposed robust SDP-based formulation on real-world datasets, while the first is orders of magnitude faster.  This is quite surprising since, in other model scenarios like the Stochastic Block Model~\citep{holland1983stochastic}, SDPs have been proven to return clusterings correlated to the ground truth in sparse data regimes~\citep{guedon2016community,montanari2016semidefinite}\bk; whereas only regularized variants of spectral clustering \citep{amini2013pseudo,le2015sparse,joseph2016impact,zhang2018understanding} work in these parameter regimes. However, to be fair, empirically we see that SDP is less sensitive to hyperparameter mis-specification. We now summarize the main contributions of this paper. \bk
\begin{comment}
 
\cite{dasgupta1999learning}, and forms the basis of several spectral clustering-based algorithms as well as the well-known expectation-maximization (EM) algorithm.
\end{comment}

\begin{enumerate}
    \item We derive an exact formulation for the kernel clustering problem with outliers and obtain its SDP-based convex relaxation in the presence of outliers in the dataset. Unlike previously proposed robust SDP formulations \citep{rujeerapaiboon2017size,yan2016robustness}, our robust SDP formulation does not require prior knowledge of the number of clusters, the number of outliers, or cluster cardinalities.

    \item We propose an efficient algorithm based on rounding and spectral clustering, which is provably robust. Specifically, we show that provided the number of outliers is small compared to the inlier points, the error rate for our algorithm decays exponentially in the square of the signal-to-noise ratio. This error rate is consistent with the best-known theoretical error bound for SDP formulations \citep{fei2018hidden,giraud2018partial}.
    
    Although an extensive amount of work has been done previously to analyze spectral methods in the context of GMMs~\citep{dasgupta1999learning,vempala2004spectral,loffler2019optimality}, to the best of our knowledge, no prior theoretical work has been done to analyze robust spectral clustering algorithms for the non-parametric and more general SGMM setting (with or without outliers). 
\end{enumerate}

\subsection{Related Work}
 Several previous works~\citep{cuesta1997trimmed,li2007noise,forero2012robust,bojchevski2017robust,zhang2018understanding} have proposed robust variants of $k$-means and spectral clustering algorithms; however, they do not provide any recovery guarantees. Recently, there has been a focus on developing robust algorithms based on semidefinite programming and analyzing them for special cases of SGMMs.~\citet{kushagra2017provably} develop a robust reformulation of the $k$-means clustering SDP proposed by \citet{peng2007approximating} and derive exact recovery guarantees under arbitrary (not necessarily isotropic) and stochastic ball model settings using a primal-dual certificate. On a related note,~\citet{rujeerapaiboon2017size} also obtain a robust SDP-based clustering solution by minimizing the $k$-means objective subject to explicit cardinality constraints on the clusters as well as the set of outlier points. Besides the SGMM setting, robust clustering 
algorithms have been proposed for the related problem of subspace clustering where similar theoretical guarantees have been obtained \citep{pmlr-v28-wang13,wang2018theoretical, heckel2015robust,heckel2015dimensionality,soltanolkotabi2012,soltanolkotabi2014} as well as for some other model settings~\citep{vinayak2016similarity,yan2016robustness}. Particularly relevant to us is the work of~\citet{yan2016robustness}, who compare the robustness of kernel clustering algorithms based on SDPs and spectral methods. However, they analyze the algorithms for the mixture model introduced by~\citet{elkaroui2010}, which  assumes the data to be generated from a low dimensional signal in a high dimensional noise setting. Intuitively, in this setting, the signal-to-noise ratio, defined as the ratio of the minimum separation between cluster centers $(\Delta_{\min})$ to the largest spectral norm $(\sigma_{\max})$ of the covariance matrices of the mixture components, grows as $\sqrt{\D}$. The authors show that without outliers, the SDP-based algorithm is strongly consistent, i.e., it achieves exact recovery, while kernel SVD algorithm is weakly consistent, i.e., the fraction of mis-classified data points go to zero in the limit as long as $\D$ increases polynomially in  $N$, the total number of points. Note that, in typical mixture models, the number of dimensions, while arbitrarily large, stay fixed, and there is a possibly small yet non-vanishing Bayes error rate, which is more realistic. 

For the no outliers setting, an extensive amount of work has been done to obtain theoretical guarantees on the performances of various clustering algorithms under different distributional assumptions about the underlying data generation process. For the Gaussian mixture model setting,~\citet{dasgupta1999learning} is amongst the first to obtain theoretical guarantees for a random projections-based clustering algorithm that is able to learn the parameters of mixture model provided the minimum separation between cluster centers $\Delta_{\min}=\Omega(\sqrt{\D} \sigma_{\max})$. Using distance concentration arguments based on the isoperimetric inequality,~\citet{sanjeev2001learning} improve the minimum separation to~$\Delta_{\min}=\Omega(d^{1/4}\sigma_{\max})$. For the special case of a mixture $r$~spherical Gaussians,~\citet{vempala2004spectral} sho  that for their spectral algorithm the separation can be further reduced to $\Delta_{\min}=\Omega((\R\log d)^{1/4}\sigma_{\max})$, which ignoring the logarithmic factor in $d$, is essentially independent of the dimension of the problem. These results are generalized and extended further in subsequent works of~\citet{kumar2010clustering}  and~\citet{awasthi2012improved}. For a distribution-free model described in terms of the proximity conditions considered in~\citet{kumar2010clustering},~\citet{li2020birds} obtain guarantees for the \citet{peng2007approximating} $k$-means SDP relaxation. Under the stochastic ball model setting,~\citet{awasthi2015relax} obtain exact recovery guarantees for linear programming and SDP-based formulations for $k$-median and $k$-means clustering problems using a primal-dual certificate argument. Extending the results of~\citet{awasthi2015relax},~ \citet{mixon2016clustering} show that for a mixture of sub-gaussians, the SDP-based formulation proposed in \citet{peng2007approximating} guarantees good approximations to the true cluster centers provided the minimum distance between cluster centers~$\Delta_{\min}= \Omega(r\sigma_{\max})$. Under a similar separation condition, ~\citet{yan2016convex} also obtain recovery guarantees for a kernel-based SDP formulation under the SGMM setting. Most pertinent to us is the recent result obtained by~\citet{fei2018hidden}, who show that for a minimum separation of $\Delta_{\min}= \Omega(\sqrt{\R} \sigma_{\max})$ the mis-classification error rate of a SGMM with equal-sized clusters  decays exponentially in the square of the signal-to-noise ratio. Another analogous result for the SDP formulation proposed by \citet{peng2007approximating} has been obtained by~\cite{giraud2018partial}. Very recently, we also became aware of the result obtained by \citet{loffler2019optimality}, who obtain an exponentially decaying error rate for a spectral clustering algorithm for the special case of spherical Gaussians with identity covariance matrices. However, in order for their result to hold with high probability, they require the minimum separation between cluster centers to go to infinity. In addition, their proposed algorithm can easily be shown to fail in the presence of outliers, as discussed in greater detail in \Cref{sec-results}. 

In addition to the clustering literature where data is typically drawn i.i.d. from a mixture distribution, spectral and SDP relaxations for hard combinatorial optimization problems have also received significant attention in graph partitioning and community detection literature \citep{goemans1995improved,mcsherry2001spectral,newman2006modularity,rohe2011spectral,sussman2012consistent,fishkind2013consistent,qin2013regularized,guedon2016community,yan2017exact,amini2014semidefinite}.

\subsection{Paper Organization}
The remainder of the paper is structured as follows. In \Cref{sec-ProblemSetup}, we introduce the notation used in the paper and describe the problem setup for sub-gaussian mixture models with outliers. In \Cref{sec-algorithm}, we obtain the formulation for kernel clustering problem with outliers and derive its SDP and LP relaxations that recover denoised versions of the kernel matrix. In addition, we also discuss the details of the clustering algorithm that obtains cluster labels from this denoised matrix. \Cref{sec-results} summarizes the main theoretical findings for our clustering algorithm, provides an overview of the proof techniques used, and contrasts our results with the existing results in the literature. \Cref{sec-experiments} presents experimental results for several simulated and real-world datasets. Technical details of proofs for the main theorems are deferred to the appendix.

\section{Notation and Problem Setup}
\label{sec-ProblemSetup}
In this section, we introduce the notation used in this article and explain the formal setup of the kernel clustering problem for sub-gaussian mixture models with outliers.

\subsection{Notation} For any $n\in \mb N$, we define $[n]$ as the index set $\{1,\ldots,n\}$. We use uppercase bold-faced letters such as $\bm A,\bm B$ to denote matrices and lowercase bold-faced letters such as $\bm u, \bm v$ to denote vectors. For any matrix $\bm A$, $\trace( \bm A)$ denotes its trace, $A_{ij}$ its $(i,j)$-th entry, and $\diag(\bm A)$ represents the column-vector of its diagonal elements. We define $\Diag(\bm v)$ to be a diagonal matrix with vector $\bm v$ on its main diagonal. We consider different matrix norms in our analysis. For a matrix $\bm A\in \mb R^{N\times N}$, the operator norm $\lVert \bm A \rVert_2$ represents the largest singular value of $\bm A$, the Frobenius norm $\lVert \bm A\rVert_{\F}=\big(\sum_{ij}A_{ij}^2\big)
^{1/2}$ and $\ell_1$-norm $\lVert \bm A \rVert_1=\sum_{ij}\lvert A_{ij} \rvert$. For two matrices $\bm A,\bm B$ of same dimensions, the inner product between $\bm A$ and $\bm B$ is denoted by $\langle \bm A, \bm B\rangle:=\trace(\bm A^\top \bm B)=\sum_{ij}  A_{ij}B_{ij}$. We represent the $n$-dimensional vector of all ones by $\bs 1_n$, the $n\times n$ matrix of all ones by $\bm E_n$, the $n\times n$ identity matrix by $\bm I_{n}$ and $n\times m$ matrix of all zeros by $\bm 0_{n\times m}$. We define $\bm e_i$ to be the $i$-th standard basis vector whose $i$-th coordinate is 1 and all other coordinates are 0. We use $\mb S_n^+$ to denote the cone of $n\times n$ symmetric positive semidefinite matrices. Further, we say that a $n\times n$ matrix $\bm X\succeq \bs 0$ if and only if $\bm X\in \mb S_N^+$. 
%We define the horizontal concatenation of two matrices $\bm A$ and $\bm B$ with conformable dimensions as $[\bm A, \bm B]$.

For the asymptotic analysis, we use standard notations like $o,O,\Omega$ and $\Theta$ to represent rates of convergence. We also use standard probabilistic order notations like $O_p$ and $o_P$ (see~\citet{van2000asymptotic} for more details). We define $x\lesssim y$ to denote $x\leq cy$, where $c$ is some positive constant. We use $\tilde{O}$ to denote $O$ with logarithmic dependence on the model parameters.

\subsection{Problem Setup} 
% We consider a generative model that generates a set of inlier points $\mathc I$ from a mixture of $r$ sub-gaussian probability distributions \citep{vershynin2010introduction} $\{\mathc D_k\}_{k=1}^r$ with $\lvert\mathc I\rvert=n$ and a set $\mathc O$ of outlier points from arbitrary distributions with $\lvert\mathc O\rvert=m$. Given the observed data matrix~$\Y=[\y_1,\ldots,\y_\N]^\top \in \mb R^{\N\times \D}$ that represents a random sample of $\N=n+m$ independent and identically distributed observations in $\D$-dimensional space from this generative model, the task is to recover the latent cluster labels for the set of inlier points $\mathc I$, and identify the outliers $\mathc O$ in the dataset. 

We consider a generative model that generates a set of $n$ independent and identically distributed inlier points, denoted by $\mathc I$, from a mixture of $r$ sub-gaussian probability distributions \citep{vershynin2010introduction} $\{\mathc D_k\}_{k=1}^r$. The set $\mathc O$ of outlier points can come from arbitrary distributions with $\lvert\mathc O\rvert=m$.  Given the observed data matrix~$\Y=[\y_1,\ldots,\y_\N]^\top \in \mb R^{\N\times \D}$ consisting of these $\N:=n+m$ points  in $\D$-dimensional space, the task is to recover the latent cluster labels for the set of inlier points $\mathc I$, and identify the outliers $\mathc O$ in the dataset. 

For the set of inlier points, let $\bs \pi=(\pi_1,\ldots,\pi_\R)$ where $\bs \pi \geq  \bs 0$ and $\bs\pi^\top\bm 1_\R=1$ denote the mixing weights associated with the $\R$ sub-gaussian probability distributions in the mixture model such that $\pi_{\max}=\max_{k\in[\R]} \pi_k$ and $\pi_{\min}=\min_{k\in[\R]} \pi_k$. Assume that $\bs\mu_1,\ldots,\bs\mu_r\in \mb R^{\D}$ represent the means of $r$ clusters from which the data points are generated. Under the SGMM model, for each point $i\in \mathc I$, first a label $\phi_i\in\{1,\dots, \R\}$ is generated from a Multinomial($\bs\pi$), where $\bs \pi$ is a $\R$-dimensional vector denoting the cluster proportions. We define the true cluster membership matrix~$\bm Z^0\in \{0,1\}^{\N\times r}$ such that $Z^0_{ik}=1$ if and only if point~$i\in \mathcal{I}$ and $\phi_i=k$. Thus, assuming~$Z^0_{ik}=1$, observation $\y_i$ is generated from distribution $\mathc D_k$ with the following form:
\begin{equation*}
    \y_i:=\bs\mu_k+\bs\xi_i,
\end{equation*}
where $\bs \xi_i$ is a mean zero sub-gaussian random vector with $\sigma_k^2$ defined as the largest eigenvalue of its second moment matrix and $\sigma_{\max}:=\max_{k\in[r]}\sigma_k$. We represent the $k$-th cluster by $\mathc C_k :=\{i\in\mathc I:\phi_i=k\}$ and its cardinality by $\smalln_k:=\lvert \mathc C_k \rvert$. The separation between any pair of clusters $k$ and $l$ is defined as~$\Delta_{kl}:=\lVert\bs \mu_k-\bs \mu_l\rVert_2$ with the minimum and maximum separation denoted respectively as $\Delta_{\min}:=\displaystyle \min_{k\neq l} \Delta_{kl}$ and $\Delta_{\max}:=\displaystyle \max_{k\neq l} \Delta_{kl}$. In our analysis, an important quantity of interest is the signal-to-noise ratio, which based on \cite{fei2018hidden} is defined as 
\begin{equation}
    \snr:=\frac{\Delta_{\min}}{\sigma_{\max}}.
\end{equation}

 Without loss of generality, we assume that the points in $\bm Z^0$ are ordered such that the inliers and outliers are indexed together. Within the set of inliers again, we further assume that the points belonging to the same cluster are indexed together. Thus, the true clustering matrix $\bm X^0=\bm Z^0{\bm Z^0}^\top$ is a block diagonal matrix   with $X^0_{ij}=1$ if $i$ and $j$ belong to the same cluster and 0 otherwise. For our algorithm, we use the Gaussian kernel matrix $\K \in [0,1]^{\N\times \N}$ whose $(i,j)$-th entry $K_{ij}:=\mathc \exp\big(-\frac{\lVert \y_i - \y_j\rVert^2}{2\theta^2}\big)$ defines the similarity between points $i$ and $j$ for some scaling parameter $\theta$.

 \section{Robust Kernel Clustering Formulation}
\cite{stella2003multiclass} show that the normalized $k$-cut problem is equivalent to the following trace maximization problem $\trace(\Z^\top\K \Z)$ where $\Z$ is a scaled cluster membership matrix. In their seminal paper, \cite{dhillon2004kernel} prove the equivalence between kernel $k$-means and normalized $k$-cut problem. Based on~\cite{dhillon2004kernel} and~\cite{stella2003multiclass}, \cite{yan2016robustness} propose a SDP relaxation for the kernel clustering problem under the assumption of equal-sized clusters.~\citet{yan2016convex} further extend the kernel clustering formulation to unequal-sized clusters for analyzing the community detection problem in the presence node covariate information. Their formulation, which is derived from the SDP formulation for the $k$-means clustering problem~\citep{peng2007approximating}, however, does not account for possible outliers in the dataset. 

In this section, we first consider an exact formulation for the kernel clustering problem with equal-sized clusters and no outliers. We then extend this formulation to incorporate the case where cluster sizes may be unequal as well as unknown, and outliers are present in the dataset. Finally, we use the idea of ``lifting" and ``relaxing" to obtain two efficient algorithms based on tractable SDP and spectral relaxations for this exact formulation.
\begin{equation}
\label{eq:ExactForm-Z}
\begin{aligned}
& \underset{ \Z}{\text{maximize}} & &  \langle \K,\Z \Z^\top \rangle \\
& \text{subject to}
&&  \Z \in \{0,1\}^{\smalln\times r} \\
&&&  \sum_{k \in [r]} Z_{ik} = 1 &\hko \forall i=1,\ldots,\smalln \\
&&&  \sum_{i \in [\smalln]} Z_{ik} = \frac{\smalln}{\R} &\hko \forall k=1,\ldots,\R
\end{aligned} 
\end{equation}

The optimization formulation in \eqref{eq:ExactForm-Z} represents the kernel clustering problem without outliers that aims to maximize the sum of within-cluster similarities subject to assignment constraints that require each data point $i$ to belong to exactly one cluster and cardinality constraints that assume all clusters to be equal-sized with exactly $\frac{\smalln}{\R}$ (assumed to be integral) data points in each cluster. For the case where the clusters are required to be equal-sized, the cardinality constraints in \eqref{eq:ExactForm-Z} can be equivalently expressed in an aggregated form by requiring $\langle \bm E_\smalln,\Z \Z^\top \rangle = \frac{\smalln^2}{\R}$. 

In general, however, the clusters are seldom equal-sized; in addition, their exact cardinalities are also seldom known in practice. However, if cardinality constraints are dropped from the formulation, the optimal solution $\Z^*$ assigns all points to a single cluster. A natural way to overcome this issue would be to maximize $ \langle \bm K-\gamma\bm E_\smalln,\Z \Z^\top \rangle$ for $\gamma\in (0,1)$. Note that for a valid cluster membership matrix $\Z$, $\langle \bm E_\smalln,\Z \Z^\top \rangle= \frac{\smalln^2}{\R}$ represents its minimum value, which is achieved exactly when all the clusters are equal-sized. Thus, the penalized objective function essentially tries to find clusters that are balanced.

We extend the formulation in~\eqref{eq:ExactForm-Z} to account for possible outliers in the dataset by relaxing the assignment constraint on each data point to belong to either exactly one cluster (if the data point is an inlier) or to no cluster (if the data point is an outlier). The resulting exact formulation for the kernel clustering problem with outliers is a binary quadratic program and is shown in~\eqref{eq:outlier-ExactForm-Z}.
\begin{multicols}{2}
\begin{equation}
\label{eq:outlier-ExactForm-Z}
\begin{aligned}
& \underset{\Z}{\text{maximize}} & &  \langle \K -\gamma \bm E_\N, \Z \Z^\top \rangle \\
& \text{subject to}
&&  \Z \in \{0,1\}^{\N\times r} \\
%&&&  \sum_{k \in [r]} Z_{ik} \leq 1 &\hko \forall i=1,\ldots,\N. \\
&&&  \Z \bm 1_r \leq \bm 1_N. \\
\end{aligned} 
\end{equation}
\begin{equation}
\label{eq:outlier-ExactForm-X}
    \begin{aligned}
& \underset{ \X}{\text{maximize}} & &  \langle \K - \gamma \bm E_N, \X\rangle \\
& \text{subject to}
&&  \X \in \{0,1\}^{N\times N} \\
&&& \X \succeq	\bs 0\\
&&&  \text{rank} (\X) \leq r
\end{aligned}
\end{equation}
\end{multicols}

%\end{multicols}

The formulation in \eqref{eq:outlier-ExactForm-Z} involves maximizing a non-convex quadratic objective function over a set of binary matrices $\Z \in \{0,1\}^{\N\times r}$. One way to sidestep this difficulty would be by \say{lifting} the formulation from a low-dimensional space of $\N\times r$ matrices to a high dimensional space of $\N\times \N$ matrices by defining an auxiliary semidefinite matrix~$\X=\Z \Z^\top$ that represents the clustering matrix and expressing the feasible space in terms of the valid inequalities for $\X$.  The resulting formulation is given in \eqref{eq:outlier-ExactForm-X}. In the following proposition, we show that these two formulations are equivalent.

\begin{proposition}
Formulations~\eqref{eq:outlier-ExactForm-Z} and \eqref{eq:outlier-ExactForm-X} are equivalent up to a rotation, i.e., if $\X^*$ is an optimal solution to optimization problem \eqref{eq:outlier-ExactForm-X}, then there exists a decomposition $\X^*={\bm G^*}{\bm G^*}^\top$ and an orthogonal matrix $\ocap \in \mb R^{\R\times \R}$ such that $\Z^*={\bm G^*}\bm O$ is an optimal solution for \eqref{eq:outlier-ExactForm-Z} with the same objective function value.   
\end{proposition}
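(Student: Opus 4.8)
The plan is to prove that \eqref{eq:outlier-ExactForm-Z} and \eqref{eq:outlier-ExactForm-X} have the same optimal value by a pair of inclusions, and then to read off the rotational equivalence from the non-uniqueness of symmetric factorizations. I will write $v_{\Z}$ and $v_{\X}$ for the two optimal values.

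\emph{Step 1 (lifting, the easy direction).} Given any $\Z$ feasible for \eqref{eq:outlier-ExactForm-Z}, I would set $\X:=\Z\Z^\top$. The constraints $\Z\in\{0,1\}^{\N\times r}$ and $\Z\bm 1_r\le\bm 1_\N$ say exactly that each row of $\Z$ has at most one nonzero entry, so $(\Z\Z^\top)_{ij}$ is an inner product of two $0/1$ rows each with at most one nonzero, hence lies in $\{0,1\}$ and $\X\in\{0,1\}^{\N\times\N}$; since $\X\succeq\bm 0$ and $\rank(\X)\le r$ automatically, $\X$ is feasible for \eqref{eq:outlier-ExactForm-X} with $\langle\K-\gamma\bm E_\N,\X\rangle=\langle\K-\gamma\bm E_\N,\Z\Z^\top\rangle$. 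This yields $v_{\X}\ge v_{\Z}$.

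\emph{Step 2 (projecting, the crux).} Conversely, starting from a feasible $\X$ of \eqref{eq:outlier-ExactForm-X}, I would show it already has the form $\Z\Z^\top$ for a feasible $\Z$ of \eqref{eq:outlier-ExactForm-Z}. First, if $X_{ii}=0$, the $2\times 2$ principal minor on $\{i,j\}$ equals $-X_{ij}^2\ge 0$, forcing $X_{ij}=0$ for all $j$, so every such "outlier" row and column vanishes; on the set $S=\{i:X_{ii}=1\}$ I would then show that $i\sim j\iff X_{ij}=1$ is an equivalence relation, the only nontrivial point being transitivity: if $X_{ij}=X_{jk}=1$ but $X_{ik}=0$ then the $3\times 3$ principal submatrix on $\{i,j,k\}$ has determinant $-1<0$, contradicting $\X\succeq\bm 0$. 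Hence, up to a simultaneous row/column permutation, $\X$ is block diagonal with all-ones blocks on the equivalence classes $C_1,\dots,C_t$ of $S$ and zeros elsewhere; each block has rank $1$, so $\rank(\X)\le r$ forces $t\le r$, and setting $Z_{i\ell}=\mathbbm 1[i\in C_\ell]$ for $\ell\le t$ (with the remaining $r-t$ columns and all outlier rows zero) gives a feasible $\Z$ with $\Z\Z^\top=\X$. This yields $v_{\Z}\ge v_{\X}$, hence $v_{\Z}=v_{\X}$.

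\emph{Step 3 (rotation).} If $\X^*$ is optimal for \eqref{eq:outlier-ExactForm-X}, the $\Z^*$ produced in Step 2 is feasible for \eqref{eq:outlier-ExactForm-Z} and achieves $\langle\K-\gamma\bm E_\N,\Z^*{\Z^*}^\top\rangle=\langle\K-\gamma\bm E_\N,\X^*\rangle=v_{\X}=v_{\Z}$, hence is optimal; thus $\bm G^*=\Z^*$, $\ocap=\bm I_r$ already certifies the claim. To obtain it for an arbitrary factor with $\X^*=\bm G^*{\bm G^*}^\top$, I would invoke the standard fact that two $\N\times r$ matrices with the same symmetric outer product differ by right multiplication by an orthogonal $r\times r$ matrix (when $\rank(\X^*)<r$ this is seen by passing to the thin eigendecomposition $\X^*=\bm U\bs\Lambda\bm U^\top$, writing the factors as $\bm U\bm A$ and $\bm U\bm B$ with $\bm A\bm A^\top=\bm B\bm B^\top=\bs\Lambda$, and completing $\bm A,\bm B$ to square invertible factors), which supplies the orthogonal $\ocap$ with $\Z^*=\bm G^*\ocap$.

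\emph{Main obstacle.} The load-bearing step is the structural lemma in Step 2: extracting the "block-of-all-ones" structure from a $0/1$ positive semidefinite matrix (transitivity via the $3\times 3$ minor, the treatment of zero-diagonal outlier rows, and converting $\rank(\X)\le r$ into "at most $r$ blocks"). The lifting direction is immediate, and the rotational clause is the routine non-uniqueness of Cholesky-type factorizations.
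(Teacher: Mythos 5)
Your proof is correct, and while the lifting direction and the final rotation clause match the paper, your argument for the crux --- recovering a feasible $\Z$ from a feasible $\X$ --- takes a genuinely different route. The paper starts from the algebraic factorization guaranteed by $\X\succeq\bs 0$ and $\rank(\X)\le \R$, writes $\X=\bm G\bm G^\top$ with $\bm G\in\mb R^{\N\times \R}$, uses $X_{ii}\in\{0,1\}$ to conclude each row of $\bm G$ is a zero or unit vector, shows that a maximal set of distinct nonzero rows is orthonormal, argues every nonzero row coincides with one of them, and then rotates that basis onto the standard one to produce $\Z=\bm G\ocap$ in a single stroke. You instead read the cluster structure directly off the entries of $\X$: zero-diagonal rows vanish by a $2\times 2$ principal minor, $X_{ij}=1$ is an equivalence relation on $\{i:X_{ii}=1\}$ by a $3\times 3$ principal minor, the resulting all-ones blocks number $\rank(\X)\le \R$, and the indicator matrix of the classes is the desired $\Z$; the orthogonal matrix is then supplied separately by the standard non-uniqueness of Gram factorizations (with the rank-deficient case handled via the thin eigendecomposition). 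Your route is more elementary and makes the combinatorial block structure of feasible $\X$ explicit --- which is also the fact the paper later uses to identify outliers with zero rows --- and it sidesteps a slightly loose step in the paper's write-up, where the claim that a nonzero row distinct from every basis vector has an inner product outside $\{0,1\}$ really requires expanding that unit row in the orthonormal basis with $\{0,1\}$ coefficients. The paper's route, in exchange, delivers the rotation $\ocap$ for its chosen factor without a separate factorization-uniqueness lemma. Both arguments are complete.
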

 We defer the proof to the appendix. Note that in the formulation presented in \eqref{eq:outlier-ExactForm-X}, the rows of $\bm X$ corresponding to outliers are essentially zero vectors. This provides us with a way to identify the outliers. However, even this formulation is a non-convex optimization problem due to the rank and integrality constraints imposed on $\X$. Hence, we obtain tractable reformulations by considering two convex relaxations for the problem. In the first, we relax the binary constraint on~$\bm  X$, and also, drop the rank constraint. This yields the following SDP formulation: 
\begin{equation}
\tag{Robust-SDP}
\label{RelaxedSDPForm}
\begin{aligned}
& \underset{ \X}{\text{maximize}} & &  \langle \K - \gamma \bm E_N,\X\rangle \\
& \text{subject to}
&&  0\leq X_{ij} \leq 1 &\forall~ i,j \\
&&& \bm  X \succeq \bs	 0.
\end{aligned}
\end{equation}

\noindent We note here that similar SDP formulations have also been proposed in the community detection literature \citep{amini2014semidefinite,cai2015robust,guedon2016community}. Next, we consider a second relaxation in which we also allow the SDP constraint to be dropped from the formulation. The resulting formulation is a linear program which is specified below:  
\begin{equation}
\tag{Robust-LP}
\label{RelaxedForm}
\begin{aligned}
& \underset{ \X}{\text{maximize}} & &  \langle \K - \gamma \bm E_N,\X\rangle \\
& \text{subject to}
&&  0\leq X_{ij} \leq 1 &\forall~ i,j
\end{aligned}
\end{equation}

\label{sec-algorithm}
\begin{algorithm}[tb]
   \caption{{Robust Spectral Clustering / Robust-SDP}}
   \label{algorithm}
{\bfseries Input:} Observations $\y_1,\ldots,\y_\N \in \mb R^\D$, number of clusters $\R$, scaling parameter $\theta \in \mb R_+$ and offset parameter $\gamma \in (0,1)$.
\begin{enumerate}[topsep=0pt,itemsep=-1ex,partopsep=1ex,parsep=1ex,leftmargin=.5cm]
    \item Construct Gaussian kernel matrix $\K$ where $K_{ij}=\exp\big(\frac{-\lVert \y_i - \y_j\rVert^2}{2\theta^2}\big)$.
    \item Solve \ref{RelaxedForm} (\ref{RelaxedSDPForm}) to obtain the estimated clustering matrix $\htt \X$ ($\htt \X^{\text{SDP}}$).
    \item Compute the top $\R$ eigenvectors of $\htt \X$ ($\htt \X^{\text{SDP}}$) obtain $\hat{\U}\in \mb R^{\N\times \R}$.
    \item Apply $k$-means clustering on rows of $\hat{\U}$ to estimate the cluster membership matrix $\htt \Z$. 
    \item Use $\htt \X$ ($\htt \X^{\text{SDP}}$) to determine the degree threshold $\tau$. Set $\htt {\mathc  I}=\{i\in [\N] : \degree(i)\geq\tau \} $ and $\htt {\mathc  O}= [\N] \setminus \htt{\mathc I}$.
\end{enumerate}
\end{algorithm}

\noindent For convenience, we denote the feasible region of \ref{RelaxedForm} by set $\mathc X$ and its optimal solution by~$\htt \X$. It is straightforward to see that $\htt \X$ admits a simple analytical solution, which can be expressed below:
\begin{equation*}
    \htt X_{ij}=\begin{cases}
    1 \quad \text{if } K_{ij}-\gamma>0, \\
    0 \quad \text{otherwise}.
    \end{cases}
\end{equation*}

Algorithm~\ref{algorithm} summarizes the robust spectral clustering algorithm. To obtain the SDP variant of the algorithm, in step 2 of the algorithm, we solve the \ref{RelaxedSDPForm} formulation instead of the \ref{RelaxedForm} formulation. We also note here that steps~3~and~4 of the algorithm simply correspond to the application of vanilla spectral clustering to~$\htt \X$. In general, solving the $k$-means clustering problem in~step~4~is a NP-hard problem. Therefore, in our analysis, instead of solving the problem exactly, similar to \cite{lei2015consistency}, we consider the use of a $(1+\epsilon)$-approximate $k$-means clustering algorithm that runs in polynomial time. In the last step, we estimate the set of outliers~$\mathc{\htt O}$.  Based on our derivations of the \ref{RelaxedSDPForm} and \ref{RelaxedForm} formulations, we note that the outlier points in $\mathc{ O}$ correspond to near-zero degree nodes in the true clustering matrix $\X^0$. We make use of this fact to determine a degree threshold $\tau$ from the degree distribution of the nodes in~$\htt \X$, and assign the nodes that have degrees lesser than $\tau$ in $\htt \X$ to the set of outliers $\mathc{\htt O}$. The main idea behind this procedure is that if $\htt \X$ closely approximates~$\X^0$ and the threshold $\tau$ is appropriately chosen, then the low-degree nodes  below the threshold in~$\htt \X$ are good candidates for being outliers. 

It is important to note that properly choosing the parameters $\theta$ and $\gamma$ is central to the performance of the algorithm. For instance, if we choose the value of $\gamma$ to be arbitrarily close to 0 or 1, then $\htt \X$ obtained after rounding is either an all ones matrix or an all zeros matrix, thereby rendering the denoising step useless. In \Cref{sec-results}, we derive theoretical values for $\theta$ and $\gamma$ in terms of $\sigma_{\max}$ and $\Delta_{\min}$.

\section{Main Results}
\label{sec-results}
In this section, we summarize our main results and provide an overview of the approach used to obtain these results. Our main theoretical result is a finite sample guarantee on the estimation error for $\htt \X$. Specifically, we show the relative estimation error for $\htt \X$ decays exponentially in the square of the signal-to-noise ratio with probability tending to one as $N\rightarrow \infty$, provided there is sufficient separation between cluster centers and the number of outliers $m$ are a small fraction of the number of inliers points $\smalln$ (\Cref{theorem-ErrorRateX}). Using the result, we show that provided the clusters are approximately balanced, the error rate for $\htt \X$ translates into an error rate for $\htt \Z$, and hence, the fraction of mis-classified data points per cluster also decays exponentially in the square of the signal-to-noise ratio (\Cref{theorem-ErrorRateZ}). 

For analyzing semidefinite relaxations of clustering problems, a rather useful direction is the approach described in \cite{guedon2016community}, which is in  the context of stochastic block models. The main idea in the analysis of~\citet{guedon2016community} and~\citet{mixon2016clustering} is to come up with a suitable reference matrix $\bm R$, and then use concentration of measure to control the deviation of the input matrix (adjacency matrix $\bm A$ for~\citet{guedon2016community}, the matrix of pairwise squared Euclidean distances in~\citet{mixon2016clustering}, and the kernel matrix $\K$ for us) from the reference matrix.  However, there are some important differences between our setting and theirs. SGMMs and SBMs are fundamentally different because the kernel matrix $\K$ constructed for a SGMM arises from $n$ i.i.d.~datapoints, leading to  entries that are statistically dependent on each other. In contrast, the adjacency matrix of a random graph for a SBM has $n\choose 2$ Bernoulli random variables, which are conditionally independent given the latent cluster memberships. Therefore, the analytical techniques required to analyze SGMMs are completely different compared to SBMs. Both~\citet{mixon2016clustering} and~\citet{yan2016convex} use suitable reference matrices for related but different SDP relaxations. The proof techniques that we develop in this section are new and involve coming up with a new reference matrix that allows us to carefully bound the tail probabilities. In addition, the resulting error bound that we get from our analysis is also tighter than that of~\citet{mixon2016clustering} and~\citet{yan2016convex}. 

We now provide an overview of our proof approach. Our constructed reference matrix~$\rfr \in [0,1]^{\N\times \N}$~satisfies two properties:
\begin{enumerate}[label=(\roman*)]
    \item $\rfr$ is close to $\K$ with high probability in the $\ell_1$-norm sense.
    %for a wide range of $\gamma$ and $\theta$ values. 
    \item The solution to the reference optimization problem \eqref{ReferenceOptimizationProblem} defined below corresponds to the true clustering matrix $\X^0$~(\Cref{lem-TrueReferenceMaximizer}).
    \begin{equation}
    \label{ReferenceOptimizationProblem}
        \begin{aligned}
    & \underset{ \X}{\text{maximize}} & &  \langle \rfr - \gamma \bm E_\N,\X\rangle \\
    & \text{subject to}
    &&  0\leq X_{ij} \leq 1 & \forall~ i,j 
    \end{aligned}
    \end{equation}
    In other words, the reference matrix $\rfr$ is chosen in a way such that the true clustering matrix~$\X^0$ solves the reference optimization problem, which is obtained by replacing kernel matrix~$\K$ in \ref{RelaxedForm} with $\rfr$.
\end{enumerate}
We show that if (i) holds, then with high probability $\htt \X \in \mathc X$ approximately solves the reference optimization problem in $\eqref{ReferenceOptimizationProblem}$, i.e., $\langle \rfr - \gamma \bm E_\N, \htt \X\rangle \approx \langle \rfr - \gamma \bm E_\N, \X^0\rangle$ (see \Cref{lem-EstimatedAlmostReferenceMaximizer}).
Using this result, we then prove that if (ii) holds, and the number of outliers is a small fraction of the number of inliers in the dataset, then the estimated clustering matrix $\htt \X$ is close to the true clustering matrix~$\X^0$. In other words, the relative estimation error, $\frac{\lVert \htt \X-\X^0 \rVert_1}{\lVert \X^0 \rVert_1} \leq \epsilon$ (small) with probability tending to one as $\N\rightarrow \infty$ (see \Cref{theorem-ErrorRateX}). 
Next, using the Davis-Kahan theorem \citep{yu2014useful}, we show that provided the clusters are relatively balanced in sizes, the error rates obtained for $\htt \X$ also hold for the clustering membership matrix $\htt \Z$ obtained by applying spectral clustering on $\htt \X$ (see \Cref{theorem-ErrorRateZ}).

For our analysis, we assume the reference matrix $\rfr$ to be a random matrix whose $(i,j)$-th entry is defined as below: 
\begin{equation}
\label{ReferenceDefinition}
    R_{ij}= 
    \begin{cases}
    \max\{K_{ij}, \tau_{\tin} \} &\text{ if both } i \text{ and } j \in \mathc C_k \\
    \min\{K_{ij}, \tau_{\tout}^{(k,l)}\} & \text{ if } i\in \mathc C_k, j\in \mathc C_l \ (l\neq k)\\
    \gamma & \text{ if either } i\in \mathc O \text{ or } j \in \mathc O  \\
    \end{cases}
\end{equation}
Here, $\tau_{\tin}:= \exp{\big(-\frac{{r_{\tin}}^2}{\theta^2}}\big)$ and $\tau_{\tout}^{(k,l)}:=\exp\big(-\frac{{r^{(k,l)}_{\tout}}^2}{\theta^2}\big)$ are threshold quantities defined respectively for the diagonal and off-diagonal blocks of reference matrix over the set of inlier points. For $i,j\in \mathc C_k$, we obtain $R_{ij}$ by thresholding $K_{ij}$ to $\tau_{\tin}$ if $K_{ij}<\tau_{\tin}$. Similarly, for any $i\in \mathc C_k$ and  $j\in \mathc C_l$, $R_{ij}$ thresholds the value to $\tau_{\tout}^{(k,l)}$ if $K_{ij}>\tau_{\tout}^{(k,l)}$. The values of parameters $r_{\tin}$ and $r_{\tout}^{(k,l)}$, which we specify later in the section, are determined such that with high probability only a few kernel entries violate the thresholds defined for their respective blocks, and thus, property $(i)$ is satisfied. 

To ensure that our constructed reference matrix $\rfr$ satisfies property $(ii)$, we impose a \textit{strong assortativity} condition (similar to the analysis used for SBMs) which assumes that for the set of inlier points, the smallest entry $R^{\tin}_{\min}$ on the diagonal blocks of $\rfr$ is strictly greater than the largest entry $R^{\tout}_{\max}$ on any of its off-diagonal blocks, i.e., 
\begin{equation}
\label{StrongAssortativity}
    R^{\tin}_{\min}=\min_{i,j \in \mathc C_k: k\in [\R] } R_{ij} > \max_{ i\in \mathc C_k, j\in \mathc C_l:k,l \in [\R]} R_{ij} = R^{\tout}_{\max}.
\end{equation}
Based on the definition of the reference matrix, it is clear that~$R^{\tin}_{\min}\geq \tau_{\tin}$ and~$R^{\tout}_{\max}\leq \tau_{\tout} :=\max_{k\neq l}\tau_{\tout}^{(k,l)} $. Thus, the strong assortativity condition in \eqref{StrongAssortativity} is immediately implied if we require that $\tau_{\tin} > \tau_{\tout}$. We now use the strong assortativity condition in $\eqref{StrongAssortativity}$ to show that the true clustering matrix $\X^0$ is the solution to the reference optimization problem in $\eqref{ReferenceOptimizationProblem}$ as required by property $(ii)$.
\begin{lemma}
\label{lem-TrueReferenceMaximizer}
Suppose that the strong assortativity condition in \eqref{StrongAssortativity} holds and $R^{\tout}_{\max}<\gamma<R^{\tin}_{\min}$, then the true clustering matrix $\X^0$ maximizes the reference optimization problem in \eqref{ReferenceOptimizationProblem}.
\end{lemma}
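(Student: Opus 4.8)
The plan is to exploit the fact that \eqref{ReferenceOptimizationProblem} is a linear program whose feasible region, the box $[0,1]^{\N\times\N}$, is a product of intervals and therefore \emph{separable} across coordinates. Consequently the problem decomposes into $\N^2$ independent scalar problems of the form $\max_{0\le X_{ij}\le 1}(R_{ij}-\gamma)X_{ij}$, and the maximizer of each is $X_{ij}=1$ when $R_{ij}>\gamma$, $X_{ij}=0$ when $R_{ij}<\gamma$, and any value in $[0,1]$ when $R_{ij}=\gamma$. Thus a feasible matrix is globally optimal for \eqref{ReferenceOptimizationProblem} as soon as it attains the per-coordinate maximum in every coordinate, and it suffices to check this for $\X^0$.

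Next I would classify the entries of $\rfr$ using the definition \eqref{ReferenceDefinition} together with the hypotheses. For $i,j$ lying in a common cluster $\mathc C_k$, the definition gives $R_{ij}=\max\{K_{ij},\tau_{\tin}\}\ge R^{\tin}_{\min}>\gamma$, so the scalar optimum is $1$, which matches $X^0_{ij}=1$. For $i\in\mathc C_k$, $j\in\mathc C_l$ with $k\neq l$, the definition gives $R_{ij}=\min\{K_{ij},\tau_{\tout}^{(k,l)}\}\le R^{\tout}_{\max}<\gamma$, so the scalar optimum is $0$, which matches $X^0_{ij}=0$. Finally, if $i\in\mathc O$ or $j\in\mathc O$ then $R_{ij}=\gamma$, so every value in $[0,1]$ is optimal for that coordinate; in particular the choice $X^0_{ij}=0$ is optimal. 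Here the strong assortativity condition \eqref{StrongAssortativity}, combined with the sandwiching $R^{\tout}_{\max}<\gamma<R^{\tin}_{\min}$, is exactly what produces a strict sign separation $R_{ij}-\gamma$ on all inlier--inlier coordinates, which is what forces $\X^0$ (and not some merged or split clustering) to be the per-coordinate maximizer there.

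Putting these together: $\X^0\in\{0,1\}^{\N\times\N}\subseteq[0,1]^{\N\times\N}$ is feasible for \eqref{ReferenceOptimizationProblem}, and in every coordinate it attains the maximum of $(R_{ij}-\gamma)X_{ij}$ over $X_{ij}\in[0,1]$; hence it maximizes $\langle\rfr-\gamma\bm E_\N,\X\rangle$ over the feasible set, proving the claim.

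I do not expect a genuine obstacle here; the argument is a direct verification. The only point that deserves care is the outlier coordinates, where $R_{ij}=\gamma$ makes the objective coefficient vanish and the maximizer ceases to be unique — one must simply note that $\X^0$ remains \emph{among} the maximizers (this is also the structural reason the optimization cannot, by itself, certify which points are outliers, motivating the separate degree-thresholding step of the algorithm). A secondary bookkeeping point is the diagonal: for an inlier $i$, $R_{ii}=\max\{1,\tau_{\tin}\}=1>\gamma$ and $X^0_{ii}=1$, so diagonal entries are handled by the in-cluster case; for an outlier $i$, $R_{ii}=\gamma$ and the coordinate is free.
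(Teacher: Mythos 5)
Your proposal is correct and takes essentially the same route as the paper: both arguments rest on the coordinatewise separability of the linear objective over the box $[0,1]^{\N\times\N}$, with the strong assortativity condition and the sandwiching of $\gamma$ giving a strict sign separation of $R_{ij}-\gamma$ on inlier--inlier coordinates. Your treatment is slightly more explicit than the paper's (which leaves the outlier coordinates, where $R_{ij}-\gamma=0$, implicit), and your observation that $\X^0$ is only \emph{among} the maximizers there is a correct and worthwhile clarification.
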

\noindent
\begin{proof}
Set $R^{\tout}_{\max}<\gamma<R^{\tin}_{\min}$. Then, for the set of inlier points, all entries on the diagonal blocks of $\rfr - \gamma \bm E_\N$ are strictly positive, while those on the off-diagonal blocks are strictly negative. Thus, $\X^0=\arg\max\limits_{\X\in[0,1]^{N\times N}} \langle \rfr -\gamma \bm E_\N, \X \rangle$, i.e., $\X^0$ maximizes the reference objective function over the feasible region comprising of all $[0,1]^{\N\times \N}$ matrices. 
\end{proof}

\begin{remark}
 Note that even though we do not have SDP constraints, $\X^0=\Z^0 {\Z^0}^\top \in \mathc S_\N^+$  which implies $\X^0 \in \mathc X$ and $\X^0\in\arg\max\limits_{\X\in\mathc X} \langle \rfr -\gamma \bm E_\N, \X \rangle$. And thus, Lemma~\ref{lem-TrueReferenceMaximizer} also applies to Robust-SDP.
\end{remark}

\noindent Next, we present \Cref{lem-EstimatedApproximatesTrue}, which provides a bound on the estimation error for the inlier parts of~$\X^0$ and~$\htt \X$ in terms of the difference in their corresponding objective function values for the reference optimization problem. 
\begin{lemma}
\label{lem-EstimatedApproximatesTrue}
Suppose that the strong assortativity condition in \eqref{StrongAssortativity} holds and $R^{\tout}_{\max}<\gamma<R^{\tin}_{\min}$, then the estimation error for $\X^0$  over the set of inlier data points is 
$$\lVert \htt \X_{\mathc I} - \X^0_{\mathc I} \rVert_1 \leq \frac{\langle \rfr - \gamma \bm E_\N, \X^0-\htt \X\rangle}{\min(R^{\tin}_{\min}-\gamma,\gamma-R^{\tout}_{\max})}.$$
Additionally, if the penalty parameter $\gamma\in \big(R^{\tout}_{\max},R^{\tin}_{\min} \big)$ is expressed as  $\gamma= \upsilon \tau_{\tin}+(1-\upsilon)\tau_{\tout}$ for some constant~$\upsilon~\in~(0,1)$, then the above bound simplifies to $$ \lVert \htt \X_{\mathc I} - \X^0_{\mathc I} \rVert_1 \leq \frac{\langle \rfr - \gamma \bm E_\N, \X^0-\htt \X\rangle}{\min\{\upsilon,1-\upsilon\}(\tau_{\tin}-\tau_{\tout})}.$$
\end{lemma}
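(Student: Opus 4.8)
The plan is to expand the inner product entrywise and exploit the block structure of $\rfr-\gamma\bm E_\N$ together with the sign pattern of $\X^0-\htt\X$. Writing $\langle \rfr-\gamma\bm E_\N,\X^0-\htt\X\rangle=\sum_{i,j}(R_{ij}-\gamma)(X^0_{ij}-\htt X_{ij})$, the first observation is that every term involving an outlier index vanishes: by the definition of $\rfr$ in \eqref{ReferenceDefinition}, $R_{ij}=\gamma$ whenever $i\in\mathc O$ or $j\in\mathc O$. Hence the whole sum reduces to $\sum_{i,j\in\mathc I}(R_{ij}-\gamma)(X^0_{ij}-\htt X_{ij})$, which is exactly why the error one can control is the inlier part $\lVert\htt\X_{\mathc I}-\X^0_{\mathc I}\rVert_1$: the reference problem carries no information about the outlier rows of $\htt\X$.

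Next I would split the inlier sum into diagonal blocks ($i,j$ in the same cluster) and off-diagonal blocks ($i\in\mathc C_k$, $j\in\mathc C_l$, $k\neq l$). On a diagonal block, $X^0_{ij}=1$, so $X^0_{ij}-\htt X_{ij}=1-\htt X_{ij}\geq 0$ by feasibility of $\htt\X$, while $R_{ij}-\gamma\geq R^{\tin}_{\min}-\gamma>0$ under the hypothesis; thus $(R_{ij}-\gamma)(X^0_{ij}-\htt X_{ij})\geq (R^{\tin}_{\min}-\gamma)\,\lvert X^0_{ij}-\htt X_{ij}\rvert$. On an off-diagonal block, $X^0_{ij}=0$, so $X^0_{ij}-\htt X_{ij}=-\htt X_{ij}\leq 0$, while $R_{ij}-\gamma\leq R^{\tout}_{\max}-\gamma<0$; hence the product equals $\lvert R_{ij}-\gamma\rvert\,\htt X_{ij}\geq (\gamma-R^{\tout}_{\max})\,\lvert X^0_{ij}-\htt X_{ij}\rvert$. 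In both cases the summand is nonnegative and bounded below by $\min(R^{\tin}_{\min}-\gamma,\ \gamma-R^{\tout}_{\max})\,\lvert X^0_{ij}-\htt X_{ij}\rvert$. Summing over $i,j\in\mathc I$ and factoring out the minimum gives $\langle \rfr-\gamma\bm E_\N,\X^0-\htt\X\rangle\geq \min(R^{\tin}_{\min}-\gamma,\ \gamma-R^{\tout}_{\max})\,\lVert\htt\X_{\mathc I}-\X^0_{\mathc I}\rVert_1$, which rearranges to the first claimed bound.

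For the simplification, recall from the construction of $\rfr$ that $R^{\tin}_{\min}\geq\tau_{\tin}$ and $R^{\tout}_{\max}\leq\tau_{\tout}$. Substituting $\gamma=\upsilon\tau_{\tin}+(1-\upsilon)\tau_{\tout}$ gives $R^{\tin}_{\min}-\gamma\geq(1-\upsilon)(\tau_{\tin}-\tau_{\tout})$ and $\gamma-R^{\tout}_{\max}\geq\upsilon(\tau_{\tin}-\tau_{\tout})$, hence $\min(R^{\tin}_{\min}-\gamma,\ \gamma-R^{\tout}_{\max})\geq\min\{\upsilon,1-\upsilon\}(\tau_{\tin}-\tau_{\tout})$. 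Since the numerator $\langle \rfr-\gamma\bm E_\N,\X^0-\htt\X\rangle$ is nonnegative (each of the summands above was), replacing the denominator by this smaller positive quantity only enlarges the right-hand side, which yields the second bound.

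I do not anticipate a genuine obstacle here: the statement is a term-by-term inequality, and the only points needing care are (a) the sign bookkeeping on the two block types, (b) noting that the outlier block of $\rfr-\gamma\bm E_\N$ is identically zero, which forces the error to be measured on $\mathc I$ only, and (c) checking that nonnegativity of the numerator is what licenses shrinking the denominator in the last step. The argument uses only feasibility of $\htt\X$ and not optimality, so the same bound applies verbatim to the Robust-SDP solution.
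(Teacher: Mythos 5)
Your proposal is correct and follows essentially the same route as the paper: expand $\langle \rfr-\gamma\bm E_\N,\X^0-\htt\X\rangle$ over the inlier blocks (the outlier entries vanish since $R_{ij}=\gamma$ there), lower-bound the diagonal-block terms by $(R^{\tin}_{\min}-\gamma)\lvert X^0_{ij}-\htt X_{ij}\rvert$ and the off-diagonal ones by $(\gamma-R^{\tout}_{\max})\lvert X^0_{ij}-\htt X_{ij}\rvert$, then use $R^{\tin}_{\min}\geq\tau_{\tin}$, $R^{\tout}_{\max}\leq\tau_{\tout}$ and the convex-combination form of $\gamma$ for the second bound. Your side remarks — that only feasibility (not optimality) of $\htt\X$ is used, and that nonnegativity of the numerator licenses shrinking the denominator — are both accurate and consistent with the paper's argument.
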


\noindent In the next lemma, we show that if the kernel matrix is close to the reference matrix in a $\ell_1$-norm sense, then the difference in the objective values of the reference optimization problem is also small. 
\begin{lemma}
\label{lem-EstimatedAlmostReferenceMaximizer}
Let $\K_{\mathc I},\rfr_{\mathc I}\in [0,1]^{\smalln \times \smalln} $ denote respectively the parts of the kernel and reference matrices with each $(i,j)$-th entry restricted to the set of inlier points, then 
$$\langle \rfr - \gamma \bm E_\N, \X^0-\htt \X\rangle \leq 2 \lVert \K_{\mathc I} -\rfr_{\mathc I} \rVert_{1}.$$
\end{lemma}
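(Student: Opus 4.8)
The plan is to exploit two structural facts and then run a routine ``basic inequality'' argument. First, by the definition of $\rfr$ in \eqref{ReferenceDefinition}, one has $R_{ij}=\gamma$ whenever $i\in\mathc O$ or $j\in\mathc O$, so every entry of $\rfr-\gamma\bm E_\N$ touching an outlier is zero; since the outlier rows and columns of $\X^0$ also vanish, the inner product collapses onto the inlier block,
\[
\langle \rfr-\gamma\bm E_\N,\ \X^0-\htt \X\rangle \;=\; \langle \rfr_{\mathc I}-\gamma\bm E_n,\ \X^0_{\mathc I}-\htt \X_{\mathc I}\rangle ,
\]
where $\htt \X_{\mathc I}$ denotes the restriction of $\htt \X$ to the inlier indices; in particular the (possibly nonzero) outlier entries of $\htt \X$ drop out entirely. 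Second, the optimal solution of \ref{RelaxedForm} is the closed-form matrix with $\htt X_{ij}=1$ if $K_{ij}>\gamma$ and $\htt X_{ij}=0$ otherwise; since the \ref{RelaxedForm} objective is separable over the entries of the box $[0,1]^{\N\times\N}$, its restriction $\htt \X_{\mathc I}$ is precisely the entrywise maximizer of $\langle \K_{\mathc I}-\gamma\bm E_n,\ \X\rangle$ over $\X\in[0,1]^{n\times n}$.

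Next I would split the inlier-block inner product by adding and subtracting $\K_{\mathc I}$,
\[
\langle \rfr_{\mathc I}-\gamma\bm E_n,\ \X^0_{\mathc I}-\htt \X_{\mathc I}\rangle = \langle \rfr_{\mathc I}-\K_{\mathc I},\ \X^0_{\mathc I}-\htt \X_{\mathc I}\rangle + \langle \K_{\mathc I}-\gamma\bm E_n,\ \X^0_{\mathc I}-\htt \X_{\mathc I}\rangle ,
\]
and bound the two terms separately. For the first term, all entries of $\X^0$ and of $\htt \X$ lie in $[0,1]$, so $\lvert X^0_{ij}-\htt X_{ij}\rvert\le 1$ and hence $\langle \rfr_{\mathc I}-\K_{\mathc I},\ \X^0_{\mathc I}-\htt \X_{\mathc I}\rangle\le\lVert \rfr_{\mathc I}-\K_{\mathc I}\rVert_1$. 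For the second term, $\X^0_{\mathc I}\in\{0,1\}^{n\times n}\subseteq[0,1]^{n\times n}$ is feasible for the box over which $\htt \X_{\mathc I}$ maximizes $\langle \K_{\mathc I}-\gamma\bm E_n,\ \X\rangle$, so $\langle \K_{\mathc I}-\gamma\bm E_n,\ \X^0_{\mathc I}-\htt \X_{\mathc I}\rangle\le 0\le\lVert \K_{\mathc I}-\rfr_{\mathc I}\rVert_1$. Adding the two estimates yields $\langle \rfr-\gamma\bm E_\N,\ \X^0-\htt \X\rangle\le 2\lVert \K_{\mathc I}-\rfr_{\mathc I}\rVert_1$ (in fact the constant $1$ already works, but the stated constant is harmless in the sequel, and note also that no hypothesis relating $\gamma$ to $\tau_{\tin},\tau_{\tout}$ is needed here).

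The step I expect to be the main obstacle is the reduction to the inlier block. The tempting alternative---discarding $\langle \K-\gamma\bm E_\N,\ \X^0-\htt \X\rangle\le 0$ by global optimality of $\htt \X$ for \ref{RelaxedForm} and then estimating $\langle \rfr-\K,\ \X^0-\htt \X\rangle$ directly---forces one to control a cross term of the form $\sum_{i\in\mathc O\ \text{or}\ j\in\mathc O}(K_{ij}-\gamma)\htt X_{ij}$, which an adversarially placed outlier (say, one sitting essentially on top of a cluster) can push up to order $n$ and which bears no relation to $\lVert \K_{\mathc I}-\rfr_{\mathc I}\rVert_1$. Recognizing that $\rfr-\gamma\bm E_\N$ is identically zero on every outlier-touching entry is exactly what allows us to pass to the inlier block \emph{before} invoking any optimality, after which, as above, everything is routine.
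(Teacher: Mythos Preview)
Your proof is correct and follows essentially the same route as the paper: reduce to the inlier block via $R_{ij}=\gamma$ on outlier-touching entries, add and subtract $\K_{\mathc I}$, use the (blockwise) optimality of $\htt\X_{\mathc I}$ for $\langle \K_{\mathc I}-\gamma\bm E_n,\cdot\rangle$, and finish with an entrywise $\ell_1$ bound. The only difference is cosmetic: the paper writes the residual as $\langle \rfr_{\mathc I}-\K_{\mathc I},\X^0_{\mathc I}\rangle+\langle \K_{\mathc I}-\rfr_{\mathc I},\htt\X_{\mathc I}\rangle$ and bounds each summand by $\lVert \K_{\mathc I}-\rfr_{\mathc I}\rVert_1$ to get the constant $2$, whereas you keep it as $\langle \rfr_{\mathc I}-\K_{\mathc I},\X^0_{\mathc I}-\htt\X_{\mathc I}\rangle$ and use $\lvert X^0_{ij}-\htt X_{ij}\rvert\le 1$ to get the constant $1$---a harmless sharpening, as you note.
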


Based on the definition of the reference matrix in \eqref{ReferenceDefinition}, we note that for the $(i,j)$-th entry on the diagonal block of reference matrix where both $i,j\in \mathc C_k$, $R_{ij}$ deviates from its corresponding kernel value $K_{ij}$ only if $K_{ij}$ is below the threshold value $\tau_{\tin}$. Similarly, for the $(i,j)$-th entry on the off-diagonal block where $i\in \mathc C_k$ and $j\in \mathc C_l$, $R_{ij}$ differs from $K_{ij}$ only if $K_{ij}$ is above the threshold value $\tau_{\tout}^{(k,l)}$ for that block. Therefore, we obtain a bound on $\lVert \K_{\mathc I} - \rfr_{\mathc I} \rVert_1$ by bounding the number of kernel entries which deviate from their respective threshold values on the diagonal and off-diagonal blocks. In particular, we can bound the $\ell_1$-loss in
 \Cref{lem-EstimatedAlmostReferenceMaximizer}
 by the following:
\begin{align}
2\cdot\underbrace{\sum_{k\in[\R]}\sum_{\substack{i,j\in\mathc C_k: i< j}}\mathbbm{1}_{\{ K_{ij}<\tau_{\tin}\}}}_{A}+\underbrace{\sum_{k\neq l}\sum_{i\in \mathc C_k, j\in\mathc C_l}\mathbbm{1}_{\{ K_{ij}>\tau_{\tout}^{(k,l)}\}}}_{B}
\end{align}
If the entries of the kernel matrix were independent, a straightforward application of standard concentration inequalities would have provided us a bound. However, because of the dependence between them,  we use properties of the concept of U-statistics~\citep{hoeffding1963probability}. In particular, we write the first part ($A$) of the above decomposition in terms of the following sum of one-sample U-statistics:% the following quantities and show that they concentrate around their expectation.

\begin{equation}
    \displaystyle A=\sum_k {n_k\choose 2} U_{kk},\qquad U_{kk}=\frac{\sum_{\{(i,j):i,j\in \mathc C_k,i <j \}}\mathbbm{1}_{\{ K_{ij}<\tau_{\tin}\}}}{n_k(n_k-1)/2}.
    \label{eq:ukk}
\end{equation}

Similarly, we write the second part ($B$) of the decomposition in terms of the following sum of two-sample U-statistics:
\begin{equation}
    B=\sum_{k\neq l}n_kn_l U_{kl},\qquad U_{kl}=\frac{\sum_{i\in\mathc C_k, j \in \mathc C_l}\mathbbm{1}_{\{ K_{ij}>\tau_{\tout}^{(k,l)}\}}}{n_k n_l}.
    \label{eq:ukl}
\end{equation}

A U-statistic of degree $m$ is an unbiased estimator of some unknown quantity $\mb E[h(w_1,\dots, w_m)]$ (where $w_1,\dots,w_n$ are i.i.d.~observations drawn from some underlying probability distribution). It can be written as an average of the $h$ function (also known as the kernel function) applied on~${n\choose m}$~size~$m$ subsets of the data. It is not hard to see that $U_{kk}$ defined in~\eqref{eq:ukk} is a U-statistic created from $\bm y_i, i\in \mathc C_k$, where $\bm y_i$ are drawn i.i.d.~from the $k$-th SGMM mixture component. On the other hand, $U_{kl}$ defined in~\eqref{eq:ukl} is a two-sample U-statistic created from two i.i.d.~datasets drawn from the $k$-th and $l$-th SGMM mixture component.
\iffalse
\begin{equation*}
    \displaystyle U_{kk}=\frac{\sum_{i=1}^{n_k}\sum_{j=i+1}^{n_k}\mathbbm{1}_{\{ K_{ij}<\tau_{\tin}\}}}{n_k(n_k-1)/2}.
\end{equation*}
Similarly, for the off-diagonal block $(k,l)$, we estimate the probability of corruption using the following two-sample U-statistic: 
\begin{equation*}
    U_{kl}=\frac{\sum_{i\in\mathc C_k, j \in \mathc C_l}\mathbbm{1}\{ K_{ij}>\tau_{\tout}^{(k,l)}\}}{n_k n_l}.
\end{equation*}
\fi 
Finally, using concentration results for U-statistics from \cite{hoeffding1963probability} and \cite{arcones1995bernstein}, we obtain a probabilistic bound on the number of corrupt entries. This leads to the bound on the estimation error for $\htt \X$ in Theorem~\ref{theorem-ErrorRateX}, which we present in the next sub-section.

\subsection{Estimation error}
\label{sec-EstimationError}
We are now in a position to present our first main result, which states that if the number of outlier points is much smaller than the number of inlier points in the dataset,  then with probability tending to one, the error rate obtained is small provided there is enough separation between the cluster centers and the sample size is sufficiently large. We state this result formally in the theorem below.

\begin{theorem}[Estimation error for \ref{RelaxedForm} solution $\htt \X$] 
\label{theorem-ErrorRateX} Let $\tau_{\tin}=\exp\big(-\frac{5\Delta_{\min}^2}{32\theta^2}\big)$ and $\tau_{\tout}^{(k,l)}=\exp\big(-\frac{\Delta_{kl}^2}{2\theta^2}\big)$. Choose $\gamma \in (\tau_{\tout},\tau_{\tin})$, where $\tau_{\tout}:=\max_{k\neq l}\tau_{\tout}^{(k,l)}=\exp\big(-\frac{\Delta_{\min}^2}{2\theta^2}\big)$. Suppose~$\theta~=~\Theta(\Delta_{\min})$ and the minimum separation between cluster centers $\Delta_{\min}\geq 8\sigma_{\max} \sqrt{\D}$, then with probability at least $1-2r/n_{\min}$, we have that the estimation error for the inlier part of $\htt \X$ is
\begin{equation}
\label{equation-ErrorRateInlierX}
        \lVert \htt \X_{\mathc I} -  \X^0_{\mathc I} \rVert_1
        \leq Cn^2\cdot \max \bigg\{ \exp\bigg(-\frac{\Delta_{\min}^2}{64\sigma_{\max}^2}\bigg),\frac{\log n_{\min}}{n_{\min}}\bigg\}.
\end{equation}
In addition, the relative estimation error for $\htt \X$ is  
\begin{equation}
\label{equation-ErrorRateX}
    \frac{\lVert \htt \X - \X^0\rVert_1}{\lVert \X^0 \rVert_1}\leq
    C' r \exp\bigg(-\frac{\Delta_{\min}^2}{64\sigma_{\max}^2}\bigg)  + C'' r \max\bigg\{\frac{\log n_{\min}}{n_{\min}},\frac{m}{\smalln}\bigg\}.
\end{equation}
\noindent Here, $C, C',C''>0$ are universal constants, and $n_{\min}:=\min_{k\in[\R]} n_k>r$ denotes the cardinality of the smallest cluster.
\end{theorem}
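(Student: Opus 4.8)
The plan is to feed the structural lemmas already proved (Lemmas~\ref{lem-TrueReferenceMaximizer}, \ref{lem-EstimatedApproximatesTrue}, \ref{lem-EstimatedAlmostReferenceMaximizer}) with the reference matrix $\rfr$ of \eqref{ReferenceDefinition} instantiated at the thresholds $\tau_{\tin},\tau_{\tout}^{(k,l)}$ prescribed in the statement, and thereby reduce the theorem to a tail bound on the U-statistics $U_{kk},U_{kl}$ of \eqref{eq:ukk}--\eqref{eq:ukl}. First I would verify the hypotheses of those lemmas: since $5/32<1/2$ we have $\tau_{\tin}>\tau_{\tout}:=\max_{k\neq l}\tau_{\tout}^{(k,l)}$, hence $R^{\tin}_{\min}\ge\tau_{\tin}>\tau_{\tout}\ge R^{\tout}_{\max}$ and the strong assortativity condition \eqref{StrongAssortativity} holds, while any $\gamma\in(\tau_{\tout},\tau_{\tin})$ can be written as $\gamma=\upsilon\tau_{\tin}+(1-\upsilon)\tau_{\tout}$ with $\upsilon\in(0,1)$, where under $\theta=\Theta(\Delta_{\min})$ both $\min\{\upsilon,1-\upsilon\}$ and $\tau_{\tin}-\tau_{\tout}$ are absolute constants bounded away from $0$. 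Chaining Lemmas~\ref{lem-EstimatedApproximatesTrue} and~\ref{lem-EstimatedAlmostReferenceMaximizer} then gives
\[
\lVert\htt\X_{\mathc I}-\X^0_{\mathc I}\rVert_1\ \le\ \frac{2\,\lVert\K_{\mathc I}-\rfr_{\mathc I}\rVert_1}{\min\{\upsilon,1-\upsilon\}(\tau_{\tin}-\tau_{\tout})}\ \lesssim\ \lVert\K_{\mathc I}-\rfr_{\mathc I}\rVert_1,
\]
so the first bound \eqref{equation-ErrorRateInlierX} will follow once we show $\lVert\K_{\mathc I}-\rfr_{\mathc I}\rVert_1\lesssim n^2\max\{\exp(-\Delta_{\min}^2/(64\sigma_{\max}^2)),\log n_{\min}/n_{\min}\}$ with probability at least $1-2r/n_{\min}$.

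Next I would bound the per-pair corruption probabilities $\mathbb{E}[U_{kk}]$ and $\mathbb{E}[U_{kl}]$. For a within-cluster pair $i,j\in\mathc C_k$ we have $\y_i-\y_j=\bs\xi_i-\bs\xi_j$, so $\{K_{ij}<\tau_{\tin}\}=\{\lVert\bs\xi_i-\bs\xi_j\rVert^2>\frac{5}{16}\Delta_{\min}^2\}$; bounding $\lVert\bs\xi_i-\bs\xi_j\rVert\le\lVert\bs\xi_i\rVert+\lVert\bs\xi_j\rVert$ and using sub-gaussian concentration of the Euclidean norm of a sub-gaussian vector (so that $\lVert\bs\xi_i\rVert\lesssim\sigma_k\sqrt{\D}$ with exponentially small deviation probability) together with $\Delta_{\min}\ge 8\sigma_{\max}\sqrt{\D}$, I would obtain $\mathbb{E}[U_{kk}]=\Pr(K_{ij}<\tau_{\tin})\le\exp(-\Delta_{\min}^2/(64\sigma_{\max}^2))$. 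For a between-cluster pair $i\in\mathc C_k,j\in\mathc C_l$ the corruption event is $\{K_{ij}>\tau_{\tout}^{(k,l)}\}=\{\lVert\bs\xi_i-\bs\xi_j\rVert^2<2\langle\bs\mu_l-\bs\mu_k,\bs\xi_i-\bs\xi_j\rangle\}$, obtained by expanding $\lVert\y_i-\y_j\rVert^2=\Delta_{kl}^2+2\langle\bs\mu_k-\bs\mu_l,\bs\xi_i-\bs\xi_j\rangle+\lVert\bs\xi_i-\bs\xi_j\rVert^2$. Here the right-hand side is a mean-zero sub-gaussian linear form with parameter $O(\Delta_{kl}\sigma_{\max})$ and the left-hand side a non-negative sub-gaussian quadratic form, so I would split on a threshold $\delta\asymp\Delta_{kl}^2$, controlling $\Pr\big(2\langle\bs\mu_l-\bs\mu_k,\bs\xi_i-\bs\xi_j\rangle>\delta\big)$ by a sub-gaussian tail bound and $\Pr\big(\lVert\bs\xi_i-\bs\xi_j\rVert^2<\delta\big)$ by a Hanson--Wright-type lower-tail estimate, then invoke $\Delta_{\min}\ge 8\sigma_{\max}\sqrt{\D}$ to reconcile the two scales, obtaining $\mathbb{E}[U_{kl}]\le\exp(-\Delta_{kl}^2/(64\sigma_{\max}^2))\le\exp(-\Delta_{\min}^2/(64\sigma_{\max}^2))$. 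I expect this between-cluster estimate to be the main obstacle: a crude triangle inequality only gives $\lVert\y_i-\y_j\rVert\ge\Delta_{kl}-\lVert\bs\xi_i-\bs\xi_j\rVert$, which leaves the corruption event with probability $\Theta(1)$, so one has to genuinely exploit both the positivity of $\lVert\bs\xi_i-\bs\xi_j\rVert^2$ and the $\D$-scaling built into the separation hypothesis, and recovering the stated constant $1/64$ means tracking constants carefully through the quadratic-form concentration inequality.

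With the means controlled I would pass to high-probability statements. Each $U_{kk}$ is a bounded one-sample degree-two U-statistic in $\{\y_i:i\in\mathc C_k\}$ and each $U_{kl}$ a bounded two-sample degree-two U-statistic in $\{\y_i:i\in\mathc C_k\}$ and $\{\y_j:j\in\mathc C_l\}$, so the Bernstein-type concentration inequalities for U-statistics \citep{hoeffding1963probability,arcones1995bernstein} give $U_{kk}\le C\big(\mathbb{E}[U_{kk}]+\log n_{\min}/n_{\min}\big)$ and likewise for $U_{kl}$, each with failure probability at most $2/(r n_{\min})$; the additive $\log n_{\min}/n_{\min}$ is exactly the floor below which a mean of order at most one over $\Theta(n_{\min})$ effectively independent blocks cannot be resolved. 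A union bound over the $\le r^2$ blocks gives total failure probability $\le 2r/n_{\min}$. Plugging these into the decomposition $\lVert\K_{\mathc I}-\rfr_{\mathc I}\rVert_1\le 2A+B=2\sum_k\binom{n_k}{2}U_{kk}+\sum_{k\neq l}n_kn_l U_{kl}$ and using $\sum_k\binom{n_k}{2}\le n^2$ and $\sum_{k\neq l}n_kn_l\le n^2$ delivers the required bound on $\lVert\K_{\mathc I}-\rfr_{\mathc I}\rVert_1$, hence \eqref{equation-ErrorRateInlierX}.

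Finally, for the relative bound \eqref{equation-ErrorRateX} I would account for the rows and columns indexed by $\mathc O$. Since $\X^0$ vanishes on every such row and column while $\htt X_{ij}=\mathbbm{1}\{K_{ij}>\gamma\}\in\{0,1\}$, the extra $\ell_1$-mass outside the inlier block is at most the number of those entries, $N^2-n^2=2nm+m^2\le 3nm$ when $m\le n$, so $\lVert\htt\X-\X^0\rVert_1\le\lVert\htt\X_{\mathc I}-\X^0_{\mathc I}\rVert_1+3nm$. Dividing by $\lVert\X^0\rVert_1=\sum_k n_k^2\ge n^2/r$ (Cauchy--Schwarz) and inserting the inlier bound \eqref{equation-ErrorRateInlierX} gives
\[
\frac{\lVert\htt\X-\X^0\rVert_1}{\lVert\X^0\rVert_1}\ \le\ C'r\exp\!\Big(-\frac{\Delta_{\min}^2}{64\sigma_{\max}^2}\Big)+C''r\max\Big\{\frac{\log n_{\min}}{n_{\min}},\frac{m}{n}\Big\},
\]
which is the claimed bound.
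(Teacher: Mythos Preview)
Your overall architecture matches the paper's proof: chain Lemmas~\ref{lem-EstimatedApproximatesTrue} and~\ref{lem-EstimatedAlmostReferenceMaximizer} to reduce to $\lVert\K_{\mathc I}-\rfr_{\mathc I}\rVert_1$, bound the per-pair corruption probabilities $p_{kk},p_{kl}$ by sub-gaussian tails, upgrade the U-statistics $U_{kk},U_{kl}$ to high-probability statements via Bernstein-type inequalities for U-statistics and union-bound over the $O(r^2)$ blocks, then add the $\le 2mN$ outlier mass and divide by $\lVert\X^0\rVert_1\ge n^2/r$. The within-cluster bound and the final assembly are essentially as in the paper.

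The gap is in your between-cluster estimate. You propose to bound $\Pr\bigl(\lVert\bs\xi_i-\bs\xi_j\rVert^2<2\langle\bs\mu_l-\bs\mu_k,\bs\xi_i-\bs\xi_j\rangle\bigr)$ by splitting at $\delta\asymp\Delta_{kl}^2$ and controlling $\Pr(\lVert\bs\xi_i-\bs\xi_j\rVert^2<\delta)$ via a Hanson--Wright lower tail. But $\Ep\lVert\bs\xi_i-\bs\xi_j\rVert^2=O(\sigma_{\max}^2 d)\le\Delta_{kl}^2/64$ under the separation hypothesis (and can even be $0$ for degenerate noise), so for any $\delta$ of order $\Delta_{kl}^2$ the lower-tail event has probability near $1$; shrinking $\delta$ to $O(\sigma_{\max}^2 d)$ salvages that term but ruins the linear-form bound, which becomes $\exp\bigl(-O(\sigma_{\max}^2 d^2/\Delta_{kl}^2)\bigr)$. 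The paper's device is a one-liner that avoids all this: work with a squared-distance corruption threshold strictly below $\Delta_{kl}^2$ (in the proof ${r_{\tout}^{(k,l)}}^2=\Delta_{kl}^2/2$), so that the expanded event reads $\bigl\{\lVert\bs\xi_i-\bs\xi_j\rVert^2+2(\bs\mu_k-\bs\mu_l)^\top(\bs\xi_i-\bs\xi_j)<-\tfrac12\Delta_{kl}^2\bigr\}$, and then simply drop the non-negative quadratic term to get a single sub-gaussian tail event for the mean-zero linear form $(\bs\mu_k-\bs\mu_l)^\top(\bs\xi_i-\bs\xi_j)$ with variance proxy $2\sigma_{\max}^2\Delta_{kl}^2$, yielding $p_{kl}\le\exp\bigl(-\Delta_{kl}^2/(64\sigma_{\max}^2)\bigr)$ directly. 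No Hanson--Wright and no case split are needed; positivity of the quadratic form is used only as $\ge 0$. Your expansion places the threshold exactly at $\Delta_{kl}^2$, which would make the dropped-quadratic bound trivial ($\le 1/2$); this discrepancy traces to a factor-of-two mismatch in the paper between the kernel normalization $1/(2\theta^2)$ and the reference-matrix thresholds written with $1/\theta^2$, but the intended argument is the one just described.
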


\begin{remark}
\label{rem-separationX}
 In Section~\ref{sec:highdim}, we prove that if one does a suitable dimensionality reduction to first project the data on the top 
 $\R-1$ principal components, then with probability tending to one, the projected data becomes a SGMM in~a $\R-1$ dimensional space with minimum cluster separation~$\Delta_{\min}/2$ as $N$ goes to $\infty$. As a result, the new separation condition for applying 
 Algorithm~\ref{algorithm} to this projected dataset becomes  \begin{equation}\label{eq:sep}
 \Delta_{\min}\geq 16\sigma_{\max} \sqrt{\min\{\D,\R\}}.
\end{equation}
\end{remark}

\begin{remark}
 It is possible to show the error rate for $\htt \X$ in \Cref{theorem-ErrorRateX} can be improved upon by avoiding the $\frac{\log n_{\min}}{n_{\min}}$ term in \eqref{equation-ErrorRateX}. However, this improved error bound is obtained at the expense of requiring a considerably larger sample size for a fixed  $1-\delta$ success probability of \eqref{equation-ErrorRateX}. 
\end{remark}

From \Cref{theorem-ErrorRateX}, we have that if there are no outliers in the dataset, i.e., $m=0$ or if the number of outliers grow at a considerably slower rate compared to the number of inlier points, i.e., $m=o_P(n)$, then asymptotically the error rate for $\htt \X$ decays exponentially with the square of the signal-to-noise ratio. To analyze this result in terms of prior theoretical work that has been done in the context of sub-gaussian mixture models without any outliers, we note that \cite{mixon2016clustering} show that for the $k$-means clustering SDP proposed by  \cite{peng2007approximating} which assumes that the number of clusters $\R$ is known, the estimation error (obtained after re-scaling) in a Frobenius norm sense $\lVert \htt \X -\X^0\rVert_{\F}^2$ decays at a rate of $\frac{\R^2\smalln_{\max}^2}{\snr^2}$ provided the minimum separation $\Delta_{\min} \gtrsim \R \sigma_{\max}$. In more recent work,  \cite{fei2018hidden} show that for their SDP formulation that minimizes the $k$-means objective assuming all clusters to be equal-sized, the relative estimation error decays exponentially in the square of the signal-to-noise ratio provided $\Delta_{\min}\gtrsim \sqrt{r} \sigma_{\max}$. \citet{giraud2018partial} obtain a similar error rate for the $k$-means clustering SDP proposed by \citet{peng2007approximating} that does not assume clusters to be equal-sized. Similar to \cite{fei2018hidden} and \citet{giraud2018partial}, our result in \Cref{theorem-ErrorRateX} also guarantees a theoretical error bound that decays as $\exp(-\Omega(\snr^2))$. The obtained bound is strictly better compared to \cite{mixon2016clustering} as shown below:   
\begin{equation*}
    \lVert \htt \X -\X^0\rVert_{\F}^2 \leq \lVert \htt \X -\X^0\rVert_1 \lesssim n^2 \exp(-\Omega(\snr^2)).
\end{equation*}

A key point to note in our results is that, in contrast to \cite{fei2018hidden} and \cite{mixon2016clustering}, our proof does not assume any prior knowledge about the number and sizes of clusters. In addition, \Cref{theorem-ErrorRateX} generalizes the analysis to incorporate outliers in the mixture of sub-gaussians setting. However, the separation condition $\Delta_{\min} \gtrsim \sqrt \D \sigma_{\max}$ does not generalize well to high dimensional settings where $\D \gg \R$. To overcome this, later in this section, we propose a simple dimensionality reduction procedure that allows us to obtain the error rate in \eqref{equation-ErrorRateX} for a reduced separation of $\Delta_{\min} \gtrsim \sqrt{\min\{\R,\D\}} \sigma_{\max}$ when $\R$ is known.  

Very recently, \citet{loffler2019optimality} obtain an exponentially decaying bound in the square of the signal-to-noise ratio for the spectral clustering algorithm proposed by \citet{vempala2004spectral}. However, for their analysis, they assume the data to be generated from a mixture of spherical Gaussians with identity covariance matrices. Furthermore, for their result to hold with high probability, the minimum separation $\Delta_{\min}$ needs to go to infinity. Based on the simple example considered in~\Cref{fig:BadClusters}, we also note that this algorithm is not robust to outliers.  .

% \begin{figure}
%     \centering
%     \begin{subfigure}[t]{0.45\textwidth}
%         \centering
%         \includegraphics[width=\linewidth]{Images/Test-Original.pdf}
%         \caption{Original Dataset}
%     \end{subfigure}
%     \begin{subfigure}[t]{0.45\textwidth}
%         \centering
%         \includegraphics[width=\linewidth]{Images/Test-Result.pdf} 
%         \caption{Clustering obtained from \cite{loffler2019optimality}} 
%     \end{subfigure}
% \caption{Spectral clustering algorithm proposed by \citet{loffler2019optimality} is not robust to the outliers. The original dataset consists of inlier data points (marked as solid circles) drawn from a mixture of two Gaussian distributions with means $\bs \mu_1=[-5,0]^\top$, $\bs \mu_2=[5,0]^\top$, covariance matrices $\bs \Sigma_1=\bs \Sigma_2=\bm I_2$, and number of points $n_1=n_2=150$. There are $m=5$ outlier data points generated on the y-axis, which are marked as red crosses. In the clustering obtained from \citet{loffler2019optimality}, the algorithm merges the two original clusters into one, and the second cluster comprises entirely of the outlier data points.}
% \label{fig:TestDataset}
% \end{figure}

We conclude this subsection with a comment on outliers. In our analysis so far, we have not made any specific assumptions on the distribution of the outliers points. However, one may have stronger theoretical results if such assumptions can be made; in particular, the following discussion shows that our algorithm can in fact tolerate $O(n)$ outlier points under suitable assumptions.
\begin{remark}
\label{remark-outlier}
Based on the distance of each outlier point to its closest cluster center, we divide the set of outlier points into two sets consisting of ``good" and ``bad" outlier points. Intuitively, the ``good" outlier points are far away from all the clusters, whereas the ``bad" outlier points may be arbitrarily close to one or more clusters. It can be easily shown that any outlier point that is ``bad" and close to a cluster center, can potentially have as many as $\Omega(\frac{n}{r})$ neighbors with high probability. For this reason, the first assumption that we make about the outlier points requires that the cardinality of the set of bad outlier points is at most $o(n)$. On the other hand,  if the outlier points are good, i.e., if they are far away from the clusters, then the set of good outlier points is potentially allowed to have a cardinality of $O(n)$. However, these good outlier points must be either isolated points or occur in small ``bunches" or clusters so the cardinality of any one cluster, comprising entirely of outlier points, is not too large (of the order $ \Omega(\frac{n}{r})$). One can ensure this by restricting the number of outlier points within a small neighborhood of each good outlier $i\in \mathc{O}_g$ to $o(n)$. 
%Now we present a proposition which shows that indeed, our algorithm can have the asymptotic rate as in Theorem~\ref{theorem-ErrorRateX} in this setting, even when $|\mathc{O}_g|=O(n)$.
%In addition, we also show that with high probability, any point $i\in \mathc{O}_g$ can also have only a small number of inlier points as neighbors (see \Cref{lem-outlier_neighbors}).
\end{remark}
\noindent We now mathematically formalize these notions. 
\begin{definition}\label{def:goodout}
We denote the set of good outlier points by $\mathc O_g:=\{i\in \mathc O: \displaystyle\min_{k\in [r]}\lVert \bm y_i -\bmu_k \rVert \geq \sqrt{2} \Delta_{\min} \}$, which consists of outlier points whose distance from its closest cluster center is at least above the threshold $\sqrt{2} \Delta_{\min}$. %The set of bad outlier points is represented by $\mathc O_b:=\mathc O \setminus \mathc O_g$. To ensure that the set of outlier points do not form a separate cluster, we further assume that $\lvert \mathc O_g \rvert = O(n)$ and $\lvert \mathc O_b \rvert = o(n)$. 
In addition, we also assume that for all $i \in \mathc O_g$, the set of outlier neighboring points $\mathc N_{\mathc O}{(i)}:=\{ j\in\mathc O : \lVert \bm y_i - \bm y_j \rVert \leq  \frac{\Delta_{\min}}{\sqrt{2}}  \}$ has cardinality $o(n)$.
\end{definition}
\noindent We summarize our main result in the proposition below.

\begin{proposition}
\label{prop-outliers}
Let $\mathc O$ denote the set of outlier points. Let $\mathcal{O}_g\subset \mathc O$ be good outliers satisfying Definition~\ref{def:goodout}. %such that $\mathc O = \mathc O_g\cup\mathc O_b$ and $\mathc O_g\cap\mathc O_b= \varnothing$, where $\mathc O_g:=\{i\in \mathc O: \displaystyle\min_{k\in [r]} \ \lVert \bm y_i -\bmu_k \rVert \geq \sqrt{2}\Delta_{\min} \}$ and $\mathc O_b:=\mathc O \setminus \mathc O_g$. 
Let $\mathc O_b:=\mathc O \setminus \mathc O_g$. 
%Assume that $\lvert \mathc O_g \rvert = O(n)$ and $\lvert \mathc O_b \rvert = o(n)$. In addition, suppose that for each $i \in \mathc O_g$, the set of outlier neighboring points $\mathc N_{\mathc O}{(i)}:=\{ j\in\mathc O : \lVert \bm y_i - \bm y_j \rVert \leq \frac{\Delta_{\min}}{\sqrt{2}} \}$ has cardinality $\lvert\mathc N_{\mathc O}{(i)}\rvert$ that is $o(n)$. 
Suppose the parameters $\gamma$ and $\theta$ are chosen as described in \Cref{theorem-ErrorRateX} and the minimum separation between cluster centers $\Delta_{\min}\geq 8\sigma_{\max} \sqrt{\D}$, then provided the size of $\mathc O_g$ is $O(n)$, with probability at least $1-3r/n_{\min}$, we have that the relative estimation error for $\htt \X$ is  
\begin{equation}
\label{equation-outliers}
\begin{aligned}
\frac{\lVert \htt \X -  \X^0 \rVert_1}{\lVert\X^0 \rVert_1}
            &\leq C' r\cdot \max \bigg\{ \exp\bigg(-\frac{\Delta_{\min}^2}{64\sigma_{\max}^2}\bigg),\sqrt{\frac{\ \log n_{\min} }{n_{\min}}}\bigg\} + \frac{2r \lvert \mathc O_b \rvert}{n} 
\end{aligned}
\end{equation}
\noindent Here, $C'>0$ is a universal constant and $n_{\min}:=\min_{k\in[\R]} n_k>r$ denotes the cardinality of the smallest cluster.
\end{proposition}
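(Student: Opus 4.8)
The plan is to bound $\|\htt\X-\X^0\|_1$ by decomposing the matrix into its inlier--inlier block, its inlier--outlier blocks, and its outlier--outlier block, controlling each separately, and then dividing by $\|\X^0\|_1=\sum_{k\in[r]}n_k^2\ge n^2/r$. Two elementary facts drive the argument: the \ref{RelaxedForm} optimizer has the closed form $\htt X_{ij}=\mathbbm{1}\{K_{ij}>\gamma\}=\mathbbm{1}\{\|\bm y_i-\bm y_j\|<\rho\}$ with bandwidth radius $\rho:=\theta\sqrt{2\ln(1/\gamma)}$, and for $\gamma\in(\tau_{\tout},\tau_{\tin})$ with $\theta=\Theta(\Delta_{\min})$ one has $\rho=\Theta(\Delta_{\min})$, with $\rho\le\Delta_{\min}/\sqrt2$ when $\gamma$ is taken in the lower part of its admissible range (so the neighbourhood radius of $\htt\X$ matches the radius used in \Cref{def:goodout}); and $\X^0$ is block diagonal with every row indexed by $\mathc O$ equal to zero.

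For the inlier--inlier block there is nothing new: $\htt\X_{\mathc I}$ and the reference matrix $\rfr_{\mathc I}$ depend only on the inlier points, so \Cref{lem-TrueReferenceMaximizer}, \Cref{lem-EstimatedApproximatesTrue}, \Cref{lem-EstimatedAlmostReferenceMaximizer} and the U-statistic concentration behind \Cref{theorem-ErrorRateX} apply verbatim. Hence, on the event of probability at least $1-2r/n_{\min}$ furnished by \Cref{theorem-ErrorRateX}, inequality \eqref{equation-ErrorRateInlierX} gives $\|\htt\X_{\mathc I}-\X^0_{\mathc I}\|_1\le Cn^2\max\{\exp(-\Delta_{\min}^2/64\sigma_{\max}^2),\log n_{\min}/n_{\min}\}$, which after division by $n^2/r$ already accounts for the first term of \eqref{equation-outliers}.

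The crux is controlling the rows and columns indexed by good outliers, where $\X^0$ vanishes. Fix $i\in\mathc O_g$. Its outlier neighbours satisfy $\{j\in\mathc O:\htt X_{ij}=1\}=\{j\in\mathc O:\|\bm y_i-\bm y_j\|<\rho\}\subseteq\mathc N_{\mathc O}(i)$, which by \Cref{def:goodout} has cardinality $o(n)$; summing over the $O(n)$ good outliers gives an outlier-neighbour contribution of $O(n)\cdot o(n)=o(n^2)$. For its inlier neighbours, if $j\in\mathc C_k$ then $\|\bm y_i-\bm y_j\|\ge\|\bm y_i-\bmu_k\|-\|\bs\xi_j\|\ge\sqrt2\,\Delta_{\min}-\|\bs\xi_j\|$, so $\htt X_{ij}=1$ forces $\|\bs\xi_j\|\ge\sqrt2\,\Delta_{\min}-\rho\ge(\sqrt2-1)\Delta_{\min}$. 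Since $\Delta_{\min}\ge 8\sigma_{\max}\sqrt\D$, this threshold is a large multiple of $\sigma_{\max}\sqrt\D$, so the sub-gaussian tail bound (Hanson--Wright/Bernstein for $\|\bs\xi_j\|^2$) yields $\PP(\htt X_{ij}=1)\le\exp(-\Omega(\Delta_{\min}^2/\sigma_{\max}^2))=\exp(-\Omega(\snr^2))$, uniformly in $i\in\mathc O_g$ and $j$, and — with the implied constant controlled by the factor $8$ in the separation hypothesis — at most $\exp(-\Delta_{\min}^2/64\sigma_{\max}^2)$. For fixed $i$ the indicators $\{\htt X_{ij}=1\}_{j\in\mathc I}$ are i.i.d.\ Bernoulli, so a Chernoff bound gives at most $O\bigl(n\exp(-\Delta_{\min}^2/64\sigma_{\max}^2)+\log n\bigr)$ inlier neighbours with probability $1-n^{-2}$; a union bound over the at most $n$ good outliers (costing probability at most $r/n_{\min}$) bounds the total inlier-neighbour contribution by $O(n^2)\exp(-\Delta_{\min}^2/64\sigma_{\max}^2)+O(n\log n)$. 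Finally, each bad outlier contributes at most $O(n)$ to $\|\htt\X-\X^0\|_1$ (its row and column of $\htt\X$ carry $O(n)$ ones, since the total number of outliers, hence $N$, is $O(n)$), for a total of $O(n\,|\mathc O_b|)$.

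Summing the four contributions, intersecting the two events (total probability at least $1-3r/n_{\min}$), dividing by $\|\X^0\|_1\ge n^2/r$, and absorbing the lower-order $\log n_{\min}/n_{\min}$, $n^{-1}\log n$ and $o(1)$ pieces into the stated $r\sqrt{\log n_{\min}/n_{\min}}$ term produces \eqref{equation-outliers}. I expect the main obstacle to be the inlier-neighbour count for good outliers: one must verify that $\rho$ stays $\Theta(\Delta_{\min})$ while the threshold $\sqrt2\,\Delta_{\min}$ in \Cref{def:goodout} is large enough that a good outlier sits at distance $\Omega(\Delta_{\min})$ from the realization of a typical inlier, and — crucially — that the resulting tail is $\exp(-\Omega(\snr^2))$ rather than only $\exp(-\Omega(\D))$, which is precisely where the hypothesis $\Delta_{\min}\gtrsim\sigma_{\max}\sqrt\D$ is spent, before the union bound over the $O(n)$ good outliers can be afforded.
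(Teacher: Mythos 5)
Your proposal follows essentially the same route as the paper's proof: it reuses the inlier bound from \Cref{theorem-ErrorRateX} for the inlier--inlier block, bounds the rows of good outliers by splitting their $\htt\X$-neighbours into inliers (controlled via the triangle inequality, a sub-gaussian tail at scale $\Omega(\Delta_{\min})$, and per-outlier concentration plus a union bound, exactly the content of the paper's Lemma~\ref{lem-outlier_neighbors}) and outliers (the $o(n)$ assumption in Definition~\ref{def:goodout}), charges each bad outlier $O(n)$, and divides by $\lVert\X^0\rVert_1\ge n^2/r$. The one quibble is that the containment of a good outlier's $\htt\X$-neighbourhood in $\mathc N_{\mathc O}(i)$ needs the rounding radius $\rho=\theta\sqrt{2\ln(1/\gamma)}$ to be at most $\Delta_{\min}/\sqrt{2}$, which holds for $\gamma$ in the \emph{upper} (not lower) part of $(\tau_{\tout},\tau_{\tin})$ since $\rho$ decreases in $\gamma$ --- a direction slip, but one the paper's own proof shares by taking $r=\Delta_{\min}/\sqrt{2}$ while only $\rho\le\Delta_{\min}$ is guaranteed, so it is not a new gap.
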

\noindent The proof of the theorem is deferred to the Appendix.
\begin{remark}
In Proposition~\ref{prop-outliers}, we have $|\mathcal{O}_g|=O(n)$ and as long as $|\mathc O_b|/n$ is smaller than the first term, we have the same asymptotic rate as Theorem~\ref{theorem-ErrorRateX}.
\end{remark}

 \subsection{Rounding error}
 As detailed in \Cref{algorithm}, we recover cluster labels $\htt\Z$ from the estimated clustering matrix $\htt \X$ by applying spectral clustering on the columns of $\htt \X$. Our proof technique for analyzing the spectral clustering step is inspired by the approach discussed in \cite{lei2015consistency}, where the authors rely on a polynomial-time solvable $(1+\epsilon)$-approximate $k$-means clustering algorithm to cluster the rows of the matrix~$\htt \U\in \mb R^{\N\times\R}$, whose columns consist of the $\R$ principal eigenvectors of $\htt \X$ that correspond to an embedding of each point in $\R$-dimensional space. In the next theorem, we derive theoretical guarantees on the mis-classification rate for the solution $\htt \Z$ obtained from this rounding procedure. 
 
 \begin{theorem}[Clustering error for rounded solution $\htt \Z$]
\label{theorem-ErrorRateZ}
Let $\htt \Z$ be the estimated cluster membership matrix obtained by applying spectral clustering on $\htt \X$ using a $(1+\epsilon)$-approximate $k$-means clustering algorithm. Define $\bar{\epsilon}$ to denote the bound on the relative estimation error of $\htt \X$ in the right hand side of \eqref{equation-ErrorRateX}. Suppose $\frac{64(2+\epsilon)\bar{\epsilon}}{n_{\min}^2}\frac{n^2}{r}\leq 1$ and the separation condition $\Delta_{\min}\geq 8\sigma_{\max} \sqrt{\D}$ hold, then with probability at least $1-2r/n_{\min}$, the cardinality of the set of misclassified data points $\mathc S_k \subset \mathc C_k$ for each  $k\in[\R]$ is upper bounded  as 
\begin{equation}
\label{eq:misclassrate}
\begin{aligned}
    \sum_{k\in[\R]} \frac{\lvert \mathc S_k \rvert} {\smalln_k}
    \leq 64(2+\epsilon) \frac{\lVert \X^0-\htt \X  \rVert_1}{n_{\min}^2}, 
\end{aligned}
\end{equation}
where $n_{\min}:=\min_{k\in[\R]} n_k>r$ denotes the cardinality of the smallest cluster.
\end{theorem}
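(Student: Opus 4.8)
\noindent The plan is to condition on the high-probability event of \Cref{theorem-ErrorRateX} --- which is where the separation condition $\Delta_{\min}\ge 8\sigma_{\max}\sqrt{\D}$ and the prescribed choices of $\theta,\gamma$ are used --- so that $\lVert\htt\X-\X^0\rVert_1\le\bar\epsilon\lVert\X^0\rVert_1$, and then to run a purely deterministic perturbation argument. Since $\htt\X$ and $\X^0$ are $\{0,1\}$-valued we have $\lVert\htt\X-\X^0\rVert_{\F}^2=\lVert\htt\X-\X^0\rVert_1$, so it suffices to prove \eqref{eq:misclassrate} whenever $\lVert\htt\X-\X^0\rVert_1$ is small; the stated probability $1-2r/n_{\min}$ is inherited verbatim from \Cref{theorem-ErrorRateX}.

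\emph{Eigenstructure and Davis--Kahan.} Writing $\Z^0=\tilde\Z\Diag(\sqrt{n_1},\dots,\sqrt{n_r})$ with $\tilde\Z$ having orthonormal columns, $\X^0=\tilde\Z\Diag(n_1,\dots,n_r)\tilde\Z^\top$ has eigenvalues $n_1,\dots,n_r$ together with $0$ of multiplicity $\N-\R$, and its leading $\R$ eigenvectors are the columns of $\U^0:=\tilde\Z$, whose $i$-th row equals $\bm e_k^\top/\sqrt{n_k}$ for $i\in\mathc C_k$ and $\bm 0^\top$ for $i\in\mathc O$. In particular the $\R$-th largest eigenvalue of $\X^0$ is $n_{\min}$ and all smaller eigenvalues are $0$, so the eigengap relevant to Davis--Kahan is $n_{\min}$; the $\sin\Theta$ theorem in the Frobenius form of \citet{yu2014useful} then yields an orthogonal $\bm O\in\mb R^{\R\times\R}$ with
\[
\lVert\htt\U\bm O-\U^0\rVert_{\F}\ \le\ \frac{2\sqrt2\,\lVert\htt\X-\X^0\rVert_{\F}}{n_{\min}}\ =\ \frac{2\sqrt2\,\sqrt{\lVert\htt\X-\X^0\rVert_1}}{n_{\min}}.
\]

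\emph{Rounding via approximate $k$-means.} Following \citet{lei2015consistency}, let $\bm M:=\htt\Z\htt{\bm C}$ be the output of step~4 of \Cref{algorithm}, so that row $i$ of $\bm M$ is the centroid of the cluster $i$ is assigned to. On the inlier block $\U^0\bm O^\top$ has at most $\R$ distinct rows, hence is of the valid membership--centroid form $\Z^0\bm C^0$, so $(1+\epsilon)$-approximate optimality of $\bm M$ gives $\lVert\bm M-\htt\U\rVert_{\F}^2\le(1+\epsilon)\lVert\htt\U\bm O-\U^0\rVert_{\F}^2$; the triangle inequality together with $(1+\sqrt{1+\epsilon})^2\le 2(2+\epsilon)$ then yields $\lVert\bm M-\U^0\bm O^\top\rVert_{\F}^2\le 2(2+\epsilon)\lVert\htt\U\bm O-\U^0\rVert_{\F}^2$. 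Set $\delta_k^2:=\min_{l\ne k}(1/n_k+1/n_l)$, which is $\ge 1/n_k$ and equals the squared distance from the row of $\U^0$ on cluster $k$ to the nearest row on another cluster, and define $\mathc S_k:=\{i\in\mathc C_k:\lVert\bm M_{i\cdot}-(\U^0\bm O^\top)_{i\cdot}\rVert\ge\delta_k/2\}$ --- a set that does not reference any labeling. Every $i\in\mathc C_k\setminus\mathc S_k$ has its $\bm M$-row strictly closer to the $k$-th distinct row of $\U^0\bm O^\top$ than to any other, so each $i\in\mathc S_k$ costs at least $\delta_k^2/4$ in $\lVert\bm M-\U^0\bm O^\top\rVert_{\F}^2$; hence $\sum_k(\delta_k^2/4)\lvert\mathc S_k\rvert\le\lVert\bm M-\U^0\bm O^\top\rVert_{\F}^2$. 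Chaining the three bounds and using $\delta_k^2\ge 1/n_k$,
\[
\sum_{k\in[\R]}\frac{\lvert\mathc S_k\rvert}{n_k}\ \le\ \sum_{k\in[\R]}\delta_k^2\lvert\mathc S_k\rvert\ \le\ 4\cdot 2(2+\epsilon)\cdot\frac{8\,\lVert\htt\X-\X^0\rVert_1}{n_{\min}^2}\ =\ 64(2+\epsilon)\frac{\lVert\htt\X-\X^0\rVert_1}{n_{\min}^2},
\]
where the constant collects the factor $4$ from the $\delta_k/2$ threshold, $2(2+\epsilon)$ from the $k$-means comparison, and $8$ from Davis--Kahan. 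This is \eqref{eq:misclassrate}; the only remaining point is that ``misclassified'' is well defined, i.e.\ that the $k$-means labels coincide with a permutation of $[\R]$ on the good rows. This is exactly where the hypothesis $\tfrac{64(2+\epsilon)\bar\epsilon}{n_{\min}^2}\tfrac{n^2}{\R}\le 1$ enters: combined with $\lVert\htt\X-\X^0\rVert_1\le\bar\epsilon\lVert\X^0\rVert_1$ and $\lVert\X^0\rVert_1=\sum_k n_k^2$ being of order $n^2/\R$ for approximately balanced clusters, it forces $\sum_k\lvert\mathc S_k\rvert/n_k\le 1$, hence $\lvert\mathc S_k\rvert<n_k$ for every $k$; a pigeonhole argument as in \citet{lei2015consistency} then shows that no two true clusters collapse onto the same $k$-means label, so the labels agree with a permutation of $[\R]$ on $\bigcup_k(\mathc C_k\setminus\mathc S_k)$ and the distance-based $\mathc S_k$ contains every genuinely misclustered point of $\mathc C_k$.

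\emph{Main obstacle.} The deterministic pipeline above --- Davis--Kahan, the approximate-$k$-means optimality comparison, the $\delta_k/2$ voting step --- is fairly standard, so I expect the genuine difficulties to lie in two places. First, the outlier rows of $\U^0$ are identically zero and hence \emph{not} of the one-hot $\Z\bm C$ form, so the matrix against which we compare $\bm M$ must be allowed to disagree with $\U^0\bm O^\top$ on the outlier block; one then has to verify that the extra outlier contributions to $\lVert\bm M-\U^0\bm O^\top\rVert_{\F}^2$ are already absorbed by the (all-rows) Davis--Kahan estimate and do not inflate the constant $64(2+\epsilon)$. Second, making the non-collapse step airtight --- turning $\lvert\mathc S_k\rvert<n_k$ into the clean statement that the $k$-means labels restricted to the good rows form a genuine permutation of $[\R]$ --- is the most delicate part of the argument, and is precisely the step that pins down the exact form of the hypothesis on $\bar\epsilon$.
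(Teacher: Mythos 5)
Your proposal is correct and follows essentially the same route as the paper: condition on the event of \Cref{theorem-ErrorRateX}, apply the Davis--Kahan theorem with eigengap $n_{\min}$ to bound $\lVert\htt\U-\U^0\bm O\rVert_{\F}^2$ by $8\lVert\htt\X-\X^0\rVert_1/n_{\min}^2$, and then run the $(1+\epsilon)$-approximate $k$-means argument with $\delta_k^2\asymp 1/n_k$, which is exactly Lemma 5.3 of \citet{lei2015consistency} that the paper invokes as a black box and that you re-derive inline with the same constant $64(2+\epsilon)$. The two caveats you flag (the all-zero outlier rows of $\U^0$ not fitting the membership--centroid form, and the non-collapse/permutation step being where the hypothesis on $\bar\epsilon$ enters) are handled no more carefully in the paper's proof than in yours, so they do not distinguish the two arguments.
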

 
\begin{remark}
 Based on our discussion in \Cref{rem-separationX}, if we adopt the dimensionality reduction procedure described in Section~\ref{sec:highdim} to first project the data on the top 
 $\R-1$ principal components, then the new separation condition for  \Cref{theorem-ErrorRateZ} to hold for the projected dataset becomes Eq~\eqref{eq:sep} as before.
% \begin{equation}
% \Delta_{\min}\geq 16\sigma_{\max} \sqrt{\min\{\D,\R\}}.
%\end{equation}
%  \begin{equation}
%     \Delta_{\min} \geq 16\sigma_{\max}\cdot\max\bigg\{\log\bigg(\frac{C_\epsilon n}{n_{\min}}\bigg), \sqrt{\min\{\D,\R\}}\bigg\}.
% \end{equation}
\end{remark}

%  We note that the separation condition holds without any dependence on $n$ provided the clusters are relatively balanced in sizes, i.e., $n_{\min}=\Theta(\frac{n}{r})$ and the number of oultiers $m=O\bigg(n\exp\bigg(\frac{-\Delta_{\min}^2}{64\sigma_{\max}^2}\bigg)\bigg)$.

 %\rd In addition, from \Cref{theorem-ErrorRateZ}, we see that when $n_{\min}=\Theta(\frac{n}{r})$ %i.e., $\pi_{\min}=\pi_{\max}=\theta\big(\frac{1}{\R}\big)$ $\smalln_{\min}=\smalln_{\max}=O_P\big(\frac{\smalln}{\R}\big)$, 
%the final clustering solution $\htt \Z$ has the same error rate as that of $\htt \X$. \bk 
We note that the added condition on $\bar{\epsilon}$ is required to translate the error of $\htt \X$ to mis-classification error, and is easily satisfied. If the clusters are balanced, i.e. $n_{\min}=\Theta(n/r)$, then it will be satisfied as long as $\snr=\Omega(\log r)$, $n$ is large, and $m/n$ is small. It can also be satisfied for an unbalanced setting at the expense of a larger $\snr$ and large enough $n_{\min}$. 
Thus, from \eqref{eq:misclassrate}, we see that the average mis-classification rate per cluster for inlier data points decays exponentially in the signal-to-noise ratio as well as $N$ tends to infinity, provided the clusters are balanced and $m/n$ is sufficiently small. In our proof, we first analyze the approximate $k$-means clustering step and show that the average fraction of mis-classified data points per cluster is upper bounded by $\lVert \htt {\bm U} - \bm U^0 \bm O \rVert_{\F}$, where $\U^0\in \mb R^{\N\times\R}$ represents the $\R$ principal eigenvectors of $\X^0$ and $\bm O \in \mb R^{\R \times \R}$ is the optimal rotation matrix. Next, using the Davis-Kahan theorem~\citep{yu2014useful}, we obtain a bound on the deviation $\lVert \htt {\bm U} - \bm U^0 \bm O \rVert_{\F}$  in terms of $\|\X^0-\htt \X\|_1$.

\begin{remark}
\label{rem-ExpConst} 
Based on the minimax results obtained in \cite{lu2016statistical}, we note that for the SGMM setting in which there are no outliers, i.e., $m=0$, the error rate derived in \eqref{eq:misclassrate} is optimal up to a constant factor in the exponent. Specifically, in \citet{lu2016statistical}, the optimal rate has a factor of $1/8$ within the exponent as opposed to the $1/64$ factor that we obtain from~\eqref{equation-ErrorRateX}~and~\eqref{eq:misclassrate}. In Appendix~\ref{RemarkProof}, we show that by narrowing down the range of values that $\gamma$ can take, the $1/64$ factor in~\eqref{equation-ErrorRateX}
can be reduced to $1/33$ to obtain a tighter bound. 
\end{remark}

 \begin{remark}
It is easy to show that with minor modifications, the results in Theorems~\ref{theorem-ErrorRateX} and~\ref{theorem-ErrorRateZ} also hold respectively for the solutions $\htt \X^{\text{SDP}}$ and $\htt \Z^{\text{SDP}}$ obtained from the \ref{RelaxedSDPForm} formulation.
 \end{remark}

\subsection{Dimensionality reduction for large $\D$}
\label{sec:highdim}
In this section, we extend our analysis to high dimensional problems where $\D\gg \R$. Without loss of generality, we make the assumption that the inlier part of the data (data matrix excluding the outlier points) is centered at the origin, i.e., mean~$\bmu=\sum_{k\in[\R]} \pi_k \bmu_k=0$ for the sub-gaussian mixture model. Under this assumption, since the $\R$~mean vectors can lie in at most $\R-1$ dimensional space, we apply Algorithm~\ref{algorithm} after dimensionality reduction. %Usually, in most high dimensional settings, the signal part (cluster means) of the Gaussian mixture model often lies in a low-dimensional subspace or manifold. To exploit this property of high dimensional datasets in the analysis, it is often desirable to define a procedure that allows us to use the effective dimension of the dataset instead of its original dimension. 
This is similar to previous works of \cite{vempala2004spectral} on Gaussian mixture models. In order to maintain the independence of data points, similar to \cite{chaudhuri2009multi} and \cite{yan2016convex}, we split the data into two random parts. One part is used to compute the directions of maximum variance using principal component analysis (PCA) on its covariance matrix. The data points in the other part are projected along these principal directions to obtain their representations in a low-dimensional space.

In this procedure, we first randomly split the data matrix~$\Y$ into two disjoint sets $P_2$ and $P_1$ with their respective cardinalities $N_2$ and $N_1:=N-N_2$. Using the points in $P_2$, we construct the sample covariance matrix $\htt{\bs\Sigma}_2=\frac{\sum_{i\in P_2}(\y_i-\overline{\y}_2)(\y_i-\overline{\y}_2)^\top}{N_2}$ where ~$\overline{\y}_2=\frac{\sum_{i\in P_2}\y_i}{N_2}$ and obtain the matrix $\bm V^{(2)}_{\R-1} \in \mb R^{\D\times {(\R-1)}}$ whose columns consist of the top $\R-1$ eigenvectors of $\htt{\bs\Sigma}_2$ that represent the $\R-1$ principal components. We obtain the projection $\y_i'$ of each data point $i \in P_1$ by projecting $\y_i$ onto the subspace spanned by the top $\R-1$ eigenvectors of $\htt {\bs\Sigma}_2$, i.e., $\y_i'={\bm V_{\R-1}^{(2)\top}} \y_i$. Sample splitting ensures that the projection matrix is independent of the data matrix that is being projected.  Hence, the projected data points $\y_i'$ in the split $P_1$ of dataset are independent of each other. This ensures that the key assumption of independence of data points that underlies Theorems \ref{theorem-ErrorRateX} and \ref{theorem-ErrorRateZ} is satisfied. 

Next, we show that provided the number of outliers is small in comparison to the number of inlier data points, the original pairwise distances between cluster centers are largely preserved with high probability after projection. We state this result formally in the proposition below. In our result, we assume that the $\R$ cluster means span the $\R-1$ dimensional space.
% \begin{proposition}
% \label{lem-projection}
% Assume that $\sum_{k}\pi_k \bs \mu_k=0$ and $N_2=\frac{\N}{\log \N}$. Let $\Y^{\mathc O}\in \mb R^{m\times \D}$ denote the outlier part of the data matrix, and $\bm H:=\sum_k \pi_k \bs\mu_k \bs\mu_k^\top$ such that its smallest positive eigenvalue $\eta_{r-1}(\bm H)>5\bigg(\sigma_{\max}^2+C_1\sqrt{\frac{d \log ^2 n}{n}}+\frac{C_2m}{\N} \max\big\{\Delta_{\max}^2, \lVert \Y^{\mathc O}\rVert_{2,\infty}^2\big\}\bigg)$ for some universal constants $C_1$ and $C_2$. Then, the projections $\y_i'$ obtained for inlier data points in $P_1$ are independent sub-gaussians in $\R-1$ dimensional space. In addition, suppose $\Delta_{\min}$ denotes the minimum separation between any pair of cluster centers in the original $\D$-dimensional space, then the minimum separation after projection in the reduced space is $\Delta_{\min}/2$ with probability at least $1-\Tilde{O}(\R^2 n^{-\D})$.
% \end{proposition}

\begin{proposition}
\label{lem-projection}
Assume that $\sum_{k}\pi_k \bs \mu_k=0$ and $N_2=\N^{\alpha}$ for some $0<\alpha<1$. Let $\Y^{\mathc O}\in \mb R^{m\times \D}$ denote the outlier part of the data matrix, and $\bm H:=\sum_k \pi_k \bs\mu_k \bs\mu_k^\top$ such that its smallest positive eigenvalue 
 $\eta_{r-1}(\bm H)>5\bigg(\sigma_{\max}^2+C_1\sqrt{ \frac{ 2\alpha\D N^{1-\alpha} \log N}{n  }} + C_2\bigg( \frac{m}{N}+\sqrt{\frac{\alpha\log N}{N^{\alpha}}}\bigg) \max\big\{\Delta_{\max}^2, \lVert \Y^{\mathc{O}}\rVert^2_{2,\infty}\big\}\bigg)$
%$\eta_{r-1}(\bm H)>5\bigg(\sigma_{\max}^2+C_1\sqrt{\frac{d\alpha \log n}{n^{\alpha}}}+\frac{C_2m}{\N} \max\big\{\Delta_{\max}^2, \lVert \Y^{\mathc O}\rVert_{2,\infty}^2\big\}\bigg)$
for some universal constants $C_1$ and $C_2$. Then, the projections $\y_i'$ obtained for inlier data points in $P_1$ are independent sub-gaussians in $\R-1$ dimensional space. In addition, suppose $\Delta_{\min}$ denotes the minimum separation between any pair of cluster centers in the original $\D$-dimensional space, then the minimum separation after projection in the reduced space is $\Delta_{\min}/2$ with probability at least 
$1-\Tilde{O}(\R^2 N^{-\alpha})$.
%$1-\Tilde{O}(\R^2 n^{-\alpha\D})$.
\end{proposition}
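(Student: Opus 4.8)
The plan is to treat $\htt{\bs\Sigma}_2$ as a perturbation of the signal matrix $\bm H=\sum_k\pi_k\bs\mu_k\bs\mu_k^\top$, control that perturbation in operator norm, and apply a Davis--Kahan $\sin\Theta$ bound so that the estimated top-$(\R-1)$ subspace $\vspan(\bm V^{(2)}_{\R-1})$ is within a small angle of $\vspan\{\bs\mu_1,\dots,\bs\mu_\R\}$; the separation and sub-gaussianity claims then follow. First, since $\sum_k\pi_k\bs\mu_k=0$, the inlier mixture has population second-moment matrix $\Ep[\y\y^\top]=\bm H+\bar{\bs\Sigma}$, where $\bar{\bs\Sigma}$ is the mixture-averaged noise second-moment matrix and $\lVert\bar{\bs\Sigma}\rVert_2\le\sigma_{\max}^2$. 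The hypothesis that the $\R$ means span an $(\R-1)$-dimensional subspace gives $\rank(\bm H)=\R-1$, so $\eta_{\R-1}(\bm H)>0$ and the range of $\bm H$ equals $\vspan\{\bs\mu_1,\dots,\bs\mu_\R\}$, which in particular contains every difference $\bs\mu_k-\bs\mu_l$.

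Next I would bound $\lVert\bm E\rVert_2$ where $\htt{\bs\Sigma}_2=\bm H+\bm E$. Writing $P_2=(P_2\cap\mathc I)\cup(P_2\cap\mathc O)$ with $n^{(2)}:=\lvert P_2\cap\mathc I\rvert$, $m^{(2)}:=\lvert P_2\cap\mathc O\rvert$ and splitting $\htt{\bs\Sigma}_2$ accordingly, $\bm E$ is the sum of (i) the inlier sampling fluctuation $\tfrac1{n^{(2)}}\sum_{i\in P_2\cap\mathc I}\y_i\y_i^\top-(\bm H+\bar{\bs\Sigma})$ together with the centering cross-terms, (ii) the noise covariance $\bar{\bs\Sigma}$, and (iii) the outlier contribution $\tfrac1{N_2}\sum_{i\in P_2\cap\mathc O}\y_i\y_i^\top$ together with the outlier part of $\overline{\y}_2\overline{\y}_2^\top$. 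Because the random split is independent of the data, conditionally on $n^{(2)}$ the points of $P_2\cap\mathc I$ are i.i.d.\ from the mixture; a hypergeometric/Chernoff bound gives $n^{(2)}\gtrsim nN^{\alpha-1}$ with high probability, and then a standard sub-gaussian covariance concentration bound controls (i) by $C_1\sqrt{2\alpha\D N^{1-\alpha}\log N/n}$ up to the fluctuation of the empirical cluster proportions, which together with term (iii) accounts for the $\max\{\Delta_{\max}^2,\lVert\Y^{\mathc O}\rVert_{2,\infty}^2\}$-weighted summand; term (ii) is at most $\sigma_{\max}^2$; and the outlier part of (iii) is at most $\tfrac{m^{(2)}}{N_2}\lVert\Y^{\mathc O}\rVert_{2,\infty}^2\le\big(\tfrac mN+\sqrt{\tfrac{\alpha\log N}{N^\alpha}}\big)\lVert\Y^{\mathc O}\rVert_{2,\infty}^2$ after a tail bound on $m^{(2)}$. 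Collecting the three pieces shows that with high probability $\lVert\bm E\rVert_2$ is at most the quantity inside the parentheses in the hypothesis, so $\lVert\bm E\rVert_2<\tfrac15\eta_{\R-1}(\bm H)$.

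Then, with $\bm P$ the orthogonal projection onto $\vspan\{\bs\mu_1,\dots,\bs\mu_\R\}$ and $\htt{\bm P}:=\bm V^{(2)}_{\R-1}\bm V^{(2)\top}_{\R-1}$, I would invoke the Davis--Kahan $\sin\Theta$ theorem~\citep{yu2014useful}: since $\eta_{\R}(\bm H)=0$, the relevant eigengap of $\bm H$ equals $\eta_{\R-1}(\bm H)$, so $\lVert\htt{\bm P}-\bm P\rVert_2\le 2\lVert\bm E\rVert_2/\eta_{\R-1}(\bm H)<2/5<1/2$. For each $k\neq l$, $\bs\mu_k-\bs\mu_l$ lies in the range of $\bm P$, hence, using that $\bm V^{(2)}_{\R-1}$ has orthonormal columns,
\begin{equation*}
\lVert\bm V^{(2)\top}_{\R-1}(\bs\mu_k-\bs\mu_l)\rVert=\lVert\htt{\bm P}(\bs\mu_k-\bs\mu_l)\rVert\ge(1-\lVert\htt{\bm P}-\bm P\rVert_2)\,\Delta_{kl}>\tfrac12\Delta_{kl}\ge\tfrac12\Delta_{\min},
\end{equation*}
so the projected cluster centers $\bm V^{(2)\top}_{\R-1}\bs\mu_k$ are pairwise separated by at least $\Delta_{\min}/2$. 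For sub-gaussianity, condition on $P_2$ (hence on $\bm V^{(2)}_{\R-1}$): each $\y_i'$ with $i\in P_1\cap\mathc I$ and $\phi_i=k$ equals $\bm V^{(2)\top}_{\R-1}\bs\mu_k+\bm V^{(2)\top}_{\R-1}\bs\xi_i$, and since the split is independent of the data, $\bm V^{(2)\top}_{\R-1}\bs\xi_i$ is a mean-zero sub-gaussian vector in $\mb R^{\R-1}$ whose second-moment matrix $\bm V^{(2)\top}_{\R-1}\Ep[\bs\xi_i\bs\xi_i^\top]\bm V^{(2)}_{\R-1}$ has largest eigenvalue at most $\sigma_k^2\le\sigma_{\max}^2$; the $\y_i'$, $i\in P_1$, are mutually independent with labels still i.i.d.\ Multinomial$(\bs\pi)$, so they form a sub-gaussian mixture model in $\R-1$ dimensions with noise parameter $\sigma_{\max}$.

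Finally, a union bound over the $O(\R)$ per-cluster concentration events used in the perturbation step (with deviation parameter of order $\alpha\log N$ in each tail bound, and absorbing logarithmic slack) produces the claimed success probability $1-\tilde{O}(\R^2 N^{-\alpha})$. The hard part is the operator-norm perturbation bound of the second paragraph: one must split $\htt{\bs\Sigma}_2$ and apply a sub-gaussian covariance bound at the \emph{random} inlier count $n^{(2)}\asymp nN^{\alpha-1}$, carefully isolating the non-vanishing noise piece $\bar{\bs\Sigma}$ (which yields the bare $\sigma_{\max}^2$) from the outlier contamination (which yields the $\tfrac mN+\sqrt{\alpha\log N/N^\alpha}$ factor), so that the three contributions reproduce exactly the three summands in the hypothesis on $\eta_{\R-1}(\bm H)$; once $\lVert\bm E\rVert_2<\tfrac15\eta_{\R-1}(\bm H)$ is in hand, the Davis--Kahan and distance-preservation steps are routine.
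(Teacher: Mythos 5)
Your proposal follows essentially the same route as the paper: the paper's Lemma on covariance estimation performs exactly your decomposition of $\htt{\bs\Sigma}_2$ into inlier fluctuation, noise covariance, and outlier contamination (with hypergeometric tail bounds on the split sizes yielding the $\frac{m}{N}+\sqrt{\alpha\log N/N^{\alpha}}$ and $\sqrt{2\alpha\D N^{1-\alpha}\log N/n}$ factors), and the final step is the same Davis--Kahan-type subspace perturbation argument (the paper delegates it to Lemmas~8 and~9 of \citet{yan2016convex}), with the factor $5$ in the eigenvalue condition playing precisely the role you identify. Your write-up of the $\sin\Theta$ and separation-preservation steps is in fact more explicit than the paper's, but the content matches.
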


The proof can be found in the Appendix. 
The condition on $\eta_{r-1}$ essentially lower bounds the separation between the cluster means. For a simple symmetric equal-sized two-component mixture model, it is easy to see that $\eta_{r-1}$ is proportional to the square of the distance between the cluster centers.
It is important to note here that the sample splitting procedure discussed in this section is mainly for theoretical convenience to ensure that the projected data points are obtained independently of each other; in practice, as discussed in \cite{chaudhuri2009multi}, this step is usually not required.  We note that the cardinality of set $P_2$ is a $N^{-(1-\alpha)}$ fraction of the total number of points in $\Y$, and hence, it vanishes for large $N$. On the other hand, the mis-classification rate for our algorithm for the balanced clusters setting is upper bounded as $\sum_{k\in[\R]} \frac{\lvert \mathc S_k \rvert} {\smalln_k}\lesssim Cr^2\exp\big(-\frac{\Delta_{\min}^2}{64\sigma_{\max}^2}\big)+C'\frac{mr}{\smalln}$, which is asymptotically non-vanishing. Therefore, the asymptotic error rate remains unaffected by sample splitting. If we make $\alpha$ very large, for example, use $N_2=N/\log N$, then the condition on the smallest eigenvalue is less restrictive, but we only label $N(1-1/\log N)$ data points.  

\subsection{Extension to weakly separated clusters}
\label{sec-OverlapCluster}
\noindent In this section, we consider the problem setup in which not all clusters have a minimum separation of~$\Delta_{\min}=8\sigma_{\max}\sqrt{d}$ between them, which is the condition required in \Cref{theorem-ErrorRateX} for the results to hold. Specifically, we extend the theoretical results obtained in Theorems~\ref{theorem-ErrorRateX} and \ref{theorem-ErrorRateZ} to show that if the separation between a pair of clusters is small, then with probability tending to one, it is possible to recover the ``weakly separated" clusters as a single merged cluster with low error rate. 

To achieve this, we define the threshold on the minimum separation to be $\Delta_0:= 8 \sigma_{\max}\sqrt{d}$. 
%For this section, we first redefine $\Delta_{\min}$ to be the threshold on minimum separation with $\Delta_{\min}:= c'\sqrt{d} \sigma_{\max}$.
We classify each cluster pair $(k,l)$ as ``weakly" or ``well" separated  based on whether $\Delta_{kl}<\Delta_0$ or $\Delta_{kl}\geq \Delta_0$ respectively. Let $\mathc S_{\ov} :=\{(k,l) : \Delta_{kl}< \Delta_0\text{ for } k,l \in [r] \}$  denote the set of all weakly separated pair of cluster pairs, then we redefine the reference matrix to incorporate for weakly separated clusters as below:
\begin{equation}
\label{eq:NewReference}
    R_{ij}= 
    \begin{cases}
    \max\big\{K_{ij}, \exp\big(-\frac{r_{\tin}^2}{\theta^2}\big) \big\} &\text{ if } i,j \in \mathc C_k \text{ or if } i \in \mathc C_k,j \in \mathc C_l \text{ with } (k,l) \in S_{\ov}  \\
    \min\big\{ K_{ij}, \exp\big(-\frac{{r^{kl}_{\tout}}^2}{\theta^2}\big) \big\}&\text{ if } i\in \mathc C_k, j\in \mathc C_l \text{ with }  (k,l) \in S_{\ov}^c \\
    \gamma &\text{ if either } i\in \mathc O \text{ or } j \in \mathc O  \\
    \end{cases}
\end{equation}
\noindent Clearly, if all clusters are well separated, the reference matrix defined above reduces to the reference matrix $\bm R$ in \eqref{ReferenceDefinition}. However, under weak separation, we note that the solution~$\tilde{\X}$ obtained from the reference optimization problem~\eqref{ReferenceOptimizationProblem} corresponds to the solution where the weakly separated clusters form a single merged cluster and is of the form given below:
\begin{equation}
\label{eq:Xtilde}
    \tilde{X}_{ij}=\begin{cases}
    \ 1 & \text{if } i,j \in \mathc C_k \text{ or if } i \in \mathc C_k, j\in \mathc C_l \text{ with }  (k,l) \in S_{\ov} \\
    \ 0 & \text{otherwise}.
    \end{cases}
\end{equation}

\begin{proposition}
\label{proposition-OverlapCluster}
Let $\tilde{\X}$ be the true solution defined in \eqref{eq:Xtilde} and $\htt \X$ be the solution obtained from the \ref{RelaxedForm} formulation. Suppose $\displaystyle\Delta':=\max_{k\neq l} \{\Delta_{kl}:\Delta_{kl} < \Delta_0\}$ and $\displaystyle\tilde{\Delta}_{\min}:=\min_{k\neq l}\ \{\Delta_{kl} : \Delta_{kl}\geq \Delta_0\}$ denote respectively the maximum cluster separation below threshold $\Delta_0$ and the minimum cluster separation above~$\Delta_0$. Fix $\gamma \in \bigg(\exp\big(\frac{-5\tilde{\Delta}_{\min}^2}{32\theta^2}\big),\exp\big(\frac{-\tilde{\Delta}_{\min}^2}{2\theta^2}\big) \bigg)$ and set~$\theta~=~\Theta(\tilde{\Delta}_{\min})$. Assume that $\Delta' <\min\{\tilde{c} \tilde{\Delta}_{\min},\Delta_0 \}$, then with probability at least $1-2r/n_{\min}$, the estimation error for the inlier part of $\htt\X$ is upper bounded as 
\begin{equation}
\label{eq:overlap_inlier_thm}
        \lVert \htt \X_{\mathc I} -  \tilde{\X}_{\mathc I} \rVert_1
        \leq Cn^2\cdot \max \bigg\{ \exp\bigg(-\frac{(\tilde{\Delta}_{\min}-\Delta'/\tilde{c})^2}{64\sigma_{\max}^2}\bigg),\frac{\log n_{\min}}{n_{\min}}\bigg\}.
\end{equation}
 In addition, the relative estimation error for $\htt \X$  is
\begin{equation}
\label{eq:overlap_inlier_thm2}
    \frac{\lVert \htt \X - \tilde{\X}\rVert_1}{\lVert \tilde{\X} \rVert_1}\leq
    C' r \exp\bigg(-\frac{(\tilde{\Delta}_{\min}-\Delta'/\tilde{c})^2}{64\sigma_{\max}^2}\bigg)  + C'' r \max\bigg\{\frac{\log n_{\min}}{n_{\min}},\frac{m}{\smalln}\bigg\},
\end{equation}
Here $C,C'$ and $\tilde{c}=\frac{\sqrt{10}}{8}$ are positive constants.
\end{proposition}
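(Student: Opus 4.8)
\textbf{Proof plan for Proposition~\ref{proposition-OverlapCluster}.} The strategy is to rerun the proof of \Cref{theorem-ErrorRateX} almost verbatim, but with the merged-block reference matrix \eqref{eq:NewReference} in place of \eqref{ReferenceDefinition}, so that every weakly separated block $(k,l)\in\mathc S_{\ov}$ is treated as if it were a diagonal block. Concretely, I would set $r_{\tin}^2=\tfrac{5}{32}\tilde{\Delta}_{\min}^2$ and $r_{\tout}^{kl\,2}=\tfrac12\Delta_{kl}^2$ for $(k,l)\notin\mathc S_{\ov}$, so that $\tau_{\tin}=\exp\!\big(-\tfrac{5\tilde{\Delta}_{\min}^2}{32\theta^2}\big)$ and $\tau_{\tout}:=\max_{(k,l)\notin\mathc S_{\ov}}\exp\!\big(-\tfrac{\Delta_{kl}^2}{2\theta^2}\big)=\exp\!\big(-\tfrac{\tilde{\Delta}_{\min}^2}{2\theta^2}\big)$; with $\theta=\Theta(\tilde{\Delta}_{\min})$ this gives $\tau_{\tin}>\tau_{\tout}$, and the interval $\gamma\in(\tau_{\tout},\tau_{\tin})$ is exactly the one assumed in the statement. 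Since $\bm R-\gamma\bm E_\N$ vanishes on outlier rows and columns, is $\ge\tau_{\tin}-\gamma>0$ on the merged diagonal blocks, and is $\le\tau_{\tout}-\gamma<0$ on the well separated off-diagonal blocks, the strong-assortativity inequality \eqref{StrongAssortativity} holds with the merged blocks in the role of the diagonal blocks, and the matrix it produces is precisely $\tilde{\X}$ of \eqref{eq:Xtilde}.

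Given this, the analogues of \Cref{lem-TrueReferenceMaximizer}, \Cref{lem-EstimatedApproximatesTrue} and \Cref{lem-EstimatedAlmostReferenceMaximizer} go through unchanged, because their proofs use only the sign pattern of $\bm R-\gamma\bm E_\N$ just recalled together with the vanishing of outlier rows: $\tilde{\X}$ maximises the reference problem \eqref{ReferenceOptimizationProblem}, and writing $\gamma=\upsilon\tau_{\tin}+(1-\upsilon)\tau_{\tout}$ one obtains
\[
\lVert\htt{\X}_{\mathc I}-\tilde{\X}_{\mathc I}\rVert_1\ \le\ \frac{\langle\bm R-\gamma\bm E_\N,\ \tilde{\X}-\htt{\X}\rangle}{\min\{\upsilon,1-\upsilon\}\,(\tau_{\tin}-\tau_{\tout})}\ \le\ \frac{2\,\lVert\K_{\mathc I}-\bm R_{\mathc I}\rVert_1}{\min\{\upsilon,1-\upsilon\}\,(\tau_{\tin}-\tau_{\tout})}.
\]
Thus everything again reduces to the $\ell_1$-mass of corrupted kernel entries, which now splits as $\lVert\K_{\mathc I}-\bm R_{\mathc I}\rVert_1=A_{\mathrm d}+A_{\mathrm w}+B$, where $A_{\mathrm d}=\sum_k\sum_{i<j\in\mathc C_k}\mathbbm 1\{K_{ij}<\tau_{\tin}\}$ is handled exactly as in \Cref{theorem-ErrorRateX} through the one-sample U-statistics \eqref{eq:ukk}; $B=\sum_{(k,l)\notin\mathc S_{\ov}}\sum_{i\in\mathc C_k,j\in\mathc C_l}\mathbbm 1\{K_{ij}>\tau_{\tout}^{kl}\}$ is handled as in \Cref{theorem-ErrorRateX} through the two-sample U-statistics \eqref{eq:ukl}, but with $\tilde{\Delta}_{\min}$ replacing $\Delta_{\min}$; and the single genuinely new term $A_{\mathrm w}=\sum_{(k,l)\in\mathc S_{\ov}}\sum_{i\in\mathc C_k,j\in\mathc C_l}\mathbbm 1\{K_{ij}<\tau_{\tin}\}$, which I would likewise write as a sum of two-sample U-statistics in the style of \eqref{eq:ukl}.

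The heart of the matter is therefore the single-entry tail for a weakly separated pair: for $i\in\mathc C_k$, $j\in\mathc C_l$ with $\Delta_{kl}\le\Delta'<\Delta_0=8\sigma_{\max}\sqrt d$, the event $\{K_{ij}<\tau_{\tin}\}$ equals $\{\lVert\bm y_i-\bm y_j\rVert^2>\tfrac{5}{16}\tilde{\Delta}_{\min}^2\}$. Decomposing $\bm y_i-\bm y_j=(\bmu_k-\bmu_l)+(\bs\xi_i-\bs\xi_j)$ with $\lVert\bmu_k-\bmu_l\rVert=\Delta_{kl}\le\Delta'$ and applying the same squared-norm concentration bound for sub-gaussian vectors used on the diagonal blocks in \Cref{theorem-ErrorRateX}---with the noise budget $\Ep\lVert\bs\xi_i-\bs\xi_j\rVert^2\le 2d\sigma_{\max}^2=\Delta_0^2/32\le\tilde{\Delta}_{\min}^2/32$ subtracted off explicitly---yields $\PP(K_{ij}<\tau_{\tin})\le\exp\!\big(-(\tilde{\Delta}_{\min}-\Delta'/\tilde c)^2/(64\sigma_{\max}^2)\big)$, where $\tilde c=\sqrt{10}/8$ is precisely the constant obtained by reconciling the threshold exponent $5/32$, the offset $\Delta'$, and the identity $\sqrt d\,\sigma_{\max}=\Delta_0/8$; the hypothesis $\Delta'<\tilde c\,\tilde{\Delta}_{\min}$ is exactly what keeps this exponent positive, and $\Delta'<\Delta_0$ is what legitimises the noise correction. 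Feeding this mean bound into the U-statistic concentration inequalities of \citet{hoeffding1963probability} and \citet{arcones1995bernstein} for $A_{\mathrm d}$, $A_{\mathrm w}$ and $B$, with a union bound over the at most $r^2$ blocks as in \Cref{theorem-ErrorRateX}, gives with probability at least $1-2r/n_{\min}$ that $A_{\mathrm d}+A_{\mathrm w}\lesssim n^2\max\{\exp(-(\tilde{\Delta}_{\min}-\Delta'/\tilde c)^2/(64\sigma_{\max}^2)),\ \log n_{\min}/n_{\min}\}$ and $B$ the same (indeed with the slightly sharper rate $\tilde{\Delta}_{\min}^2/(64\sigma_{\max}^2)$). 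Combining with the displayed inequality proves \eqref{eq:overlap_inlier_thm}; and since $\lVert\tilde{\X}\rVert_1\ge\sum_k n_k^2\ge n^2/r$ while the outlier rows and columns of $\htt{\X}$ add at most $2mn$ to $\lVert\htt{\X}-\tilde{\X}\rVert_1$, dividing through yields \eqref{eq:overlap_inlier_thm2} with the $m/n$ term, exactly as in the passage from \eqref{equation-ErrorRateInlierX} to \eqref{equation-ErrorRateX}.

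The main obstacle I expect is exactly the weakly separated single-entry bound and its constant: one must certify that the \emph{one} threshold $\tau_{\tin}=\exp(-\tfrac{5}{32}\tilde{\Delta}_{\min}^2/\theta^2)$ can be placed simultaneously (a) strictly above $\tau_{\tout}$, so that strong assortativity and a nonempty $\gamma$-window survive; (b) far enough above the typical value $\approx\exp(-\Delta_{kl}^2/(2\theta^2)-O(d\sigma_{\max}^2/\theta^2))$ of a weakly separated kernel entry that $A_{\mathrm w}$ is exponentially small; and (c) in such a way that the two competing deviations---the $\tilde{\Delta}_{\min}$ gap on well separated blocks and the $\Delta'$ offset on weakly separated blocks---collapse into the single clean exponent $(\tilde{\Delta}_{\min}-\Delta'/\tilde c)^2/(64\sigma_{\max}^2)$ with $\tilde c=\sqrt{10}/8$. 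Carefully propagating the $\sqrt d\,\sigma_{\max}$ noise terms through the quadratic-form concentration, and verifying that $\Delta_0=8\sigma_{\max}\sqrt d$ is exactly the budget forcing that value of $\tilde c$, is the only delicate bookkeeping; every other step is a transcription of the proof of \Cref{theorem-ErrorRateX}.
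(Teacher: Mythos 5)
Your plan reproduces the paper's proof: the same merged-block reference matrix, the same reuse of Lemmas~\ref{lem-TrueReferenceMaximizer}--\ref{lem-EstimatedAlmostReferenceMaximizer}, the same three-way split of the corrupted entries with U-statistic concentration on each group, and the same key new ingredient — a tail bound for a weakly separated pair reducing $\PP(K_{ij}<\tau_{\tin})$ to $\PP(\lVert\bs\xi_i-\bs\xi_j\rVert > r_{\tin}-\Delta_{kl})$ with $r_{\tin}=\tilde c\,\tilde\Delta_{\min}$, which the paper obtains via Cauchy--Schwarz plus the sub-gaussian quadratic-form tail and which yields exactly your exponent $(\tilde\Delta_{\min}-\Delta'/\tilde c)^2/(64\sigma_{\max}^2)$. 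The approach is correct and essentially identical to the paper's.
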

To understand the result, we consider a simple example where we have a mixture model consisting of six  spherical Gaussians, each having unit variance and a between cluster separation of five units. We incrementally reduce the mean separation between the first two clusters $\Delta_{12}$, while keeping the separation between the remaining clusters as fixed. The clustering matrices $\htt \X$ obtained from the rounding step are shown in Figure~\ref{fig:Overlap_X}. As the mean separation between the first two clusters is decreased, we note that they get gradually merged in $\htt{\X}$, while the remaining part of $\htt{\X}$ corresponding to the ``well" separated clusters remains unchanged. To obtain the final clustering of points from~$\htt \X$, we first determine the number of clusters by adopting the procedure described in \Cref{sec-clusters_num} based on the multiplicity of 0 eigenvalue(s) for the normalized graph Laplacian matrix. The corresponding clustering results obtained by applying the Robust-SC algorithm are shown in Figure~\ref{fig:Overlap}. 
\begin{figure}[h!] 
    \centering
    \begin{subfigure}{\linewidth}
    \centering
    \includegraphics[width=.7\linewidth,height=1.4cm]{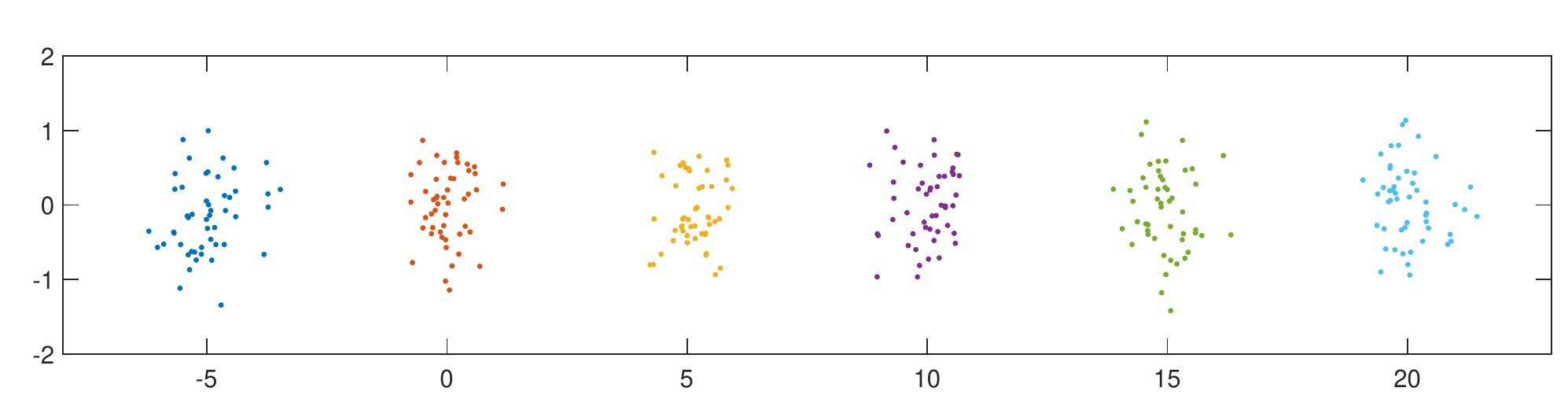}
    \end{subfigure}
    \begin{subfigure}{\linewidth}
    \centering
    \includegraphics[width=.7\linewidth,height=1.4cm]{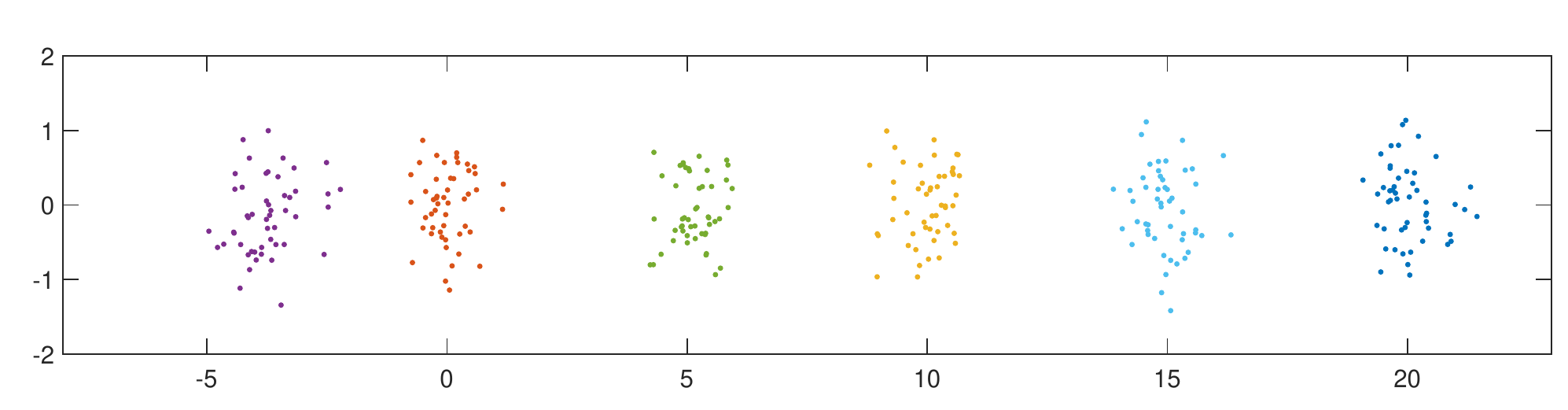}
    \end{subfigure}
    \begin{subfigure}{\linewidth}
    \centering
    \includegraphics[width=.7\linewidth,height=1.4cm]{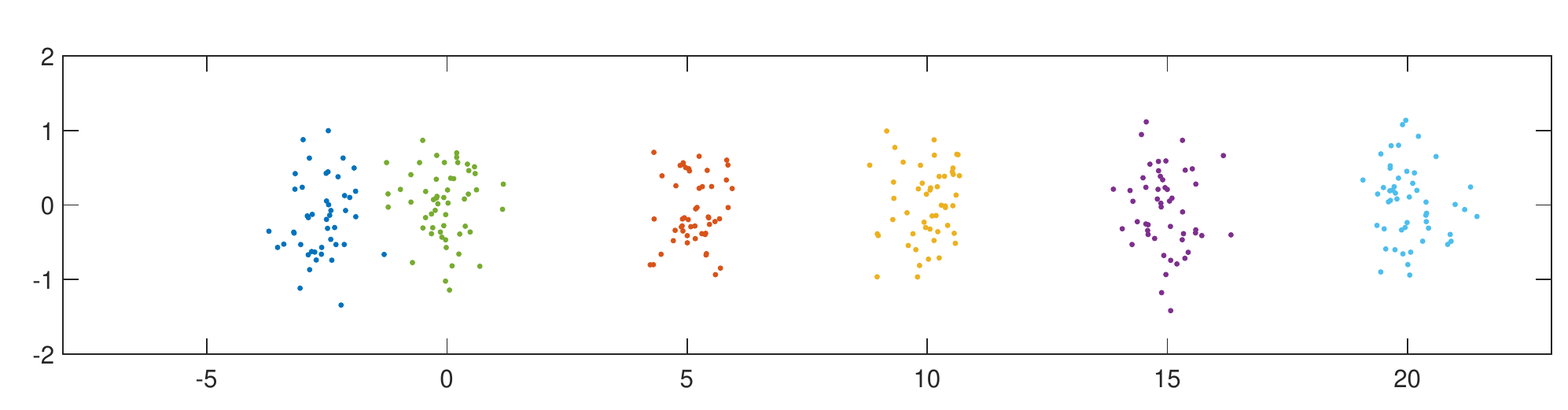}
    \end{subfigure}
    \begin{subfigure}{\linewidth}
    \centering
    \includegraphics[width=.7\linewidth,height=1.4cm]{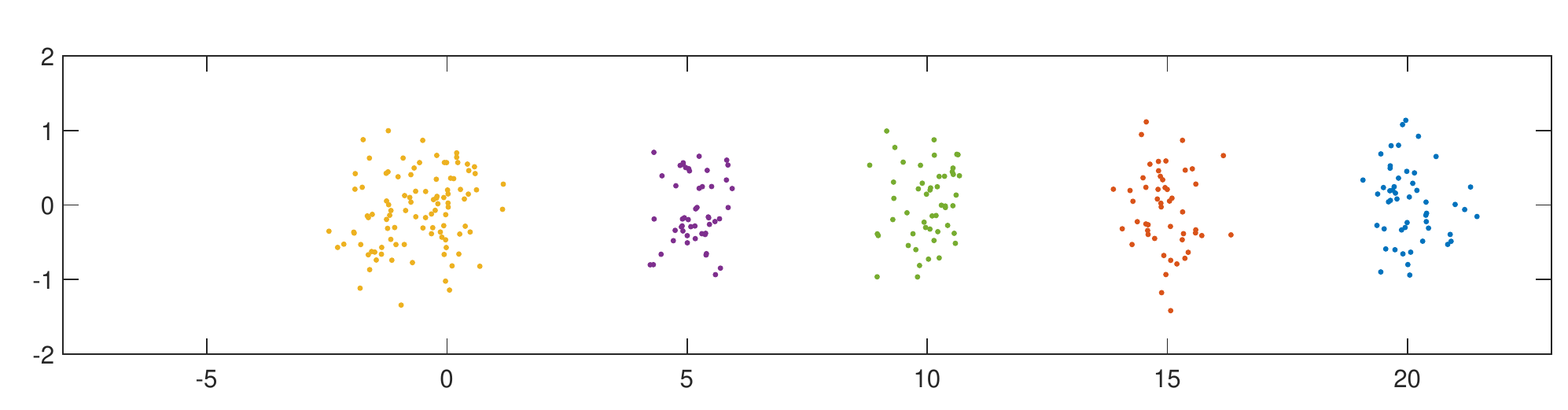}
    \end{subfigure}
    \begin{subfigure}{\linewidth}
    \centering
    \includegraphics[width=.7\linewidth,height=1.4cm]{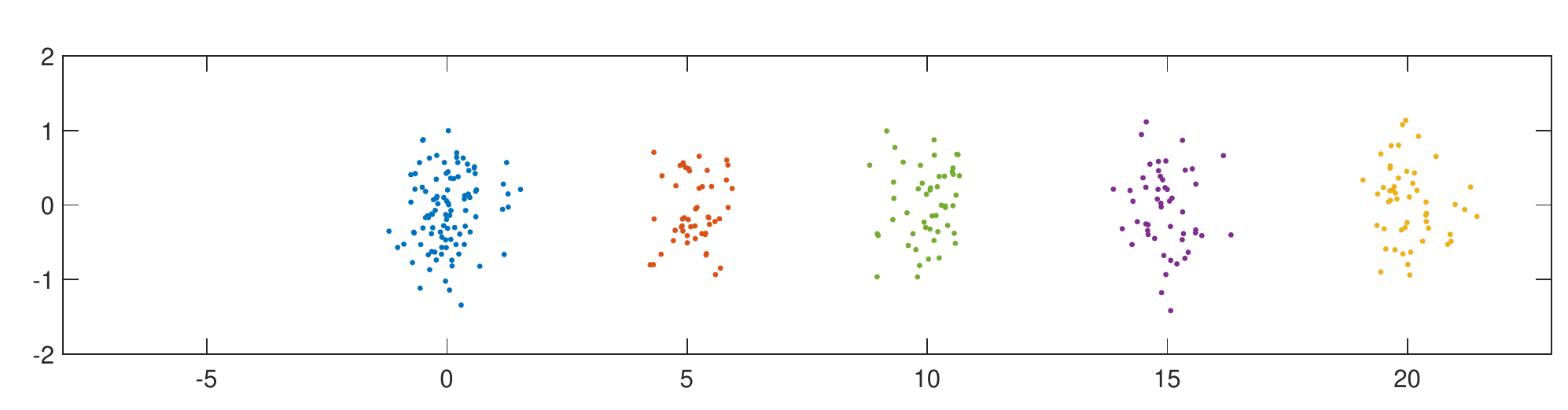}
    \end{subfigure}
    \caption{Example shows the effect of reducing the mean cluster separation below the threshold $\Delta_0$. The original dataset is obtained from a mixture of six spherical Gaussians with unit variances and a mean separation of 5 units. The separation between the first two clusters $\Delta_{12}$ is then incrementally reduced while keeping the separation between other clusters as fixed. The figures show the final clustering obtained by applying the Robust-SC algorithm. As the overlap increases, the algorithm merges the first two clusters together.}
    \label{fig:Overlap}
\end{figure}
\begin{figure}[h!]
    \centering
    \begin{subfigure}{.18\linewidth}
    \centering
    \includegraphics[width=\linewidth,height=2.6cm]{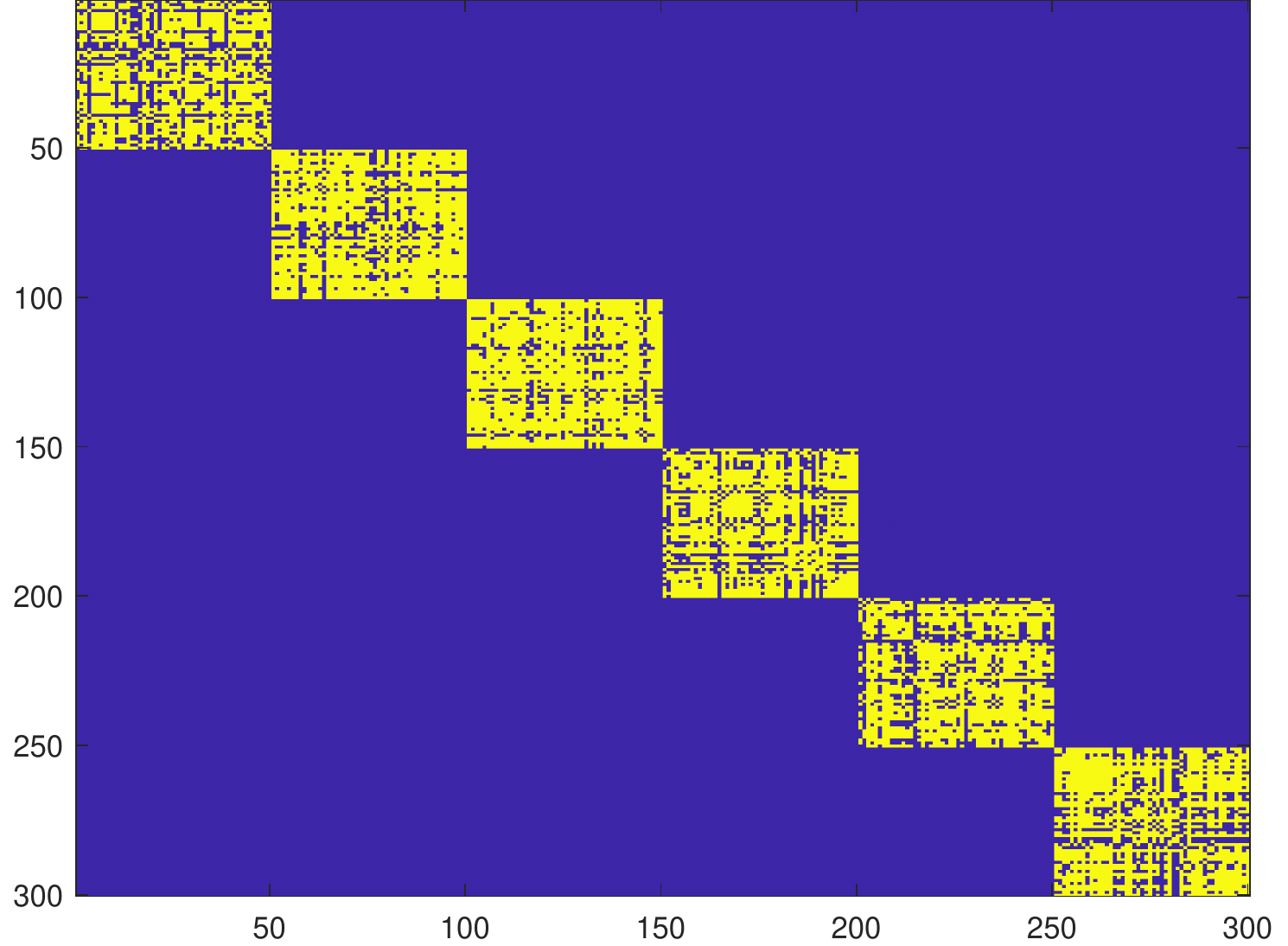}
    \end{subfigure}
    \begin{subfigure}{.18\linewidth}
    \centering
    \includegraphics[width=\linewidth,height=2.6cm]{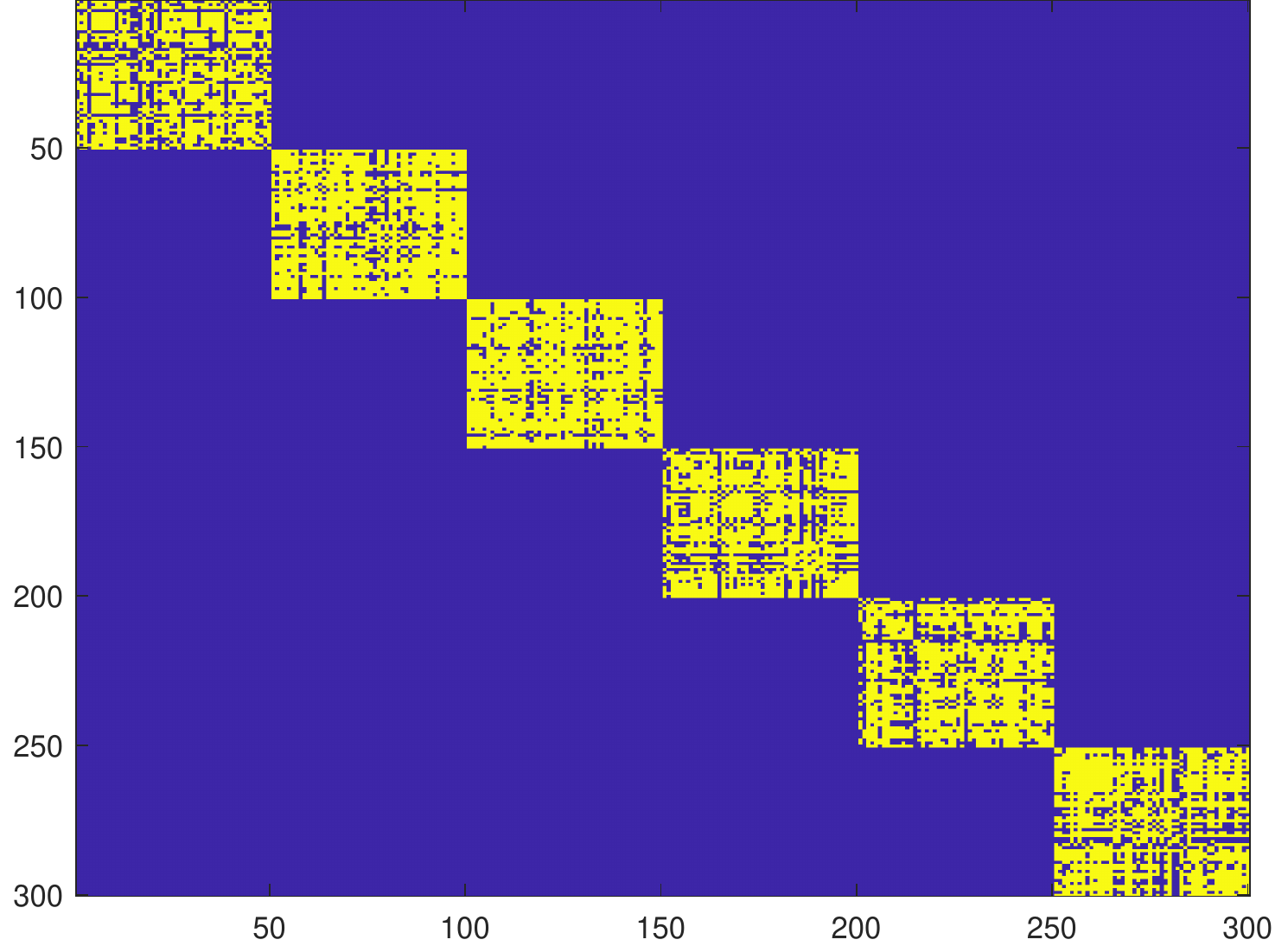}
    \end{subfigure}
    \begin{subfigure}{.18\linewidth}
    \centering
    \includegraphics[width=\linewidth,height=2.6cm]{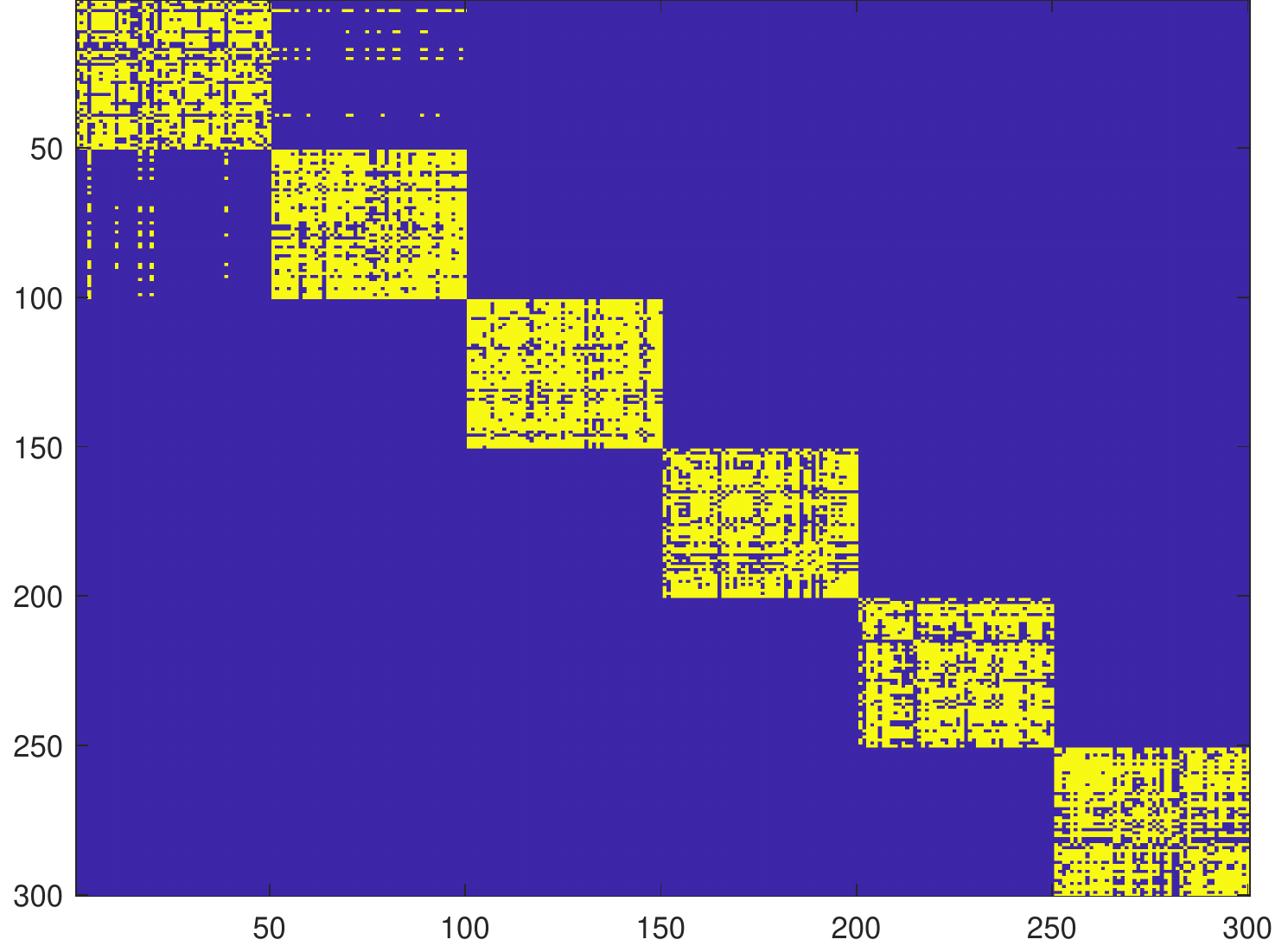}
    \end{subfigure}
    \begin{subfigure}{.18\linewidth}
    \centering
    \includegraphics[width=\linewidth,height=2.6cm]{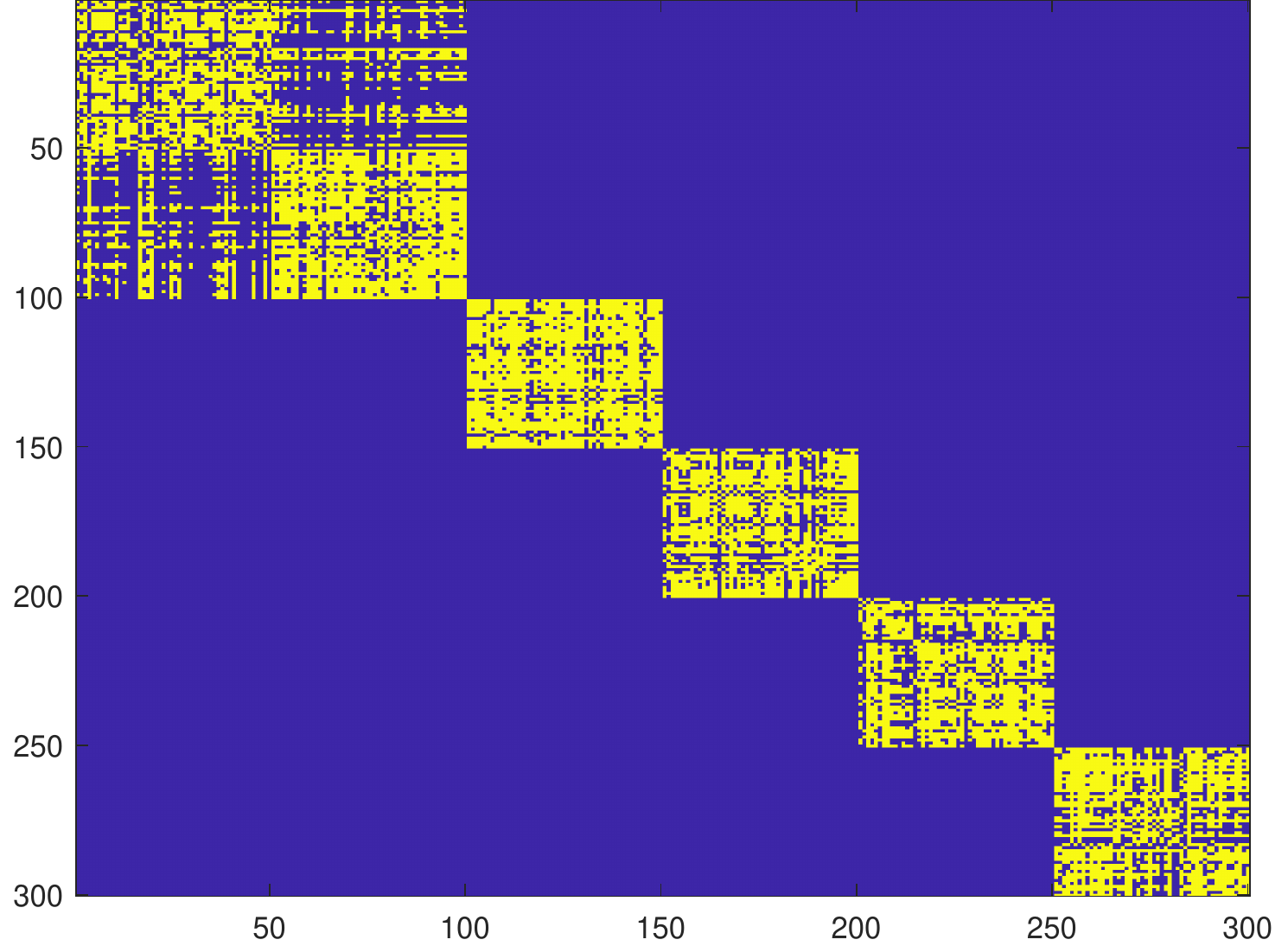}
    \end{subfigure}
    \begin{subfigure}{.18\linewidth}
    \centering
    \includegraphics[width=\linewidth,height=2.6cm]{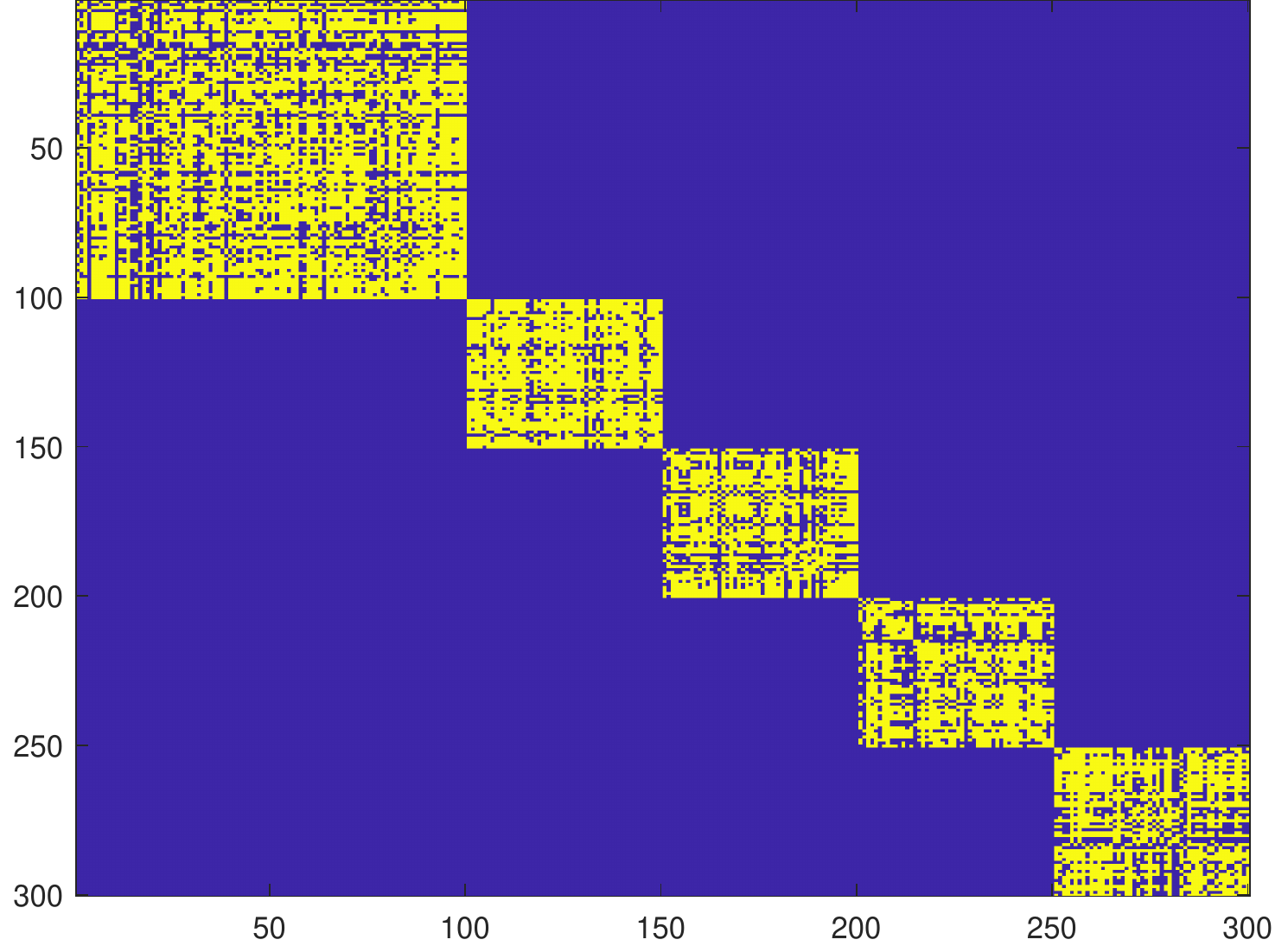}
    \end{subfigure}
    \caption{Clustering matrices $\htt\X$ obtained for different values of $\Delta_{12}$ considered in the example in~\Cref{fig:Overlap}. As $\Delta_{12}$ is decreased, the overlap between the first two clusters in $\htt \X$ increases. However, the remaining part of $\htt \X$ remains unaffected. }
    \label{fig:Overlap_X}
\end{figure}

% \begin{figure}[h]
% \centering
%     \begin{subfigure}[t]{0.28\textwidth}
%         \centering
%         \includegraphics[width=3.7cm, height=2.9cm]{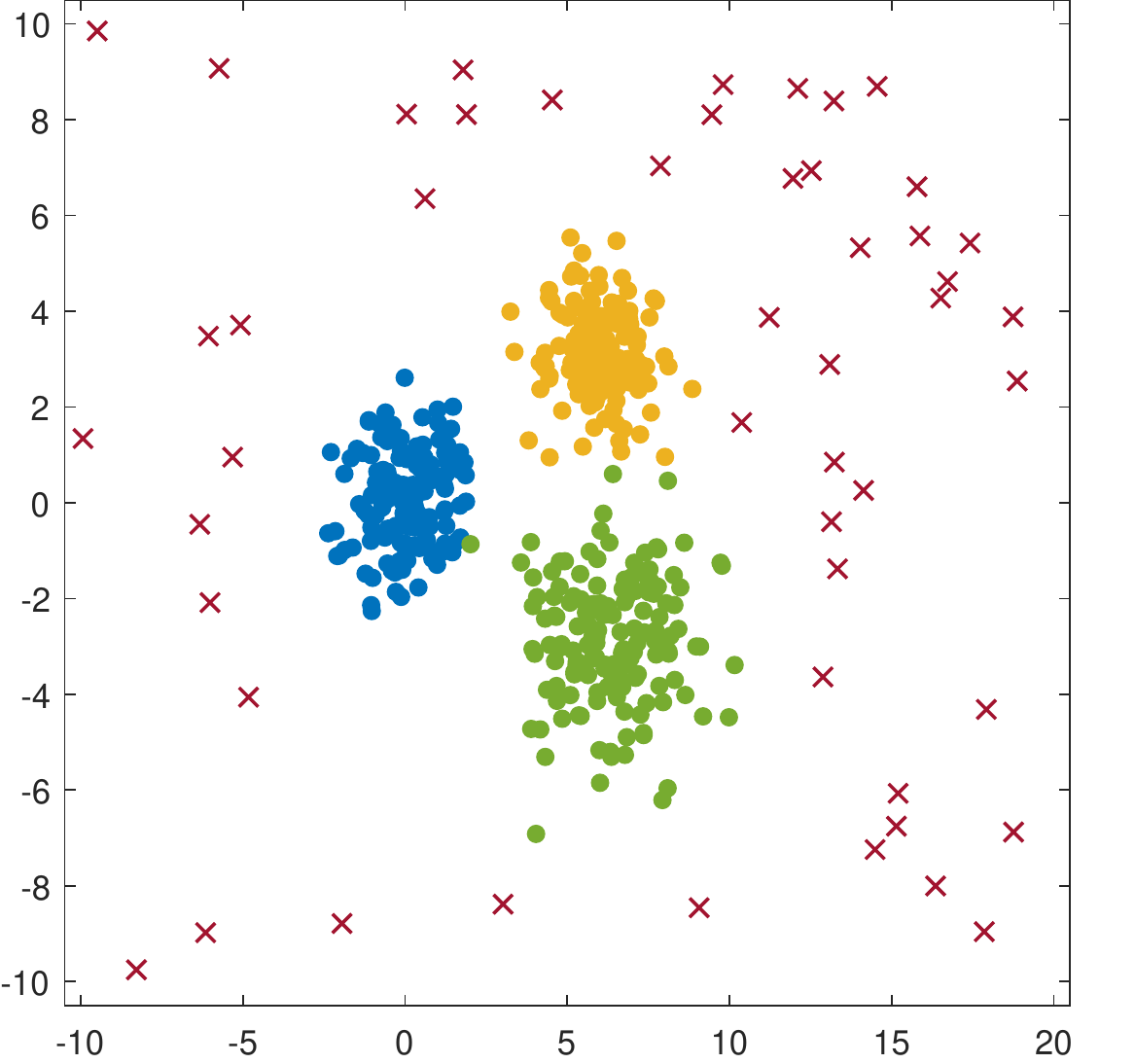}
%         \caption{Balanced Spherical GMMs}
%     \end{subfigure}
%     \begin{subfigure}[t]{0.28\textwidth}
%         \centering
%         \includegraphics[width=3.7cm, height=2.95cm]{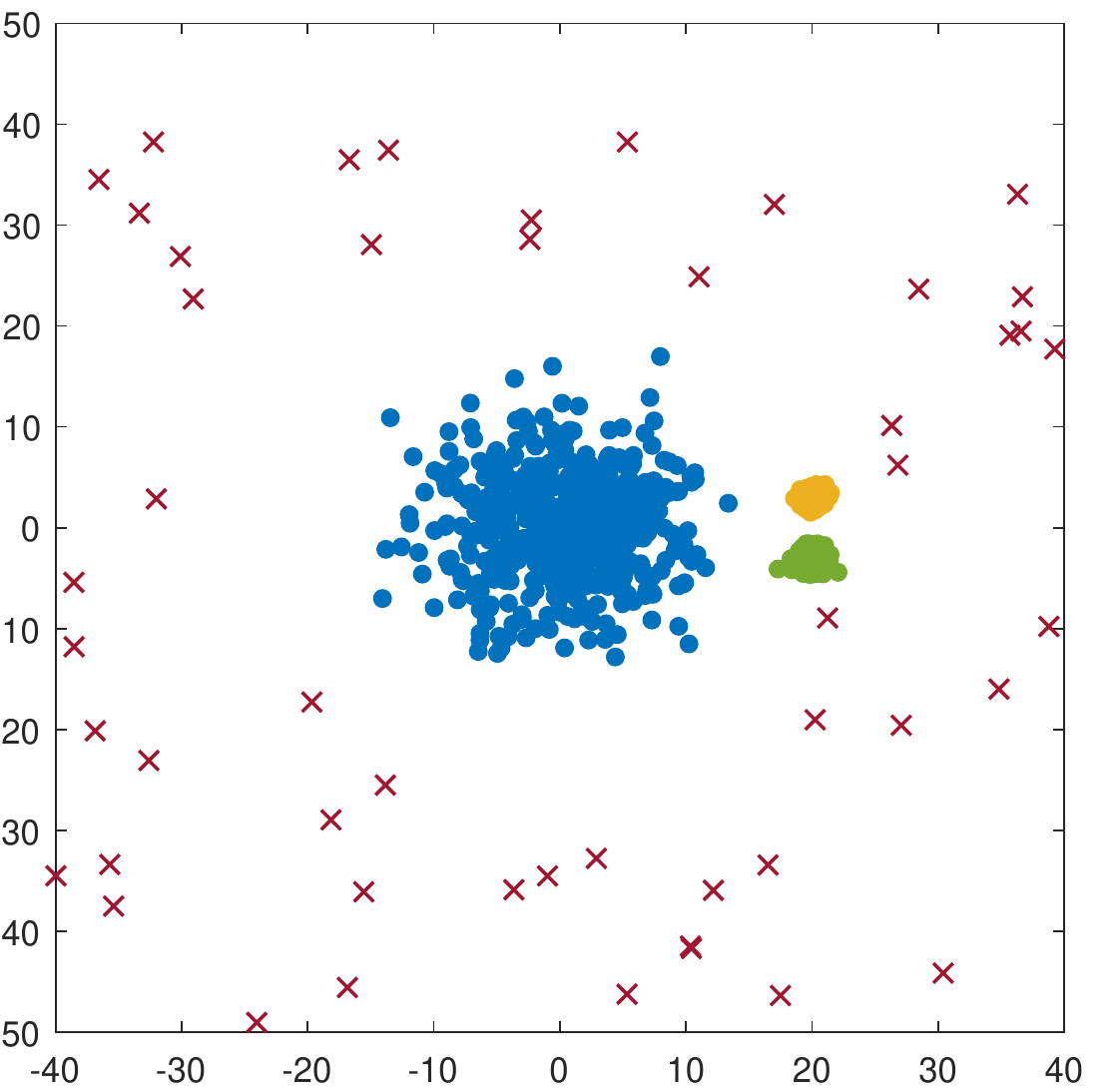}
%         \caption{Unbalanced Spherical GMMs}
%     \end{subfigure}
%     \begin{subfigure}[t]{0.3\textwidth}
%         \centering
%         \includegraphics[width=4.5cm,height=3.2cm]{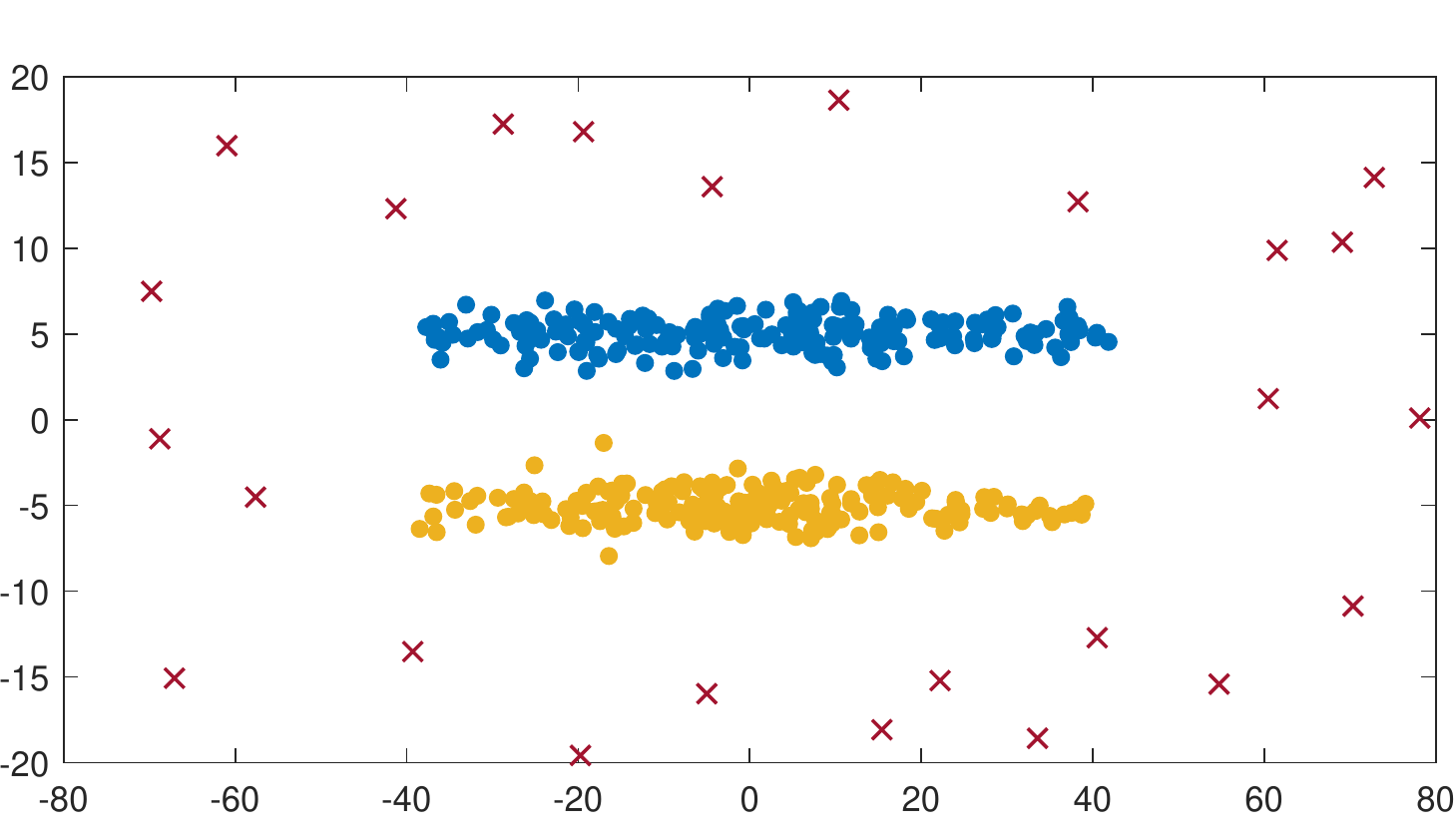}
%         \caption{Balanced Ellipsoidal GMMs}
%     \end{subfigure}
%     \caption{Synthetic datasets generated for evaluating the performance of clustering algorithms.}
%     \label{fig:OriginalDatasets}
% \end{figure}
\section{Experiments}
\label{sec-experiments}
% In this section, we study the performance of our Robust-LP based spectral clustering algorithm (Robust-SC) on both simulated and real-world datasets. For our simulation studies, we compare Robust-SC with three other SDP-based clustering algorithms - (1) Robust-SDP, which is our proposed kernel clustering algorithm based on the Robust-SDP formulation; (2) Robust-Kmeans proposed by~\cite{kushagra2017provably}, which is a regularized version of the $k$-means SDP formulation in~\cite{peng2007approximating}; and (3) CC-Kmeans proposed by~\cite{rujeerapaiboon2017size}, which is another SDP-based algorithm that recovers robust solutions by imposing explicit cardinality constraints for the clusters and the outliers points. Similar to our Robust-SC and Robust-SDP algorithms, the formulations for both Robust-Kmeans and CC-Kmeans are capable of identifying outliers in datasets in addition to being robust to them. To evaluate the performance on real-world datasets, in addition to these two SDP-based algorithms, we also compare our algorithm with~(4)~vanilla spectral clustering (SC) and (5) regularized spectral clustering (RegSC) \citep{joseph2016impact,zhang2018understanding}. 

In this section, we study the performance of our Robust-LP based spectral clustering algorithm (Robust-SC) on both simulated and real-world datasets. For our simulation studies, we conduct two different experiments. In the first experiment, we compare Robust-SC with three SDP-based clustering algorithms - (1) Robust-SDP, which is our proposed kernel clustering algorithm based on the Robust-SDP formulation; (2) Robust-Kmeans proposed by~\cite{kushagra2017provably}, which is a regularized version of the $k$-means SDP formulation in~\cite{peng2007approximating}; and (3) CC-Kmeans proposed by~\cite{rujeerapaiboon2017size}, which is another SDP-based algorithm that recovers robust solutions by imposing explicit cardinality constraints for the clusters and the outliers points. Similar to Robust-SC and Robust-SDP algorithms, the formulations for both Robust-Kmeans and CC-Kmeans are capable of identifying outliers in datasets in addition to being robust to them. Therefore, we evaluate the performance of these algorithms in terms of both the inlier clustering accuracy and the outlier detection accuracy. 

However, the SDP-based algorithms are computationally intensive to implement, and therefore, do not scale well to large scale datasets. For this reason, in the second simulation experiment, we evaluate the performance of Robust-SC on larger datasets and compare it with three additional algorithms:~(1)~$k$-means++, (2)~vanilla spectral clustering (SC), and (3) regularized spectral clustering (RegSC) \citep{joseph2016impact,zhang2018understanding}. Finally, for real-world data sets, we compare Robust-SC with all the above-mentioned algorithms.

\subsection{Implementation}
We carry out all our experiments on a quadcore 1.9 GHz Intel Core i7-8650U CPU with 16GB RAM. 
For solving different SDP instances, we use the MATLAB package SDPNAL+~\citep{yang2015sdpnal}, which is based on an efficient implementation of a provably convergent ADMM-based algorithm. 

\subsection{Performance Metric}
 We measure the performance of algorithms in terms of clustering accuracy for the inliers and the percentage of outliers we can detect. We also report the overall accuracy, which is the total number of correctly clustered inliers and correctly detected outliers divided by $N$. \bk %For evaluating the inlier accuracy, we note that for any clustering solution, the predicted cluster labels can be interchanged amongst different clusters. Therefore, to overcome this issue of label interchangeability, we consider all possible permutations of the predicted cluster labels and select the permutation that maximizes the inlier clustering accuracy with respect to the ground truth cluster labels. 

\subsection{Parameter selection}
\textbf{\emph{Choice of $\bs \theta$:}} It is well known that a proper choice of scaling parameter $\theta$ in the Gaussian kernel function plays a significant role in the performance of both spectral as well as SDP-based kernel clustering algorithms. We adopt the procedure prescribed by \cite{shi2009data} for choosing a good value of $\theta$ for low-dimensional problems. The main idea is to select $\theta$ in a way such that for $(1-\alpha)\times 100\%$ of the data points, at least a small fraction $\beta$ (say around 5-10\%) of the points in the neighborhood are within the \say{range} of the kernel function. In general, the value of selected $\beta$ should be sufficiently high so that points that belong to the same cluster form a single component with relatively high similarity function values between them. Based on this idea, we choose $\theta$ as follows:
\begin{equation*}
    \theta= \frac{(1-\alpha)\text{ quantile of } \{q_1,\ldots,q_N\}}{\sqrt{(1-\alpha) \text{ quantile of }  \chi_\D^2}},
\end{equation*}
where for all points $1,\ldots,N$, each $q_i$ equals the $\beta$ quantile of the $\ell_2$-distances $\{\lVert \bm y_i - \bm y_j\rVert, j=1,\ldots,\N\}$ of point $i$ from other points in the dataset. Depending on the fraction of outlier points in the dataset, we usually choose a small value of $\alpha$ so that for a majority of inlier points, the points in the neighborhood have a considerably higher similarity value. In all our experiments, we set $\beta=0.06$ and $\alpha=0.2$. For high-dimensional problems, we use the dimensionality reduction procedure described in \Cref{sec-results} to first project the data points onto a low-dimensional space and then apply the above procedure to choose $\theta$.\\
 \noindent \textbf{\emph{Choice of $\bs \gamma$:}} Based on our discussion in \Cref{sec-algorithm}, the parameter $\gamma$ plays an equally important role in the performance of the \ref{RelaxedForm} formulation. For our experiments on simulated datasets, we choose the following value of $\gamma$:
 \begin{equation*}
    \gamma=  \exp{\bigg(-\frac{t_\alpha}{2}\bigg)},
\end{equation*}
where $t_\alpha= (1-\alpha)$ quantile of $\chi^2_\D$. This value is obtained by setting the distance in the Gaussian kernel function to equal the $(1-\alpha)\text{ quantile value of } \{q_1,\ldots,q_N\}$.

\subsection{Simulation studies}

\subsubsection{Comparison with SDP-based algorithms:} For the experiments in this section, we construct three synthetic datasets - (1) Balanced Spherical GMMs, (2) Unbalanced Spherical GMMs, and (3) Balanced Ellipsoidal GMMs. These datasets have been obtained from a mixture of linearly separable Gaussians, and explore the effect of varying different model parameters like $\bs\pi$, $\{\bs \mu_1,\dots,\bs \mu_\R\}$, and $\{\bs\Sigma_1,\dots,\bs\Sigma_\R\}$  on the performance of the algorithms. In all of these datasets, we add outlier points in the form of uniformly distributed noise to the clusters. \Cref{table-ModelSpecifications} lists out the model specifications for these synthetically generated datasets. \Cref{fig:OriginalDatasets} depicts these datasets; in each figure, the clusters formed by the inlier points are represented in different colors by solid circles, while the outlier points are marked with red crosses. 

\begin{table}[htb!]
\centering
\renewcommand{\arraystretch}{1.1}
{
    \begin{tabular}{ l l} \hline
    \multicolumn{1}{c}{\textbf{  Dataset}} & \multicolumn{1}{c}{\textbf{Model Specifications}}  \\ \hline 
         1. Balanced Spherical GMMs & $\bmu_1=[0,0]^\top,\bmu_2=[6,3]^\top,\bmu_3=[6,-3]^\top$\\
         &$\bs\Sigma_1=\bs\Sigma_2=\bs\Sigma_3=\Diag([1,1])$\\
         & $\smalln_1=\smalln_2=\smalln_3=150,m=50$\\
         2. Unbalanced Spherical GMMs & 
         $\bmu_1=[0,0]^\top,\bmu_2=[20,3]^\top,\bmu_3=[20,-3]^\top$\\
         &$\bs\Sigma_1=\Diag([5,5]), \bs\Sigma_2=\bs\Sigma_3=\Diag([0.5,0.5])$\\
         & $\smalln_1=500,\smalln_2=\smalln_3=150,m=50$\\
         3. Balanced Ellipsoidal GMMs & $\bmu_1=[0,5]^\top,\bmu_2=[0,-5]^\top,\bs\Sigma_1=\bs\Sigma_2=\Diag([20,1])$ \\
         &  $\smalln_1=\smalln_2=200$, $m=25$\\
         \hline
    \end{tabular}}
\caption{Model specifications for synthetic datasets.\label{table-ModelSpecifications}}
\end{table}

\begin{figure}[h]


\centering
    \begin{subfigure}[t]{0.28\textwidth}
        \centering
        \includegraphics[width=3.7cm, height=2.9cm]{Images/NoisyBalancedBalls.pdf}
        \caption{Balanced Spherical GMMs}
    \end{subfigure}
    \begin{subfigure}[t]{0.28\textwidth}
        \centering
        \includegraphics[width=3.7cm, height=2.95cm]{Images/NoisyUnbalancedBalls.pdf}
        \caption{Unbalanced Spherical GMMs}
    \end{subfigure}
    \begin{subfigure}[t]{0.3\textwidth}
        \centering
        \includegraphics[width=4.5cm,height=3.2cm]{Images/NoisyPancakes.pdf}
        \caption{Balanced Ellipsoidal GMMs}
    \end{subfigure}
    \caption{Synthetic datasets generated for evaluating the performance of clustering algorithms.}
    \label{fig:OriginalDatasets}
\end{figure}

As discussed earlier in this section, we compare the performance of our Robust-SC and Robust-SDP algorithms with two other SDP-based robust formulations, namely Robust-Kmeans and CC-Kmeans. In addition to  explicitly requiring the number of outliers and cardinalities for all clusters as inputs, the CC-Kmeans algorithm suffers from several drawbacks. First, in contrast to both Robust-SDP and Robust-Kmeans, the algorithm requires solving the SDP formulation twice - once, to identify the outliers, and second, to recover the clusters after the outliers have been removed. Secondly and more importantly, the CC-Kmeans formulation for $r$ clusters, in general, requires defining $\R $ separate matrix decision variables of dimensions $(\N+1)\times(\N+1)$, each with a positive semidefinite constraint. Due to extensive memory and computational requirements, the CC-Kmeans SDP could not be implemented on the synthetic datasets for the listed model specifications in \Cref{table-ModelSpecifications}. However, despite its several shortcomings, CC-Kmeans does provide us with a benchmark on the solution quality provided the clustering problem has been entirely specified. Therefore, we try to evaluate the performance of CC-Kmeans algorithm by considering a smaller dataset with a total of around 150-200 data points in each dataset, obtained by sampling an equal number of points from each cluster. We deliberately choose the clusters to be equal-sized for CC-Kmeans because when the clusters are equal-sized, the number of SDP variables per problem instance can be reduced (although each instance does need to be solved $\R$ times), thereby making the problem computationally tractable.

For each dataset in \Cref{table-ModelSpecifications}, we generate 10 samples for the stated model specification and obtain clustering results for each algorithm except CC-Kmeans, for which we perform a single simulation run. Based on the implementation times in \Cref{table-ImplementationTimes}, it is quite evident that the CC-Kmeans algorithm is considerably slower (at least 10-20 times) compared to the other SDP algorithms even for a down-sampled dataset, and therefore, we do not show further experiments on CC-Kmeans in our simulation study. 

We summarize the results obtained in Table \ref{table-SyntheticResults}. For each dataset, we report the performance of the algorithms with respect to three metrics: (i) inlier clustering accuracy, (ii) outlier detection accuracy, and (iii) overall accuracy. On the Balanced Spherical GMMs dataset, all the algorithms perform equally well with more than $95\%(\pm 2\%)$ overall accuracy. For the Unbalanced Spherical GMMs dataset, Robust-SC and Robust-SDP are comparable with about $98\%(\pm 0.6 \%)$ overall accuracy, whereas Robust-Kmeans performs poorly with about $56\% (\pm 2\%)$ overall accuracy. Similarly, for the Balanced Ellipsoidal GMMs dataset, Robust-SC and Robust-SDP have similar accuracy values of $97.31\%(\pm 0.6\%)$ and $93.86\%(\pm 5 \%)$, whereas Robust-Kmeans has a poor accuracy of $50.52\%(\pm 1\%)$.

Based on the high accuracy values for inlier and outlier data points, Robust-SC and Robust-SDP consistently provide high quality solutions in terms of recovering the true clusters for inlier data points as well as identifying outliers in the dataset. On the other hand, while Robust-Kmeans and CC-Kmeans perform well for 
%it is clear that all four algorithms are able to recover the clusters as well as the outlier points in 
the Balanced Spherical GMMs dataset, they fail either on the Unbalanced Spherical GMMs dataset, where the clusters are unbalanced in terms of their cluster cardinalities (refer to \Cref{subfig:UnbalancedSphericalGMMs}), or the Balanced Ellipsoidal GMMs dataset, where the clusters have significantly different variances along different directions (refer to \Cref{subfig:BalancedEllipsoidalGMMs}). 
\begin{table}[h!]
    \centering
    \renewcommand{\arraystretch}{1}
    \begin{tabular}{ll l ll ll ll} \hline
    \textbf{Dataset} & \multicolumn{2}{c}{\textbf{Robust-SC}} & \multicolumn{2}{c}{\textbf{Robust-SDP}} & \multicolumn{2}{c}{\textbf{Robust-Kmeans}} & \multicolumn{2}{c}{\textbf{CC-Kmeans}} \\ \hline
         Balanced Spherical 
         & Inlier & 0.9902 & Inlier &  0.9836 & Inlier & 0.9660 & Inlier & 1.0000\\
         & Outlier& 0.9840 & Outlier & 0.9080  & Outlier & 0.7540 & Outlier & 1.0000
         \\
         & Overall & 0.9896 & Overall & 0.9760 & Overall & 0.9448 & Overall & 1.0000 \\
         Unbalanced Spherical   
         & Inlier & 0.9914   & Inlier & 0.9908  & Inlier & 0.5360 & Inlier & 0.9667       \\
         & Outlier  & 0.9680 & Outlier & 0.8840 & Outlier & 0.9240 & Outlier & 0.9600 \\
         & Overall & 0.9900  & Overall & 0.9845 & Overall & 0.5588 & Overall & 0.9650\\ 
         Balanced Ellipsoidal & Inlier & 0.9468 & Inlier & 0.9840 & Inlier & 0.5038 &  Inlier & 0.4933   \\ 
         & Outlier &	0.8080 & Outlier & 0.8000 &  Outlier & 0.5280 &  Outlier &  0.6800\\
         & Overall & 0.9386 & Overall & 0.9731 & Overall & 0.5052 & Overall & 0.5200\\
         \hline
    \end{tabular}
    \caption
    {Performance of clustering algorithms on synthetic datasets. The table reports the performance of Robust-SC, Robust-SDP, and Robust-Kmeans algorithms in terms of their inlier clustering accuracy, outlier detection accuracy, and overall accuracy for synthetic datasets, averaged over 10 simulation runs. For CC-Kmeans, the algorithm could not be implemented for the entire dataset due to memory and computational limitations. Therefore, for comparison, we specify the results for a single simulation on a down-sampled dataset with an equal number of points from each cluster.
    \label{table-SyntheticResults}}
\end{table}

% We illustrate this difference in clustering solutions obtained on a representative sample for each dataset for the four algorithms in \Cref{fig:Results}, and  In addition, they also provide considerably better results compared to  Robust-Kmeans and CC-Kmeans when the Gaussian clusters are unbalanced in terms of their cluster cardinalities, for example, the Unbalanced Spherical GMMs dataset (refer to \Cref{subfig:UnbalancedSphericalGMMs}), or when they have significantly different variances along different directions, for example, the Balanced Ellipsoidal GMMs dataset (refer to \Cref{subfig:BalancedEllipsoidalGMMs}). 

In addition, we note that while there is very little difference between Robust-SC and Robust-SDP in terms of solution quality, Robust-SC is orders of magnitude faster than Robust-SDP and other SDP-based algorithms in terms of solution times (refer to \Cref{table-ImplementationTimes}).

\begin{table}[t!]
\centering
\begin{tabular}{m{5cm}cccc} \hline
      \textbf{Dataset} &\textbf{Robust-SC}  &\textbf{Robust-SDP} & \textbf{Robust-Kmeans} & \textbf{\textbf{CC-Kmeans}}\\ 
     \hline
     Balanced Spherical GMMs
     & 3.24 & 265.62 &  355.65 & 3718 \\
     Unbalanced Spherical GMMs
     & 3.18 & 828.56 & 1064.11 & 5726 \\
     Balanced Ellipsoidal GMMs
     & 2.71  & 273.52 &  123.74 & 1944 \\
     \hline
    \end{tabular}
    \caption{Solution times (in seconds) for different clustering algorithms on synthetic datasets. For Robust-SC, Robust-SDP, and Robust-Kmeans, the solution times are specified for the entire dataset, averaged over 10 simulation runs. For CC-Kmeans, the algorithm could not be implemented for the entire dataset due to memory and computational limitations. Therefore, for comparison, we specify the run-time for a single simulation on a down-sampled dataset with an equal number of points from each cluster.\label{table-ImplementationTimes}}
\end{table}

% \begin{figure}[htb!]
% \begin{center}
% \includegraphics[width=.6\textwidth,height=4cm]{}
% \caption{Performance of clustering algorithms in terms of their inlier clustering accuracy on synthetic datasets, averaged over 10 simulation runs. }
% \end{center}
% \label{fig:Accuracy}
% \end{figure}
\begin{figure}[h!]
    \centering
    \begin{subfigure}[t]{\textwidth}
    \centering
    \begin{minipage}[t]{0.18\textwidth}
    \centering
    \includegraphics[width=\linewidth]{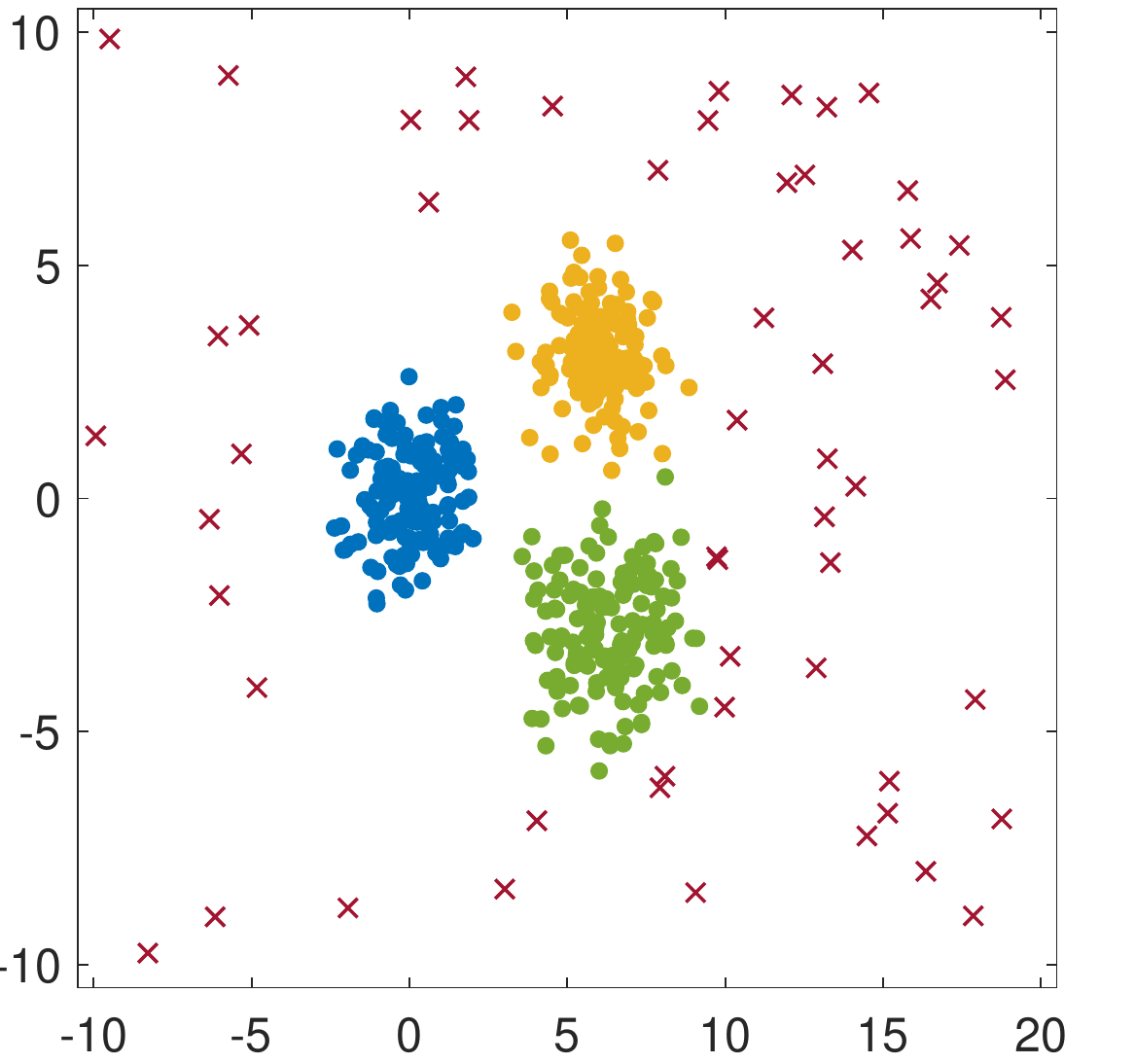}
    \caption*{Robust-SC}
    \end{minipage}
    \begin{minipage}[t]{0.18\textwidth}
    \includegraphics[width=\linewidth]{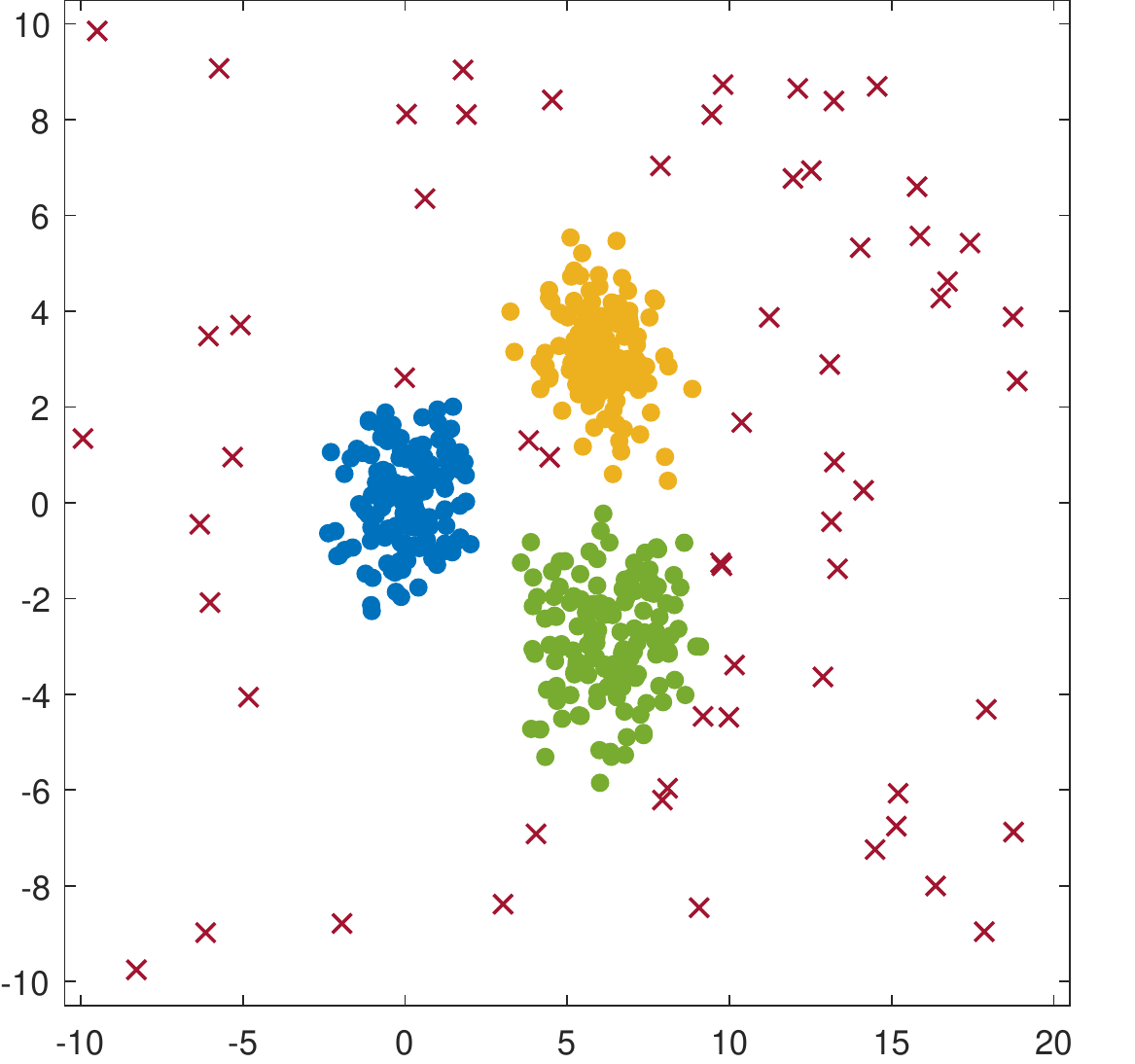}
    \caption*{Robust-SDP}
    \end{minipage}
    \begin{minipage}[t]{0.18\textwidth}
    \includegraphics[width=\linewidth]{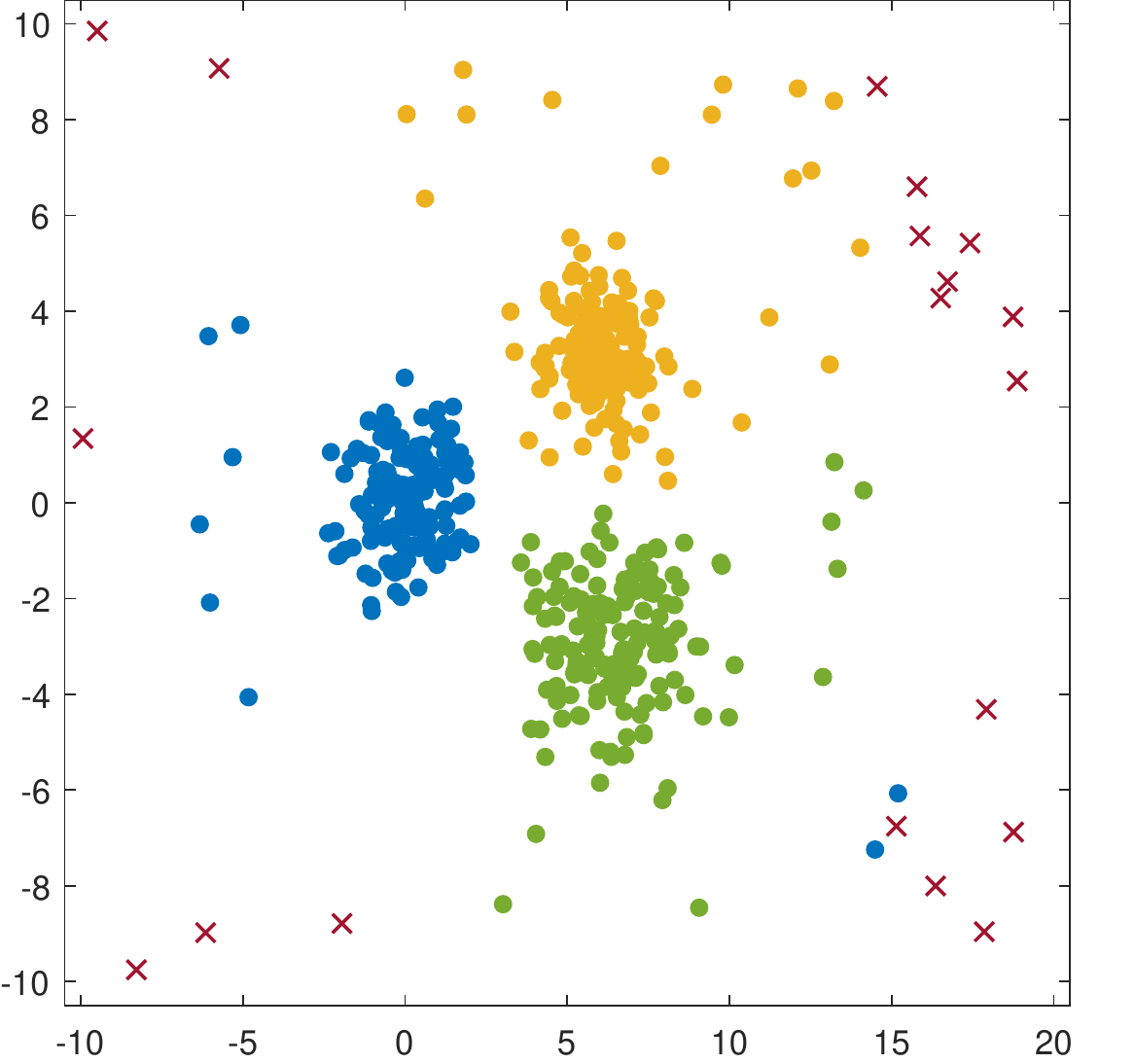}
    \caption*{Robust-Kmeans}
    \end{minipage}
    \begin{minipage}[t]{0.18\textwidth}
    \includegraphics[width=\linewidth]{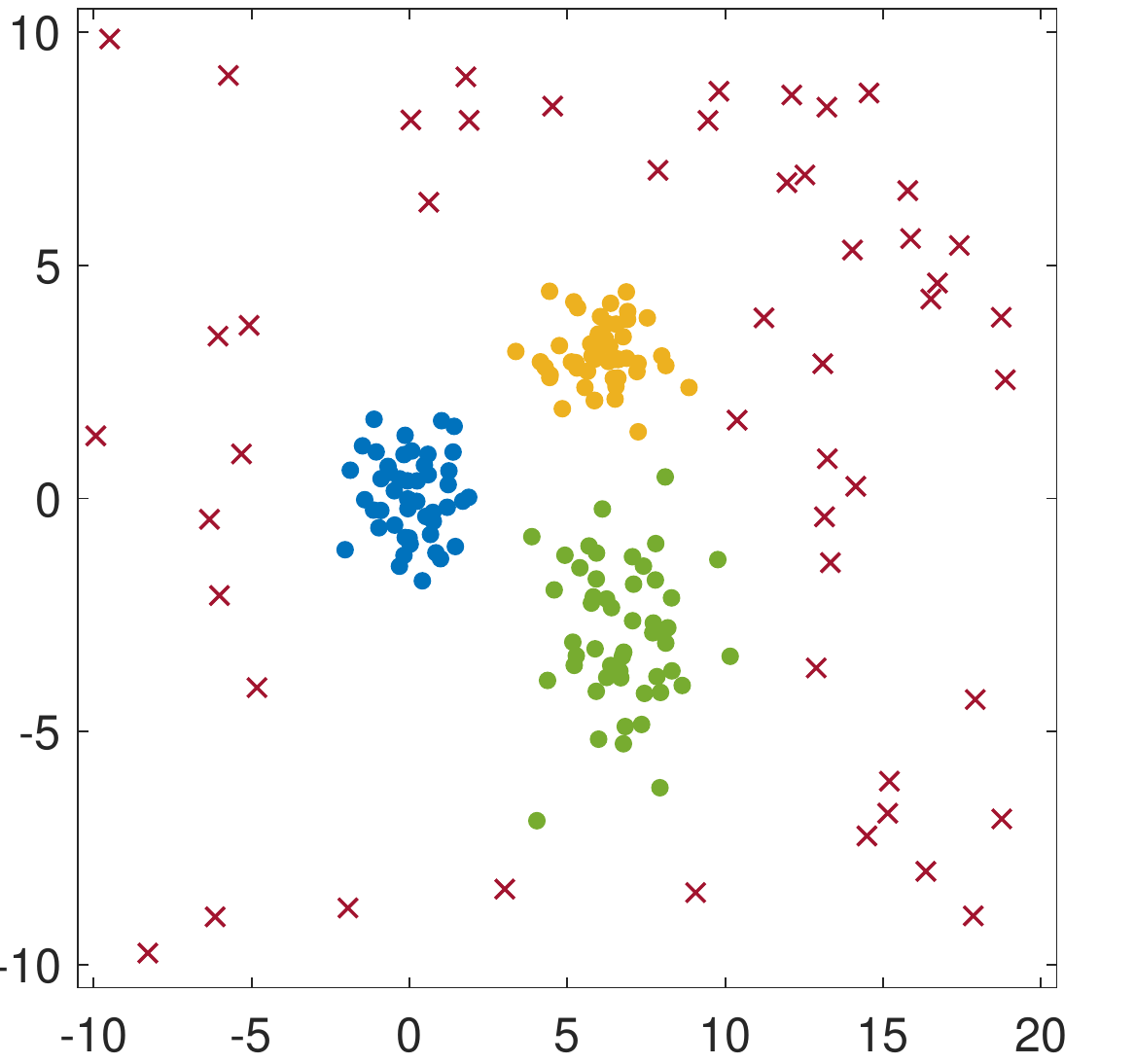}
    \caption*{CC-Kmeans on down-sampled dataset}
    \end{minipage}
    \caption{Balanced Spherical GMMs}
    \label{subfig:BalancedSphericalGMMs}
    \end{subfigure}
    
    \begin{subfigure}[t]{\textwidth}
    \centering
    \begin{minipage}[t]{0.18\textwidth}
    \centering
    \includegraphics[width=\linewidth]{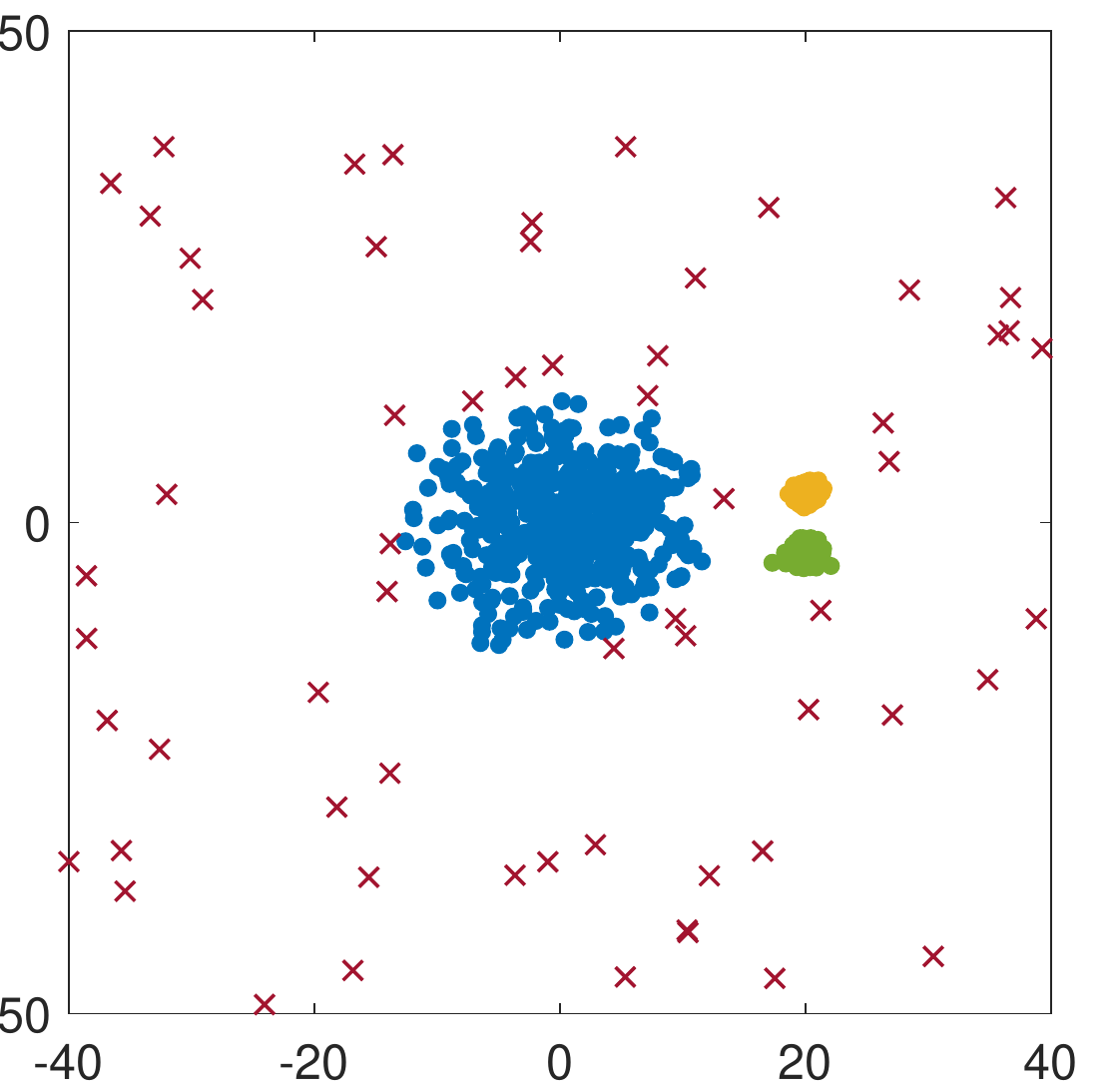}
    \caption*{Robust-SC}
    \end{minipage}
    \begin{minipage}[t]{0.18\textwidth}
    \includegraphics[width=\linewidth]{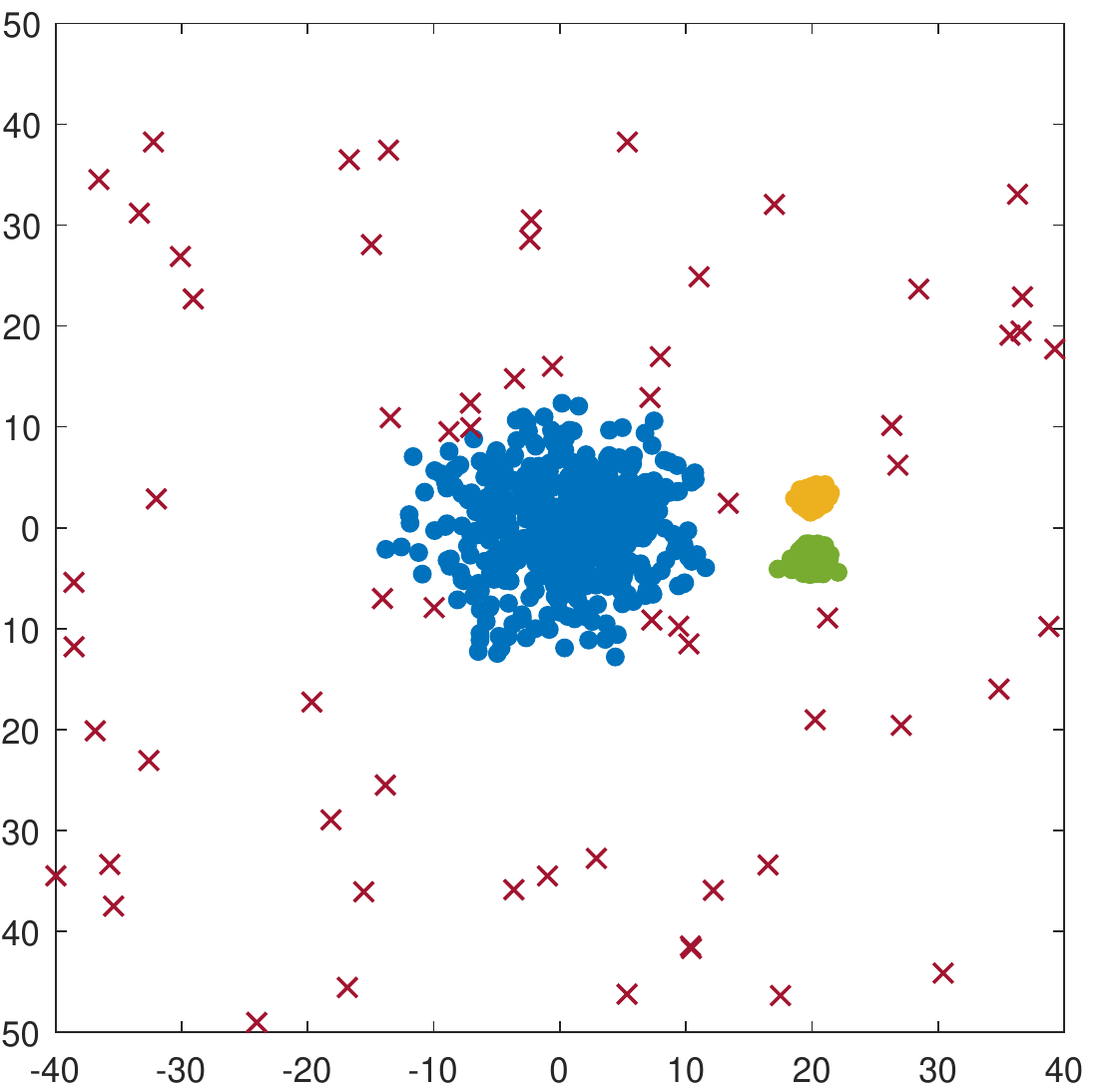}
    \caption*{Robust-SDP}
    \end{minipage}
    \begin{minipage}[t]{0.18\textwidth}
    \includegraphics[width=\linewidth]{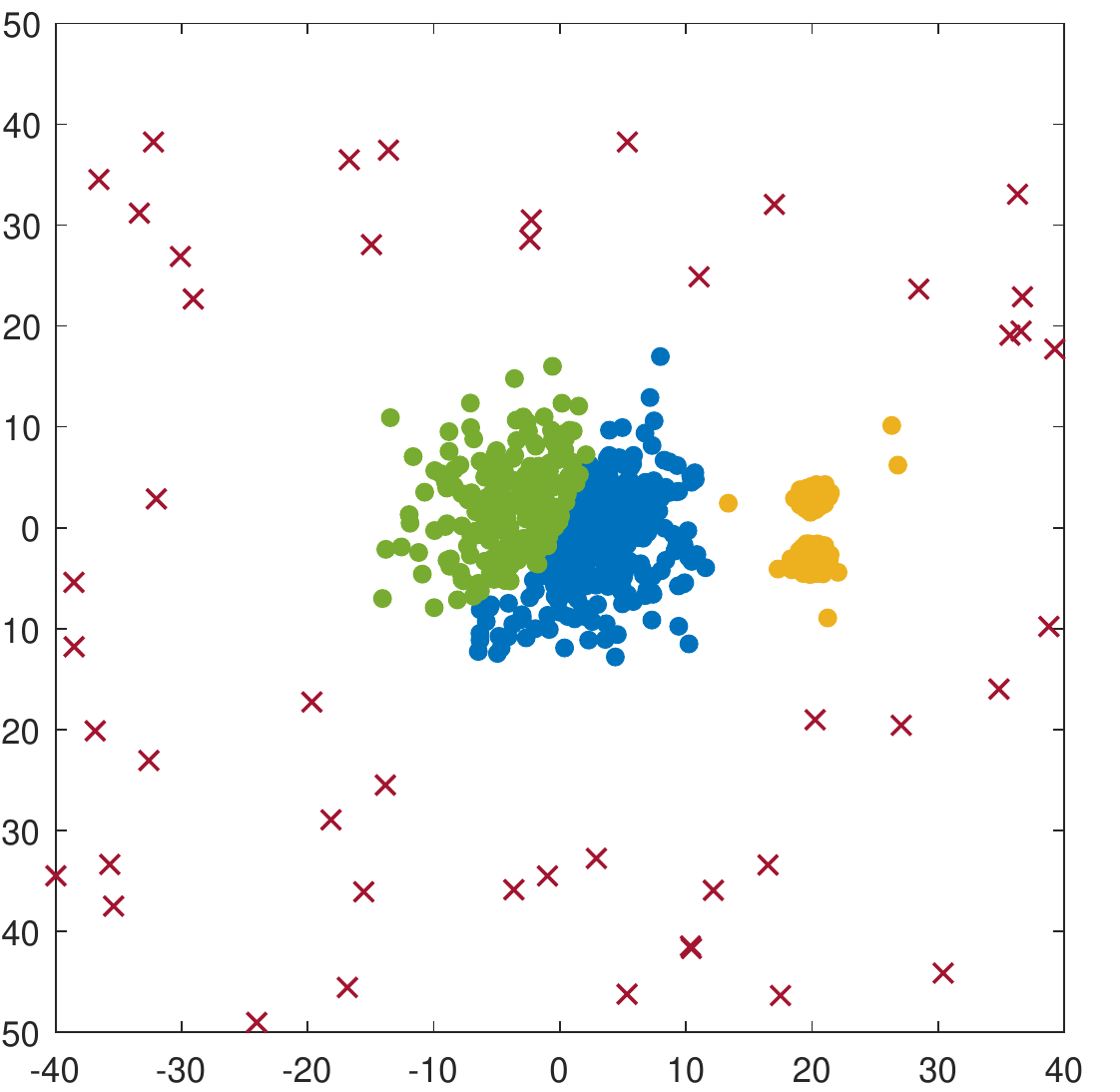}
    \caption*{Robust-Kmeans}
    \end{minipage}
    \begin{minipage}[t]{0.18\textwidth}
    \includegraphics[width=\linewidth]{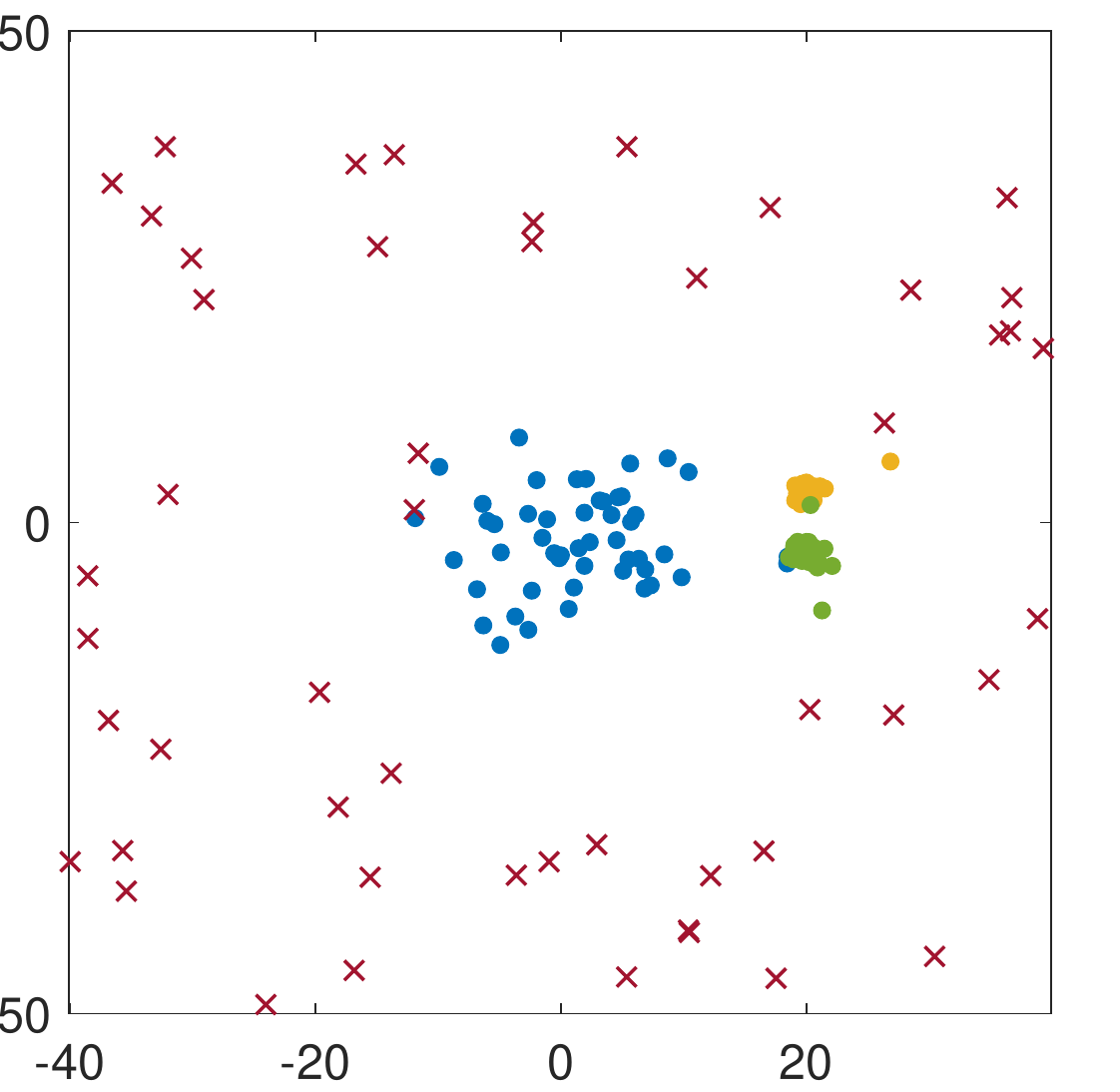}
    \caption*{CC-Kmeans on down-sampled dataset}
    \end{minipage}
    \caption{Unbalanced Spherical GMMs}
    \label{subfig:UnbalancedSphericalGMMs}
    \end{subfigure}
    
    \begin{subfigure}[t]{\textwidth}
    \centering
    \begin{minipage}[t]{0.18\textwidth}
    \centering
    \includegraphics[width=\linewidth,height=3cm]{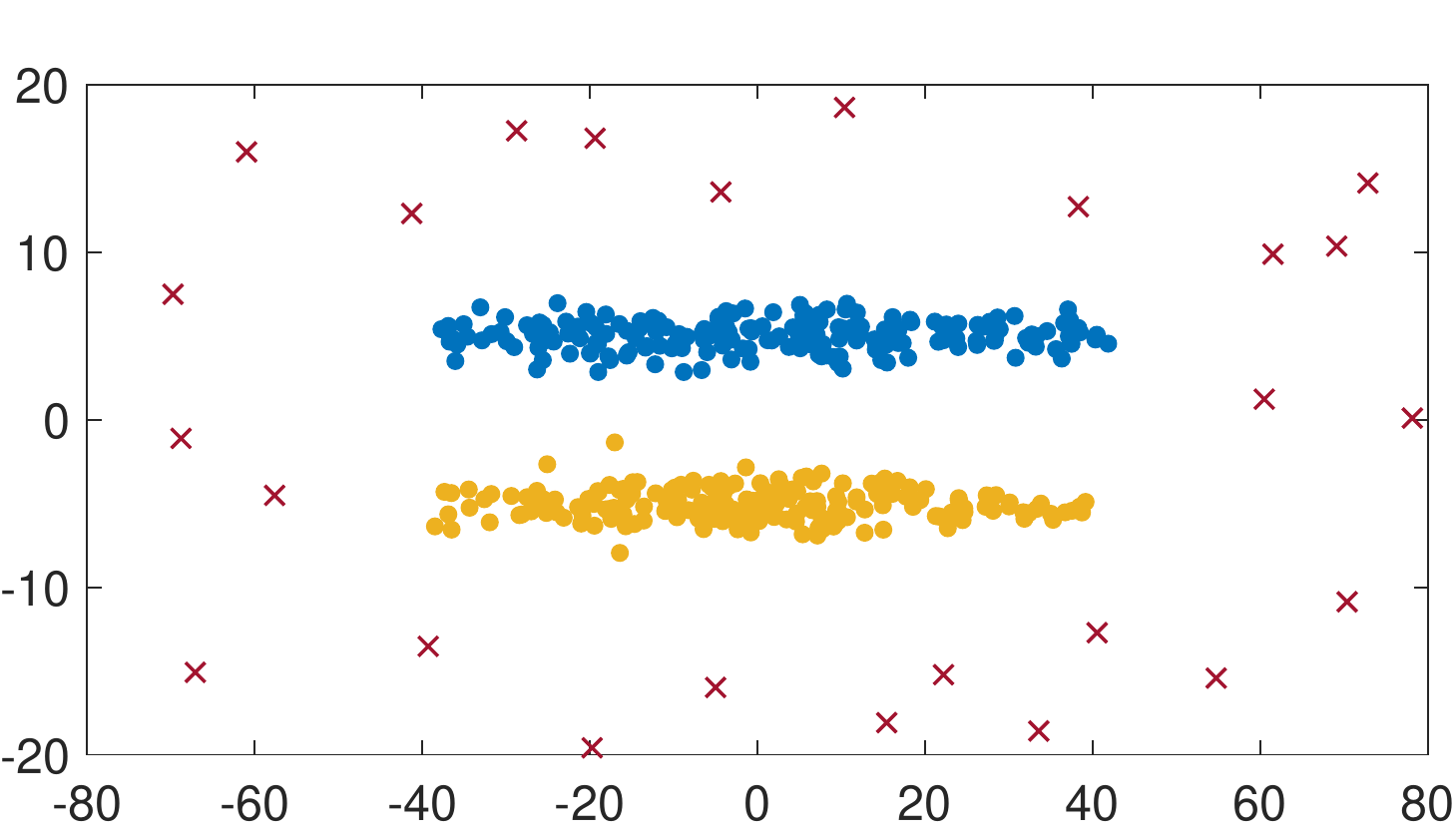}
    \caption*{Robust-SC}
    \end{minipage}
    \begin{minipage}[t]{0.18\textwidth}
    \includegraphics[width=\linewidth,height=3cm]{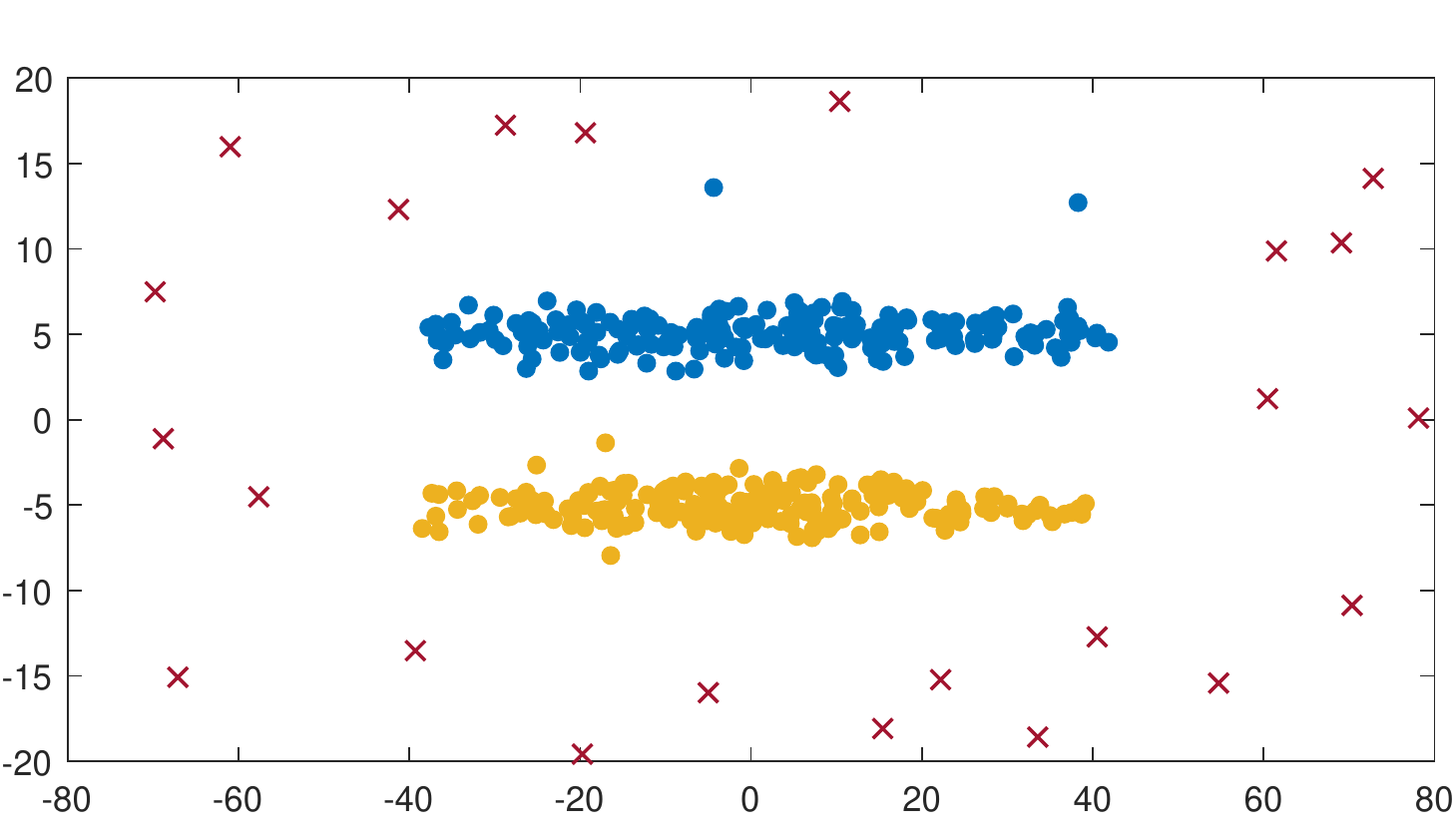}
    \caption*{Robust-SDP}
    \end{minipage}
    \begin{minipage}[t]{0.18\textwidth}
    \includegraphics[width=\linewidth,height=3cm]{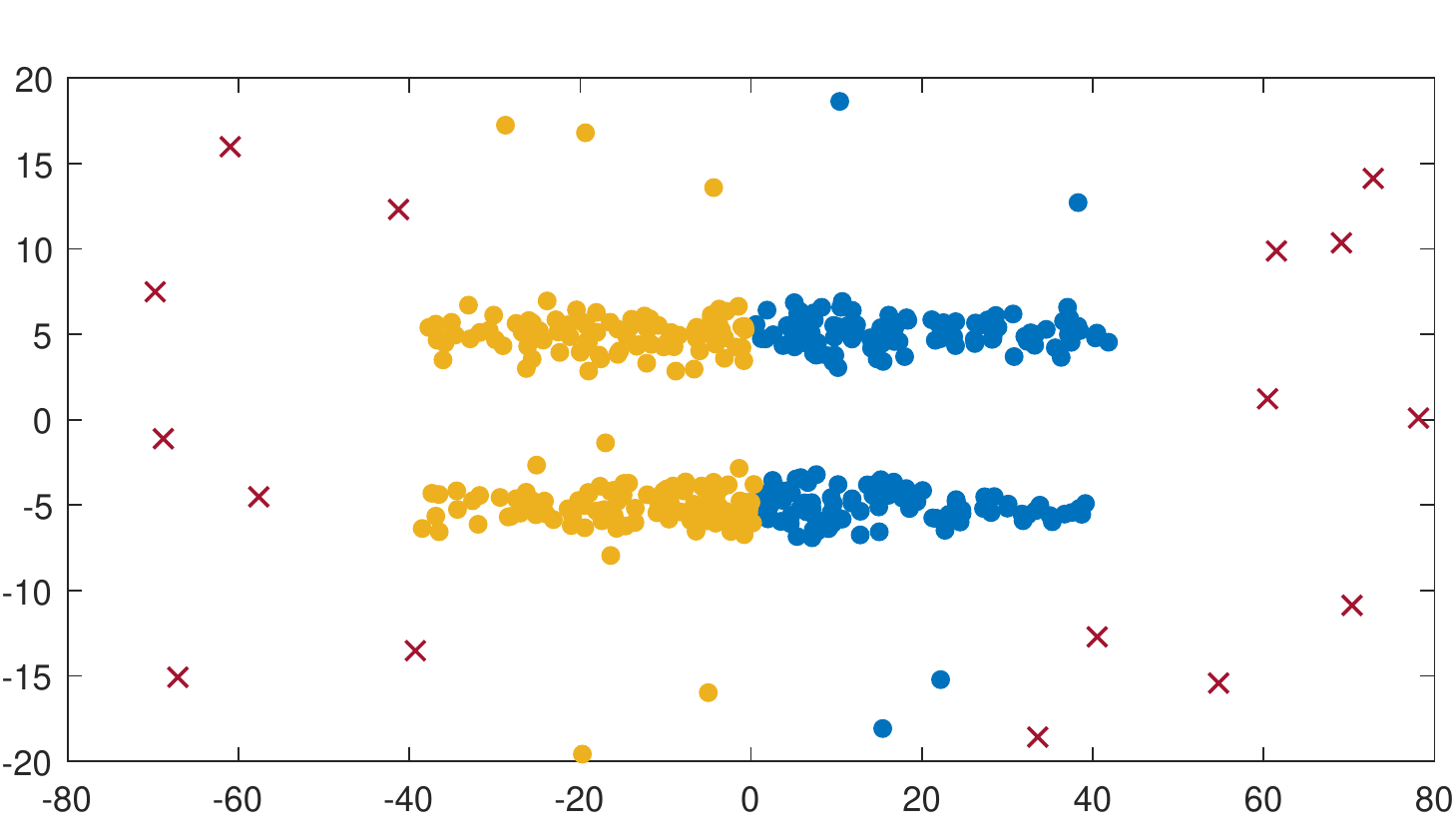}
    \caption*{Robust-Kmeans}
    \end{minipage}
    \begin{minipage}[t]{0.18\textwidth}
    \includegraphics[width=\linewidth,height=3cm]{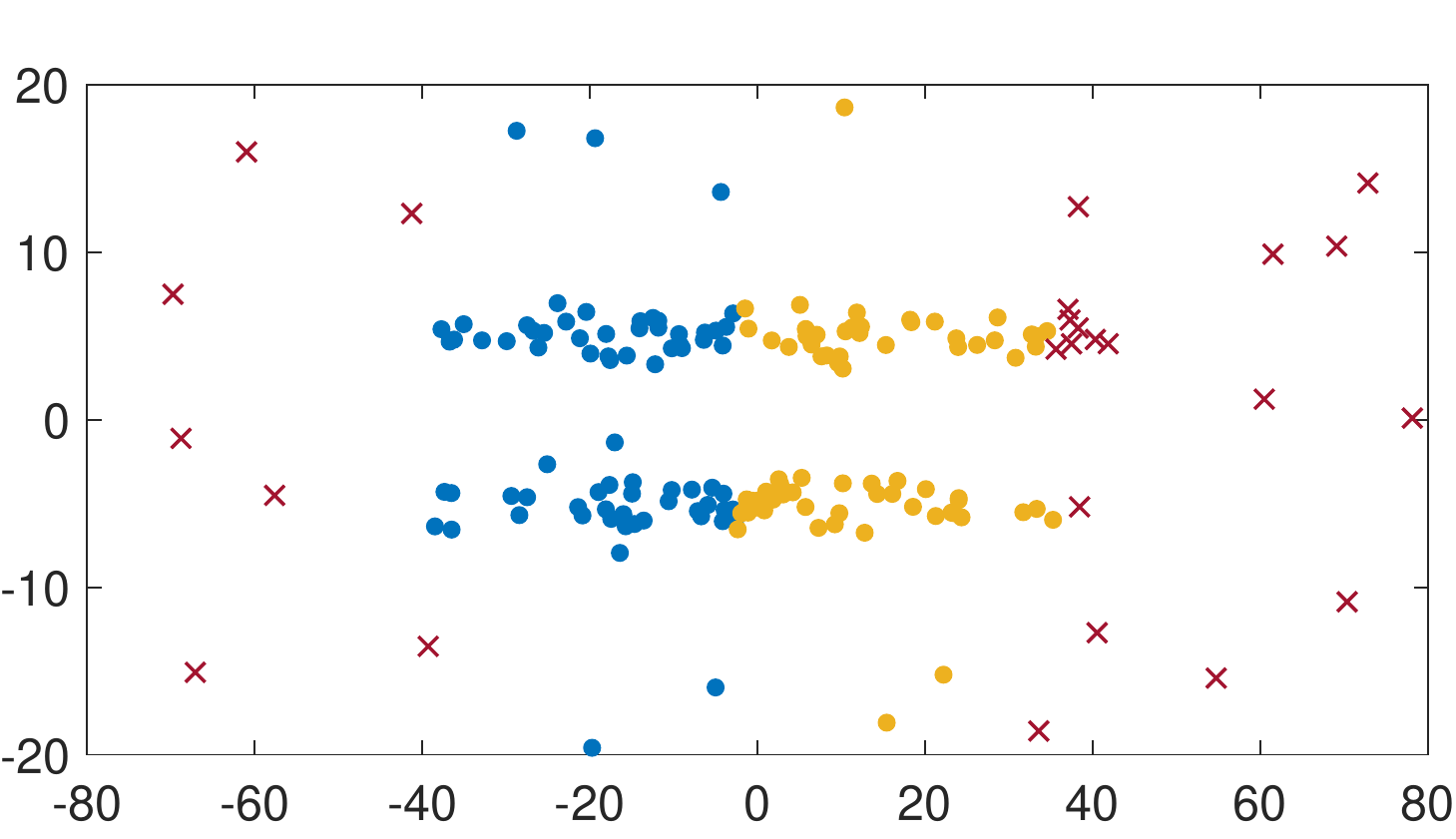}
    \caption*{CC-Kmeans on down-sampled dataset}
    \end{minipage}
    \caption{Balanced Ellipsoidal GMMs}
    \label{subfig:BalancedEllipsoidalGMMs}
    \end{subfigure}
    
    \caption{Clustering results for different algorithms on synthetic datasets. CC-Kmeans could not be implemented on the entire dataset due to memory and computational limitations. Therefore, for comparison, we show the clustering results for a down-sampled dataset with equal number of points from each cluster.}
    \label{fig:Results}
\end{figure}

\subsubsection{Comparison with $k$-means++ and spectral clustering algorithms:} 
\label{sec-LargeScale}
From the solution times reported in~\Cref{table-ImplementationTimes}, it is quite evident that the SDP-based algorithms are intractable for large scale experiments. Therefore, in this section, we consider a much larger experiment setting and compare Robust-SC with more scalable $k$-means++ and spectral clustering algorithms. 
%In this section, we demonstrate the scalability and robustness of the Robust-SC algorithm by analyzing the inlier clustering performance for different model settings by varying the number of clusters, the separation between cluster centers, the number of outliers points, and the number of observations in each cluster. For comparison,  

In the experimental setup for this section, we assume that the $n$~inlier points  are generated in $r$-dimensional space from $r$ equal-sized spherical Gaussians, which are centered  at the vertices of a suitably scaled standard $(r-1)$-dimensional simplex and have identity covariance matrices. Thus, for all clusters $k\in[r]$, $\bmu_k=s \cdot \bm e_k$, for some scale parameter $s$ and $\bSigma_k=\bm I_r$. The $m$~outlier points are generated from another spherical Gaussian centered at the origin, i.e., $\bmu_{\mathc O}=\bm 0$, and having a much larger variance $(\bSigma_{\mathcal{O}}= 100\cdot {\bm I}_r)$.

\begin{figure}
    \centering
    \begin{subfigure}[t]{0.32\textwidth}
        \centering
        \includegraphics[width=\linewidth]{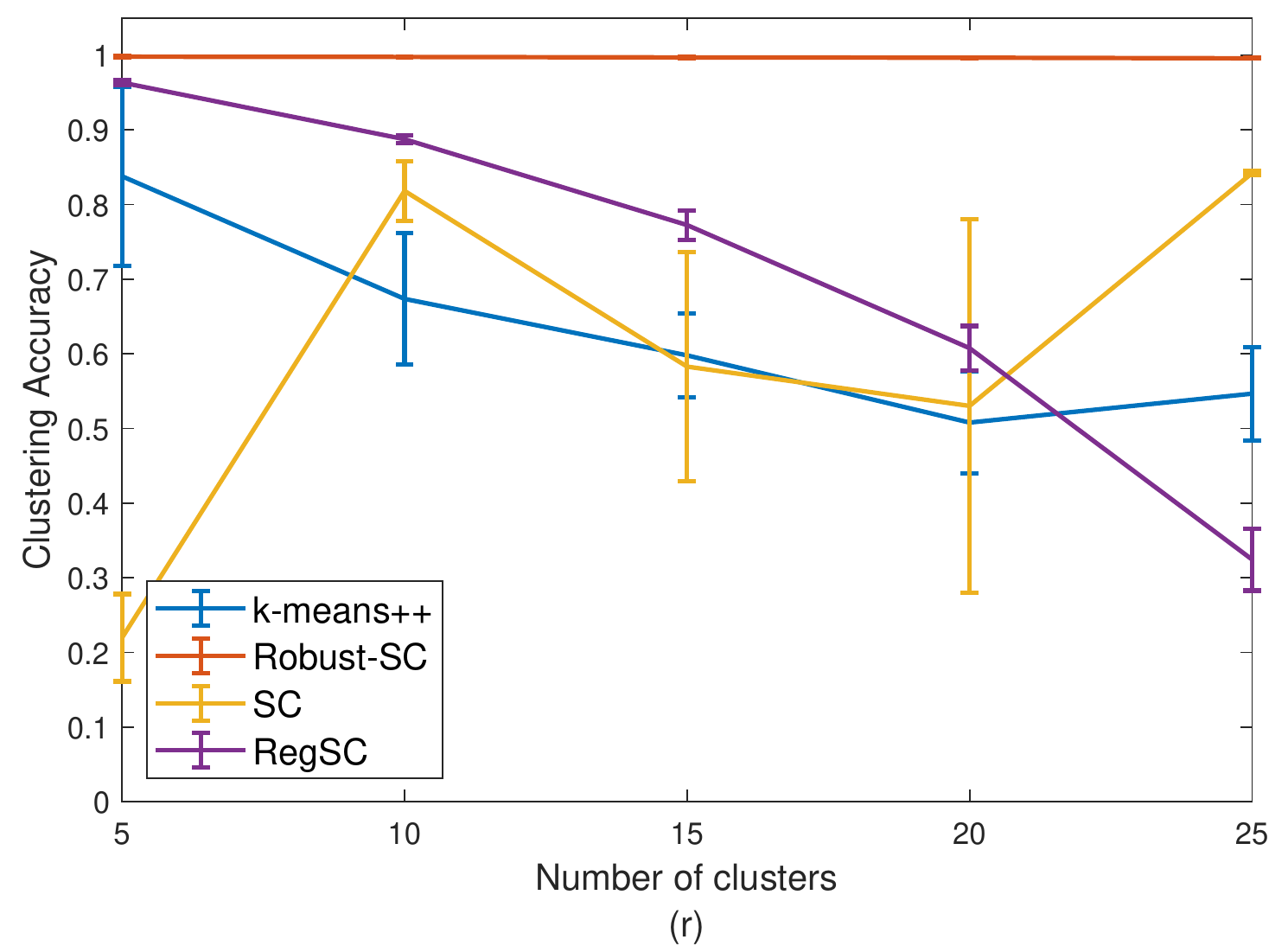}
        \caption{}
    \end{subfigure}
    \begin{subfigure}[t]{0.32\textwidth}
        \centering
        \includegraphics[width=\linewidth]{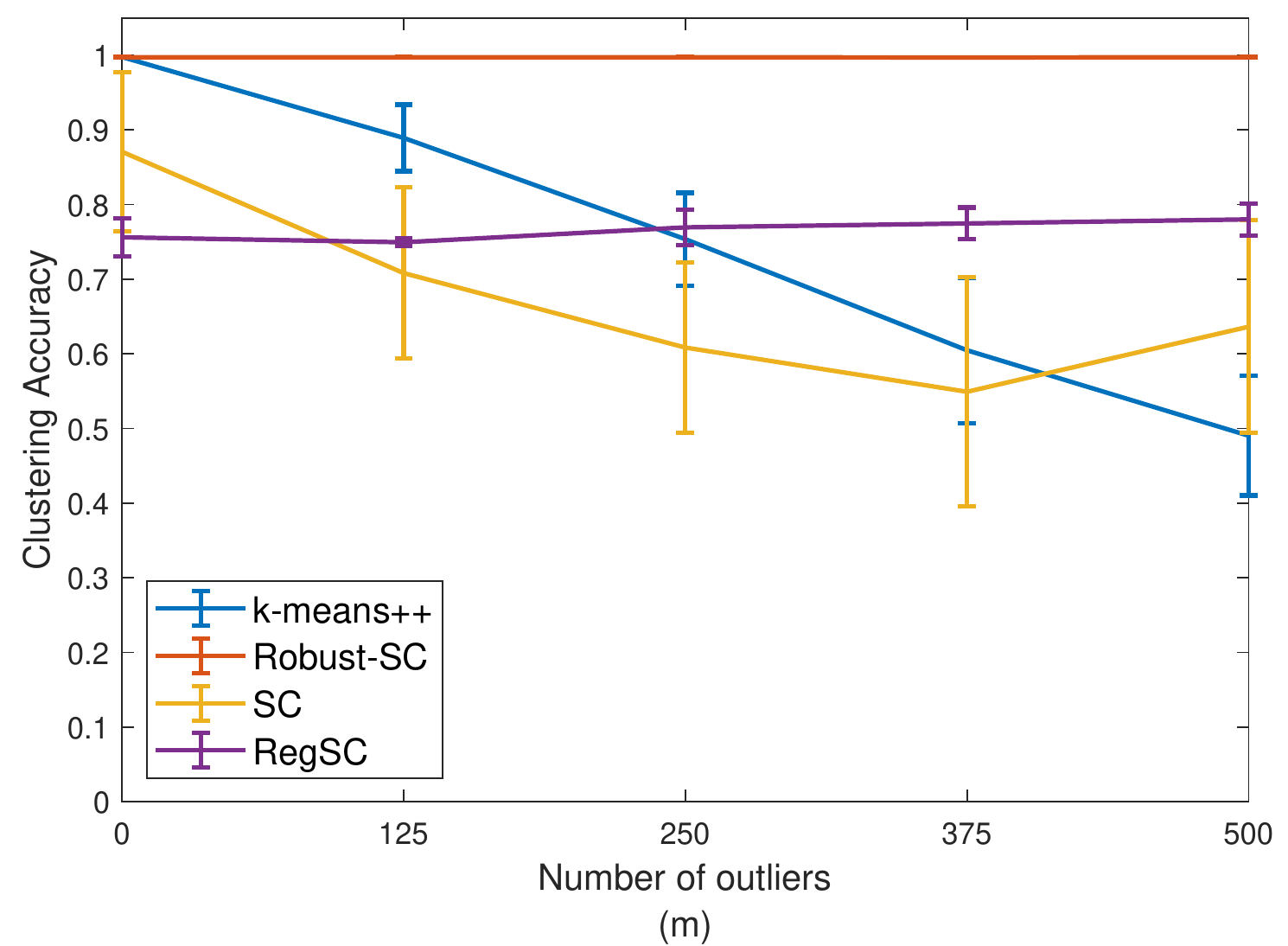} 
        \caption{} 
    \end{subfigure}
        \begin{subfigure}[t]{0.32\textwidth}
        \centering
        \includegraphics[width=\linewidth]{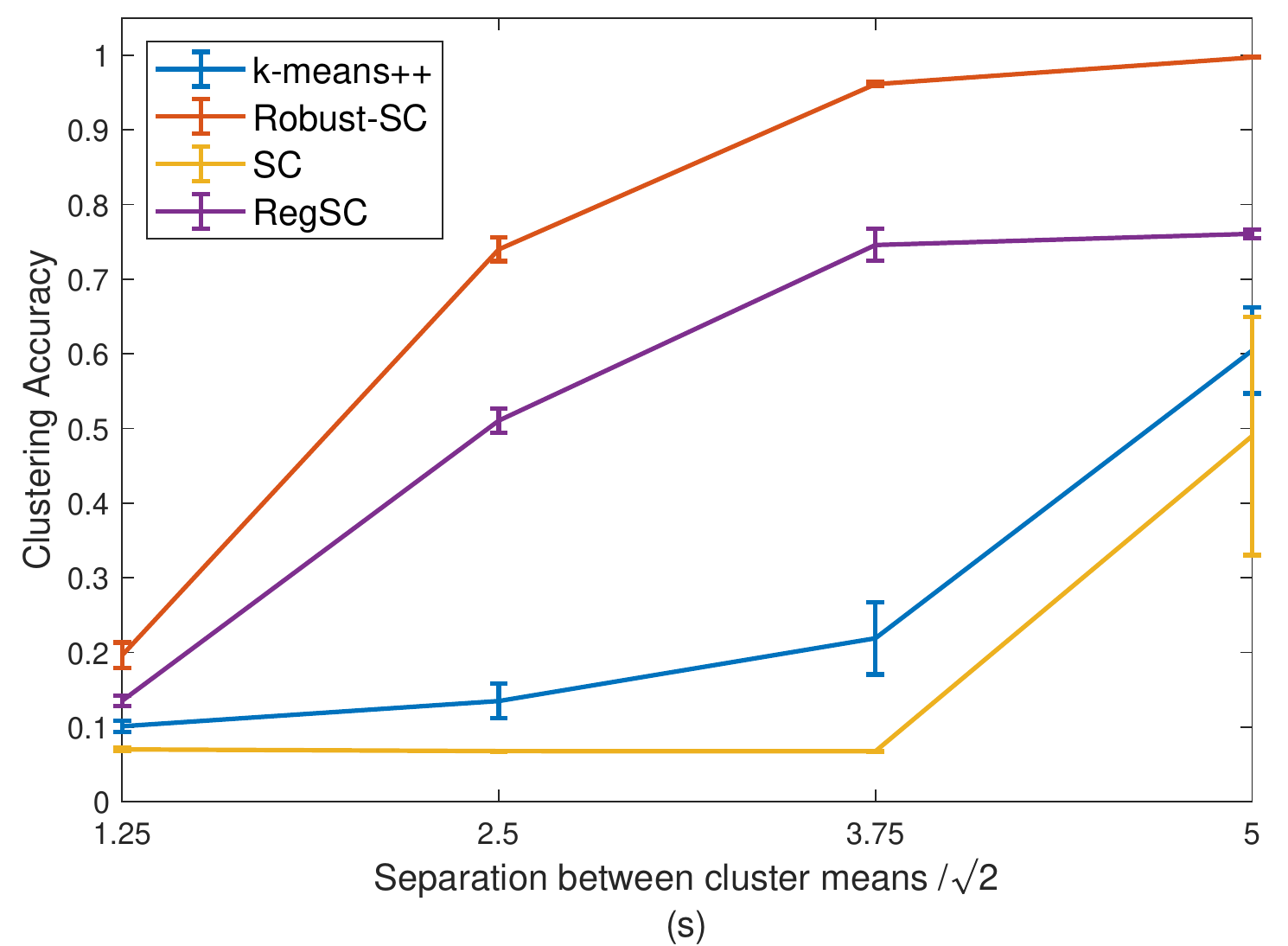} 
        \caption{} 
    \end{subfigure}
\caption{ Figure shows the effects of varying the model parameters on the inlier clustering accuracy for different algorithms. The default parameter values are set to $r=15,s=5,m=400$ and $n/r=400$. In each plot, apart from the parameter that is being varied, the other parameters are set to their default values.}
%$r=15$ for the number of clusters, $\bmu_k=5\cdot\bm e_k$ for all clusters $\bSigma_k=\bm I_r,\bmu_{\mathc O}=\bm 0,\bSigma_{\mathc O}=\bm 100 \cdot\bm I_r,n_k=250 m=250$
\label{fig:Test_large_scale}
\end{figure}

We analyze the robustness of the Robust-SC algorithm under different model settings by varying the number of clusters $(r)$, the number of outliers points $(m)$, and the separation between cluster centers $(\Delta := \sqrt{2}s)$. We compare Robust-SC with $k$-means++ and popular variants of the spectral clustering using clustering accuracy for inlier points as the evaluation metric. Figure \ref{fig:Test_large_scale} shows the results obtained. For this set of experiments, we assume that the default parameter values are set to $r=15$, $s=5$, $m=400$ and $n/r=400$. In each experiment, we assume that except for the parameter that is varied, the other parameters are set to their default values. From the plots, we note that Robust-SC clearly outperforms the other clustering algorithms in terms of performance. We further demonstrate the scalability of the Robust-SC algorithm by repeating the experiment for $r=50$ equal-sized clusters with $n=50,000$ inlier points and $m=1000$ outlier points. For 10~simulation runs of this experiment, we achieve an average inlier clustering accuracy of $0.9926$ and an average solution time of $525.34$s with standard deviation values of $5.44\times 10^{-4}$ and $17.8$s respectively.

\subsection{Real world datasets}
For evaluating the performance of different algorithms on real world datasets, we standardize the dataset by applying a $z$-score transformation to each attribute of the dataset. For high dimensional datasets, we adopt the dimensionality reduction procedure described in \Cref{sec-results}, which involves first computing the covariance matrix $\bSigma$, projecting the data points on to the subspace spanned by the $\R-1$ principal eigenvectors of $\bSigma$, and then applying the $z$-score transformation to each attribute in the reduced space. All of these datasets were obtained from the UCI Machine Learning repository \citep{Dua:2019}. We provide below a brief description of these datasets and summarize their main characteristics in \Cref{RealDatasets}.
\begin{itemize}
    \item \textbf{MNIST dataset:} Handwritten digits dataset comprising of 1000 samples of  $8\times 8$ grayscale images (represented as a $64$-dimensional vector) of digits from 0 - 9. 
    
    \item \textbf{Iris dataset:}  Dataset consists of a total of 150 samples from 3 clusters, each representing a particular type of Iris plant. The four attributes associated with each data instance represent the sepal and petal lengths and widths of each flower in centimeters.   
    \item \textbf{USPS dataset:} A subset of the original USPS dataset consisting of 500 random samples, each representing a $16 \times 16$ greyscale image of one of the following four digits - 0, 1, 3, and 7. 
    \item \textbf{Breast cancer dataset:} Dataset consists of 683 samples of benign and malignant cancer cases. Every data instance is described by 9 attributes, each having ten integer-valued discrete levels.
\end{itemize}

\begin{table}[h]
    \centering
    {\begin{tabular}{l c  c c c} \hline
     \textbf{Dataset} & $N$ - \# of datapoints & $\D$ - \# of dimensions & $\R$ - \# of clusters\\ 
        \hline
         MNIST  
         & 1000 & 64 & 10 \\
         Iris   
         & 150 & 4 & 3\\ 
         USPS
         & 500 & 256 & 4 \\
         Breast Cancer 
         & 683 & 9 & 2 \\
         \hline
    \end{tabular}}
    \caption
    {Real-world datasets with their main characteristics.}\label{RealDatasets}
\end{table}

% \begin{table}
% \TABLE
% {Text of the Table Caption.\label{tab1}}
% {\begin{tabular}{<table format>}
% entries
% \end{tabular}}
% {Text of the notes.}
% \end{table}
\begin{table}[b]
\centering
\begin{tabular}{ l c  c c c c} \hline
 \textbf{Algorithm}&\textbf{MNIST} & \textbf{Iris} &\textbf{USPS}&\textbf{Breast Cancer} \\ 
    \hline
    Robust-SDP
    & 0.8450 & \textbf{0.8933} & \textbf{0.9720} & 0.9649  \\
    Robust-SC
    & \textbf{0.8630} & 0.8800 & 0.9620 & \textbf{0.9722}\\
    Robust-Kmeans 
    & 0.8040 & 0.8267 & 0.8320 & 0.9575 \\
    Robust-Kmeans-NoDR
    & 0.6680 & 0.8267 & 0.6420 & 0.9575 \\ 
    CC-Kmeans 
    & - & 0.8400 & - & - \\ 
    SC
    & 0.8580 & 0.6600 & 0.3280 & 0.6471\\
    RegSC
    & 0.7320 & 0.5200 & 0.6000 & 0.8873 \\
    $k$-means++ & 0.7850 & 0.8133 & 0.6080 & 0.9575 \\
    \hline
    \end{tabular}
    \caption
{Performance of different clustering algorithms on real-world datasets.\label{Table-RealWorld}. This table reports the performance of different clustering algorithms on real-world datasets in terms of their overall clustering accuracy. Entry with `-' indicates that the algorithm failed to terminate within the specified time limit of 2 hours.}
\end{table}

% \begin{table}[htb!]
% \centering
% \begin{tabular}{|l|c | c| c|c|c|} \hline
%  &\textbf{MNIST} & \textbf{Iris} &\textbf{USPS}&\textbf{Breast Cancer} \\ 
%     \hline
%     Robust-SDP
%     & \textbf{0.8660}/0.8450 & \textbf{0.8933} & \textbf{0.9880/0.9720} & 0.9649  \\
%     \hline
%     Robust-SC
%     & 0.8400/\textbf{0.8630} & 0.8800 & 0.9800/0.9620 & \textbf{0.9722}\\
%     \hline
%     Robust-Kmeans 
%     & 0.8150/0.8040 & 0.8267 & 0.9620/0.8320 & 0.9575 \\
%     \hline
%     Robust-Kmeans (w/o DR) 
%     & /0.6680 & N/A & 0.7250/0.6420 & N/A \\ 
%     \hline
%     CC-Kmeans 
%     & - & 0.8400 & - & - \\ 
%     \hline
%     SC
%     & 0.5440/0.8580 & 0.5467/0.6600 & 0.8120/0.3280 & 0.7452/0.6471\\
%     \hline
%     RegSC
%     & 0.6820/0.7320 & 0.5200 & 0.6360/0.6000 & 0.8873 \\
%     \hline
%     kmeans++ & -/0.7850 & 0.8133 & -/0.6080 & 0.9575 \\
%     \hline
%     \end{tabular}
%     \caption{Performance of different clustering algorithms on real-world datasets in terms of clustering accuracy for inlier data points.}
%     \label{}
% \end{table}
  
\begin{figure}[h]
    \centering
    \begin{subfigure}[t]{0.45\textwidth}
        \centering
        \includegraphics[width=\linewidth]{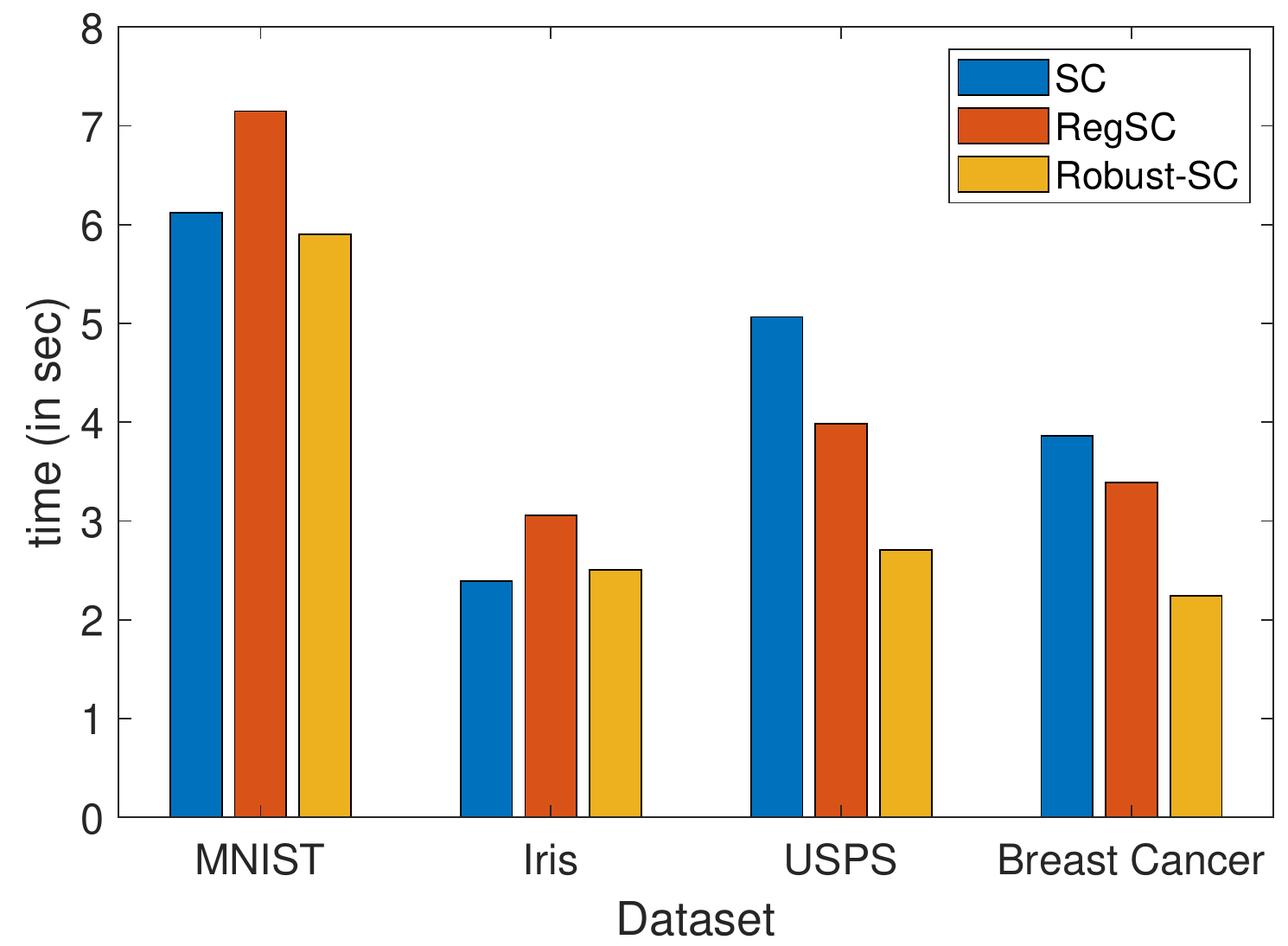}
        \caption{Spectral methods}
    \end{subfigure}
    \begin{subfigure}[t]{0.45\textwidth}
        \centering
        \includegraphics[width=1.03\linewidth]{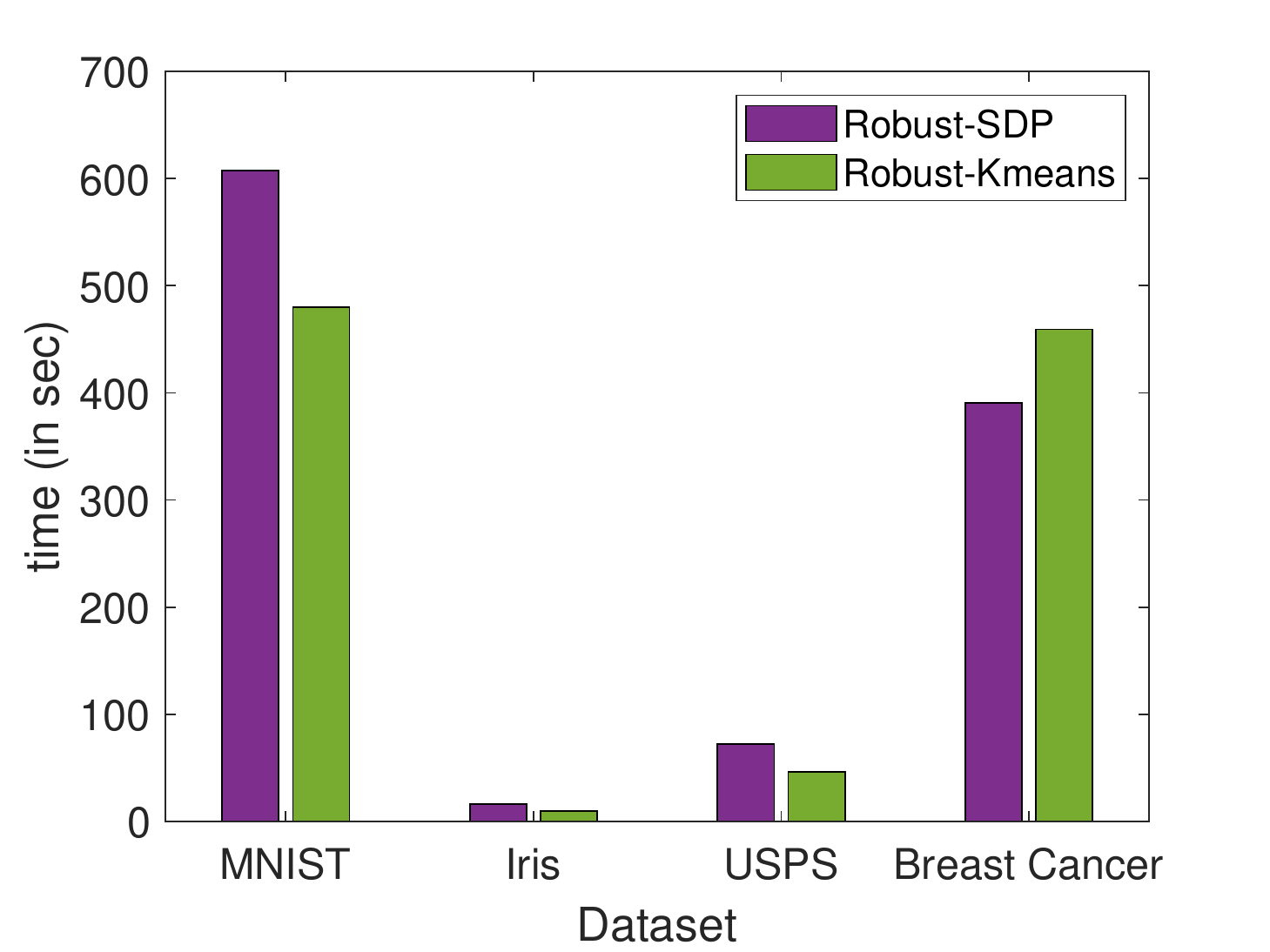} 
        \caption{SDP-based methods} 
    \end{subfigure}
\caption{Solution times (in seconds) for  different algorithms on real-world datasets.}
\label{fig:Times}
\end{figure}
For these real-world datasets, in addition to Robust-Kmeans and CC-Kmeans, we also compare the performances of Robust-SC and Robust-SDP with three other algorithms, namely $k$-means++, vanilla spectral clustering (SC), and regularized spectral clustering (RegSC). 

As we previously discussed, for high-dimensional datasets, some form of a dimensionality reduction procedure is usually needed as an important pre-processing step. In the real-world datasets that we consider in our study, two datasets, namely MNIST and USPS have high-dimensional features. Although none of the other methods that we compare our algorithm against explicitly recommends or analyzes the dimensionality reduction step for high-dimensional setting, for fairness, we apply our proposed dimensionality reduction procedure in \Cref{sec-results} to all the algorithms. For reference, however, we consider a variant of the Robust-Kmeans algorithm, Robust-Kmeans-NoDR, that does not use our proposed dimensionality reduction procedure, but is applied to the actual data in the original high-dimensional space. 

\Cref{Table-RealWorld} summarizes the clustering performance of different algorithms on the real-world datasets in terms of their overall accuracy for each dataset. Based on the values in the table, we infer that both Robust-SC and Robust-SDP consistently perform well across all datasets, and considerably better compared to the other algorithms considered in the study. Additionally, as we previously observed from our simulation studies, the Robust-SC algorithm recovers solutions that are almost as good as the Robust-SDP solutions, and for some datasets (MNIST and Breast Cancer), marginally better in terms of the clustering accuracy, even though Robust-SC is based on a simple rounding scheme, while the Robust-SDP algorithm requires solving the \ref{RelaxedSDPForm} formulation. For this reason, there is a significant disparity in the solution times noted for the two algorithms (refer to \Cref{fig:Times}), with the Robust-SC algorithm being approximately 100 times faster even for moderately-sized problem instances. Additionally, comparing the performance of Robust-Kmeans and Robust-Kmeans-NoDR on the high dimensional datasets - MNIST and USPS, we can easily see that the dimensionality reduction step significantly improves the performance of the algorithm on high-dimensional real-world datasets. 

\subsection{Estimating unknown number of clusters from Robust-SDP formulation}
\label{sec-clusters_num}
In several real-world problems, the number of clusters $\R$ is unknown. In this section, we discuss how we can obtain an estimate $\htt r$ for the number of clusters from the \ref{RelaxedSDPForm} solution~$\htt \X^{\text{SDP}}$. In general, the SDP solution provides a more denoised representation of the kernel matrix as compared to the simple rounding scheme based on the Robust-LP solution. We propose a procedure based on the eigengap heuristic~\citep{von2007tutorial} of the normalized graph Laplacian matrix~$\bm L_{\tilde {\mathc I}}:=~\bm I~-~\bm D_{\tilde {\mathc I}}^{-1/2}\htt \X^{\text{SDP}}_{\tilde {\mathc I}}\bm D_{\tilde {\mathc I}}^{-1/2}$ where $\bm D_{\tilde {\mathc I}}=\Diag(\htt \X^{\text{SDP}}_{\tilde {\mathc I}} \bs 1_{\lvert  \tilde {\mathc I}\rvert} )$ and $\tilde{\mathc I}=\{i:\deg(i)\geq \tilde{\tau} \}$. Here, the threshold $\tilde{\tau}$ corresponds to some quantile $\tilde{\beta}$ of $\{ \deg(i), i=1,\ldots,\N\}$. The key idea behind this heuristic is to select a value of $\htt \R$ such that the $\htt \R$ smallest eigenvalues $\lambda_1 \leq \ldots \leq \lambda_{\htt \R}$ of $\bm L_{\tilde {\mathc I}}$ are extremely small (close to $0$) while $\lambda_{\htt \R +1} $ is relatively large. The main argument for using the eigengap heuristic comes from matrix perturbation theory, which leverages the fact that if a graph consists of $\R$ disjoint clusters, then its graph Laplacian matrix has an eigenvalue of $0$ with multiplicity $\R$ and its ($\R$+1)-st smallest eigenvalue $\lambda_{\R+1}$ is comparatively larger. 

\begin{figure}[htb!]
    \centering
    \begin{subfigure}[t]{0.45\textwidth}
        \centering
        \includegraphics[width=\linewidth]{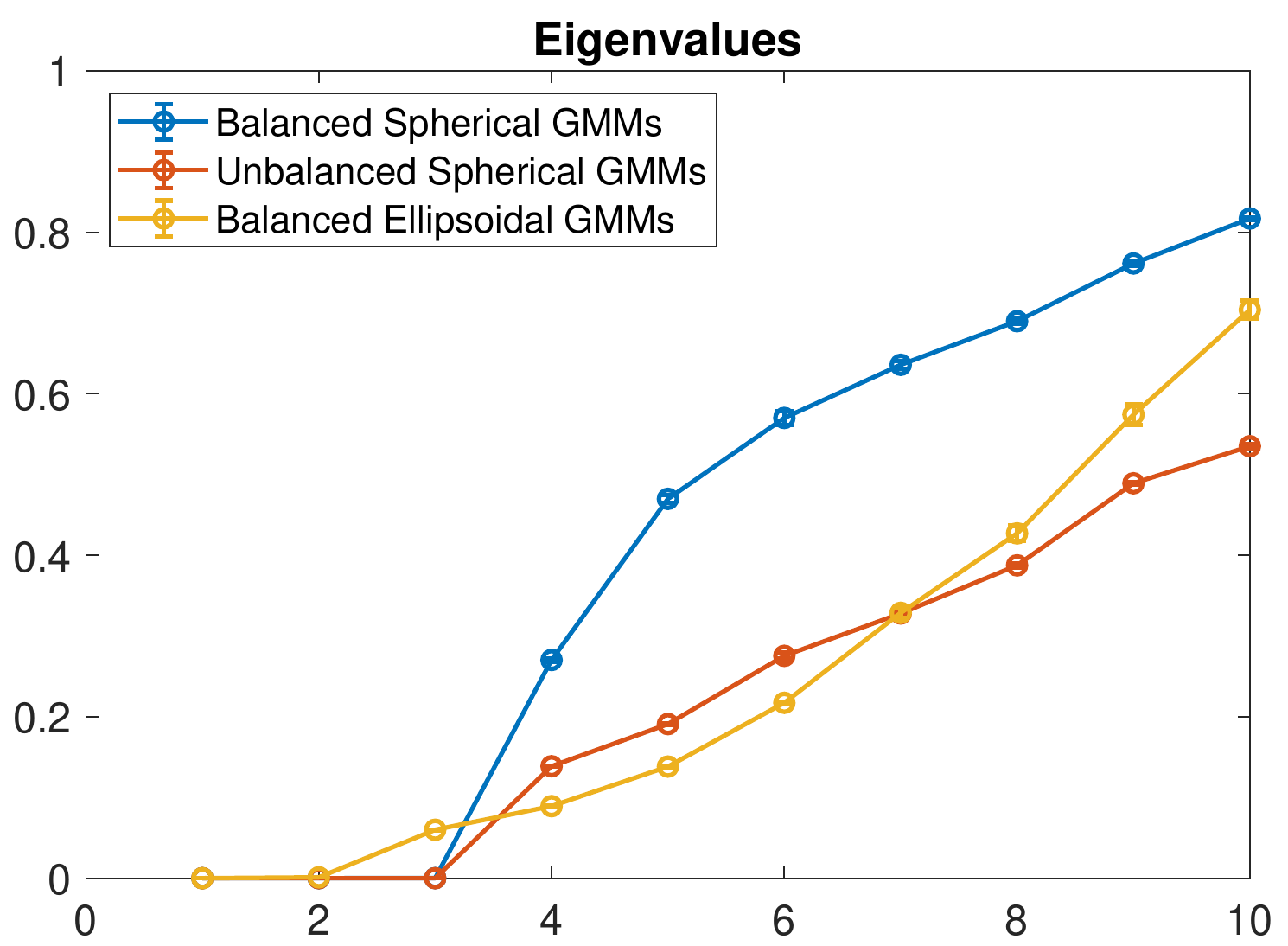}
        \caption{Synthetic Datasets}
    \end{subfigure}
    \begin{subfigure}[t]{0.45\textwidth}
        \centering
        \includegraphics[width=\linewidth]{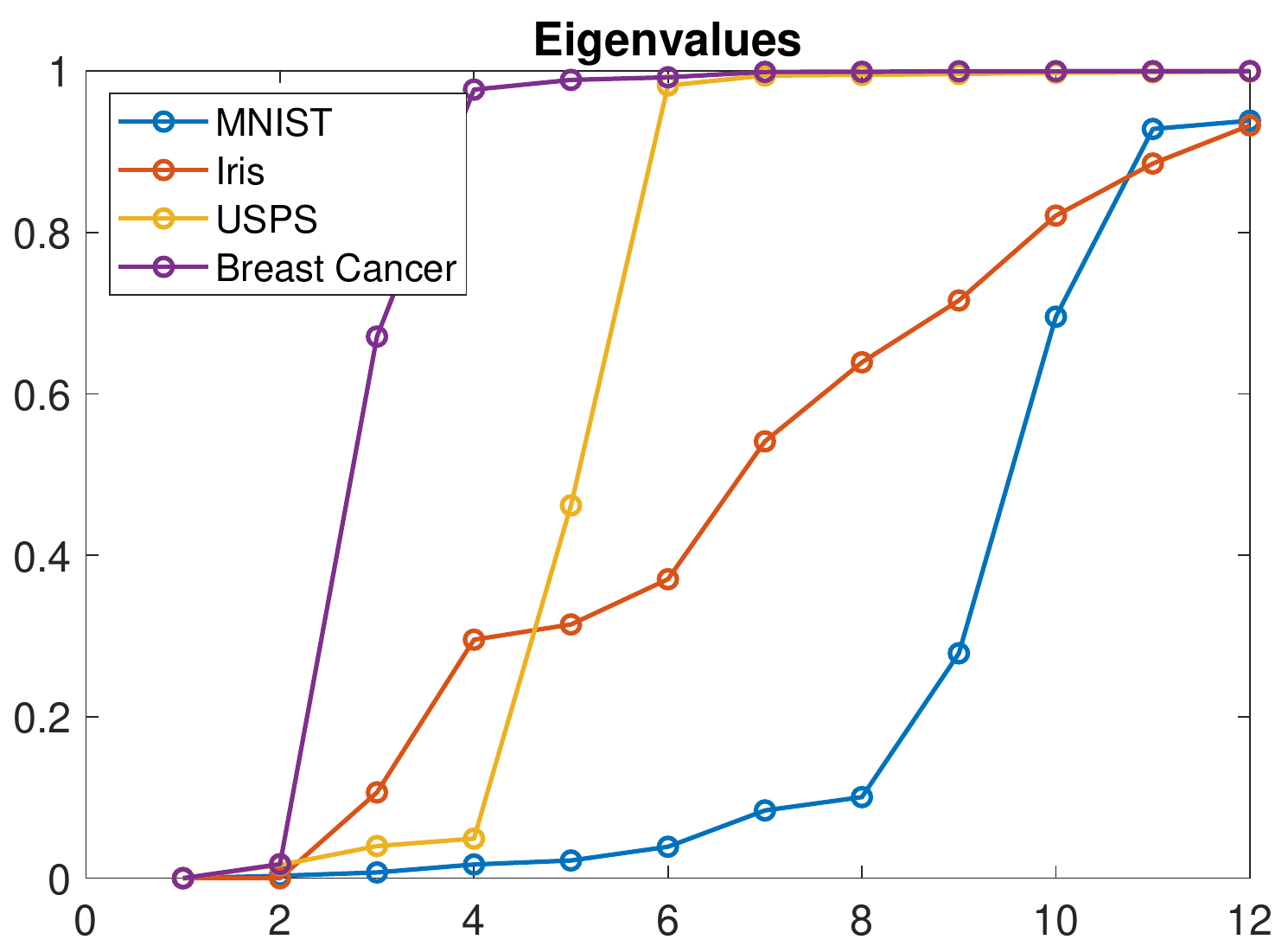} 
        \caption{Real-world datasets} 
    \end{subfigure}
\caption{Eigenvalues of the normalized graph Laplacian matrix $\bm L_{\tilde {\mathc I}}:= \bm I - \bm D_{\tilde {\mathc I}}^{-1/2}\htt \X^{\text{SDP}}_{\tilde {\mathc I}}\bm D_{\tilde {\mathc I}}^{-1/2}$  for synthetic and real-world datasets with $\tilde{\beta}=0.8$.}
\label{fig:Eigenvalues}
\end{figure}

\Cref{fig:Eigenvalues} denotes the eigenvalues of the normalized graph Laplacian matrix for both synthetic and real-world datasets.
From the plot, it is easy to see that the eigengap heuristic correctly predicts the number of clusters for each of the three synthetic datasets. It is important to note the eigengap heuristic for finding the number of clusters usually works better when the signal-to-noise ratio is large, i.e., either when the clusters are well-separated, or when the noise around the clusters is small. However, for many real world datasets, a high signal-to-noise ratio is not always observed. For example, in the MNIST handwritten digits dataset, there are considerable overlaps between clusters that represent digits 1 and 7 as well as digits 4 and 9. Thus, when the eigengap heuristic is applied on the MNIST dataset, it returns $\htt r=8$ as an estimate for the number of clusters. Similarly, for the iris dataset, two of the clusters (Verginica and Versicolor) are known to intersect each other \citep{ana2003robust}. Thus, when the number of clusters is not specified, we get $\htt \R=2$ instead of the actual three clusters in the dataset. 
%In general, in an unsupervised learning setup where the number of clusters are unknown, this does not pose a problem since it is not even possible to separate the clusters if there is a significant overlap between them.

While it is possible to obtain an estimate of $\R$ by applying the above procedure on the rounded matrix $\hat{\X}$ obtained from the \ref{RelaxedForm} formulation, we see that $\hat{\R}$ obtained from $\htt \X^{\text{SDP}}$ is more accurate.

\subsection*{Acknowledgments}
Grani A.~Hanasusanto is supported by the National Science Foundation grant no.~1752125. Purnamrita Sarkar is supported in part by the National Science Foundation grant no.~1713082.
\end{onehalfspace}

\clearpage
\bibliographystyle{plainnat}
\bibliography{references}
\appendix
\begin{onehalfspace}
\newpage
\numberwithin{equation}{section}
\begin{center}
    \section*{A Robust Spectral Clustering Algorithm for Sub-Gaussian \\ Mixture Models with Outliers\\(Supplementary Material) }
\end{center}

\section{Background on Sub-Gaussian Random Variables and Vectors}
\begin{definition}[Sub-gaussian Random Variable] 
A random variable $X$ with mean $\mu$ is defined to be sub-gaussian if there exists a constant $\psi>0$ such that the following condition holds:
\begin{equation*}
    \mb E[e^{\bm \lambda(X-\mu)}]\leq e^{\psi^2\lambda^2/2}, \quad \forall \lambda \in \mb R.
\end{equation*}
Here, $\psi$ is also called the sub-gaussian parameter.
\end{definition}
\begin{definition}[Sub-gaussian Random Vector] 
A random vector $\bm X \in \mb R^\D$ with mean $\bmu \in \mb R^\D$ is defined to be sub-gaussian if there exists a constant $\psi>0$ such that the following condition holds:
\begin{equation*}
    \mb E[e^{\bs \nu^\top (\bm X-\bmu)}]\leq e^{\psi^2\lVert v \rVert^2/2}, \quad \forall \bs \nu \in \mb R^\D.
\end{equation*}
Here, $\psi$ is also called the sub-gaussian parameter. 
\end{definition}
\noindent  For additional background on sub-gaussian random variables and sub-gaussian random vectors, we refer the reader to \cite{hsu2012tail,wainwright2019high,vershynin2010introduction}.

\section{Proof of Proposition 1}
\begin{proof}
$(\implies)$ For every feasible $\Z$ in $\eqref{eq:outlier-ExactForm-Z}$, we can construct a solution $\X=\Z \Z^\top$. By definition, $\X\succeq 0$ and satisfies the constraint that $\rank(\X)\leq r$ since $\rank(\bm  Z)\leq r$. In addition, since $\Z$ is a binary 0-1 assignment matrix whose each row $\z_i^\top \in \{0,1\}^r$ sums to either $0$ or $1$, we get that $X_{ij}=\z_i^\top \z_j \in \{0,1\}$.  Thus, $\X=\Z \Z^\top$ is feasible for \eqref{eq:outlier-ExactForm-X} and has the same objective function as \eqref{eq:outlier-ExactForm-Z}.

$(\impliedby)$ To prove the converse, we first assume that $\bm  X$ is a feasible solution for \eqref{eq:outlier-ExactForm-X}. Thus, it satisfies the constraints $\bm  X \succeq 0$ and $\text{rank}(\X)=l\leq r$.  These two constraints together imply that $\bm  X$ can be expressed as $\X=\bm G \bm G^\top$ where $\bm G\in\mb R^{\N\times \R}$ is a matrix with $\rank(\bm G)= l$. Next, since $\bm  X$ is a binary 0-1 matrix, we get that $X_{ii} = \bm g_i^\top \bm g_i =\lVert \bm g_i\rVert^2=\lVert \bm g_i\rVert$ equals either $0$ or $1$. This, in turn, implies that each row $\bm g_i^\top$ of $\bm G$ is either a zero vector or a unit vector depending on whether the point is classified as an outlier or an inlier. 

Next, we show that there exists an orthogonal matrix $\bm  O\in \mb R^{\R\times \R}$ such that $\Z=\bm G \bm  O$, where~$\bm  Z\in\{0,1\}^{\N\times r}$ is an assignment matrix with $\rank(\bm  Z)=l\leq r$ whose each row sums to either 0 or 1. Since $\rank(\bm G)=l$, there must exist a set of $l$ (non-zero and distinct) linearly independent row vectors in $\bm G$ that span the row-space of $\bm G$. Assume that $\mathc B=\{ \bm  u_1,\ldots,\bm u_l\}$ represents one such set of $l$ row vectors from $\Z$. We now show that the set $\mathc B$ forms an orthonormal basis for the row-space of $\bm G$. Since each $\bm u_{l'}\in \mathc B$ corresponds to some non-zero row of $\bm G$, it immediately follows that  $\lVert \bm  u_{l'} \rVert=1$ for all $l'\in[l]$ in $\mathc B$. For an angle $\zeta_{l_1 l_2}$ between a pair of distinct basis vectors $\bm  u_{l_1}$ and $\bm  u_{l_2}$, we have that ${\bm u}_{l_1}^\top \bm  u_{l_2}=\lVert \bm  u_{l_1} \rVert \lVert \bm  u_{l_2} \rVert \cos\zeta_{l_1 l_2}= \cos\zeta_{l_1 l_2}$ which must be either $0$ or $1$ since $\bm  X$ is a binary 0-1 matrix. As $\bm  u_{l_1}\neq \bm u_{l_2}$, this implies that $\bm  u_{l_1}^\top \bm  u_{l_2}=0$. Thus, $\mathc B$ forms an orthonormal basis. Next, we show that for all $i\in [N]$, the row vector $\bm g_i$ of $\bm G$ must be one of the basis vectors in the set $\mathc B$. We prove this assertion by contradiction. Suppose that there exists a non-zero row vector $\bm g_i$ in the row-space of $\bm G$ such that $\bm g_i\neq \bm u_{l'}$ for all $l' \in [l]$. Then assuming an angle $\zeta_{il'}$ between $\bm g_i$ and $\bm  u_{l'}$, we get that $\bm g_i^\top \bm  u_{l'}=\lVert \bm g_i \rVert \lVert \bm  u_{l'} \rVert \cos\zeta_{il'}=\cos\zeta_{il'}\notin \{0,1\}$ which is a contradiction since $\X$ is a binary 0-1 matrix. Thus, $\bm G$ can be multiplied by an orthogonal matrix $\bm  O\in \mb R^{\R\times \R}$ so that each of its rows correspond to either a $r$-dimensional standard basis vector or a $r$-dimensional zero vector in the assignment matrix $\Z$. Therefore, for every feasible solution $\bm  X$ in \eqref{eq:outlier-ExactForm-X}, there exists a corresponding solution $\bm  Z=\bm G \bm O$ which is feasible in \eqref{eq:outlier-ExactForm-Z} and has the same objective function value as \eqref{eq:outlier-ExactForm-X}.
\end{proof}

\section{Proof of Lemma \ref{lem-EstimatedApproximatesTrue}}
\begin{proof}
Let $\hat{\X}=\arg\max\limits_{\X \in \mathcal{X}}\langle \bm K - \gamma \bm E_N, \X\rangle$. From Lemma \ref{lem-TrueReferenceMaximizer}, we have that $\X^0=\arg \max\limits_{\X \in \mathcal{X}}\langle\bm R - \gamma \bm E_N, \X\rangle$.  In addition, from the strong assortativity condition, we have  $$ \displaystyle    R^{\tin}_{\min}=\min_{i,j \in  C_k: k\in [\R] } R_{ij} > \max_{ i\in  C_k, j\in  \mathc{C}_l:k,l \in [\R]} R_{ij} = R^{\tout}_{\max}.$$
Therefore, for any $R^{\tin}_{\min}<\gamma<R^{\tout}_{\max}$, we get
\begin{equation}
\begin{aligned}
\langle \bm R - \gamma \bm E_N, \X^0-\hat{\X}\rangle
&=\sum_{k\in [r]}\sum_{i,j\in C_k} (R_{ij}-\gamma) (1-\hat{X}_{ij}) - \sum_{k\neq l} \sum_{i\in  C_k, j\in \mathc{C}_l} (\gamma-R_{ij})(-\hat{X}_{ij}) \\
&\geq (R^{\tin}_{\min}-\gamma) \sum_{k \in [\R]} \sum_{i,j\in \mathc{C}_k} (1-\hat{X}_{ij}) + (\gamma-R^{\tout}_{\max}) \sum_{k\neq l} \sum_{i\in \mathc{C}_k, j \in  \mathc{C}_l}  (\hat{X}_{ij}-0)\\
&\geq \min(R^{\tin}_{\min}-\gamma, \gamma- R^{\tout}_{\max}) \lVert \X^0_{\mathc I}-\hat{\X}_{\mathc I}\rVert_1
\end{aligned}
\end{equation}
From above, we get that the estimation error can be bounded as below:
\begin{equation}
\label{eq:X_diff_bound}
\lVert \X^0_{\mathc I} - \hat{\X}_{\mathc I}\rVert_1 \leq \frac{\langle \bm R - \gamma \bm E_N, \X^0-\hat{\X}\rangle}{\min( R^{\tin}_{\min}-\gamma,\gamma- R^{\tout}_{\max})}\leq\frac{\langle \bm R - \gamma \bm E_N, \X^0-\hat{\X}\rangle}{\min\{\upsilon,1-\upsilon\}(\tau_{\tin}-\tau_{\tout})}.
\end{equation}
Here, the last inequality is obtained by setting $\gamma= \upsilon \tau_{\tin}+(1-\upsilon)\tau_{\tout} \in \big(R^{\tout}_{\max},R^{\tin}_{\min} \big)$, and noting that $\tau_{\tin}\leq R^{\tin}_{\min}$ and $\tau_{\tout}\geq R^{\tout}_{\max}$.
\end{proof}

% \section*{Proof of Lemma \ref{lem-EstimatedAlmostReferenceMaximizer}}
% \begin{proof}
% \begin{equation*}
% \begin{aligned}
% &\langle \bm R - \gamma \bm E_N, \X^0 -\hat{\X}\rangle \\
% &=\langle \bm R - \gamma \bm E_\N,\X^0 \rangle -\langle \bm R - \gamma \bm E_\N,\hat{\X}\rangle \\ 
% &=\langle \bm R - \gamma \bm E_\N,\X^0 \rangle -\langle \bm R - \gamma \bm E_\N,\hat{\X}\rangle \\ 
% &=\langle \bm R - \gamma \bm E_\N,\X^0\rangle - \langle \bm K - \gamma \bm E_\N,\htt \X \rangle + \langle \bm K - \bm R, \hat{\X}\rangle \\
% &\leq \langle \bm R - \gamma \bm E_\N, \X^0 \rangle - \langle \bm K-\gamma \bm E_\N, \X^0 \rangle + \langle \bm K - \bm R,\hat{\X} \rangle &\text{(since } \langle \bm K - \gamma \bm E_\N,\hat{\X}\rangle \geq \langle \bm K - \lambda \bm E_\N,\X^0\rangle) \\
% &= \langle \bm R-\bm K,\X^0\rangle + \langle \bm K-\bm R,\hat{\X}\rangle \\
% &\leq 2 K_G \lVert \bm K-\bm R \rVert_{\infty\rightarrow 1} &\text{(using Grothendieck's inequality)} 
% \end{aligned}
% \end{equation*}
% Here, $K_G$ is the Grothendieck's constant.
% \end{proof}

\section{Proof of Lemma \ref{lem-EstimatedAlmostReferenceMaximizer}}
\begin{proof}
\begin{equation}
\label{eq:ref_inner_prod_bound}
\begin{aligned}
&\langle \bm R - \gamma \bm E_N, \X^0 -\hat{\X}\rangle \\
&=\langle \bm R_{\mathc I} - \gamma \bm E_\smalln, \X^0_{\mathc I} -\hat{\X}_{\mathc I}\rangle \\
&=\langle \bm R_{\mathc I} - \gamma \bm E_\smalln,\X^0_{\mathc I} \rangle -\langle \bm R_{\mathc I} - \gamma \bm E_\smalln,\hat{\X}_{\mathc I}\rangle \\ 
%&=\langle \bm R_{\mathc I} - \gamma \bm E_\n,\X_{\mathc I}^0 \rangle -\langle \bm R_{\mathc I} - \gamma \bm E_\n,\hat{\X}_{\mathc I}\rangle \\ 
&=\langle \bm R_{\mathc I} - \gamma \bm E_\smalln,\X_{\mathc I}^0\rangle - \langle \bm K_{\mathc I} - \gamma \bm E_\smalln,\htt \X_{\mathc I} \rangle + \langle \bm K_{\mathc I} - \bm R_{\mathc I}, \hat{\X}_{\mathc I}\rangle \\
&\stackrel{(i)}{\leq} \langle \bm R_{\mathc I} - \gamma \bm E_N, \X^0_{\mathc I} \rangle - \langle \bm K_{\mathc I}-\gamma \bm E_\smalln, \X^0_{\mathc I} \rangle + \langle \bm K_{\mathc I} - \bm R_{\mathc I},\hat{\X}_{\mathc I} \rangle\\
&= \langle \bm R_{\mathc I}-\bm K_{\mathc I},\X^0_{\mathc I}\rangle + \langle \bm K_{\mathc I}-\bm R_{\mathc I},\hat{\X}_{\mathc I}\rangle \\
&\stackrel{(ii)}{\leq} 2 \lVert \bm K_{\mathc I}-\bm R_{\mathc I} \rVert_{1} 
\end{aligned}
\end{equation}
Here, inequality $(i)$ follows from the fact that $\langle \bm K_{\mathc I} - \gamma \bm E_\smalln,\hat{\X}_{\mathc I}\rangle \geq \langle \bm K_{\mathc I} - \gamma \bm E_\smalln,\X^0_{\mathc I}\rangle$, while inequality~$(ii)$ is obtained by noting that $0\leq \htt X_{ij}, X^0_{ij} \leq 1$ for all $i,j$. 
\end{proof}
\section{Proof of \Cref{theorem-ErrorRateX}}
\begin{proof}
 We assume $\bm R_{\mathc I}$, the part of the reference matrix defined for the set of inlier points $\mathc I$ to be of the following form:
\begin{equation}
    R_{ij}= 
    \begin{cases}
    \max\big\{K_{ij}, \exp\big(-\frac{r_{\tin}^2}{\theta^2}\big) \big\} \text{ if } i,j \in \mathc C_k \\
    \min\big\{ K_{ij}, \exp\big(-\frac{{r^{kl}_{\tout}}^2}{\theta^2}\big) \big\}\text{ if } i\in \mathc C_k, j\in \mathc C_l
    \end{cases}
\end{equation}
Here, $r_{\tin}$ and $r^{kl}_{\tout}$ are parameters which we determine in the proof. Let $\tau_{\tin}=\exp{\big(-\frac{{r_{\tin}}^2}{\theta^2}}\big)$ and  $\tau_{\tout}^{(k,l)}=\exp\big(-\frac{{r^{kl}_{\tout}}^2}{\theta^2}\big)$. Therefore, 
\begin{equation}
\label{corruptions-bound}
    \begin{aligned}
        \lVert \bm K_{\mathc I} - \bm R_{\mathc I} \rVert_1 &\leq  \sum_{k\in[\R]} \sum_{i,j\in \mathc C_k} \mathbbm{1}_{\{ K_{ij}<\tau_{\tin}\}}  \tau_{\tin} +\sum_{k\neq l:k,l\in[\R]} \sum_{i\in\mathc C_k,j\in\mathc C_l} \mathbbm{1}_{\{ K_{ij}>\tau_{\tout}^{(k,l)}\}}  (1-\tau_{\tout}^{(k,l)})\\
        &\leq \sum_{k\in[\R]} m_c^{(k,k)}  +\sum_{k\neq l:k,l\in[\R]}  m_c^{(k,l)}
    \end{aligned}
\end{equation}
Here, $ m_c^{(k,k)}$ and $m_c^{(k,l)}$ denote the number of corruptions for the $k$-th diagonal block and $(k,l)$-th off-digonal block respectively. Next, we let $\y_i$ and $\y_j$ be sub-gaussian random vectors with means $\bs \mu_k$ and $\bs \mu_l$ along with their respective sub-gaussian norms $\sigma_k$ and $\sigma_l$. Then, we have
\begin{equation}
    \begin{aligned}
        \lVert \y_i - \y_j \rVert^2 &= \lVert (\bs{\mu}_k+\bs \xi_i) -(\bs\mu_l+\bs \xi_j) \rVert^2\\
        &=\lVert \bs\mu_k-\bs\mu_l\rVert^2 + 2(\bs\mu_k-\bs\mu_l)^\top(\bs\xi-\bs\xi_j)+\lVert\bs\xi_i-\bs\xi_j\rVert^2\\
        &=\Delta_{kl}^2+ 2(\bs\mu_k-\bs\mu_l)^\top(\bs\xi-\bs\xi_j)+\lVert\bs\xi_i-\bs\xi_j\rVert^2
    \end{aligned}
\end{equation}

\noindent Assume that $\eta_{ij}$ represents the noise part in the above term. Thus
\begin{equation}
    \eta_{ij}=2(\bs\mu_k-\bs\mu_l)^\top(\bs\xi_i-\bs\xi_j)+\lVert\bs\xi_i-\bs\xi_j\rVert^2
\end{equation}

\noindent Note that if $k=l$, then the above term reduces to $\lVert \y_i - \y_j \rVert^2 = \eta_{ij}=\lVert\bs\xi_i-\bs\xi_j\rVert^2$.  \\

\noindent In the above expressions, $\bs\xi_i$ and $\bs\xi_j$ are both sub-gaussian random vectors with their respective sub-gaussian norms $\sigma_k$ and $\sigma_l$. Next, we use the fact $\bs\xi_i-\bs\xi_j$ is sub-gaussian with sub-gaussian norm at most $\sqrt{2}\sigma_{\max}$.
\begin{equation}
\begin{aligned}
        \mb E[\exp(\bm c^\top(\bs\xi_i-\bs\xi_j))] &=  \mb E[\exp(\bm c^\top\bs\xi_i)] \mb E[\exp(-\bm c^\top\bs\xi_j)] \\
        &\leq \exp\bigg(\frac{\lVert \bm c \rVert^2 \sigma_k^2}{2}\bigg)\exp\bigg(\frac{\lVert \bm c \rVert^2 \sigma_l^2}{2}\bigg) \\ 
        &\leq \exp\bigg(\frac{\lVert \bm c \rVert^2 2\sigma_{\max}^2}{2}\bigg)
\end{aligned}
\end{equation}
Here, the equality follows from the independence of random variables $\bs \xi_i$ and $\bs \xi_j$ while the first and second inequalities are obtained from the definition of the sub-gaussian norm. Using the concentration inequality from \cite{hsu2012tail} for quadratic forms of sub-gaussian random vectors, we have 
\begin{equation}
    \begin{aligned}
         \mb P\left(\lVert \bs \xi_i - \bs \xi_j \rVert^2 >  2\sigma_{\max}^2(\D+2\sqrt{t\D}+2t)\right) \leq \exp (-t)
    \end{aligned}
\end{equation}

\begin{comment}

\end{comment}
 We take $t=\frac{c^2\Delta_{\min}^2}{\sigma_{\max}^2}$ and  assume that $\Delta_{\min}\geq c'\sigma_{\max}\sqrt{\D}$. Therefore, we get $2\sigma_{\max}^2(\D+2\sqrt{t\D}+2t)=2\sigma_{\max}^2\bigg(\D+2\sqrt{\frac{c^2\Delta_{\min}^2\D}{\sigma_{\max}^2}}+2\frac{c^2\Delta_{\min}^2}{\sigma_{\max}^2}\bigg) \leq 2\sigma_{\max}^2\bigg(\frac{\Delta_{\min}^2}{c'^2\sigma_{\max}^2}+\frac{2c\Delta_{\min}^2}{c'\sigma_{\max}^2}+2\frac{c^2\Delta_{\min}^2}{\sigma_{\max}^2}\bigg)= 2\Delta_{\min}^2\bigg(\frac{1}{c'^2}+\frac{2c}{c'}+2c^2\bigg)$.  Putting $c=\frac{1}{c'}$, we have $\lVert \bs\xi_i-\bs\xi_j\lVert^2\leq 10\frac{\Delta_{\min}^2}{c'^2}$ with probability at least $1-\exp\bigg(-\frac{\Delta_{\min}^2}{c'^2\sigma_{\max}^2}\bigg)$:
\begin{equation}
\label{eq:inner_block_bound}
    \begin{aligned}
         \mb P\bigg(\lVert \bs \xi_i - \bs \xi_j \rVert^2 >  \frac{10}{c'^2}\Delta_{\min}^2\bigg) \leq \exp \bigg(-\frac{\Delta_{\min}^2}{c'^2\sigma_{\max}^2}\bigg)
    \end{aligned}
\end{equation}
Setting $r_{\tin}^2=\frac{10}{c'^2}\Delta_{\min}^2$, we can now easily obtain an upper bound on the probability of a violation in the kernel matrix $\bm K$ for the diagonal block by noting that
\begin{equation}
\label{eq:diagonalub}
    \begin{aligned}
         \mb P( K_{ij}<\tau_{\tin}\lvert i,j \in \mathc C_k) &= \mb P\bigg(\lVert \bs\xi_i - \bs\xi_j \rVert^2 >  \frac{10}{c'^2}\Delta_{\min}^2\bigg)\\
         &\leq \exp \bigg(-\frac{\Delta_{\min}^2}{c'^2\sigma_{\max}^2}\bigg).
    \end{aligned}
\end{equation}
Next, we consider the probability of a corruption on the off-diagonal blocks. For this, we consider the random variable  $(\bs\mu_k-\bs\mu_l)^\top(\bs\xi_i-\bs\xi_j)$. Since $\bs\xi_i-\bs\xi_j$ is a sub-gaussian random vector with sub-gaussian norm at most $\sqrt{2}\sigma_{\max}$, we get 
\begin{equation}
\begin{aligned}
        \mb E[\exp(t(\bs\mu_k-\bs\mu_l)^\top(\bs\xi_i-\bs\xi_j)] &\leq \exp\bigg(\frac{t^2 (2\sigma_{\max}^2 \Delta_{kl}^2)}{2}\bigg).
\end{aligned}
\end{equation}
Thus, $(\bs\mu_k-\bs\mu_l)^\top(\bs\xi_i-\bs\xi_j)$ is a sub-gaussian random variable with variance parameter $\sqrt{2}\sigma_{\max} \Delta_{kl}$. Therefore, we have
\begin{equation}
\label{eq:offdiagonalub}
    \begin{aligned}
     \mb P( K_{ij}>\tau_{\tout}^{(k,l)}\lvert i\in\mathc C_k, j\in \mathc{C}_l)
        &=\mb P(\lVert \y_i -\y_j \rVert^2 < {r^{kl}_{\tout}}^2\lvert i\in\mathc C_k, j\in \mathc{C}_l)\\
        &= \mb P\bigg(\lVert \bs \xi_i - \bs \xi_j \rVert^2 + 2(\bs\mu_k-\bs\mu_l)^\top (\bs\xi_i-\bs\xi_j)<{r^{kl}_{\tout}}^2-\Delta_{kl}^2\bigg)\\
        &\leq \mb P\bigg(2(\bs\mu_k-\bs\mu_l)^\top (\bs\xi_i-\bs\xi_j)<{r^{kl}_{\tout}}^2-\Delta_{kl}^2\bigg)\\
         &= \mb P\bigg(2(\bs\mu_k-\bs\mu_l)^\top (\bs\xi_i-\bs\xi_j)>\Delta_{kl}^2-{r^{kl}_{\tout}}^2\bigg)\\
        &\leq \exp\bigg(-\frac{(\Delta_{kl}^2-{r^{kl}_{\tout}}^2)^2}{16\sigma_{\max}^2\Delta_{kl}^2}\bigg).
    \end{aligned}
\end{equation}

\noindent Next, we let $p_{kk}=\mb P( K_{ij} < \tau_{\tin}\lvert i,j\in\mathc C_k)$. Then, $$\displaystyle U_{kk}=\frac{\sum_{\{(i,j):i,j\in \mathc C_k,i <j \}}\mathbbm{1}_{\{ K_{ij}<\tau_{\tin}\}}}{n_k(n_k-1)/2}$$ is an unbiased estimator for $p_{kk}$. Using Bernstein's inequality for one-sample U-statistic \citep{hoeffding1963probability,arcones1995bernstein}, we have
\begin{equation}
\label{onesamplebound}
    \mb  P( U_{kk}-p_{kk}> t_1) \leq  \exp\bigg(-\frac{(n_{k}/2)t_1^2}{c_1\nu_{kk}+c_2 t_1}\bigg),
\end{equation}
where $\nu_{kk}$ is the variance of the indicator random variable~$B^{(k,k)}_{ij}:=\mathbbm{1}_{\{ K_{ij}<\tau_{\tin}\}}$ where $i,j \in \mathc C_k$, and $c_1,c_2>0$ are constants. Taking $t_1=\max\big\{p_{kk}, \frac{c_3\log n_{\min}}{ n_{\min}}\big\}$ where $c_3=2(c_1+c_2)>0$ and noting that $\nu_{kk} = p_{kk}(1-p_{kk})\leq p_{kk}\leq t_1$, we note that \eqref{onesamplebound} simplifies to 
\begin{equation}
    \mb  P( U_{kk}-p_{kk}> t_1) \leq  \exp\bigg(-\frac{n_{k}t_1}{c_3}\bigg)\leq \exp\bigg(-\frac{n_{k}\log n_{\min}}{n_{\min}}\bigg)\leq \frac{1}{n_{\min}}.
\end{equation}
Therefore, with probability at least $1-\frac{1}{n_{\min}}$, we get
\begin{equation}
\label{corruptions-onesample}
     m_c^{(k,k)}\leq 2\cdot\frac{n_k(n_k-1)}{2}\cdot \bigg(p_{kk} + \max\bigg\{p_{kk},\frac{c_3 \log n_{\min}}{n_{\min}}\bigg\}\bigg) \leq 2\cdot \max\bigg\{p_{kk},\frac{c_3 \log n_{\min}}{n_{\min}}\bigg\}  n_k^2.
\end{equation}

\noindent Similarly, we assume $p_{kl}=\mb P( K_{ij}> \tau_{\tout}^{(k,l)}\lvert i\in\mathc C_k, j\in \mathc C_l)$. Then, we have that $U_{kl}$ defined as below: $$U_{kl}=\frac{\sum_{i\in\mathc C_k, j \in \mathc C_l}\mathbbm{1}_{\{ K_{ij}>\tau_{\tout}^{(k,l)}\}}}{n_k n_l}$$ is a U-statistic for $p_{kl}$. Following the arguments provided in \cite{pitcan2017note} for the proof of~\eqref{onesamplebound} and ideas in \cite[Section 5b]{hoeffding1963probability} used to prove the Hoeffding bound for two-sample U-statistics, we obtain the following Bernstein inequality for our two-sample U-statistic $U_{kl}$:
\begin{equation}
\label{twosamplebound}
    \mb  P( U_{kl}-p_{kl}> t_2) \leq  \exp\bigg(-\frac{\min\{n_k,n_l \}t_2^2}{c_4\nu_{kl}+c_5 t_2}\bigg),
\end{equation}
where $\nu_{kl}$ is the variance for the indicator variable $B^{(k,l)}_{ij}:=\mathbbm{1}_{\{ K_{ij}>\tau_{\tout}^{(k,l)}\}}$ where $i\in \mathc C_k, j\in \mathc C_l$, and $c_4,c_5>0$ are constants. 
Putting $t_2=\max\big\{p_{kl},\frac{2(c_4+c_5) \log n_{\min}}{n_{\min}}\big\}$ and noting that  $\nu_{kl}=p_{kl}(1-p_{kl})\leq p_{kl}\leq t_2$, following the steps similar to~\eqref{corruptions-onesample}, we have that with probability at least~$1-\frac{1}{n_{\min}^2}$,
\begin{equation}
\label{corruptions-twosample}
    m_c^{(k,l)}\leq  \bigg(p_{kl} + \max\bigg\{p_{kl},\frac{2(c_4+c_5)\log n_{\min}}{n_{\min}}\bigg\}\bigg)  n_k n_l \leq 2\max\bigg\{p_{kl},\frac{2(c_4+c_5)\log n_{\min}}{n_{\min}}\bigg\} n_k n_l.
\end{equation}

\noindent Let $\rho_{\min}:=\min\{\upsilon,1-\upsilon\} (\tau_{\tin}-\tau_{\tout})$. Then, by applying union bound, we get that with probability at least $1-r/n_{\min}-r^2/n_{\min}^2$,
\begin{equation}
\label{Xintbound}
    \begin{aligned}
        \lVert \htt \X_{\mathc I} -  \X^0_{\mathc I} \rVert_1
        &\stackrel{(i)}{\leq} \frac{2}{\rho_{\min}}\cdot \lVert \bm K_{\mathc I} - \bm R_{\mathc I} \rVert_1\\  
        &\stackrel{(ii)}{\leq} \frac{2}{\rho_{\min}} \bigg( 2\cdot\sum_{k\in[\R]} \max\bigg\{p_{kk},\frac{c_3 \log n_{\min}}{n_{\min}}\bigg\}  n_k^2  + 4 \cdot \sum_{k>l:k,l\in[\R]} \max\bigg\{p_{kl},\frac{2(c_4+c_5)\log n_{\min}}{n_{\min}}\bigg\} n_k n_l \bigg)\\
        &\leq \frac{4}{\rho_{\min}} \cdot \max_{k,l\in[\R]}\bigg( \max\bigg\{p_{kl},\frac{c_6\log n_{\min}}{n_{\min}}\bigg\}\bigg) \cdot \bigg( \sum_{k\in[\R]}n_k^2  + 2 \cdot \sum_{k>l:k,l\in[\R]}n_k n_l \bigg)\\ 
        &=\frac{4}{\rho_{\min}}\cdot \max_{k,l\in[\R]}\bigg( \max\bigg\{p_{kl},\frac{c_6\log n_{\min}}{n_{\min}}\bigg\}\bigg) \cdot \bigg( \sum_{k\in[\R]}n_k\bigg)^2\\ 
        &=\frac{4n^2}{\rho_{\min}}\cdot \max\bigg\{ \max_{k,l\in[\R]}p_{kl},\frac{c_6\log n_{\min}}{n_{\min}}\bigg\}\\ 
%        &\stackrel{(i)}{\leq}\frac{8n^2}{\rho_{\min}} \cdot  \max_{k,l\in[\R]}\bigg\{\exp \bigg(-\frac{\Delta_{\min}^2}{c'^2\sigma_{\max}^2}\bigg), \exp\bigg(-\frac{{(\Delta_{kl}^2-{r^{kl}_{\tout}}^2)}^2}{16\sigma_{\max}^2\Delta_{kl}^2}\bigg) \bigg\}\\
    \end{aligned}
\end{equation}
\noindent In the above equation, $c_6=\max\{c_3,2(c_4+c_5)\}$. We get $(i)$ using the results of \Cref{lem-EstimatedApproximatesTrue} and \Cref{lem-EstimatedAlmostReferenceMaximizer}, while $(ii)$ is obtained by combining the results in \eqref{corruptions-bound}, \eqref{corruptions-onesample} and \eqref{corruptions-twosample}. Next, we note that
\begin{equation}
\label{maxp}
\begin{aligned}
        \max_{k,l\in[\R]} p_{kl} &\stackrel{(iii)}{\leq}  \max\bigg\{\exp \bigg(-\frac{\Delta_{\min}^2}{c'^2\sigma_{\max}^2}\bigg),\max_{k\neq l : k,l\in[\R]} \exp\bigg(-\frac{{(\Delta_{kl}^2-{r^{kl}_{\tout}}^2)}^2}{16\sigma_{\max}^2\Delta_{kl}^2}\bigg) \bigg\}\\
        &\stackrel{(iv)}{\leq} \max \bigg\{\exp \bigg(-\frac{\Delta_{\min}^2}{c'^2\sigma_{\max}^2}\bigg), \exp\bigg(-\frac{\Delta_{\min}^2}{64\sigma_{\max}^2}\bigg)\bigg\}
    \end{aligned}
\end{equation}
\noindent Here, $(iii)$ is obtained by substituting the upperbounds for the violation probabilities $p_{kk}$ and $p_{kl}$ for diagonal and off-diagonal respectively from \eqref{eq:diagonalub} and \eqref{eq:offdiagonalub} under the assumption that~$\Delta_{\min}\geq c'\sigma_{\max}\sqrt{\D}$. We obtain $(iv)$ by assuming~${r^{kl}_{\tout}}^2=\frac{\Delta_{kl}^2}{2}$. Next, we show that $\rho_{\min}>0$ and finite provided $c'$ is appropriately chosen and~$\theta=\Theta(\Delta_{\min})$, i.e., $\theta=\kappa \Delta_{\min}$ for some $\kappa>0$. We first note that $\min\{\upsilon,1-\upsilon\}>0$ for any $0<\upsilon<1$. Thus, for the condition $\rho_{\min}>0$ to hold, it suffices to show that $\tau_{\tin}-\tau_{\tout}>0$.
\begin{equation}
    \begin{aligned}
        \tau_{\tin}-\tau_{\tout}&= \tau_{\tin}-\max_{k\neq l}\tau_{\tout}^{(k,l)}\\ 
        &= \exp{\bigg(-\frac{10\Delta_{\min}^2}{c'^2\theta^2}}\bigg) - \max_{k\neq l} \exp\bigg(-\frac{\Delta_{kl}^2}{2\theta^2}\bigg)\\
        &\geq\exp{\bigg(-\frac{10\Delta_{\min}^2}{c'^2\theta^2}}\bigg) - \exp\bigg(-\frac{\Delta_{\min}^2}{2\theta^2}\bigg)\\
        &\geq \exp\bigg(-\frac{10}{c'^2\kappa^2}\bigg) - \exp\bigg(-\frac{1}{2\kappa^2}\bigg).
    \end{aligned}
\end{equation}
Thus, $\rho_{\min}> 0$  provided $c'^2>20$. Taking $c'^2=64$ and combining the results in \eqref{Xintbound} and \eqref{maxp}, we get
\begin{equation}
\label{eqn-tempXbound}
        \lVert \htt \X_{\mathc I} -  \X^0_{\mathc I} \rVert_1
        \leq \frac{4n^2}{\rho_{\min}}\cdot \max \bigg\{ \exp\bigg(-\frac{\Delta_{\min}^2}{64\sigma_{\max}^2}\bigg),\frac{c_6\log n_{\min}}{n_{\min}}\bigg\}.
\end{equation}

\noindent Setting $C=\max\{\frac{4}{\rho_{\min}},c_6\}$, we obtain the bound for the inlier part of $\htt \X$ in \eqref{equation-ErrorRateInlierX}. Next, using the  bound in \eqref{eqn-tempXbound} and assuming $n>r$, we get that with probability at least $ 1-2r/n_{\min}$,
\begin{equation}
\begin{aligned}
      \frac{\lVert \htt \X -  \X^0 \rVert_1}{\lVert \X^0 \rVert_1}
        &\leq \frac{\lVert \htt \X_{\mathc I} -  \X^0_{\mathc I} \rVert_1}{\lVert \X^0 \rVert_1}+\frac{2m\N}{\lVert \X^0 \rVert_1} \\
        &\stackrel{(i)}{\leq} \frac{4r}{\rho_{\min}} \cdot  \max \bigg\{ \exp\bigg(-\frac{\Delta_{\min}^2}{64\sigma_{\max}^2}\bigg),\frac{c_6\log n_{\min}}{n_{\min}}\bigg\} + \frac{2m(\smalln+m)}{\big(\frac{\smalln^2}{\R}\big)}\\
        &\stackrel{(ii)}{\leq} \frac{4r}{\rho_{\min}}\cdot \max \bigg\{ \exp\bigg(-\frac{\Delta_{\min}^2}{64\sigma_{\max}^2}\bigg),\frac{c_6\log n_{\min}}{n_{\min}}\bigg\} + \frac{4m\R}{\smalln}\\
        &\leq \frac{4r}{\rho_{\min}} \cdot \exp\bigg(-\frac{\Delta_{\min}^2}{64\sigma_{\max}^2}\bigg)  + \frac{4c_6}{\rho_{\min}}\cdot \frac{r\log n_{\min}}{n_{\min}} + \frac{4m\R}{\smalln}\\
        &\leq C' r \exp\bigg(-\frac{\Delta_{\min}^2}{64\sigma_{\max}^2}\bigg)  + C'' r \max\bigg\{\frac{\log n_{\min}}{n_{\min}},\frac{m}{\smalln}\bigg\} 
\end{aligned}
\end{equation}
Here, $C'=\frac{4}{\rho_{\min}}$ and $C''=8\max\big\{\frac{c_6}{\rho_{\min}},1\}$. Inequality $(i)$ is obtained by substituting $\N=\smalln+m$ and using the fact that $\lVert \X^0 \rVert_1=\lVert \X^0_{\mathc I} \rVert_1 = \sum_{k\in[r]} n_k^2 \geq \frac{n^2}{\R}$, while inequality $(ii)$ follows from the assumption that $m<\smalln$. 
\end{proof}

\section{Proof of \Cref{prop-outliers}}
%  For any $i \in C_k$, we define the set of neighbor points  $n_r^{(i)}= \sum_{j\in \mathc C_k} \mb I\{\lVert \bm y_i - \bm y_j \rVert\leq r\}$. Next, we define $p_r^{(i)}=\mb P_{\bm y_j}(\lVert \bm \y_j -\bm y_i\rVert\leq r )$.
% From above, we note that
% \begin{equation*}
% \begin{aligned}
%         p_r^{(i)}&= \mb P_{\bm y_j}(\lVert \bm \y_j -\bm y_i\rVert \leq r )\\
%         &\geq\mb P_{\bm y_j}(\lVert \bm \y_j -\bs \mu_k \rVert + \lVert \bm y_i -\bs \mu_k \rVert \leq r )\\
%         &= \mb P_{\bm y_j}(\lVert \bm \y_j -\bs \mu_k \rVert \leq r  - \lVert \bm y_i -\bs \mu_k \rVert )\\
%         &\stackrel{(i)}{\geq} \mb P_{\bm y_j}(\lVert \bm \y_j -\bs \mu_k \rVert \leq (1-\beta) r )\\
%         &=1-\mb P_{\bm y_j}(\lVert \bm \y_j -\bs \mu_k \rVert > (1-\beta) r )\\
%         &\stackrel{(ii)}{\geq} 1 - \exp\bigg(-\frac{(1-\beta)r^2}{5 \sigma_{\max}^2} \bigg)
% \end{aligned}
% \end{equation*}
\begin{lemma}
\label{lem-outlier_neighbors}
Let $\mathc O_g:=\{i\in \mathc O: \displaystyle\min_{k\in [r]} \ \lVert \bm y_i -\bmu_k \rVert \geq c' \Delta_{\min} \}$ denote the set of ``good" outlier points. Assume that for any $i\in \mathc O_g$, $\lVert \bm \y_i -\bs \mu_k \rVert \geq r_0= \sqrt{2} \Delta_{\min}$ holds for all $k\in [r]$ . Then, for any $i\in \mathc O_g$, the cardinality of the set of inlier neighboring points  $\mathc N_{\mathc{ I}}{(i)}:=\{j\in \mathc I: \lVert \y_i -\bm y_j\rVert \leq r =\frac{\Delta_{\min}}{\sqrt{2}}\}$ is upper bounded as
\begin{equation}
        \lvert \mathc N_{\mathc I}{(i)} \rvert \leq n \bigg( \exp\bigg(-\frac{\Delta_{\min}^2}{10\sigma_{\max}^2}\bigg) +\sqrt{\frac{\ \log  n_{\min} }{2 n_{\min}}}\bigg)
\end{equation}
with probability at least $1-\frac{r}{n_{\min}}$.
\end{lemma}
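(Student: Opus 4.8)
The plan is to condition on the (fixed) outlier point $\bm y_i$ and count, cluster by cluster, how many inlier points can fall inside the ball of radius $r=\Delta_{\min}/\sqrt 2$ around $\bm y_i$. Since $\bm y_i$ is independent of the inlier sample, conditioning on it leaves the inlier points i.i.d.\ within each cluster, which is what makes the counting tractable.

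First I would use the triangle inequality to turn the event ``$\bm y_j$ is a neighbor of $\bm y_i$'' into a large-deviation event for the noise. If $j\in\mathc C_k$, then $\bm y_j=\bmu_k+\bs\xi_j$, so $\lVert \bm y_i-\bm y_j\rVert \geq \lVert \bm y_i-\bmu_k\rVert - \lVert\bs\xi_j\rVert \geq \sqrt 2\,\Delta_{\min}-\lVert\bs\xi_j\rVert$ by the good-outlier hypothesis $\lVert \bm y_i-\bmu_k\rVert\geq r_0=\sqrt2\,\Delta_{\min}$ (used here for \emph{every} $k$, not merely the nearest center). Hence $\lVert \bm y_i-\bm y_j\rVert\leq r$ forces $\lVert\bs\xi_j\rVert \geq (\sqrt2-1/\sqrt2)\Delta_{\min}=\Delta_{\min}/\sqrt2$, i.e.\ $\lVert\bs\xi_j\rVert^2\geq \Delta_{\min}^2/2$. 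Thus $\mb P\big(j\in\mathc N_{\mathc I}(i)\,\big|\,\bm y_i,\ \phi_j=k\big)\leq \mb P\big(\lVert\bs\xi_j\rVert^2\geq \Delta_{\min}^2/2\big)$ uniformly in $k$.

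Next I would bound this tail exactly as in the proof of \Cref{theorem-ErrorRateX}, via the Hsu--Kakade--Zhang inequality for quadratic forms of sub-gaussian vectors: $\mb P\big(\lVert\bs\xi_j\rVert^2>\sigma_{\max}^2(\D+2\sqrt{t\D}+2t)\big)\leq e^{-t}$. Choosing $t=\Delta_{\min}^2/(10\sigma_{\max}^2)$ and using the separation hypothesis $\Delta_{\min}\geq 8\sigma_{\max}\sqrt\D$ (so $\sigma_{\max}^2\D\leq\Delta_{\min}^2/64$), a short arithmetic check gives $\sigma_{\max}^2(\D+2\sqrt{t\D}+2t)\leq \Delta_{\min}^2\big(\tfrac{1}{64}+\tfrac{1}{4\sqrt{10}}+\tfrac15\big)<\tfrac12\Delta_{\min}^2$, whence $\mb P\big(\lVert\bs\xi_j\rVert^2\geq\Delta_{\min}^2/2\big)\leq \exp\!\big(-\Delta_{\min}^2/(10\sigma_{\max}^2)\big)=:q$.

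Finally I would assemble the count using within-cluster independence. Conditionally on $\bm y_i$, the random variables $N_k:=\sum_{j\in\mathc C_k}\mathbbm 1\{j\in\mathc N_{\mathc I}(i)\}$ are sums of $n_k$ i.i.d.\ $\mathrm{Bernoulli}(p_k)$ with $p_k\leq q$, so Hoeffding's inequality gives $\mb P(N_k> n_k q+n_k s)\leq e^{-2 n_k s^2}$; taking $s=\sqrt{\log n_{\min}/(2 n_{\min})}$ bounds the right-hand side by $e^{-\log n_{\min}}=1/n_{\min}$ since $n_k\geq n_{\min}$. A union bound over $k\in[\R]$ then yields, with probability at least $1-r/n_{\min}$, $\lvert\mathc N_{\mathc I}(i)\rvert=\sum_k N_k\leq q\sum_k n_k+s\sum_k n_k=n(q+s)$, which is precisely the claimed bound. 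I expect the only delicate point to be tracking the constants through the concentration step to land the $1/10$ in the exponent (and confirming that the good-outlier margin must be invoked for all $k$); the probabilistic bookkeeping is routine once $\bm y_i$ is held fixed.
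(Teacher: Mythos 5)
Your proposal is correct and follows essentially the same route as the paper: a per-cluster triangle-inequality reduction of the neighbor event to a sub-gaussian tail event for $\bs\xi_j$, the \cite{hsu2012tail} quadratic-form bound to get the $\exp(-\Delta_{\min}^2/(10\sigma_{\max}^2))$ factor, and Hoeffding plus a union bound over the $r$ clusters for the counting step. The only difference is cosmetic — you make explicit the arithmetic (and the reliance on the separation condition $\Delta_{\min}\gtrsim\sigma_{\max}\sqrt{\D}$ from the surrounding proposition) needed to land the $1/10$ in the exponent, which the paper leaves implicit.
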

\begin{proof} Consider  an inlier point $\bm y_j$ for some $j\in \mathc I$ and a ``good" outlier point $\bm y_i$ where $i\in \mathc{O}_g$. Then, a lower bound on the probability of the event $\{K_{ij}\leq\gamma \}$ can be obtained as below: 
\begin{equation*}
\begin{aligned}
        p_k^{(i)}&:= \mb P (K_{ij}\leq\gamma \lvert i\in \mathc{O}_g, j \in \mathc C_k )\\
        &\stackrel{(i)}{\geq}\mb P_{\bm y_j}(\lVert \y_i -\bm y_j\rVert > r \lvert \ i\in \mathc{O}_g, j \in \mathc C_k )\\
        &\stackrel{(ii)}{\geq} \mb P_{\bm y_j}(\lVert \bm y_j -\bs \mu_k \rVert< \lVert \bm \y_i -\bs \mu_k \rVert - r\lvert i\in \mathc{O}_g, j \in \mathc C_k )\\
        &\stackrel{(iii)}{\geq} \mb P_{\bm y_j}(\lVert \bm y_j -\bs \mu_k \rVert< r_0 - r)\\
        &= 1-\mb P_{\bm y_j}(\lVert \bm y_j -\bs \mu_k \rVert\geq r_0 - r)\\
        &= 1- \mb P_{\bm y_j}\bigg(\lVert \bm y_j -\bs \mu_k \rVert\geq \frac{\Delta_{\min}}{\sqrt{2}}\bigg)\\
        &\stackrel{(iv)}{\geq} 1 - \exp\bigg(-\frac{\Delta_{\min}^2}{10\sigma_{\max}^2}\bigg) 
\end{aligned}
\end{equation*}
Here, $(i)$ is obtained by noting that $\gamma \geq \exp\big(-\frac{\Delta_{\min}^2}{2\theta^2}\big) $, and accordingly setting $r=\frac{\Delta_{\min}}{\sqrt{2}}$, $(ii)$ follows from the triangle inequality, $(iii)$ makes use of the fact that for any $i \in  \mathc{O}_g$, $\displaystyle\min_{k\in[r]} \ \lVert \bm \y_i -\bs \mu_k \rVert\geq r_0$. Finally, $(iv)$ is obtained by applying the tail-bound \citep{hsu2012tail} for $\bm\y_j -\bs \mu_k$, which we note is a sub-gaussian random vector, and substituting $r_0=\frac{2\Delta_{\min}}{\sqrt{2}}$.

Next, we define $\mathc N_k{(i)}$ for each outlier point $i\in \mathc O_g$ as the set of neighbor points of $i$ from cluster $\mathc C_k$, i.e., $\mathc N_k{(i)}:=\{j\in \mathc C_k: \lVert \y_i -\bm y_j\rVert \leq \frac{\Delta_{\min}}{\sqrt{2}}\}$. We obtain an upper bound on the cardinality of $\mathc N_k{(i)}$ by applying Hoeffding's inequality, which gives 
%Next, we obtain an upper bound on the cardinality of the set of neighbors from cluster $\mathc C_k$ for each outlier point $i\in \mathc I_g$ $\mathc N_{k}{(i)}$  by applying Hoeffding's inequality, which gives 
\begin{equation*}
    \mb P (\lvert  \lvert \mathc N_{k}{(i)} \rvert - n_k (1-p_k^{(i)}) \rvert \leq n_k \epsilon) \geq 1-2\exp( -2\epsilon^2 n_k).
\end{equation*}
Therefore, we have that
\begin{equation*}
    \mb P (\lvert \mathc N_{k}{(i)} \rvert\leq  (1-p_k^{(i)}+\epsilon) n_k) \geq 1-2\exp( -2\epsilon^2 n_k)
\end{equation*}
Putting $\epsilon= \sqrt{\frac{\log n_k }{2 n_k}}$, we get that with probability at least $1-2\exp( -2\epsilon^2, n_k)=1-\frac{1}{n_k}$
\begin{equation*}
    \lvert \mathc N_{k}{(i)} \rvert \leq  (1-p_k^{(i)}+\epsilon) n_k \leq \bigg( \exp\bigg(-\frac{\Delta_{\min}^2}{10\sigma_{\max}^2}\bigg) +\sqrt{\frac{\log n_k }{2 n_k}}\bigg) n_k .
\end{equation*}

\noindent Therefore, by applying union bound, we get that with probability at least $1-\frac{r}{n_{\min}}$
\begin{equation}
\label{eq:inlier_nb_bound}
\begin{aligned}
    \lvert \mathc N_{\mathc I}{(i)} \rvert =\sum_{k\in[r]}\lvert \mathc N_{k}{(i)} \rvert &\leq \sum_{k\in[r]} \bigg( \exp\bigg(-\frac{\Delta_{\min}^2}{10\sigma_{\max}^2}\bigg) +\sqrt{\frac{\log n_k }{2 n_k}}\bigg) n_k\\
    &\leq n \bigg( \exp\bigg(-\frac{\Delta_{\min}^2}{10\sigma_{\max}^2}\bigg) +\sqrt{\frac{\ \log n_{\min} }{2 n_{\min}}}\bigg) . 
\end{aligned}
\end{equation} 

\end{proof}

\noindent We are now in a position to prove \Cref{prop-outliers}.
\begin{proof}[Proof of \Cref{prop-outliers}.]
From \eqref{equation-ErrorRateInlierX}, we get that with probability at least $1-2r/n_{\min}$,
\begin{equation}
\label{eq:resultrep}
        \lVert \htt \X_{\mathc I} -  \X^0_{\mathc I} \rVert_1
        \leq Cn^2\cdot \max \bigg\{ \exp\bigg(-\frac{\Delta_{\min}^2}{64\sigma_{\max}^2}\bigg),\frac{\log n_{\min}}{n_{\min}}\bigg\}.
\end{equation}
Next, we use the result in~\Cref{lem-outlier_neighbors} and the assumptions in \Cref{prop-outliers} to obtain a bound on the entire $\htt \X$  below:
\begin{equation*}
\begin{aligned}
\lVert \htt \X -  \X^0 \rVert_1&\leq  
        \lVert \htt \X_{\mathc I} -  \X^0_{\mathc I} \rVert_1 + 2 \cdot \lvert \mathc O_g \rvert \cdot \max_{i\in \mathc O_g} (\lvert \mathc N_{\mathc I}{(i)} \rvert + \lvert \mathc N_{\mathc O}{(i)} \rvert) + 2 \cdot \lvert \mathc O_b \rvert \cdot n\\
        &\leq  \lVert \htt \X_{\mathc I} -  \X^0_{\mathc I} \rVert_1 + 2 \cdot  \lvert \mathc O_g \rvert \cdot \max_{i\in \mathc O_g} (\lvert \mathc N_{\mathc I}{(i)} \rvert + \lvert \mathc N_{\mathc O}{(i)} \rvert) + 2 \cdot  \lvert \mathc O_b \rvert \cdot n\\
        &\stackrel{(i)}{\leq} Cn^2\cdot \max \bigg\{ \exp\bigg(-\frac{\Delta_{\min}^2}{64\sigma_{\max}^2}\bigg),\frac{\log n_{\min}}{n_{\min}}\bigg\} + 2 \cdot \lvert \mathc O_g \rvert \cdot n\cdot \bigg( \exp\bigg(-\frac{\Delta_{\min}^2}{10\sigma_{\max}^2}\bigg) +\sqrt{\frac{\ \log n_{\min} }{2 n_{\min}}}\bigg)\\ &+2 \cdot \lvert \mathc O_g \rvert  \cdot o(n) +  2 \cdot  \lvert \mathc O_b \rvert \cdot n \\
            &\leq C' n^2\cdot \max \bigg\{ \exp\bigg(-\frac{\Delta_{\min}^2}{64\sigma_{\max}^2}\bigg),\sqrt{\frac{\ \log n_{\min} }{n_{\min}}}\bigg\} +2 \cdot \lvert \mathc O_g \rvert  \cdot o(n) + 2 \cdot  \lvert \mathc O_b \rvert \cdot n
\end{aligned}
\end{equation*}
Here, inequality $(i)$ holds with probability at least $1-3r/n_{\min}$, and is obtained by applying union bound to the results obtained in eqs. \eqref{eq:inlier_nb_bound} and \eqref{eq:resultrep}. Thus, the relative estimation error can be upper bounded as 
\begin{equation*}
\begin{aligned}
\frac{\lVert \htt \X -  \X^0 \rVert_1}{\lVert\X^0 \rVert_1}
            &\leq C' r\cdot \max \bigg\{ \exp\bigg(-\frac{\Delta_{\min}^2}{64\sigma_{\max}^2}\bigg),\sqrt{\frac{\ \log n_{\min} }{n_{\min}}}\bigg\} + \frac{2r \lvert \mathc O_b \rvert}{n} ,
\end{aligned}
\end{equation*}
where $C'>0$ is a universal constant. 
\end{proof}

\section{Proof of \Cref{theorem-ErrorRateZ}}

\begin{lemma}[Approximate $k$-means bound (\citet{lei2015consistency}, Lemma 5.3)]
\label{lem-Lei}
Define $\mb{M}_{n,r} \subseteq \{0,1\}^{n \times r}$ be the set of membership matrices, such that any element of it has only exactly one 1 on each row. Consider two matrices $\bm V, \hat{\bm V} \in \mb{R}^{n,r}$ such that $\bm V = \bs \Theta^* \bm B^*$ with $\bs\Theta^* \in \mb{M}_{n,r}$,  $\bm B^* \in \mb{R}^{r \times r}$. Let $\mathc G_{k} = \{i:\Theta_{ik}^* = 1\}$, i.e., the points in the $k\text{-}{th}$ cluster induced by $\bs \Theta^*$. Consider the k-means problem

\begin{equation} \label{lem-G5eq1}
    \arg \underset{{\bs\Theta \in \mb{M}_{n,r}, \bm B \in \mb{R}^{r \times r}}}{\min} \lVert \hat{\bm V} - \bs\Theta \bm B \rVert_{\F}^2.
\end{equation}

\noindent Let $(\hat{\bs\Theta}, \hat{\bm B})$ be a $(1+ \epsilon)$ approximate solution to \eqref{lem-G5eq1} for $\epsilon > 0$:

\begin{equation} \label{lem-G5eq2}
    \lVert \hat{\bm V} - \hat{\bs\Theta} \hat{\bm B} \rVert_{\F}^2 \leq (1+ \epsilon) \underset{{\bs \Theta \in \mb{M}_{n,r}, \bm B \in \mb{R}^{r \times r}}}{\min} \lVert \hat{\bm V} - \bs \Theta \bm B \rVert_{\F}^2
\end{equation}
\noindent Let $\bar{\bm V} = \hat{\bs\Theta} \hat{\bm B}$. For any $\delta_k \leq \min_{l \neq k} \lVert \bm b^*_{l} - \bm b^*_{k}\rVert$, define 

\begin{equation} \label{lem-G5eq3}
   \mathc S_k = \Big\{ i \in \mathc G_k: \lVert \bar{\bm v}_{i}  - \bm v_{i} \rVert \geq \frac{\delta_k}{2} \Big\}.
\end{equation}
\noindent Then

\begin{equation} \label{lem-G5eq4}
    \sum_{k=1}^{r} \ \lvert \mathc S_{k} \rvert \delta_{k}^2 \leq 4(4+ 2\epsilon) \lVert \bm V - \hat{\bm V} \rVert_{\F}^2.
\end{equation}

\noindent Moreover, if $(16+8\epsilon) \lVert \bm V - \hat{\bm V} \rVert_{\F}^2 \leq n_k \delta_{k}^2 $ for all $k \in [r]$, then there exists a $r \times r$ permutation matrix~$\bm J$ such that $\hat{\bs\Theta}_{\mathc G*} = \bs\Theta_{\mathc G*} \bm J$, where $\mathc G = \cup_{k=1}^{r} (\mathc G_k \setminus \mathc S_k)$.
\end{lemma}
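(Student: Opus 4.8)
This is Lemma~5.3 of \citet{lei2015consistency}, so one option is simply to cite it; I sketch the argument for completeness. The plan is to exploit the fact that the true factorization $\bm V=\bs\Theta^*\bm B^*$ is itself feasible for the $k$-means problem~\eqref{lem-G5eq1}, with objective value $\lVert\hat{\bm V}-\bm V\rVert_{\F}^2$. Hence the $(1+\epsilon)$-approximate minimizer $\bar{\bm V}=\hat{\bs\Theta}\hat{\bm B}$ obeys $\lVert\hat{\bm V}-\bar{\bm V}\rVert_{\F}^2\le(1+\epsilon)\lVert\hat{\bm V}-\bm V\rVert_{\F}^2$. First I would apply the triangle inequality in Frobenius norm,
\begin{equation*}
\lVert\bm V-\bar{\bm V}\rVert_{\F}\le\lVert\bm V-\hat{\bm V}\rVert_{\F}+\lVert\hat{\bm V}-\bar{\bm V}\rVert_{\F}\le\bigl(1+\sqrt{1+\epsilon}\,\bigr)\lVert\bm V-\hat{\bm V}\rVert_{\F},
\end{equation*}
and then use the elementary estimate $(1+\sqrt{1+\epsilon}\,)^2\le 4+2\epsilon$ for $\epsilon\ge0$ (equality holds at $\epsilon=0$, and the left side has derivative $1+(1+\epsilon)^{-1/2}\le2$) to obtain $\lVert\bm V-\bar{\bm V}\rVert_{\F}^2\le(4+2\epsilon)\lVert\bm V-\hat{\bm V}\rVert_{\F}^2$.

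\noindent Next I would convert this aggregate bound into the per-cluster count~\eqref{lem-G5eq4}. By the definition~\eqref{lem-G5eq3} of $\mathc S_k$ and the fact that $\bm v_i=\bm b_k^*$ for $i\in\mathc G_k$, each $i\in\mathc S_k$ contributes at least $\delta_k^2/4$ to $\lVert\bm V-\bar{\bm V}\rVert_{\F}^2=\sum_i\lVert\bm v_i-\bar{\bm v}_i\rVert^2$; since the $\mathc G_k$ are disjoint, summing over $k$ gives
\begin{equation*}
\frac14\sum_{k=1}^r|\mathc S_k|\delta_k^2\le\sum_{k=1}^r\sum_{i\in\mathc S_k}\lVert\bm v_i-\bar{\bm v}_i\rVert^2\le\lVert\bm V-\bar{\bm V}\rVert_{\F}^2\le(4+2\epsilon)\lVert\bm V-\hat{\bm V}\rVert_{\F}^2,
\end{equation*}
which rearranges to $\sum_k|\mathc S_k|\delta_k^2\le4(4+2\epsilon)\lVert\bm V-\hat{\bm V}\rVert_{\F}^2$, i.e.~\eqref{lem-G5eq4}.

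\noindent For the final ``moreover'' claim I would argue as follows. Under the hypothesis $(16+8\epsilon)\lVert\bm V-\hat{\bm V}\rVert_{\F}^2\le n_k\delta_k^2$, combining with~\eqref{lem-G5eq4} yields $|\mathc S_k|\delta_k^2\le n_k\delta_k^2$, and tracking when this can hold with equality shows each residual set $\mathc G_k\setminus\mathc S_k$ is nonempty. For $i\in\mathc G_k\setminus\mathc S_k$, the row $\bar{\bm v}_i$ is one of the $r$ estimated centers and satisfies $\lVert\bar{\bm v}_i-\bm b_k^*\rVert<\delta_k/2\le\tfrac12\min_{l\ne k}\lVert\bm b_l^*-\bm b_k^*\rVert$. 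A pigeonhole/separation argument then finishes: if two points from different residual sets $\mathc G_k\setminus\mathc S_k$ and $\mathc G_l\setminus\mathc S_l$ with $k\ne l$ were assigned the same estimated label, then their common estimated center would force $\lVert\bm b_k^*-\bm b_l^*\rVert<(\delta_k+\delta_l)/2\le\lVert\bm b_k^*-\bm b_l^*\rVert$, a contradiction; the same reasoning shows all points of a fixed $\mathc G_k\setminus\mathc S_k$ receive one common estimated label. Hence $k\mapsto(\text{estimated label of }\mathc G_k\setminus\mathc S_k)$ is a well-defined injection $[r]\to[r]$, thus a permutation; the associated permutation matrix $\bm J$ satisfies $\hat{\bs\Theta}_{\mathc G*}=\bs\Theta_{\mathc G*}\bm J$ with $\mathc G=\cup_k(\mathc G_k\setminus\mathc S_k)$.

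\noindent The Frobenius triangle inequality and the counting step are routine; I expect the only delicate points to be matching the stated constant $4(4+2\epsilon)$ (which needs the numerical inequality above) and the strictness bookkeeping in the ``moreover'' part, where one must ensure $\mathc G_k\setminus\mathc S_k\ne\emptyset$ and that the center-separation inequalities are strict enough to preclude collisions between residual sets --- this is precisely where the constraint $\delta_k\le\min_{l\ne k}\lVert\bm b_l^*-\bm b_k^*\rVert$ is used.
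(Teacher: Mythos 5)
The paper does not actually prove this lemma --- it is imported verbatim as Lemma~5.3 of \citet{lei2015consistency} and used as a black box --- so there is no in-paper argument to compare against. Your reconstruction is the standard proof from that reference and its main steps are correct: feasibility of $(\bs\Theta^*,\bm B^*)$ gives $\lVert\hat{\bm V}-\bar{\bm V}\rVert_{\F}^2\le(1+\epsilon)\lVert\hat{\bm V}-\bm V\rVert_{\F}^2$, the triangle inequality plus $(1+\sqrt{1+\epsilon})^2\le 4+2\epsilon$ gives the $(4+2\epsilon)$ factor, and the Markov-type counting over $\mathc S_k$ yields \eqref{lem-G5eq4} with the right constant. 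Two remarks on the ``moreover'' part, both at spots you already flag as delicate. First, with the hypothesis written as a non-strict inequality $(16+8\epsilon)\lVert\bm V-\hat{\bm V}\rVert_{\F}^2\le n_k\delta_k^2$, combining with \eqref{lem-G5eq4} only gives $\lvert\mathc S_k\rvert\le n_k$, which is vacuous; nonemptiness of $\mathc G_k\setminus\mathc S_k$ really needs the strict version of the hypothesis (as in Lei--Rinaldo) or a separate treatment of the degenerate equality case, and your ``tracking when this can hold with equality'' is doing real work there. Second, the claim that all points of a fixed $\mathc G_k\setminus\mathc S_k$ share one estimated label is not literally ``the same reasoning'' as the cross-cluster collision argument: two points of the same residual set could a priori be assigned to two different estimated centers both within $\delta_k/2$ of $\bm b_k^*$. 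The correct finish is the pigeonhole you allude to: the label sets $L_k=\{\hat\phi(i):i\in\mathc G_k\setminus\mathc S_k\}$ are pairwise disjoint (by the separation argument) and nonempty, and $r$ disjoint nonempty subsets of $[r]$ must each be singletons, which simultaneously yields the single common label and the permutation $\bm J$. With those two points made explicit the proof is complete.
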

% \begin{theorem}[ Davis-Kahan Theorem
% (\citep{yu2014useful}, Theorem 2)] Let $\bSigma,\hat{\bSigma}\in\mathbb{R}^{p\times p}$ be symmetric with eigenvalues $\lambda_1\geq\ldots\geq\lambda_p$ and $\hat{\lambda}_1\geq\ldots\geq\hat{\lambda}_p$ respectively. Fix $1\leq r \leq s \leq p$ and assume that min$(\lambda_{r-1} - \lambda_r, \lambda_{s-1}-\lambda_s)>0$, where $\lambda_0:=\infty$ and $\lambda_{p+1}:=-\infty$. 
% Let $d=s-r+1$, and let $\bm U=(\bm u_r,\bm u_{r+1},\ldots,\bm u_s) \in\mathbb{R}^{p\times d}$ and $ \hat{\bm U}=(\hat{\bm u}_r,\hat{\bm u}_{r+1},\ldots,\hat{\bm u}_s)\in\mathbb{R}^{p\times d}$ have orthonormal columns satisfying $\bSigma \bm u_j=\lambda_j \bm u_j$ and $\hat{\bSigma} \hat{\bm u}_j=\hat{\lambda}_j \hat{\bm u}_j$ for $j=r,r+1,\ldots,s$. Then, there exists an orthogonal matrix $\hat{\bm O}\in\mathbb{R}^{d\times d}$ such that
% \begin{equation}
%     \lVert \bm U-\hat{\bm U}\hat{\bm O} \rVert_{\F} \leq \frac{2^{3/2}\lVert\hat{\bSigma}-\bSigma\rVert_{\F}}{\min(\lambda_r-\lambda_{r-1},\lambda_s-\lambda_{s+1})}
% \end{equation}
% \end{theorem}

\begin{theorem}[ Davis-Kahan Theorem
(\citep{yu2014useful}, Theorem 2)] Let $\bSigma,\hat{\bSigma}\in\mathbb{R}^{p\times p}$ be symmetric with eigenvalues $\lambda_1\geq\ldots\geq\lambda_p$ and $\hat{\lambda}_1\geq\ldots\geq\hat{\lambda}_p$ respectively. Fix $1\leq s \leq r \leq p$ and assume that min$(\lambda_{s-1} - \lambda_s, \lambda_{r}-\lambda_{r+1})>0$, where $\lambda_0:=\infty$ and $\lambda_{p+1}:=-\infty$. 
Let $d=r-s+1$, and let $\bm U=(\bm u_s,\bm u_{s+1},\ldots,\bm u_r) \in\mathbb{R}^{p\times d}$ and $ \hat{\bm U}=(\hat{\bm u}_s,\hat{\bm u}_{s+1},\ldots,\hat{\bm u}_r)\in\mathbb{R}^{p\times d}$ have orthonormal columns satisfying $\bSigma \bm u_j=\lambda_j \bm u_j$ and $\hat{\bSigma} \hat{\bm u}_j=\hat{\lambda}_j \hat{\bm u}_j$ for $j=s,s+1,\ldots,r$. Then, there exists an orthogonal matrix $\hat{\bm O}\in\mathbb{R}^{d\times d}$ such that
\begin{equation}
    \lVert \bm U-\hat{\bm U}\hat{\bm O} \rVert_{\F} \leq \frac{2^{3/2}\lVert\hat{\bSigma}-\bSigma\rVert_{\F}}{\min(\lambda_{s-1}-\lambda_{s},\lambda_r-\lambda_{r+1})}
\end{equation}
\end{theorem}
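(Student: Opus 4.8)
The plan is to split the statement into two independent pieces: a purely geometric reduction from the orthogonal-Procrustes distance $\lVert \bm U-\hat{\bm U}\hat{\bm O}\rVert_{\F}$ to the Frobenius sine of the principal angles between the two eigenspaces, and an analytic Davis--Kahan $\sin\Theta$ bound that controls those angles by $\hat{\bSigma}-\bSigma$ through the \emph{population} eigengap $\underline\delta:=\min(\lambda_{s-1}-\lambda_s,\lambda_r-\lambda_{r+1})$. Let $\bm U_0,\hat{\bm U}_0$ be orthonormal bases of the complements of $\mathrm{col}(\bm U),\mathrm{col}(\hat{\bm U})$, and let $\theta_1,\dots,\theta_d$ be the principal angles between $\mathrm{col}(\bm U)$ and $\mathrm{col}(\hat{\bm U})$; the singular values of $\hat{\bm U}^\top\bm U$ are $\cos\theta_i$, so $\lVert\sin\Theta\rVert_{\F}^2:=\sum_i\sin^2\theta_i = d-\lVert\hat{\bm U}^\top\bm U\rVert_{\F}^2=\lVert\hat{\bm U}_0^\top\bm U\rVert_{\F}^2$.

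\textbf{Step 1 (Procrustes reduction).} First I would choose $\hat{\bm O}$ optimally: expanding $\lVert\bm U-\hat{\bm U}\hat{\bm O}\rVert_{\F}^2=2d-2\trace(\hat{\bm O}^\top\hat{\bm U}^\top\bm U)$ and maximizing the trace over orthogonal $\hat{\bm O}$ via the SVD $\hat{\bm U}^\top\bm U=\bm W_1\diag(\cos\theta_i)\bm W_2^\top$ shows the minimizer is $\hat{\bm O}=\bm W_1\bm W_2^\top$ and gives $\min_{\hat{\bm O}}\lVert\bm U-\hat{\bm U}\hat{\bm O}\rVert_{\F}^2=2\sum_i(1-\cos\theta_i)$. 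Since $\theta_i\in[0,\pi/2]$ has $\cos\theta_i\ge0$, the elementary inequality $1-\cos\theta_i\le 1-\cos^2\theta_i=\sin^2\theta_i$ yields $\lVert\bm U-\hat{\bm U}\hat{\bm O}\rVert_{\F}\le\sqrt2\,\lVert\sin\Theta\rVert_{\F}$. This disposes of the geometric $\sqrt2$ inside the constant $2^{3/2}$ and reduces the problem to bounding $\lVert\sin\Theta\rVert_{\F}$.

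\textbf{Step 2 (population-gap $\sin\Theta$ bound).} The core is $\lVert\sin\Theta\rVert_{\F}\le 2\lVert\hat{\bSigma}-\bSigma\rVert_{\F}/\underline\delta$. I would use the Sylvester identity from the two eigen-relations $\hat{\bSigma}\hat{\bm U}_0=\hat{\bm U}_0\hat{\bm\Lambda}_0$ and $\bSigma\bm U=\bm U\bm\Lambda_1$ (here $\bm\Lambda_1=\diag(\lambda_s,\dots,\lambda_r)$): with $\bm M:=\hat{\bm U}_0^\top\bm U$ one gets $\hat{\bm U}_0^\top(\hat{\bSigma}-\bSigma)\bm U=\hat{\bm\Lambda}_0\bm M-\bm M\bm\Lambda_1$, whose $(i,j)$ entry is $(\hat\lambda^{(0)}_i-\lambda^{(1)}_j)M_{ij}$. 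If every pair obeys $|\hat\lambda^{(0)}_i-\lambda^{(1)}_j|\ge\underline\delta/2$, then comparing Frobenius norms gives $\lVert\sin\Theta\rVert_{\F}=\lVert\bm M\rVert_{\F}\le\tfrac{2}{\underline\delta}\lVert\hat{\bm U}_0^\top(\hat{\bSigma}-\bSigma)\bm U\rVert_{\F}\le\tfrac{2}{\underline\delta}\lVert\hat{\bSigma}-\bSigma\rVert_{\F}$, the final step because $\hat{\bm U}_0,\bm U$ have orthonormal columns. Combined with Step 1 this produces the claimed $2^{3/2}\lVert\hat{\bSigma}-\bSigma\rVert_{\F}/\underline\delta$.

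\textbf{Step 3 (separating the spectra --- the main obstacle).} It remains to justify $|\hat\lambda^{(0)}_i-\lambda^{(1)}_j|\ge\underline\delta/2$. The block eigenvalues satisfy $\lambda^{(1)}_j\in[\lambda_r,\lambda_s]$, while each empirical complement eigenvalue is $\hat\lambda_m$ with $m\le s-1$ or $m\ge r+1$; by Weyl's inequality $|\hat\lambda_m-\lambda_m|\le\lVert\hat{\bSigma}-\bSigma\rVert_2$, so for $m\le s-1$, $\hat\lambda_m-\lambda^{(1)}_j\ge(\lambda_{s-1}-\lambda_s)-\lVert\hat{\bSigma}-\bSigma\rVert_2\ge\underline\delta-\lVert\hat{\bSigma}-\bSigma\rVert_2$, with the symmetric bound for $m\ge r+1$. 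This is $\ge\underline\delta/2$ precisely when $\lVert\hat{\bSigma}-\bSigma\rVert_2\le\underline\delta/2$, and I expect this threshold to be the genuine difficulty: for larger perturbations an empirical complement eigenvalue can drift into $[\lambda_r,\lambda_s]$ and destroy the per-pair separation. I would dispatch that residual regime separately --- there the target is (almost) vacuous against the trivial estimate $\lVert\bm U-\hat{\bm U}\hat{\bm O}\rVert_{\F}\le\sqrt{2d}$, and one checks that driving many angles toward $\pi/2$ necessarily inflates $\lVert\hat{\bSigma}-\bSigma\rVert_{\F}$ in proportion (swapping $k$ eigen-directions across the gap costs $\gtrsim\sqrt{k}\,\underline\delta$ in Frobenius norm), which restores the inequality. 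Reconciling these two regimes to obtain the clean constant $2^{3/2}$, rather than a constant depending on $d$ or on the size of the perturbation, is the part requiring the most careful bookkeeping.
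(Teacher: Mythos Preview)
The paper does not prove this statement. The Davis--Kahan theorem is quoted verbatim from \citet{yu2014useful} (Theorem~2) as an auxiliary tool, with no accompanying proof; it is then invoked once in the proof of Theorem~\ref{theorem-ErrorRateZ} to pass from $\lVert\hat{\bm X}-\bm X^0\rVert_{\F}$ to $\lVert\hat{\bm U}-\bm U^0\bm O\rVert_{\F}$. So there is no ``paper's own proof'' to compare your proposal against.

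That said, your sketch is essentially the argument of Yu--Wang--Samworth. Steps~1 and~2 are standard and correct. Your Step~3 correctly isolates the genuine subtlety: the Sylvester route only gives the per-pair separation $\ge\underline\delta/2$ when $\lVert\hat{\bSigma}-\bSigma\rVert_2\le\underline\delta/2$, and outside that regime you fall back on the trivial bound $\lVert\bm U-\hat{\bm U}\hat{\bm O}\rVert_{\F}\le\sqrt{2d}$. Note, however, that the trivial bound only clears the target when $\lVert\hat{\bSigma}-\bSigma\rVert_{\F}\ge\sqrt{d}\,\underline\delta/2$, whereas your Step~2 assumption involves the \emph{operator} norm; there is a nontrivial intermediate range $\underline\delta/2<\lVert\hat{\bSigma}-\bSigma\rVert_2\le\lVert\hat{\bSigma}-\bSigma\rVert_{\F}<\sqrt{d}\,\underline\delta/2$ where neither piece applies directly. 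Yu--Wang--Samworth close this gap by a slightly different bookkeeping (bounding the number of ``bad'' principal angles near $\pi/2$ by the Frobenius perturbation via Weyl, then handling the remaining angles with the Sylvester argument restricted to well-separated pairs), which is precisely the ``careful bookkeeping'' you flag at the end. Your plan is on the right track but incomplete at exactly that point.
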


\begin{proof}[Proof of \Cref{theorem-ErrorRateZ}.]
Let $\bm U^0,\htt {\bm U} \in \mathbb{R}^{N \times \R}$ denote the top~$\R$~eigenvectors of $\X^0$ and $\htt \X$ respectively. Then, using the fact that the top~$\R$ eigenvectors of~$\X^0$ are essentially indicator vectors for the~$\R$~clusters with associated eigenvalues that correspond to the cluster cardinalities $\smalln_1, \ldots, \smalln_\R$  in decreasing order, we note that $\bm U^0$ can expressed as follows:
\begin{equation*}
    \bm U^0=\Z^0 \Diag(1/\sqrt{\smalln_1},\ldots,1/\sqrt{\smalln_\R}).
\end{equation*}

\noindent Next, we apply the Davis-Kahan theorem \citep{yu2014useful} to obtain the bound below:
\noindent
%Let $\U^0, \htt \U \in \mathbb{R}^{N \times \R}$ be the top $\R$ eigenvectors of $\X^0$ and $\htt \X$ respectively, where $\U^0 = \begin{bmatrix} \U^0_{\mathc I} \\ \U^0_{\mathc O}\end{bmatrix}$. Thus, we have $ \U^0_{\mathc I} = \Z^{\mathc I} \bs\nu_0$ where $\bs\nu_0 \in \mathbb{R}^{\R \times \R}$ is the population value of the eigenvector corresponding to each cluster. 
\bk
\begin{equation}
    \begin{aligned}
    \lVert \htt \U - \U^0 \ocap\rVert_{\F}^2 & \stackrel{(i)}{\leq} \frac{8 \ \lVert \htt \X - \X^0\rVert_{\F}^2}{(\lambda_\R(\X^0) - \lambda_{\R+1}(\X^0))^2} \stackrel{(ii)}{=} \frac{8 \ \lVert \htt \X - \X^0\rVert_{\F}^2}{\smalln_{\min}^2}\stackrel{(iii)}{\leq} \frac{8 \ \lVert \htt \X - \X^0\rVert_1}{\smalln_{\min}^2}.
    \end{aligned}
\end{equation}
Here, inequality $(i)$ follows from the Davis-Kahan theorem, $(ii)$ is obtained by using the fact that there is an eigengap $\smalln_{\min}$ between the $r$-th and $(r+1)$-th eigenvalues of $\X^0$, and $(iii)$ holds since $0\leq X_{ij}, \htt X_{ij}\leq 1$ for all $i,j$. 

We now obtain a bound on the number of mis-classified data points in each cluster using the result stated in \Cref{lem-Lei}. To obtain our result, we first relate the relevant quantities of interest. We assume that the true clustering matrix $\Z^0$ corresponds to $\bm \Theta^*$, $\bm V=\bm U^0 \bm O$ and $\htt{\bm V}=\htt {\bm U}$. Next, we let $\bm B^*=\Diag(1/\sqrt{\smalln_1},\ldots,1/\sqrt{\smalln_\R})\bm O$. Based on the assumption $\lVert\bm b^*_l-\bm b^*_k\rVert=\sqrt{\frac{1}{\smalln_k}+\frac{1}{n_l} }$. Setting $\delta_k^2 =\frac{1}{n_k} $, we get 
\begin{equation*}
\begin{aligned}
    \sum_{k\in[\R]} \frac{\lvert \mathc S_k \rvert} {\smalln_k}=\sum_{k\in[\R]} \lvert \mathc S_k \rvert \delta_k^2
    \leq 64(2+\epsilon) \frac{\lVert \X^0-\htt \X  \rVert_1}{n_{\min}^2}
    %\leq \frac{64(2+\epsilon)}{n_{\min}^2}\bigg( \frac{8n^2}{\rho_{\min}} \exp\bigg(-\frac{\Delta_{\min}^2}{64\sigma_{\max}^2}\bigg) +4mn\bigg)
\end{aligned}
\end{equation*}
To ensure that $(16+8\epsilon) \lVert \bm V - \hat{\bm V} \rVert_{\F}^2 \leq n_k \delta_{k}^2=1$ for all $k \in [r]$ with high probability, we note that 
\begin{equation}
    \begin{aligned}
        (16+8\epsilon) \lVert \bm V - \hat{\bm V} \rVert_{\F}^2 &= (16+8\epsilon) \lVert \htt \U - \U^0 \ocap\rVert_{\F}^2\\
        &\leq 64 (2+\epsilon) \frac{ \lVert \htt \X - \X^0\rVert_1}{\smalln_{\min}^2}
    \end{aligned}
\end{equation}
Next, from \Cref{theorem-ErrorRateX}, we have that suppose the separation condition $\Delta_{\min}\geq 8\sigma_{\max} \sqrt{\D}$ holds, then with probability at least $1-2r/n_{\min}$
\begin{equation*}
    \begin{aligned}
        \lVert \htt \X - \X^0\rVert_1 &\leq \frac{n^2}{r} \bar{\epsilon}.
    \end{aligned}
\end{equation*}
Here, we define  $\bar{\epsilon} := C r \exp\bigg(-\frac{\Delta_{\min}^2}{64\sigma_{\max}^2}\bigg)  + C' r \max\big\{\frac{\log n_{\min}}{n_{\min}},\frac{m}{\smalln}\big\}$. Next, we require that the following condition holds: 
\begin{equation}
    \begin{aligned}
       \frac{64(2+\epsilon)\bar{\epsilon}}{n_{\min}^2}\frac{n^2}{r}\leq 1.
    \end{aligned}
\end{equation}
This, in turn, ensures that $(16+8\epsilon) \lVert \bm V - \hat{\bm V} \rVert_{\F}^2 \leq 1$ is satisfied with high probability. 
% \begin{equation}
% \begin{aligned}
    
% \end{aligned}
% \end{aligned}
% \end{equation}
\end{proof}

\section{Proof of Remark \ref{rem-ExpConst} }
\label{RemarkProof}
\begin{proof} Using the result from~\eqref{maxp}, we get 
\begin{equation}
\label{maxpNew}
\begin{aligned}
        \max_{k,l\in[\R]} p_{kl} &\stackrel{(iii)}{\leq}  \max\bigg\{\exp \bigg(-\frac{\Delta_{\min}^2}{c'^2\sigma_{\max}^2}\bigg),\max_{k\neq l : k,l\in[\R]} \exp\bigg(-\frac{{(\Delta_{kl}^2-{r^{kl}_{\tout}}^2)}^2}{16\sigma_{\max}^2\Delta_{kl}^2}\bigg) \bigg\}\\
        &\stackrel{(iv)}{=}\max\bigg\{\exp \bigg(-\frac{\Delta_{\min}^2}{c'^2\sigma_{\max}^2}\bigg),\max_{k\neq l : k,l\in[\R]} \exp\bigg(-\frac{(1-c'')^2\Delta_{kl}^2}{16\sigma_{\max}^2}\bigg) \bigg\} \\
        &\leq \max \bigg\{\exp \bigg(-\frac{\Delta_{\min}^2}{c'^2\sigma_{\max}^2}\bigg), \exp\bigg(-\frac{(1-c'')^2\Delta_{\min}^2}{16\sigma_{\max}^2}\bigg)\bigg\}.
    \end{aligned}
\end{equation}
\noindent Here, $(iii)$ is obtained by substituting the upperbounds for the violation probabilities $p_{kk}$ and $p_{kl}$ for diagonal and off-diagonal respectively from \eqref{eq:diagonalub} and \eqref{eq:offdiagonalub} under the assumption that~$\Delta_{\min}\geq c'\sigma_{\max}\sqrt{\D}$. We obtain $(iv)$ by assuming~${r^{kl}_{\tout}}^2=c''\Delta_{kl}^2$ for some $c''\in (0,1)$. Next, we show that $\rho_{\min}>0$ and finite provided $c'$ is appropriately chosen and~$\theta=\Theta(\Delta_{\min})$, i.e., $\theta=\kappa \Delta_{\min}$ for some $\kappa>0$. We first note that $\min\{\upsilon,1-\upsilon\}>0$ for any $0<\upsilon<1$. Thus, for the condition $\rho_{\min}>0$ to hold, it suffices to show that $\tau_{\tin}-\tau_{\tout}>0$. 
\begin{equation}
    \begin{aligned}
        \tau_{\tin}-\tau_{\tout}&= \tau_{\tin}-\max_{k\neq l}\tau_{\tout}^{(k,l)}\\ 
        &= \exp{\bigg(-\frac{10\Delta_{\min}^2}{c'^2\theta^2}}\bigg) - \max_{k\neq l} \exp\bigg(-\frac{c''\Delta_{kl}^2}{\theta^2}\bigg)\\
        &\geq\exp{\bigg(-\frac{10\Delta_{\min}^2}{c'^2\theta^2}}\bigg) - \exp\bigg(-\frac{c''\Delta_{\min}^2}{\theta^2}\bigg)\\
        &\geq \exp\bigg(-\frac{10}{c'^2\kappa^2}\bigg) - \exp\bigg(-\frac{c''}{\kappa^2}\bigg).
    \end{aligned}
\end{equation}
\noindent Thus, $\rho_{\min}> 0$  provided $c'^2>10/c''$. Next, we assume $c'^2$ to be of the form $c'^2=t/c''$ for some $t>10$ and obtain the tightest bound in \eqref{maxpNew} by optimizing over $c'$ and $c''$. We note that subject to the constraints $0<c''<1$ and $c'^2=t/c''$, the optimal $c'^*,c''^*$ satisfy
\begin{equation}
\label{opt_const}
    \begin{aligned}
        c'^*,c''^*=&\arg \min_{c',c''}  \max \bigg\{\exp \bigg(-\frac{\Delta_{\min}^2}{c'^2\sigma_{\max}^2}\bigg), \exp\bigg(-\frac{(1-c'')^2\Delta_{\min}^2}{16\sigma_{\max}^2}\bigg)\bigg\}\\
        =&\arg \min_{c',c''}  \max \bigg\{c'^2, \frac{16}{(1-c'')^2}\bigg\}\\
        =&\arg \max_{c',c''}  \min \bigg\{\frac{1}{c'^2}, \frac{(1-c'')^2}{16}\bigg\}.
    \end{aligned}
\end{equation}
Since $t$ can be arbitrarily close to 10, we set  $c'^2=10/c''$ in \eqref{opt_const} and analytically solve for the following one-dimensional problem in $c''$, which yields $c''^*=\frac{9-2\sqrt{14}}{5}\geq 0.3033$. Putting $c''=0.3033$, we get ${c'^*}^2\approx 32.97$. Taking $c'^2=33$ and combining the results in \eqref{Xintbound} and \eqref{maxpNew}, we get
\begin{equation}
\label{eqn-tempXNewBound}
        \lVert \htt \X_{\mathc I} -  \X^0_{\mathc I} \rVert_1
        \leq \frac{4n^2}{\rho_{\min}}\cdot \max \bigg\{ \exp\bigg(-\frac{\Delta_{\min}^2}{33\sigma_{\max}^2}\bigg),\frac{c_6\log n_{\min}}{n_{\min}}\bigg\}.
\end{equation}
\noindent Setting $C=\max\{\frac{4}{\rho_{\min}},c_6\}$, we obtain the bound for the inlier part of $\htt \X$ in \eqref{equation-ErrorRateInlierX}. Next, using the  bound in \eqref{eqn-tempXNewBound} and assuming $n>r$, we get that with probability at least $ 1-2r/n_{\min}$,
\begin{equation}
\begin{aligned}
      \frac{\lVert \htt \X -  \X^0 \rVert_1}{\lVert \X^0 \rVert_1}
        &\leq \frac{\lVert \htt \X_{\mathc I} -  \X^0_{\mathc I} \rVert_1}{\lVert \X^0 \rVert_1}+\frac{2m\N}{\lVert \X^0 \rVert_1} \\
        &\stackrel{(i)}{\leq} \frac{4r}{\rho_{\min}} \cdot  \max \bigg\{ \exp\bigg(-\frac{\Delta_{\min}^2}{33\sigma_{\max}^2}\bigg),\frac{c_6\log n_{\min}}{n_{\min}}\bigg\} + \frac{2m(\smalln+m)}{\big(\frac{\smalln^2}{\R}\big)}\\
        &\stackrel{(ii)}{\leq} \frac{4r}{\rho_{\min}}\cdot \max \bigg\{ \exp\bigg(-\frac{\Delta_{\min}^2}{33\sigma_{\max}^2}\bigg),\frac{c_6\log n_{\min}}{n_{\min}}\bigg\} + \frac{4m\R}{\smalln}\\
        &\leq \frac{4r}{\rho_{\min}} \cdot \exp\bigg(-\frac{\Delta_{\min}^2}{33\sigma_{\max}^2}\bigg)  + \frac{4c_6}{\rho_{\min}}\cdot \frac{r\log n_{\min}}{n_{\min}} + \frac{4m\R}{\smalln}\\
        &\leq C' r \exp\bigg(-\frac{\Delta_{\min}^2}{33\sigma_{\max}^2}\bigg)  + C'' r \max\bigg\{\frac{\log n_{\min}}{n_{\min}},\frac{m}{\smalln}\bigg\} 
\end{aligned}
\end{equation}
Here, $C'=\frac{4}{\rho_{\min}}$ and $C''=8\max\big\{\frac{c_6}{\rho_{\min}},1\}$. Inequality $(i)$ is obtained by substituting $\N=\smalln+m$ and using the fact that $\lVert \X^0 \rVert_1=\lVert \X^0_{\mathc I} \rVert_1 = \sum_{k\in[r]} n_k^2 \geq \frac{n^2}{\R}$, while inequality $(ii)$ follows from the assumption that $m<\smalln$. 
\end{proof}

\section{Proof of \Cref{lem-projection}}
Before we prove \Cref{lem-projection}, we first prove two lemmas, which yield useful results under the assumption that the inlier part of the data is centered at the origin. 
\begin{lemma}
\label{mu_bound}
If $\sum_{k\in[\R]} \pi_k \bs \mu_k = 0$, then $\lVert \bs \mu_k \rVert \leq \Delta_{\max}$ for all $k\in[\R]$. 
\end{lemma}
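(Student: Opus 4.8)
The plan is to exploit the barycentric (mean-zero) assumption $\sum_{k\in[\R]} \pi_k \bs\mu_k = 0$ to view each $\bs\mu_k$ as a signed combination of the pairwise difference vectors $\bs\mu_k - \bs\mu_l$, whose norms are exactly the separations $\Delta_{kl}$ and hence bounded by $\Delta_{\max}$. First I would fix an arbitrary index $k \in [\R]$ and write $\bs\mu_k$ using the constraint: since $\sum_{l\in[\R]} \pi_l \bs\mu_l = 0$ and $\sum_{l\in[\R]} \pi_l = 1$, we have
\begin{equation*}
    \bs\mu_k = \bs\mu_k \cdot \Big(\sum_{l\in[\R]} \pi_l\Big) - \sum_{l\in[\R]} \pi_l \bs\mu_l = \sum_{l\in[\R]} \pi_l (\bs\mu_k - \bs\mu_l).
\end{equation*}
This is the key identity: it represents $\bs\mu_k$ as a convex combination of the difference vectors $\bs\mu_k - \bs\mu_l$ (with the $l=k$ term vanishing).

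Next I would simply take norms and apply the triangle inequality together with $\bs\pi \geq \bs 0$ and $\bs\pi^\top \bm 1_\R = 1$:
\begin{equation*}
    \lVert \bs\mu_k \rVert = \Big\lVert \sum_{l\in[\R]} \pi_l (\bs\mu_k - \bs\mu_l) \Big\rVert \leq \sum_{l\in[\R]} \pi_l \lVert \bs\mu_k - \bs\mu_l \rVert = \sum_{l\neq k} \pi_l \, \Delta_{kl} \leq \Delta_{\max} \sum_{l\neq k} \pi_l \leq \Delta_{\max},
\end{equation*}
where the last step uses $\sum_{l\neq k} \pi_l \leq \sum_{l\in[\R]} \pi_l = 1$. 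Since $k$ was arbitrary, this gives $\lVert \bs\mu_k \rVert \leq \Delta_{\max}$ for every $k\in[\R]$, which is the claim.

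There is essentially no obstacle here — the statement is a short convexity/triangle-inequality argument, and the only thing to be careful about is the bookkeeping in the opening identity (making sure the coefficients genuinely form a sub-probability vector after dropping the $l=k$ term, which they do since the $\pi_l$ are nonnegative and sum to at most one). The lemma is a preparatory estimate whose role is to control $\lVert\bs\mu_k\rVert$ uniformly in the subsequent dimensionality-reduction analysis (e.g.\ in bounding spectral quantities like $\bm H = \sum_k \pi_k \bs\mu_k\bs\mu_k^\top$ in terms of $\Delta_{\max}$), so keeping the bound in the clean form $\Delta_{\max}$ rather than something like $2\Delta_{\max}$ is the only real design choice, and the identity above delivers exactly that.
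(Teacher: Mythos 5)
Your proof is correct, and it takes a genuinely different route from the paper's. You write $\bs\mu_k=\sum_{l\in[\R]}\pi_l(\bs\mu_k-\bs\mu_l)$ using $\bs\pi^\top\bm 1_\R=1$ and the mean-zero condition, and then apply the triangle inequality to get $\lVert\bs\mu_k\rVert\leq\sum_{l\neq k}\pi_l\Delta_{kl}\leq\Delta_{\max}$. The paper instead uses an extremal-direction argument: it selects $g(k)=\arg\min_{l\neq k}\langle\bs\mu_l,\bs\mu_k/\lVert\bs\mu_k\rVert\rangle$, observes that this minimal projection must be nonpositive because the $\pi$-weighted average of all projections is zero while the $l=k$ term contributes $+\pi_k\lVert\bs\mu_k\rVert\geq 0$, and then bounds $\Delta_{\max}\geq\lVert\bs\mu_k-\bs\mu_{g(k)}\rVert\geq\langle\bs\mu_k-\bs\mu_{g(k)},\bs\mu_k/\lVert\bs\mu_k\rVert\rangle\geq\lVert\bs\mu_k\rVert$. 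Your argument is more elementary: it needs only convexity of the norm, it sidesteps the normalization by $\lVert\bs\mu_k\rVert$ (so the degenerate case $\bs\mu_k=\bs 0$ requires no separate remark), and it yields the slightly sharper intermediate bound $\lVert\bs\mu_k\rVert\leq\sum_{l\neq k}\pi_l\Delta_{kl}$, a weighted average of separations rather than a single one. What the paper's version buys is the identification of a specific witness center $\bs\mu_{g(k)}$ with $\lVert\bs\mu_k-\bs\mu_{g(k)}\rVert\geq\lVert\bs\mu_k\rVert$, but that extra structure is not used downstream, so for the purpose of Lemma~\ref{lem-meanbound} and the dimensionality-reduction analysis the two proofs are interchangeable.
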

\begin{proof}
For any fixed $k$, we let $g(k)=\arg \min_{l\neq k} \big\langle \bs \mu_l, \frac{\bs \mu_k}{\lVert \bs \mu_k \rVert} \big\rangle $. Therefore, we have that
\begin{equation}
    \bigg\langle \bs \mu_{g(k)}, \frac{\bs \mu_k}{\lVert \bs \mu_k \rVert} \bigg\rangle=\min_{l\neq k} \bigg\langle  \bs \mu_l, \frac{\bs \mu_k}{\lVert \bs \mu_k \rVert} \bigg\rangle \leq \sum_{l\in[\R]} \pi_l \bigg\langle \bs \mu_l, \frac{\bs \mu_k}{\lVert \bs \mu_k \rVert} \bigg\rangle =\bigg\langle \sum_{l\in[\R]} \pi_l\bs \mu_l, \frac{\bs \mu_k}{\lVert \bs \mu_k \rVert} \bigg\rangle =0. 
\end{equation}
Thus, for every $k\in[\R]$, we have that 
\begin{equation}
    \Delta_{\max} \geq \lVert \bs \mu_k -\bs \mu_{g(k)} \rVert \geq \bigg\langle \bs \mu_k - \bs \mu_{g(k)}, \frac{\bs \mu_k}{\lVert \bs \mu_k \rVert}\bigg\rangle \geq \lVert \bs \mu_k \rVert.
\end{equation} 
\end{proof}

\begin{lemma}
\label{lem-meanbound}
If $\sum_{k\in[\R]} \pi_k \bs \mu_k = 0$, then $ \lVert \bs \Sigma\rVert \leq 2 \Delta_{\max}^2$. 
\end{lemma}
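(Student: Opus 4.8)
\textbf{Proof proposal for Lemma \ref{lem-meanbound}.}

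The plan is to identify $\bSigma$ as the (centered) covariance matrix of a point $\y$ drawn from the inlier mixture and to split it, via the law of total covariance, into a ``between-cluster'' part and a ``within-cluster'' part. Concretely, since $\mb E[\y]=\sum_{k\in[\R]}\pi_k\bmu_k=0$ by assumption, one has $\bSigma=\mb E[\y\y^\top]=\sum_{k\in[\R]}\pi_k\,\mb E[\y\y^\top\mid\phi=k]$, and writing $\y=\bmu_k+\bs\xi$ conditionally on $\phi=k$ and using $\mb E[\bs\xi]=0$ gives
\begin{equation*}
\bSigma=\underbrace{\sum_{k\in[\R]}\pi_k\,\bmu_k\bmu_k^\top}_{=\,\bm H}\;+\;\sum_{k\in[\R]}\pi_k\,\mb E[\bs\xi\bs\xi^\top\mid\phi=k].
\end{equation*}
Both summands are positive semidefinite, so by the triangle inequality for the operator norm it suffices to bound each one by $\Delta_{\max}^2$.

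For the first term, I would bound $\lVert\bm H\rVert_2\le\sum_{k\in[\R]}\pi_k\lVert\bmu_k\bmu_k^\top\rVert_2=\sum_{k\in[\R]}\pi_k\lVert\bmu_k\rVert^2\le\max_{k\in[\R]}\lVert\bmu_k\rVert^2$, and then invoke Lemma \ref{mu_bound} (which already uses $\sum_k\pi_k\bmu_k=0$) to conclude $\max_k\lVert\bmu_k\rVert^2\le\Delta_{\max}^2$. For the second term, each conditional second-moment matrix $\mb E[\bs\xi\bs\xi^\top\mid\phi=k]$ is PSD with largest eigenvalue $\sigma_k^2\le\sigma_{\max}^2$ by definition, so $\big\lVert\sum_k\pi_k\mb E[\bs\xi\bs\xi^\top\mid\phi=k]\big\rVert_2\le\sum_k\pi_k\sigma_k^2\le\sigma_{\max}^2$. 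Combining, $\lVert\bSigma\rVert_2\le\Delta_{\max}^2+\sigma_{\max}^2$.

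The only remaining point is to absorb the $\sigma_{\max}^2$ term into a second copy of $\Delta_{\max}^2$, i.e. to argue $\sigma_{\max}\le\Delta_{\max}$. This is immediate under the standing separation hypothesis of the paper, $\Delta_{\min}\ge 8\sigma_{\max}\sqrt{\D}$, which forces $\sigma_{\max}\le\Delta_{\min}/8\le\Delta_{\max}$; alternatively it follows within the context of Proposition \ref{lem-projection}, whose hypothesis $\eta_{r-1}(\bm H)>5\sigma_{\max}^2$ together with $\eta_{r-1}(\bm H)\le\lVert\bm H\rVert_2\le\Delta_{\max}^2$ gives $\sigma_{\max}^2<\Delta_{\max}^2/5$. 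Either way we obtain $\lVert\bSigma\rVert_2\le\Delta_{\max}^2+\Delta_{\max}^2=2\Delta_{\max}^2$. I do not expect any genuine obstacle here; the one thing to be careful about is making explicit which normalization of $\bs\xi$'s second moment is being used (the paper defines $\sigma_k^2$ as the top eigenvalue of the \emph{second moment} matrix, which coincides with the covariance since $\mb E[\bs\xi]=0$), so that the within-cluster part is correctly controlled by $\sigma_{\max}^2$ rather than, say, its trace.
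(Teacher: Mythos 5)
Your proof is correct and follows essentially the same route as the paper's: the identical decomposition $\bSigma=\sum_{k}\pi_k\bmu_k\bmu_k^\top+\sum_{k}\pi_k\bSigma_k$, the same appeal to Lemma~\ref{mu_bound} to bound the between-cluster term by $\Delta_{\max}^2$, and the same $\sigma_{\max}^2$ bound on the within-cluster term. The one difference is that you explicitly justify the final step $\sigma_{\max}\le\Delta_{\max}$ (via the standing separation condition), which the paper's proof silently assumes when writing $\Delta_{\max}^2+\sigma_{\max}^2\le 2\Delta_{\max}^2$ — a worthwhile clarification, since the lemma as stated carries no such hypothesis.
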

\begin{proof}
It can be easily worked out that when the mean $\bs \mu$ for mixture of sub-gaussians is assumed to be at the origin, i.e., when $\sum_k \pi_k \bs \mu_k=0$, the covariance matrix $\bs \Sigma$ can be expressed as follows:

\begin{equation}
\begin{aligned}
    &&\bs \Sigma&= \sum_{k\in[\R]} \pi_k \bs \mu_k \bs \mu_k^\top +\sum_{k\in[\R]} \pi_k \bSigma_k  \\
\end{aligned}
\end{equation}

Using the above expression for $\bSigma$, we obtain a bound on $\lVert \bSigma \rVert_2$ as follows:
\begin{equation}
\begin{aligned}
    \lVert \bSigma \rVert_2 &\leq  \big\lVert \sum_{k\in[\R]} \pi_k \bs \mu_k  \bmu_k^\top \big\rVert_2+ \big \lVert\sum_{k\in[\R]} \pi_k \bSigma_k \big \rVert_2\\
    &\leq \sum_{k\in[\R]} \pi_k \lVert \bs \mu_k\rVert_2^2 +\sum_{k\in[\R]}\pi_k\lVert \bSigma_k \rVert_2 \\
    &= \sum_{k\in[\R]} \pi_k (\lVert \bs \mu_k\rVert_2^2 +\sigma_k^2) \\
    &\leq \Delta_{\max}^2 +\sigma_{\max}^2\leq 2\Delta_{\max}^2.
\end{aligned}
\end{equation}
Here, the first inequality is obtained using triangle inequality, and the second inequality follows from the convexity of norms and using \Cref{lem-meanbound}. 
\end{proof}

Next, we show that provided the number of outliers $m$ is small relative to the number of inlier points $N$, the operator norm $\lVert \htt {\bs \Sigma} - \bs\Sigma \rVert_2$ is also small. 
\begin{lemma}
\label{lem-CovarianceEst}
Let $\bm Y'\in \mb R^{N' \times d}$ denote the portion of the data matrix $\bm Y$ obtained by sampling $N'$ points randomly without replacement from $\bm Y$. Suppose ${\bs\Sigma}$ and $\htt {\bs\Sigma}$ denote respectively the true covariance matrix for the SGMM (without outliers) and the sample covariance matrix for $\bm Y'$. Then, assuming $\sum_k \pi_k \bs\mu_k = 0$, we have 
%$$\lVert \htt {\bs\Sigma} - \bs\Sigma \rVert_2 \leq C\sqrt{\frac{\D \log n}{n}}+O\bigg(\frac{m'}{N'}\max\bigg\{\Delta_{\max}^2, \lVert \Y^{\mathc O}\rVert_{2,\infty}^2\bigg\}\bigg)$$
%with high probability as $N\rightarrow \infty$.
$$\lVert \htt {\bs\Sigma} - \bs\Sigma \rVert_2 \leq C_1\sqrt{ \frac{2\D N\log N'}{n N'}} + C_2\bigg( \frac{m}{N}+\sqrt{\frac{\log N'}{N'}}\bigg) \max\bigg\{\Delta_{\max}^2, \lVert \Y^{\mathc{O'}}\rVert^2_{2,\infty}\bigg\}$$
with probability at least $1-O(N'^{-1})$.
\end{lemma}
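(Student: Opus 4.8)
The plan is to decompose the sample covariance matrix $\htt{\bs\Sigma}$ of $\bm Y'$ into an ``inlier part'' and an ``outlier part'', control each separately, and combine via the triangle inequality. Let $\mathc I' = \bm Y' \cap \mathc I$ and $\mathc O' = \bm Y' \cap \mathc O$, with $|\mathc I'| = n'$ and $|\mathc O'| = m'$; since the $N'$ points are drawn uniformly without replacement from $N = n+m$ points, one first shows $m'/N' \lesssim m/N$ (up to lower-order fluctuations of order $\sqrt{\log N'/N'}$) with probability $1 - O(N'^{-1})$ using a Chernoff/Hoeffding bound for sampling without replacement (Serfling's inequality). Writing $\htt{\bs\Sigma} = \tfrac1{N'}\sum_{i\in \mathc I'}(\y_i - \overline{\y}')(\y_i - \overline{\y}')^\top + \tfrac1{N'}\sum_{i\in \mathc O'}(\y_i - \overline{\y}')(\y_i - \overline{\y}')^\top$ and comparing with $\htt{\bs\Sigma}_{\mathc I'} := \tfrac1{n'}\sum_{i\in\mathc I'}(\y_i-\overline{\y}_{\mathc I'})(\y_i-\overline{\y}_{\mathc I'})^\top$, the cross terms and the difference between centering at $\overline{\y}'$ versus $\overline{\y}_{\mathc I'}$ are rank-one corrections whose operator norm is controlled by $\|\overline{\y}' - \overline{\y}_{\mathc I'}\|^2$ and $(m'/N')\max_i \|\y_i\|^2$ over outliers, which is where the $\lVert\Y^{\mathc O'}\rVert_{2,\infty}^2$ term enters; over inliers the analogous quantity is bounded by $\Delta_{\max}^2$ via Lemma \ref{mu_bound} and sub-gaussian concentration of $\overline{\y}_{\mathc I'}$.

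Next I would handle the main inlier term $\lVert \htt{\bs\Sigma}_{\mathc I'} - \bs\Sigma \rVert_2$. Here the $n'$ inlier points are i.i.d.\ draws from the SGMM with mean $\bs\mu = \sum_k \pi_k \bs\mu_k = 0$ and true covariance $\bs\Sigma = \sum_k \pi_k(\bs\mu_k\bs\mu_k^\top + \bs\Sigma_k)$. Because each $\y_i$ is a sub-gaussian random vector in $\RR^\D$ with $\|\y_i\| = O(\Delta_{\max})$ in an appropriate sense (again using Lemma \ref{mu_bound} and Lemma \ref{lem-meanbound}), a standard matrix-concentration result for sample covariance matrices of sub-gaussian vectors (e.g.\ Vershynin, or a matrix Bernstein inequality on the centered rank-one summands $\y_i\y_i^\top - \bs\Sigma$, whose operator norm is $O(\Delta_{\max}^2)$ and whose matrix variance is $O(\D \cdot \sigma_{\max}^2 \cdot \Delta_{\max}^2)$-ish) gives $\lVert \htt{\bs\Sigma}_{\mathc I'} - \bs\Sigma\rVert_2 \lesssim \Delta_{\max}^2\big(\sqrt{\D \log N' / n'} + \D\log N'/n'\big)$ with probability $1 - O(N'^{-1})$. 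Since $n' = \Theta(N' n/N)$ with high probability (again by the sampling-without-replacement concentration), $1/n'$ is replaced by $N/(n N')$, producing the $\sqrt{2\D N \log N'/(n N')}$ form in the statement; absorbing the square of the dominant deviation and constants into $C_1$. One should be slightly careful that the $\sqrt{\cdot}$ term dominates the linear term in the parameter regime of interest ($n' \gtrsim \D\log N'$), which holds under the ambient assumptions.

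The remaining outlier contribution is bounded crudely: $\big\|\tfrac1{N'}\sum_{i\in\mathc O'}\y_i\y_i^\top\big\|_2 \le \tfrac{m'}{N'}\max_{i\in\mathc O'}\|\y_i\|^2 = \tfrac{m'}{N'}\lVert\Y^{\mathc O'}\rVert_{2,\infty}^2 \lesssim \big(\tfrac{m}{N}+\sqrt{\tfrac{\log N'}{N'}}\big)\lVert\Y^{\mathc O'}\rVert_{2,\infty}^2$, and the centering corrections are of the same or smaller order, contributing the $C_2$ term. Collecting the three pieces and taking a union bound over the $O(1)$ high-probability events yields the claim. I expect the main obstacle to be bookkeeping: carefully tracking how the recentering at $\overline{\y}'$ (which mixes inliers and outliers) perturbs the clean i.i.d.\ sample covariance $\htt{\bs\Sigma}_{\mathc I'}$, and making sure every cross term is genuinely lower order than the two displayed terms — in particular verifying that the outlier mean $\overline{\y}_{\mathc O'}$ appears only through $\lVert\Y^{\mathc O'}\rVert_{2,\infty}$ and the ratio $m/N$, never amplified by $\sqrt{\D}$ or by $N'/m'$. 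The matrix-concentration step itself is essentially routine once the sub-gaussian norm of the inlier vectors is pinned down via the two preliminary lemmas.
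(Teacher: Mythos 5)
Your proposal is correct and follows essentially the same route as the paper: the same inlier/outlier decomposition of $\htt{\bs\Sigma}$ with rank-one recentering corrections, hypergeometric (sampling-without-replacement) tail bounds to control $n'$ and $m'$, Lemmas \ref{mu_bound} and \ref{lem-meanbound} for the mean and $\lVert\bs\Sigma\rVert_2$ terms, and a crude $\tfrac{m'}{N'}\lVert\Y^{\mathc{O'}}\rVert_{2,\infty}^2$ bound on the outlier contribution. The only cosmetic difference is that you invoke matrix Bernstein directly for the inlier sample-covariance concentration where the paper cites Lemma~7 of \citet{yan2016convex}, which is the same standard sub-gaussian covariance estimate.
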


\begin{proof}
Let $\mathc{I'}$ and $\mathc{O'}$ denote respectively the sets of indices that correspond to the inlier and outlier data points in the $N'$ randomly sampled set of points for dimensionality reduction with their respective cardinalities $n'$ and $m'$. Assume that $\bm Y'\in \mb R^{N' \times d}$ denotes the portion of the data matrix $\bm Y$ that corresponds to these $N'$ data points, while $\Y^{\mathc{I'}}$ and $\Y^{\mathc{O'}}$ denote the inlier and outlier parts of the data matrix with their respective sample means represented by $\overline{\y}^{\mathc{I'}}$ and $\overline{\y}^{\mathc{O'}}$. Then, we note that the sample covariance matrix~$\htt \bSigma$ can be expressed as below:
\begin{equation*}
\label{estimated_covariance}
    \begin{aligned}
        &\htt {\bs \Sigma}\\ &=\frac{1}{n'+m'}\sum_{i\in \mathc{I'} \cup \mathc{O'}} (\y_i-\overline{\bm y})(\y_i-\overline{\bm y})^\top \\
        &=\frac{1}{n'+m'} {\bm Y'}^\top \bm Y' - \overline{\bm y}\overline{\bm y}^\top \\
        &=\frac{1}{n'+m'} {(\Y^{\mathc{I'}}}^\top\Y^{\mathc{I'}} +{\Y^{\mathc{O'}}}^\top\Y^{\mathc{O'}}) - \bigg(\frac{n'}{n'+m'} \overline{\y}^{\mathc{I'}}+\frac{m'}{n'+m'} \overline{\y}^{\mathc{O'}}\bigg) \bigg(\frac{n'}{n'+m'} \overline{\y}^{\mathc{I'}}+\frac{m'}{n'+m'} \overline{\y}^{\mathc{O'}}\bigg)^\top\\
        &=\frac{1}{n'+m'} {(\Y^{\mathc{I'}}}^\top\Y^{\mathc{I'}} +{\Y^{\mathc{O'}}}^\top\Y^{\mathc{O'}}) - \bigg(\frac{n'}{n'+m'} \bigg)^2 {\overline{\y}^{\mathc{I'}} \overline{\y}^{\mathc{I'}}}^\top-\bigg(\frac{m'}{n'+m'}\bigg)^2 {\overline{\y}^{\mathc{O'}} \overline{\y}^{\mathc{O'}}}^\top- \frac{m'n'}{(n'+m')^2} {(\overline{\y}^{\mathc{I'}}\overline{\y}^{\mathc{O'}}}^\top+{\overline{\y}^{\mathc{O'}}\overline{\y}^{\mathc{I'}}}^\top)\\
        &=\frac{n'}{n'+m'} \bigg(\frac{1}{n}{\Y^{\mathc{I'}}}^\top \Y^{\mathc{I'}} - {\overline{\y}^{\mathc{I'}} \overline{\y}^{\mathc{I'}}}^\top \bigg)+\bigg(\frac{n'}{n'+m'} \bigg) \bigg({\overline{\y}^{\mathc{I'}} \overline{\y}^{\mathc{I'}}}^\top-\bigg(\frac{n'}{n'+m'} \bigg) {\overline{\y}^{\mathc{I'}} \overline{\y}^{\mathc{I'}}}^\top\bigg) +\frac{1}{n'+m'}{\Y^{\mathc{O'}}}^\top \Y^{\mathc{O'}}\\
        & \ \ \ -\bigg(\frac{m'}{n'+m'}\bigg)^2 {\overline{\y}^{\mathc{O'}} \overline{\y}^{\mathc{O'}}}^\top - \frac{m'n'}{(n'+m')^2} {(\overline{\y}^{\mathc{I'}}\overline{\y}^{\mathc{O'}}}^\top+{\overline{\y}^{\mathc{O'}}\overline{\y}^{\mathc{I'}}}^\top).
    \end{aligned}
\end{equation*}

\noindent From the above expression for $\htt {\bm \Sigma}$, we obtain a bound for $\lVert \bm\Sigma -\htt {\bm\Sigma} \rVert_2$ by using triangle inequality as follows:
\begin{equation*}
    \begin{aligned}
    \lVert \bs\Sigma -\htt {\bs\Sigma} \rVert_2
    &\leq\frac{n'}{n'+m'} \bigg\lVert \bs\Sigma -\bigg(\frac{1}{n}{\Y^{\mathc{I'}}}^\top \Y^{\mathc{I'}} - {\overline{\y}^{\mathc{I'}} \overline{\y}^{\mathc{I'}}}^\top \bigg) \bigg\rVert_2+ \frac{m'}{n'+m'} \lVert \bs \Sigma \rVert_2  + \frac{m'n'}{(n'+m')^2} \big\lVert {\overline{\y}^{\mathc{I'}} \overline{\y}^{\mathc{I'}}}^\top \big\rVert_2\\
    &\quad+\frac{1}{n'+m'}\big\lVert {\Y^{\mathc{O'}}}^\top\Y^{\mathc{O'}} \big\rVert_2+\bigg(\frac{m'}{n'+m'}\bigg)^2 {\big\lVert \overline{\y}^{\mathc{O'}} \overline{\y}^{\mathc{O'}}}^\top \big\rVert_2 + \frac{m'n'}{(n'+m')^2} {\big(\big\lVert \overline{\y}^{\mathc{I'}}\overline{\y}^{\mathc{O'}}}^\top \big\rVert_2+{\big\lVert\overline{\y}^{\mathc{O'}}\overline{\y}^{\mathc{I'}}}^\top\big\rVert_2\big).
\end{aligned}
\end{equation*}

\noindent Next, we note that $\htt {\bs\Sigma}_{\mathc{I'}}=\frac{1}{n}{\Y^{\mathc{I'}}}^\top\Y^{\mathc{I'}} - {\overline{\y}^{\mathc{I'}} \overline{\y}^{\mathc{I'}}}^\top$ is the sample covariance matrix for the set of inlier points. Using Lemma 7 in \cite{yan2016convex}, we have that  with probability at least $1-O(n'^{-\D})$, $\lVert\bm\Sigma -\htt{\bm\Sigma}_{\mathc{I'}}\rVert_2 \leq C_1\sqrt{\frac{\D \log n'}{n'}}$, where $C_1$ is some constant. 
\begin{equation}
\label{CovarianceEstimation}
\begin{aligned}
    \lVert \bs\Sigma -\htt {\bs \Sigma} \rVert_2  &\leq  C_1\sqrt{ \frac{\D \log n'}{n'}} + \frac{m'}{N'} \lVert \bs\Sigma \rVert_2 + \frac{m'}{N'} \lVert \overline{\y}^\mathc{I'}\rVert_2^2 +\frac{1}{\N'} \lVert \Y^{\mathc{O'}}\rVert_2^2 +\frac{m'^2}{\N'^2} \lVert \overline{\y}^\mathc{O'}\rVert_2^2 + \frac{2m'}{\N'} \lVert \overline{\y}^\mathc{I} \rVert_2 \lVert \overline{\y}^\mathc{O} \rVert_2 \\
    &\leq C_1\sqrt{ \frac{\D \log n'}{n'}} + \frac{m'}{N'} \lVert {\bs \Sigma} \rVert_2 + \frac{m'}{N'} \frac{\lVert \Y^\mathc{I'}\rVert_2^2}{n'} +\frac{1}{\N'} \lVert \Y^{\mathc{O'}}\rVert_2^2 +\frac{m'^2}{\N'^2} \frac{\lVert \Y^\mathc{O'}\rVert_2^2}{m'} + \frac{2m'}{\N'} \frac{\lVert \Y^\mathc{I} \rVert_2 \lVert \Y^\mathc{O} \rVert_2}{\sqrt{\smalln' m'}}\\
     &= C_1\sqrt{ \frac{\D \log n'}{n'}} + \frac{m'}{N'} \lVert \bs\Sigma \rVert_2 + \frac{m'}{\N'\smalln'} \lVert \Y^\mathc{I'}\rVert_2^2+\frac{1}{\N'} \lVert \Y^{\mathc{O'}}\rVert_2^2 +\frac{m'}{\N'^2} \lVert \Y^\mathc{O'}\rVert_2^2 + \frac{2\sqrt{m'}}{\N'\sqrt{\smalln'}} \lVert \Y^\mathc{I} \rVert_2 \lVert \Y^\mathc{O} \rVert_2
    \end{aligned}
\end{equation}

For the set of inliers points $\mathc{I'}$, we note that the data matrix ${\Y^{\mathc{I'}}}=\bm M'+\bs\Xi'$, where $\bm M'\in \mb R^{\smalln'\times \D}$ denotes the signal part of the data with the $i$-th row-vector corresponding to the mean $\bmu_{\phi_i}$ for the $i$-th data point, and $\bs\Xi\in \mb R^{n\times \D}$ denotes the noise part with its row $\bs \xi_i^\top$ representing the sub-gaussian noise for the $i$-th datapoint. To get a bound on $\lVert {\Y^{\mathc{I'}}}\rVert_2$, we first obtain a bound on $\lVert\bm M\rVert_2$ and $\lVert \bs \Xi \rVert_2$ separately and then apply triangle inequality. 
%Let $\bm M^0=[\bmu_1,\ldots,\bmu_\R]^\top \in \mb R^{\R\times \D}$ consist of rows that represent the $r$ distinct cluster means in the sub-gaussian mixture model. 
We note that $\lVert\bm M\rVert_2$ can be bounded as below:
% \begin{equation}
% \label{tau_bound}
% \lVert \bm M \rVert_2 \leq \sqrt{\smalln}_{\max}\lVert\bm M^0 \rVert_2\leq \sqrt{\smalln}_{\max}\lVert\bm M^0 \rVert_{\F} \leq \sqrt{\R\smalln}_{\max} \Delta_{\max}.
% \end{equation}
\begin{equation}
\label{tau_bound}
\lVert \bm M' \rVert_2 \leq \lVert \bm M' \rVert_{\F}\leq  \sqrt{\smalln'} \max_{i \in \mathc {I}'} \lVert \bmu_{\phi_i} \rVert \leq \sqrt{\smalln'} \Delta_{\max}.
\end{equation}
Here, the last inequality follows from the bound obtained in Lemma \ref{mu_bound}.  Next, we obtain a high-probability bound for $\lVert \bs \Xi \rVert_2$. For this, we use the result obtained in Corollary 5.39 in \cite{vershynin2010introduction} for the operator norm of a random matrix whose rows consist of independent sub-gaussian isotropic random vectors. However, since any row vector $\bs \xi_i$ of $\bs \Xi$ is not necessarily an isotropic random vector, we first represent it as $\bs \xi_i= \bSigma_{\phi_i}^{1/2} \bar{\bs \xi_i}$ where $\bar{\bs \xi_i}$ is a sub-gaussian isotropic random vector that constitutes the $i$-th row of $\overline{\bs\Xi}$. Using the corollary along with the fact $\lVert \bs\Xi \rVert_2\leq \sigma_{\max} \lVert \overline{\bs\Xi} \rVert_2$, we get that with probability at least $1-2e^{-c_2 n'}$
\begin{equation}
\label{xi_bound}
    \lVert \bs\Xi \rVert_2 \leq (c_1\sqrt{\D}+\sqrt{\smalln'})\sigma_{\max},
\end{equation}
where $c_1$ and $c_2$ are constants that depend on the sub-gaussian norms $\{\sigma_k\}_{k=1}^\R$. In addition, if specifically $\bs \xi_i$ are Gaussian random vectors and  $\bar{\bs \xi_i}$ are standard normal random vectors, then $c_1$ and $c_2$ are constants independent of $\{\sigma_k\}_{k=1}^\R$. 
Combining \eqref{tau_bound} and \eqref{xi_bound}, we get that $\lVert \Y^{\mathc{I'}} \rVert \leq C_2'\sqrt{n} \Delta_{\max}$ for some constant $C_2'>0$. Therefore, from \eqref{CovarianceEstimation}, we get that with probability at least $1-O(n'^{-\D})$
\begin{equation}
\label{eqn:CovarianceEstimationOpBound}
    \begin{aligned}
    \lVert \bs\Sigma -\htt {\bs \Sigma} \rVert_2
    &\leq C_1\sqrt{ \frac{\D \log n'}{n'}} + \frac{2m'}{\N'} \Delta_{\max}^2 + C_2'\frac{m'}{N'} \Delta_{\max}^2+\frac{1}{\N'} \lVert \Y^{\mathc{O'}}\rVert_2^2 +\frac{m'}{\N'^2} \lVert \Y^{\mathc{O'}}\rVert_2^2 + \frac{2\sqrt{m'}}{\N'}  \Delta_{\max} \lVert \Y^{\mathc{O'}}\rVert_2\\
    &\leq C_1\sqrt{ \frac{\D \log n'}{n'}} + C_2''\frac{m'}{\N'} \Delta_{\max}^2+\frac{2}{\N'} \lVert \Y^{\mathc{O'}}\rVert_2^2 + \frac{2\sqrt{m'}}{\N'}  \Delta_{\max} \lVert \Y^{\mathc{O'}}\rVert_2\\
    &\leq C_1\sqrt{ \frac{\D \log n'}{n'}} + C_2\frac{m'}{\N'} \max\bigg\{\Delta_{\max}^2, \lVert \Y^{\mathc{O'}}\rVert^2_{2,\infty}\bigg\},
    \end{aligned}
\end{equation}
\noindent for some constants $C_2, C_2''>0$ . Next, we note that $n'\sim \text{Hypergeometric}(N,n,N')$ with mean $p_nN'$, where $p_n=\frac{n}{N}$. Therefore, using the tail bound for hypergeometric distribution from \cite{chvatal1979tail}, we have
\begin{equation*}
    \mb P(n' \leq  (p_n-\epsilon) N') \leq \exp(- 2\epsilon^2 N').
\end{equation*}
Thus, we get that $n' \geq \frac{nN'}{2N}$ with probability at least  $1-\exp(- p_n^2 N'/2)$. Similarly, $m'~\sim~\text{Hypergeometric}(N,m,N')$ with mean $p_mN'$, where $p_m=\frac{m}{N}$. Again, using the tail bounds from \citet{chvatal1979tail}, we obtain that
\begin{equation*}
    \mb P(m' \geq  (p_m+\epsilon) N') \leq \exp(- 2\epsilon^2 N').
\end{equation*}
Therefore, we have $m' \leq \frac{mN'}{N}+\sqrt{\frac{\log N'}{N'}}N'\leq 2\max\bigg\{\frac{m}{N},\sqrt{\frac{\log N'}{N'}}\bigg\}N'$ with probability at least  $1-\frac{1}{N'^2}$. 

Next, we note that $\frac{\log N'}{N'}$ is a monotonically decreasing function in $N'$. Therefore, to obtain an upper bound on the right hand side of \eqref{eqn:CovarianceEstimationOpBound}, we use the fact that $n' \geq \frac{nN'}{2N}$  and $\frac{m'}{N'}\leq \frac{m}{N}+\sqrt{\frac{\log N'}{N'}}$ with probability at least $1-\frac{1}{N'^2}-\exp(- p_n^2 N'/2)$. Combining this with the result obtained from \eqref{eqn:CovarianceEstimationOpBound}, we get that probability at least $1-O(N'^{-1})$
\begin{equation}
\label{eqn:CovarianceEstimationOpBound2}
    \begin{aligned}
    \lVert \bs\Sigma -\htt {\bs \Sigma} \rVert_2
    &\leq C_1\sqrt{ \frac{2\D \log N'}{p_n N'}} + C_2\bigg( \frac{m}{N}+\sqrt{\frac{\log N'}{N'}}\bigg) \max\bigg\{\Delta_{\max}^2, \lVert \Y^{\mathc{O'}}\rVert^2_{2,\infty}\bigg\}.
    \end{aligned}
\end{equation} 
 
\end{proof}

\begin{proof}[Proof of \Cref{lem-projection}.] We follow the approach discussed in Lemma 8 and Lemma 9 in \cite{yan2016convex} to obtain the final result in \Cref{lem-projection} by setting $R=S-\htt S$ with $\lVert R \rVert \leq \epsilon = C_1\sqrt{ \frac{2\D N\log N'}{n N'}} + C_2\bigg( \frac{m}{N}+\sqrt{\frac{\log N'}{N'}}\bigg) \max\bigg\{\Delta_{\max}^2, \lVert \Y^{\mathc{O'}}\rVert^2_{2,\infty}\bigg\}$ where $C_1$ and $C_2$ are constants as derived in \eqref{eqn:CovarianceEstimationOpBound2} in  \Cref{lem-CovarianceEst}.  
\end{proof}

\section{Extension to weakly separated clusters}
\begin{proof}[Proof of \Cref{proposition-OverlapCluster}.]
%In this section, we extend our results to the setting in which not all clusters have a minimum separation of $\Delta_{\min}=c'\sqrt{d} \sigma_{\max}$ between them. We first define the threshold $\Delta_0$ on the minimum separation to be $\Delta_0:= c'\sqrt{d} \sigma_{\max}$. 
%For this section, we first redefine $\Delta_{\min}$ to be the threshold on minimum separation with $\Delta_{\min}:= c'\sqrt{d} \sigma_{\max}$. We classify every cluster pair $(k,l)$ as ``overlapping" or ``non-overlapping"  based on whether $\Delta_{kl}<\Delta_0$ or $\Delta_{kl}\geq \Delta_0$ respectively. Let $\mathc S_{\ov} :=\{(k,l)\lvert \Delta_{kl}< \Delta_0\text{ for } k,l \in [r] \}$  denote the set of all overlapping pair of cluster pairs. Let $\displaystyle\tilde{\Delta}_{\min}:=\min_{k\neq l}\ \{\Delta_{kl}\ \lvert \ \Delta_{kl}\geq \Delta_0\}$ to be the minimum separation between cluster centers with $\ell_2$-distance at least $\Delta_0$. 
Based on the definition of the new reference matrix in \eqref{eq:NewReference}, we note that the solution~$\tilde{\X}$ obtained from the reference optimization problem has the same form as specified in~\eqref{eq:Xtilde}, where the weakly separated clusters form a single merged cluster. From \eqref{eq:X_diff_bound} and \eqref{eq:ref_inner_prod_bound}, we have
\begin{equation}
\lVert \htt \X_{\mathc I} -  \tilde{\X}_{\mathc I} \rVert_1 \leq \frac{\langle \bm R - \gamma \bm E_N, \tilde{\X}-\hat{\X}\rangle}{\min( R^{\tin}_{\min}-\gamma,\gamma- R^{\tout}_{\max})}\leq \frac{2\cdot\lVert \bm K_{\mathc I}-\bm R_{\mathc I}\rVert_1}{\min\{\upsilon,1-\upsilon\}(\tau_{\tin}-\tau_{\tout})}.
\end{equation}
As before, we obtain a high-probability bound on $\lVert \bm K_{\mathc I}-\bm R_{\mathc I}\rVert_1$ in terms of the number of corrupted entries on the diagonal and off-diagonal blocks. We note that equations~\eqref{eq:diagonalub}~and~\eqref{eq:offdiagonalub} still hold when~$\Delta_{\min}$ is replaced by~$\tilde{\Delta}_{\min}$ (defined in the statement of the theorem), and thus, provide us with upper bounds on probabilities of corruptions $p_{kk}$ and $p_{kl}$ defined respectively for the kernel entries $K_{ij}$ on the $k$-th diagonal block and $(k,l)$-th off-diagonal block where $(k,l)\in\mathc S_{\ov}^c$. Additionally, we obtain bounds on the number of corrupted entries $m_c^{(k,k)}$ and $m_c^{(k,l)}$ for $(k,l)\in \mathc S_{\ov}^c$ on the diagonal and off-diagonal blocks of well separated clusters from equations \eqref{corruptions-onesample} and \eqref{corruptions-twosample} respectively. 
%We use the results in equations~\eqref{eq:diagonalub} and \eqref{eq:offdiagonalub} to obtain respectively the upper bounds on the corruption probabilities $p_{kk}$ and $p_{kl}$ for the kernel entry $K_{ij}$ on the $k$-th diagonal block and $(k,l)$-th off-diagonal block where $(k,l)\in\mathc S_{ov}^c$. Additionally, we obtain bounds on the number of corrupted entries $m_c^{(k,k)}$ and $m_c^{(k,l)}$ for $(k,l)\in \mathc S_{ov}^c$ on the diagonal and off-diagonal blocks in equations \eqref{corruptions-onesample} and \eqref{corruptions-twosample} respectively. 

Next, we define the probability of a corrupted kernel entry for a block of weakly separated clusters as $p_{kl}:=\mb P( K_{ij} < \tau_{\tin}\lvert i\in\mathc C_k, j\in \mathc C_l)$ where $(k,l)\in \mathc S_{\ov}$. Consider $i\in \mathc C_k$ and $j\in \mathc C_l $ for $(k,l)\in \mathc S_{\ov}$. Then, we note that  
\begin{equation}
\begin{aligned}
    \mb P(K_{ij} <\tau_{\tin}\lvert i \in \mathc{C}_k, j\in \mathcal{C}_l)&=\mb P(\lVert \bm y_i-\bm y_j \rVert^2>r_{\tin}^2\lvert i \in \mathc{C}_k, j\in \mathcal{C}_l)\\
    &= \mb P(\lVert \bs \xi_i-\bs \xi_j \rVert^2+2(\bs \mu_k -\bs \mu_l)^\top (\bs \xi_i-\bs \xi_j) + \lVert\bs \mu_k -\bs \mu_l \rVert^2 >r_{\tin}^2\lvert i \in \mathc{C}_k, j\in \mathcal{C}_l)\\
    &\stackrel{(i)}{\leq}  \mb P(\lVert \bs \xi_i-\bs \xi_j \rVert^2+2 \Delta_{kl} \lVert\bs \xi_i-\bs \xi_j\rVert + \Delta_{kl}^2 >r_{\tin}^2\lvert i \in \mathc{C}_k, j\in \mathcal{C}_l)\\
    &=\mb P((\lVert \bs \xi_i-\bs \xi_j \rVert + \Delta_{kl})^2 >r_{\tin}^2\lvert i \in \mathc{C}_k, j\in \mathcal{C}_l)\\ 
    &=\mb P(\lVert \bs \xi_i-\bs \xi_j \rVert >r_{\tin} - \Delta_{kl}\lvert i \in \mathc{C}_k, j\in \mathcal{C}_l)\\
    &\stackrel{(ii)}{=} \mb P(\lVert \bs \xi_i-\bs \xi_j \rVert^2 > (r_{\tin} - \Delta_{kl})^2\lvert i \in \mathc{C}_k, j\in \mathcal{C}_l)\\
    &\stackrel{(iii)}{\leq } \exp \bigg(-\frac{(r_{\tin} - \Delta_{kl})^2}{10\sigma_{\max}^2} \bigg)
\end{aligned}
\end{equation}
Here, $(i)$ follows from the application of Cauchy-Schwartz inequality, whereas $(ii)$ holds true under the assumption that $\Delta_{kl}\leq r_{\tin}=\frac{\sqrt{10}}{8} \tilde{\Delta}_{\min}$. We obtain the final bound in $(iii)$. Next, we obtain a bound on the number of corruptions on the  off-diagonal blocks for weakly separated clusters. For this, we note that $U_{kl}$ defined for $(k,l)\in \mathc S_{\ov}$ as below: 
$$U_{kl}=\frac{\sum_{i\in\mathc C_k, j \in \mathc C_l}\mathbbm{1}_{\{ K_{ij} < \tau_{\tin}\}}}{n_k n_l}$$ 
is a U-statistic for $p_{kl}$. Therefore, from \eqref{twosamplebound}, we have 
\begin{equation}
    \mb  P( U_{kl}-p_{kl}> t_2) \leq  \exp\bigg(-\frac{\min\{n_k,n_l \}t_2^2}{c_4\nu_{kl}+c_5 t_2}\bigg),
\end{equation}
where $\nu_{kl}$ is the variance for the indicator variable $B^{(k,l)}_{ij}:=\mathbbm{1}_{\{ K_{ij}<\tau_{\tin}\}}$ where $i\in \mathc C_k, j\in \mathc C_l$ for $(k,l)~\in~\mathc S_{\ov}$, and $c_4,c_5>0$ are constants. Putting $t_2=\max\big\{p_{kl},\frac{2(c_4+c_5) \log n_{\min}}{n_{\min}}\big\}$ and noting that  $\nu_{kl}=p_{kl}(1-p_{kl})\leq p_{kl}\leq t_2$, we follow the steps in~\eqref{corruptions-twosample} to obtain that with probability at least~$1-\frac{1}{n_{\min}^2}$, the number of corruptions $m_{c}^{(k,l)}$ for $(k,l)\in\mathc S_{\ov}$ is bounded as 
\begin{equation}
    m_{c}^{(k,l)}\leq 2\max\bigg\{p_{kl},\frac{2(c_4+c_5)\log n_{\min}}{n_{\min}}\bigg\} n_k n_l,
\end{equation}
where $p_{kl}$ is the corruption probability for block $(k,l) \in \mathc S_{\ov}$. Using the above result and following the steps in \eqref{Xintbound}, we get
\begin{equation}
\label{eq:overlap_in}
    \begin{aligned}
        \lVert \htt \X_{\mathc I} -  \X^0_{\mathc I} \rVert_1
        &\leq \frac{2}{\rho_{\min}}\cdot \lVert \bm K_{\mathc I} - \bm R_{\mathc I} \rVert_1\\ 
        &\leq \frac{2}{\rho_{\min}}\cdot \bigg(\sum_{k\in[r]} m_c^{(k,k)}+ \sum_{(k,l)\in \mathc S_{\ov}} m_{c}^{(k,l)}  + \sum_{(k,l)\in \mathc S_{\ov}^c} m_c^{(k,l)} \bigg)\\
        &\leq \frac{4n^2}{\rho_{\min}}\cdot \max\bigg\{ \max_{k,l\in[\R]}p_{kl},\frac{c_6\log n_{\min}}{n_{\min}}\bigg\}\\ 
%        &\stackrel{(i)}{\leq}\frac{8n^2}{\rho_{\min}} \cdot  \max_{k,l\in[\R]}\bigg\{\exp \bigg(-\frac{\Delta_{\min}^2}{c'^2\sigma_{\max}^2}\bigg), \exp\bigg(-\frac{{(\Delta_{kl}^2-{r^{kl}_{\tout}}^2)}^2}{16\sigma_{\max}^2\Delta_{kl}^2}\bigg) \bigg\}\\
    \end{aligned}
\end{equation}
Next, we evaluate the first term within the max expression  
\begin{equation}
\label{eq:overlap_maxp}
\begin{aligned}
        \max_{k,l\in[\R]} p_{kl} &\stackrel{(i)}{\leq }  \max\bigg\{\exp \bigg(-\frac{\tilde{\Delta}_{\min}^2}{c'^2\sigma_{\max}^2}\bigg), \max_{(k,l)\in \mathc S_{\ov}} \exp \bigg(-\frac{(r_{\tin} - \Delta_{kl})^2}{10\sigma_{\max}^2} \bigg),\max_{(k,l)\in \mathc S^c_{\ov}} \exp\bigg(-\frac{{(\Delta_{kl}^2-{r^{kl}_{\tout}}^2)}^2}{16\sigma_{\max}^2\Delta_{kl}^2}\bigg) \bigg\}\\
        &\stackrel{(ii)}{\leq } \max_{(k,l)\in \mathc S_{\ov}} \exp \bigg(-\frac{(r_{\tin} - \Delta_{kl})^2}{10\sigma_{\max}^2} \bigg)\\
        &\stackrel{(iii)}{\leq } \exp\bigg(-\frac{(\tilde{c}\tilde{\Delta}_{\min}-\Delta')^2}{10\sigma_{\max}^2}\bigg)\\
        &=\exp\bigg(-\frac{(\tilde{\Delta}_{\min}-\Delta'/\tilde{c})^2}{64\sigma_{\max}^2}\bigg)
    \end{aligned}
\end{equation}
 Here, inequality $(i)$ is obtained by plugging the bounds from~\eqref{maxp} for $p_{kk}$ and $p_{kl}$ where $(k,l)\in \mathc{S}_\ov^c$. Inequalities $(ii)$ and $(iii)$ follow from the fact that $r_{\tin}=\tilde{c}\tilde{\Delta}_{\min}$ where $\tilde{c}=\frac{\sqrt{10}}{8}$ and $\Delta_{kl}\leq \Delta' \leq \tilde{\Delta}_{\min}$ for~$(k,l)\in \mathc{S_{\ov}}$. Combining the results in \eqref{eq:overlap_in} and \eqref{eq:overlap_maxp}, we obtain the desired result in \eqref{eq:overlap_inlier_thm}. From~\eqref{eq:overlap_maxp}, the result in \eqref{eq:overlap_inlier_thm2} is derived by following the same steps as discussed in the proof of \Cref{theorem-ErrorRateX}. 
 \end{proof}
% Here, the last inequality follows from the fact $c'''=\bigg(\frac{\sqrt{10}}{8} - c'' \bigg)\geq 0$. Thus, we see that as the separation $\Delta_{kl}=c''\Delta_{\min}$ between pair of overlapping clusters $k$ and $l$ goes to 0, i.e., as  $c''\rightarrow 0$, the relative estimation decays at the same rate as in \eqref{eqn-tempXbound}. However, as $c''$ increases in the range $\big(0,\frac{\sqrt{10}}{8}\big)$, the estimation error decays exponentially at a slower rate.  }

\end{onehalfspace}

\end{document}